\let\@nodottedtocline\@dottedtocline
\patchcmd{\@nodottedtocline}{\hbox{.}}{\hbox{}}{}{}
\patchcmd{\@nodottedtocline}{\normalcolor #5}{\normalcolor}{}{}
\newcommand*\l@sectionsubtitle{\@nodottedtocline{1}{0em}{1.5em}}
\DeclareFontFamily{U}{dmjhira}{}
\DeclareFontShape{U}{dmjhira}{m}{n}{ <-> dmjhira }{}
\DeclareRobustCommand{\yo}{\text{\usefont{U}{dmjhira}{m}{n}\symbol{"48}}}
\newtheorem{assumption}{Theorem}
\theoremstyle{definition}
\newcommand\cofib\rightarrowtail
\newcommand\mdel[1]{}
\renewcommand\geq\geqslant
\renewcommand\leq\leqslant
\newtheorem{theorem}{Theorem}
\newtheorem{definition}{Definition}
\newtheorem{lemma}{Lemma}
\newtheorem{example}{Example}
\newcommand{\bigCI}{\mathrel{\text{\scalebox{1.07}{$\perp\mkern-10mu\perp$}}}}
\newcommand{\CI}{\mathrel{\perp\mspace{-10mu}\perp}}
\title{Universal Imitation Games\thanks{This paper is a condensed draft of a forthcoming book. } }
\author{ Sridhar Mahadevan \\
	Adobe Research and University of Massachusetts, Amherst\\
	\texttt{smahadev@adobe.com, mahadeva@umass.edu}
}
\begin{document}
\maketitle

\begin{abstract}
In $1950$, Alan Turing proposed a framework called an {\em imitation game} in which the participants are to be classified {\tt Human or Machine} solely from  natural language interactions. Using mathematics largely developed since Turing -- category theory -- we investigate a broader class of {\em universal imitation games} (UIGs). Choosing a category means defining a collection of objects and a collection of composable arrows between each pair of objects that represent ``measurement probes"  for solving UIGs. The theoretical foundation of our paper rests on two celebrated results by Yoneda.  The first, called the Yoneda Lemma, discovered in 1954 -- the year of Turing's death -- shows that objects in categories can be identified up to isomorphism solely with measurement probes defined by composable arrows. Yoneda embeddings are universal representers of objects in categories. A simple yet general solution to the static UIG problem, where the participants are not changing during the interactions,  is to determine if the Yoneda embeddings are (weakly) isomorphic. A {\em universal property} in category theory is defined by an {\em initial} or {\em final} object. A second foundational result of Yoneda from 1960 defines initial objects called {\em coends} and final objects called {\em ends}, which yields a categorical ``integral calculus" that unifies probabilistic generative models, distance-based kernel, metric and optimal transport models, as well as topological manifold representations. When participants adapt during interactions, we study two special cases: in {\em dynamic UIGs}, ``learners" imitate ``teachers".  We contrast the initial object  framework of {\em passive learning from observation} over well-founded sets using inductive inference -- extensively studied by Gold, Solomonoff, Valiant, and Vapnik -- with the final object framework of {\em coinductive inference} over non-well-founded sets and universal coalgebras, which formalizes  learning from {\em active experimentation} using causal inference or reinforcement learning. We define a category-theoretic notion of minimum description length or Occam's razor based on final objects in coalgebras.   Finally, we explore {\em evolutionary UIGs}, where a society of participants is playing a large-scale imitation game. Participants in evolutionary UIGs can go extinct from ``birth-death" evolutionary processes that model how novel technologies or ``mutants" disrupt previously stable equilibria.  Given the rapidly rising energy costs of playing imitation games on classical computers, it seems  likely that tomorrow's imitation games may have to played on non-classical computers. We end with a brief discussion of how our categorical framework extends to imitation games on quantum computers. 
\end{abstract}

\keywords{AI \and Imitation Games \and Category Theory  \and Evolution \and Game Theory \and Machine Learning \and Quantum Computing}

\bigskip 

\begin{quote}
``I propose to consider the question, `Can machines think'?" -- {\em Alan Turing, Mind, Volume LIX, Issue 236, October 1950, Pages 433–460.}
\end{quote}

\newpage 

\tableofcontents

\newpage

\section{Overview of the Paper} 

\begin{figure}[h]
\centering
\includegraphics[scale=.4]{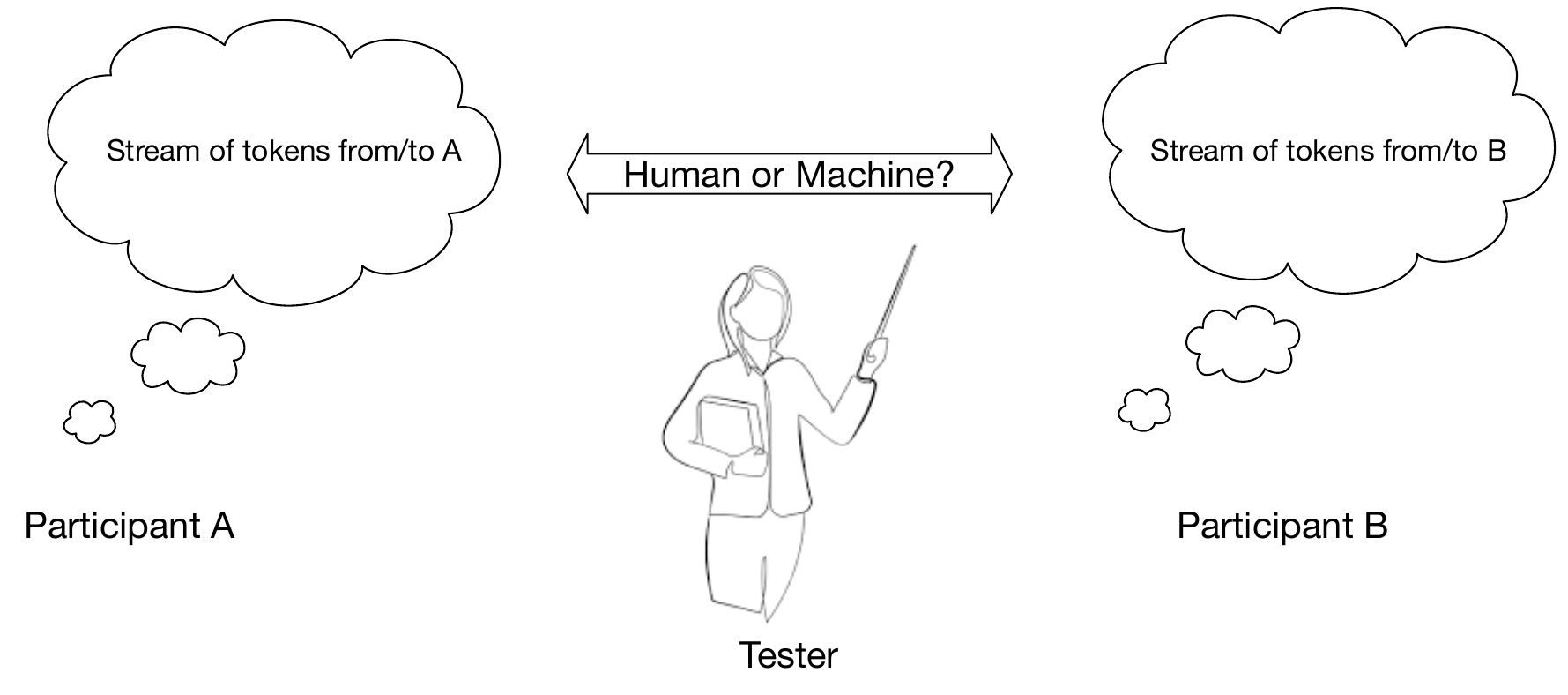}
\caption{Alan Turing proposed imitation games as a way to pose the problem of whether machines could think.} 
\label{img} 
\end{figure}

\subsection{Turing's imitation game}

The field of artificial intelligence (AI) has existed for about 7 decades, if we define its origin as beginning with Turing's famous paper \cite{turing}.  To answer the question ``Can machines think?", Turing suggested it was necessary first to define the words ``machine" and ``think". He proposed an {\em imitation game}, as shown in Figure~\ref{img}, as a concrete way to frame the problem.  Quoting from Turing's paper: 

\begin{quote}
    The new form of the problem can be described in terms of a game which we call the ‘imitation game’. 
    
    It is played with three people, a man (A), a woman (B), and an interrogator (C) who may be of either sex. The interrogator stays in a room apart from the other two.
    
    The object of the game for the interrogator is to determine which of the other two is the man and which is the woman. He knows them by labels X and Y, and at the end of the game he says either ‘X is A and Y is B’ or ‘X is B and Y is A’. 
\end{quote}

Turing proposes replacing one of the human participants with a machine, thereby framing the original question of whether machines can think by the more concrete version of being able to tell from interactions whether one is conversing with a human or a machine: 

\begin{quote}
    We now ask the question, ‘What will happen when a machine takes the part of A in this game?’ Will the interrogator decide wrongly as often when the game is played like this as he does when the game is played between a man and a woman? These questions replace our original, ‘Can machines think?’
\end{quote} 

The imitation game, or what is now more popularly called the {\em Turing Test}, has become the mainstay of the field of AI for the ensuing seven decades since Turing wrote his original paper, prompting an immense amount of attention that would be impossible to survey within this paper. The study of imitation games is no longer an academic pursuit: the recent success of generative AI  is already being felt in terms of its economic  impact around the world. Millions of humans are now playing imitation games with generative AI systems, and there is an alarming increase in the use of ``deepfakes" for nefarious purposes.  In response, a growing number of software tools are being released that attempt to solve limited classes of imitation games, such as {\tt gptzero}. \footnote{{\tt gptzero} can be accessed at {\tt https://gptzero.me}.} We seek to develop a framework for UIGs that applies both to the current generation of generative AI systems being developed on classical computers that are ``Turing's machine" models, as well as the next generation of quantum computers that build on more exotic forms of interaction in braided categories \cite{Coecke_Kissinger_2017}.

Two recent articles are noteworthy. The first article by Bernardo Goncalves \cite{historyturingtest} presents a fascinating historical reconstruction of Turing's test from archival sources. The second article by Terry Sejnowski \cite{DBLP:journals/neco/Sejnowski23} proposes an interesting alternative called a ``Reverse Turing Test" based on the recent successes in building large language models (LLMs), which has resulted in possibly the largest ever ``natural" experiment conducted on imitation games with literally millions of humans conversing with chatbots like Open AI's chatGPT. The literature on applications of LLMs is already beyond comprehension, and a review would be impossible here. \cite{DBLP:journals/aim/Adesso23} explores how LLMs can shape scientific discovery in the coming decades. \cite{DBLP:journals/inroads/Jacques23} examines how the teaching of basic computer science will have to be revised in light of the new capabilities of LLMs. 

\subsection{Universal Imitation Games}

\begin{figure}[h]
\centering
\includegraphics[scale=.4]{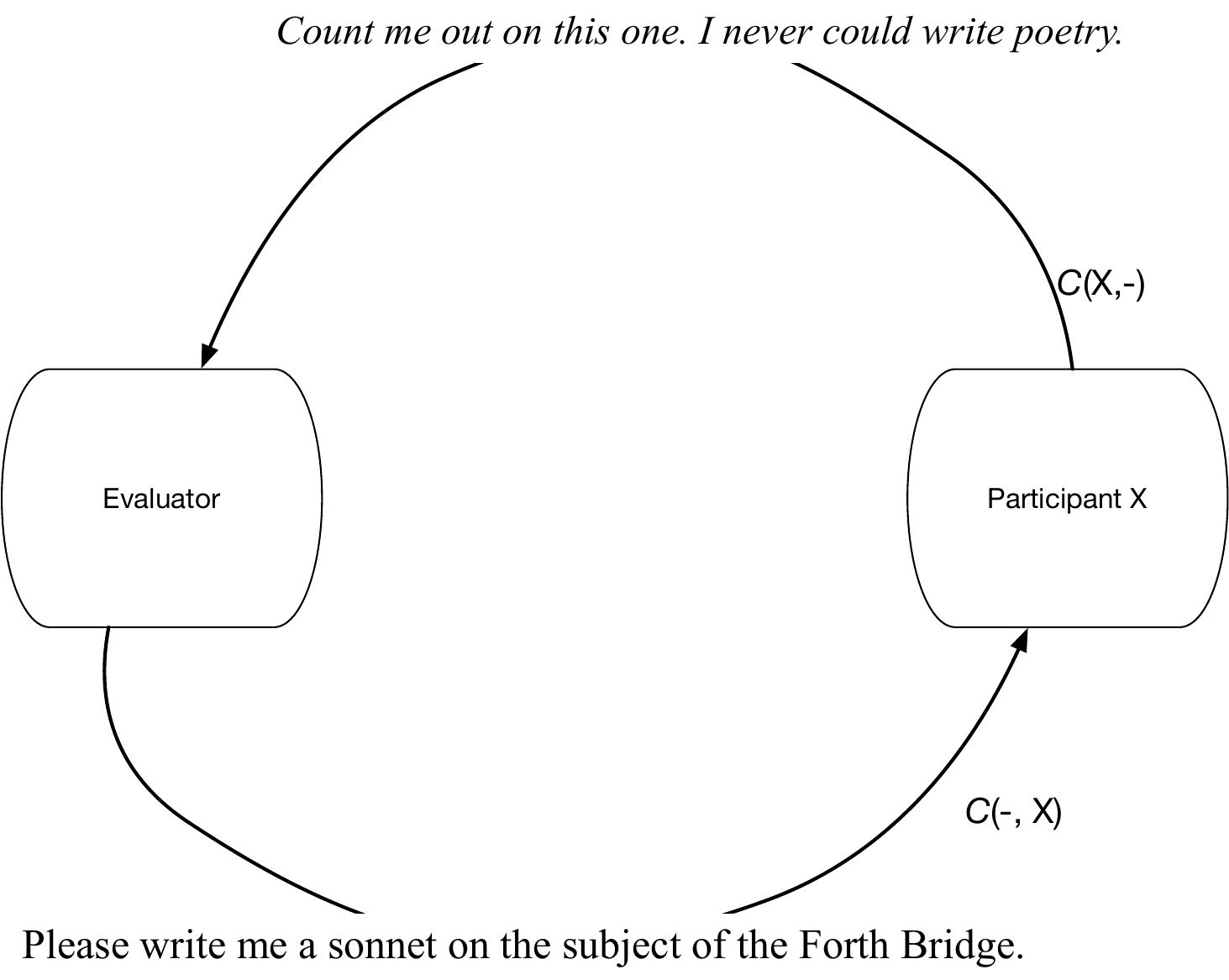}
\caption{In Turing's imitation game, the identity of a participant is determined by a series of questions and answers (the specific questions above are from Turing's original paper \cite{turing}.  We model interactions as ${\cal C}(-, x): {\cal C}^{op} \rightarrow {\cal D}$ and ${\cal C}(x, -): {\cal C} \rightarrow {\cal D}$ as {\em contravariant} and {\em covariant} functors, respectively,  from a category ${\cal C}$ into another category ${\cal D}$ (such as {\bf Sets} or any other enriched category). Two key results by Yoneda provide the theoretical foundation for our paper. The Yoneda Lemma shows objects can be defined purely in terms of these contravariant and covariant functors. In addition, Yoneda investigated bivalent functors $F: {\cal C}^{op} \times {\cal C} \rightarrow {\cal D}$ that combine both contravariant and covariant actions, and proposed a categorical ``integral calculus" over coends and ends, which reveal deep similarities between generative probabilistic models, distance-based models, kernel methods, optimal transport and topological representations.} 
\label{ttquestions} 
\end{figure}

Based largely on mathematics developed since Turing -- in particular, category theory \cite{maclane:71} -- we investigate a broader class of {\em universal imitation games} (UIGs) that captures many other forms of interactions. Category theory is exceptionally well-suited to the study of imitation games as it constitutes a ``universal" language  for defining interactions: choosing a category means defining a collection of objects and a collection of composable arrows between each pair of objects that represent ``measurement probes"  for solving UIGs.
Figure~\ref{ttquestions} illustrates our approach to modeling (universal) imitation games, where interactions define {\em contravariant} functors ${\cal C}(-,x)$ or {\em covariant} functors ${\cal C}(x, -)$, and the solution to an imitation game corresponds to (weak) isomorphism in a category ${\cal C}$. The Python package DisCoPy \cite{de_Felice_2021} is a software implementation for computing with string diagrams in category theory, and could be used to implement some of the ideas in this paper. \footnote{Categories for language are described in {\tt https://docs.discopy.org/en/main/notebooks/21-05-03-tallcat.html}.} \cite{DBLP:phd/ethos/Toumi22} has a detailed description of natural language processing on quantum computers.  We will discuss UIGs over quantum computers at the end of the paper in Section~\ref{uigquantum}.  

The theoretical foundation of our paper rests on two fundamental insights developed by Yoneda (see Figure~\ref{yonedathms}).   The celebrated Yoneda Lemma \cite{maclane:71} asserts that objects in a category ${\cal C}$ can be defined purely in terms of their interactions with other objects. This interaction is modeled by {\em contravariant} or {\em covariant} functors: 

\[ {\cal C}(-, x): {\cal C}^{op} \rightarrow {\bf Sets},  \ \ \  {\cal C}(x, -): {\cal C} \rightarrow {\bf Sets} \]

The {\em Yoneda embedding} $x \rightarrow {\cal C}(-, x)$ is sometimes denoted as $\yo(x)$ for the Japanese Hiragana symbol for {\tt yo}, serves as a {\em universal representer}, and generalizes many other similar ideas in machine learning, such as representers $K(-, x)$ in kernel methods \cite{kernelbook} and representers of causal information \cite{DBLP:journals/entropy/Mahadevan23}. There are many variants of the Yoneda Lemma, including versions that map the functors ${\cal C}(-, x)$ and ${\cal C}(x, -)$ into an {\em enriched} category. In particular, \cite{bradley:enriched-yoneda-llms} contains an extended discussion of the use of an enriched Yoneda Lemma to model natural language interactions that result from using a large language model. 

The second major insight from Yoneda \cite{yoneda-end} is based on a powerful concept of the {\em coend} and {\em end} of a {\em bifunctor} $F: {\cal C}^{op} \times {\cal C} \rightarrow {\cal D}$ that combines both a {\em contravariant} and a {\em covariant} action.  It can be shown that many approaches to solving imitation games, such as building a probabilistic generative model of participants, or using distances in some metric space, correspond to initial or terminal objects in a category of wedges, defined by objects that represent bifunctors, and the arrows are dinatural transformations. These initial or terminal objects correspond to coends and ends. For example, dimensionality reduction methods such as UMAP \cite{umap} are based mapping a dataset (say of interactions with a participant) into a topological space, which can be shown to define a coend object. Dually, building a probabilistic generative model of a participant essentially defines an end object, as probabilities can be shown to be codensity monads \cite{Avery_2016} that are in fact defined by ends. In summary, coends represent geometric ways to solve UIGs, whereas ends represent probabilistic approaches. Bifunctors $F: {\cal C}^{op} \times {\cal C} \rightarrow {\cal D} $ can be used to construct universal representers of distance functions in generalized metric spaces leading to a ``metric Yoneda Lemma" \cite{BONSANGUE19981}. 

\begin{figure}[h]
\centering
\includegraphics[scale=.4]{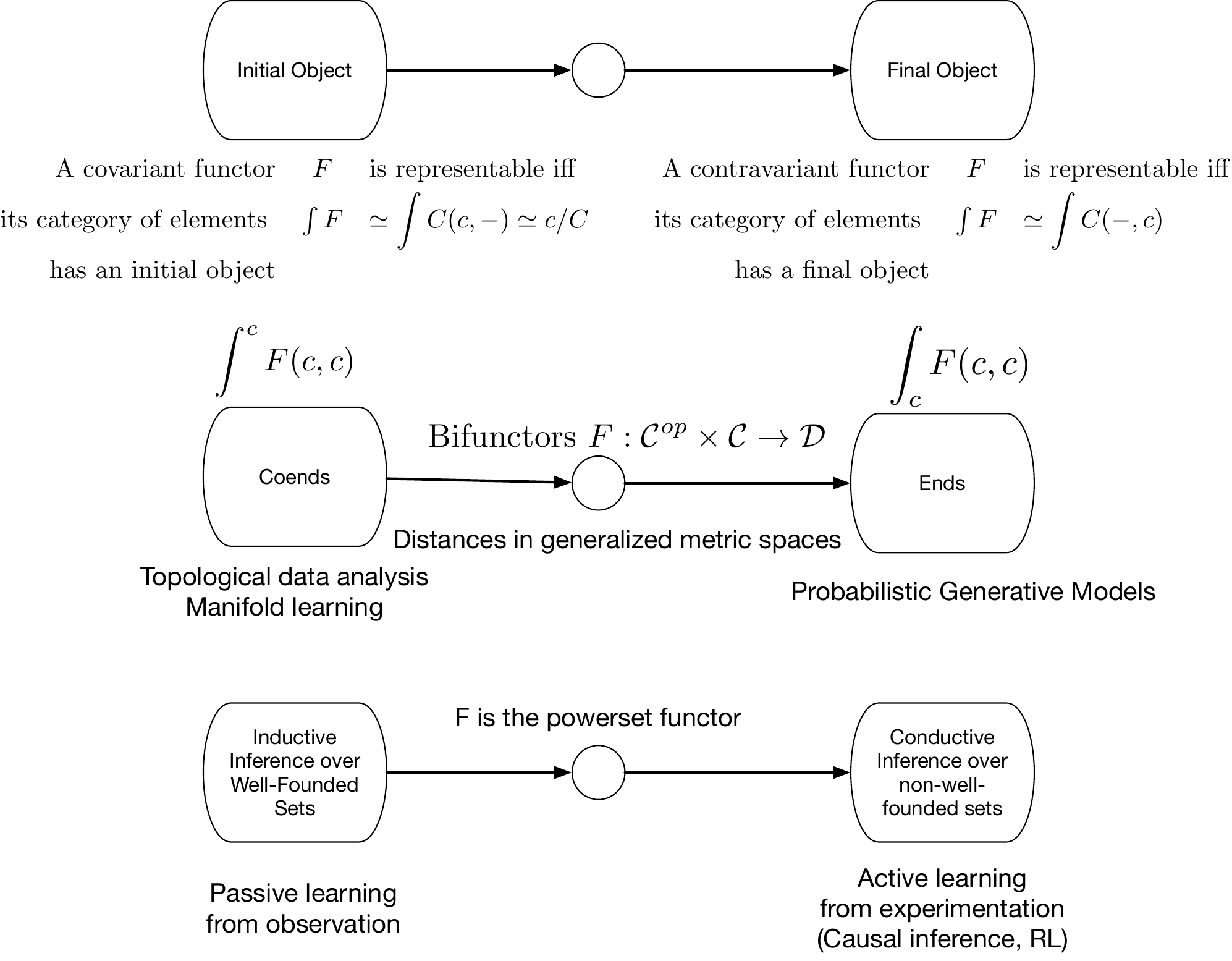}
\caption{The theoretical foundation of our approach to universal imitation games is based on two celebrated results of Yoneda. The first (top row) shows that Yoneda embeddings $\yo(x) = C(-, x)$ are universal representers of objects in a category. The second (middle row) is based on Yoneda's categorical ``integral calculus" using coends and ends that unify diverse approaches to solving imitation games studied in AI over the past six decades. The bottom row shows that universal properties defined by initial and final objects provide a unified way to characterize approaches to dynamic UIGs using passive learning (inductive inference) as well as active learning (causal inference and reinforcement learning).} 
\label{yonedathms} 
\end{figure}

\begin{figure}[h]
\centering
\includegraphics[scale=.4]{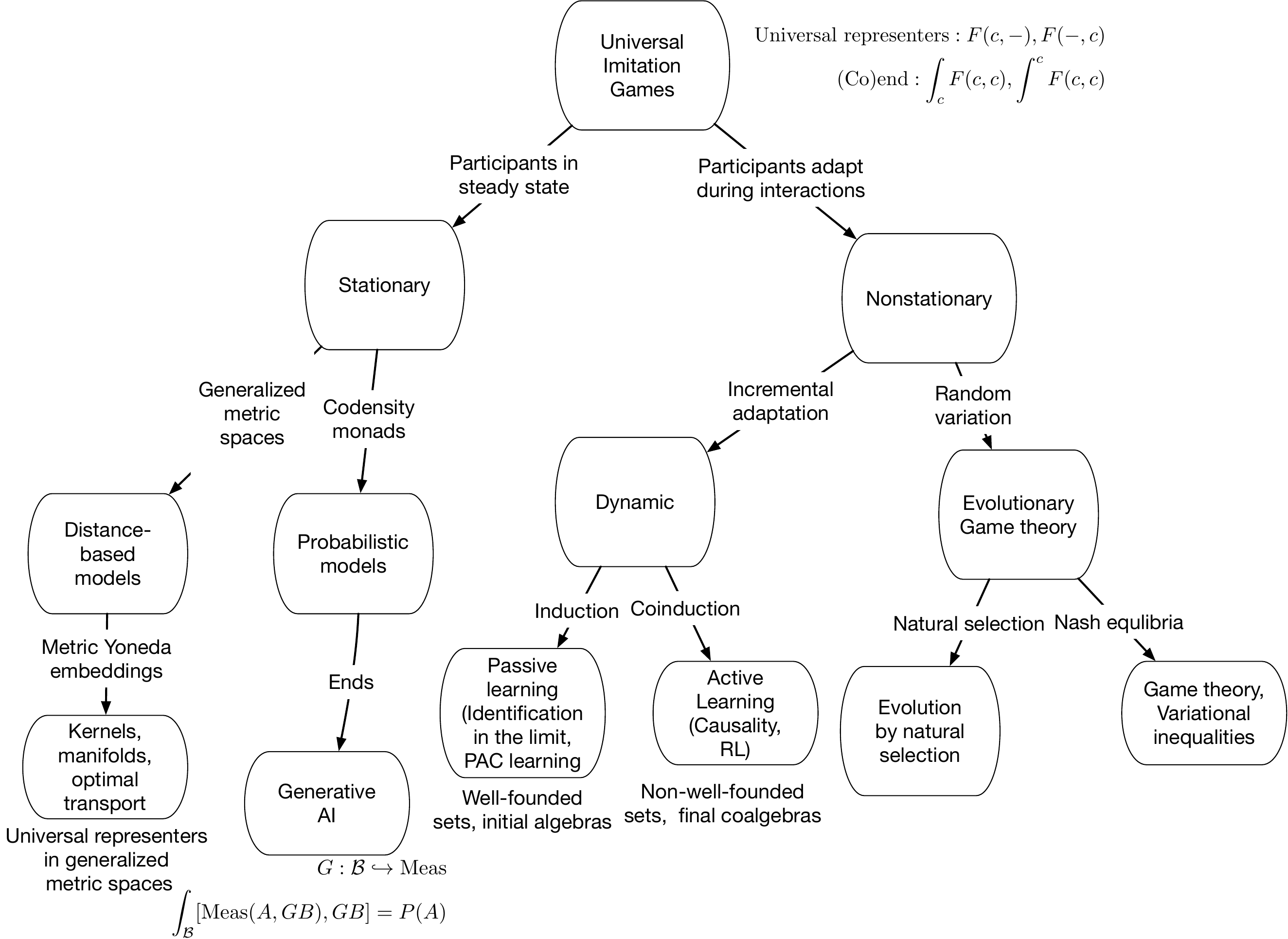}
\caption{A classification of universal imitation games based on the characteristics of the participants and the framework employed to discriminate among them.} 
\label{uigclasses} 
\end{figure}

Figure~\ref{uigclasses} depicts a classification of universal imitation games (UIGs) that will be the primary focus of our paper. In Turing's original model, discussed above, the participants were assumed to be {\em static}, that is, they were not adapting during the course of the interactions. In the case that participants are static, we can try to distinguish between them using a large variety of methods, ranging from distance-based methods like kernel methods \cite{kernelbook} or  {\em optimal transport} \cite{ot}, or {\em density-based } methods such as constructing a generative model. The fundamental ideas developed by Yoneda, in particular Yoneda embeddings $\yo(x) = {\cal C}(-, x)$ and (co)ends play a major role in our paper. For example, \cite{Avery_2016} showed that the set of probability distributions on a set can be defined as a codensity monad, which is mathematically equivalent to an end of a bifunctor defined over a functor $G: {\cal B} \hookrightarrow \mbox{Meas}$, where ${\cal B}$ is a category  consisting of all convergent sequences $[0,1]$ with affine maps, and $\mbox{Meas}$ is the category of all measurable spaces. Probabilities are essentially final objects, and in contrast, topological embeddings of the type produced by dimensionality reduction methods like UMAP \cite{umap} correspond to coends. Another theme of our paper is that we model generative models categorically as universal coalgebras \cite{jacobs:book,rutten2000universal,SOKOLOVA20115095}. 

When participants adapt during interactions, we study two special cases: in {\em dynamic UIGs}, ``learners" imitate ``teachers".  We contrast the initial object  framework of {\em passive learning from observation} over well-founded sets using inductive inference -- extensively studied by Gold, Solomonoff, Valiant, and Vapnik -- with the final object framework of {\em coinductive inference} over non-well-founded sets and universal coalgebras, which formalizes  learning from {\em active experimentation} using causal inference or reinforcement learning. We define a category-theoretic notion of minimum description length or Occam's razor based on final objects in coalgebras.  

\begin{figure}[h]
\centering
\includegraphics[scale=.4]{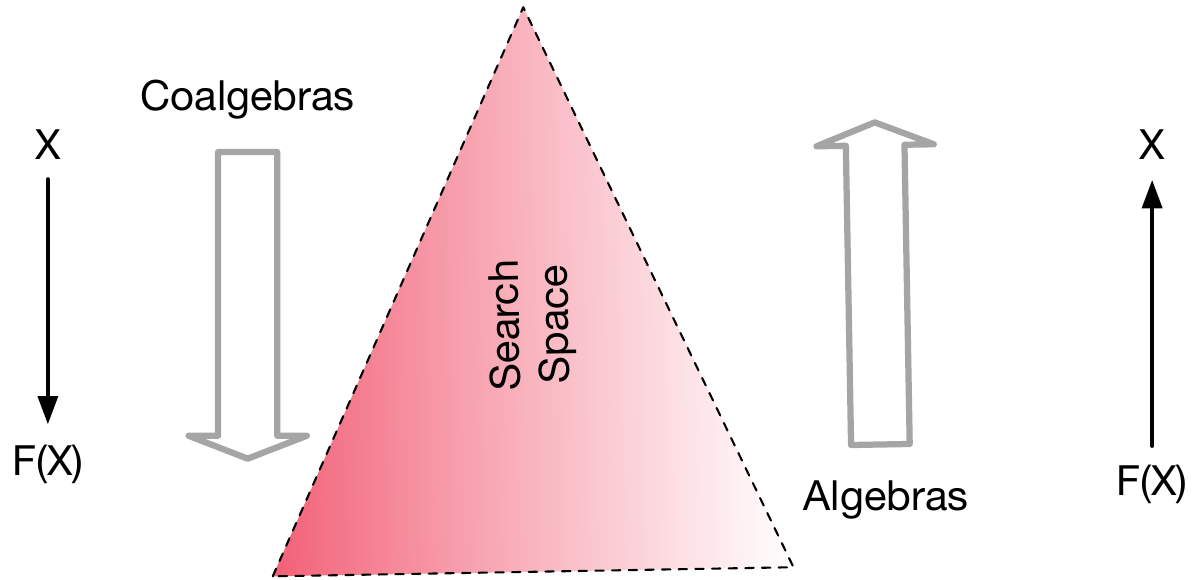}
\caption{In the study of dynamic UIGs, we contrast the approach of inductive inference, which we associate with initial algebras over well-founded sets, with the framework of coinductive inference, which relates to final coalgebras over non-well-founded sets. Algebras can be defined as mappings $F(X) \rightarrow X$, for some functor $F$ on a category ${\cal C}$ (e.g., of {\bf Sets}), where $X$ is some object in ${\cal C}$, whereas coalgebras go in the opposite direction $X \rightarrow F(X)$.  } 
\label{algebracoalgebra} 
\end{figure}

Figure~\ref{algebracoalgebra} illustrates our study of dynamic UIGs, where ``learner" participants are trying to imitate the behavior of ``teacher" participants. We contrast two approaches, representing initial and final objects in a category of coalgebras. Induction corresponds to initial algebras over a category defined by the ${\cal P}$ powerset functor over sets. In contrast, coinduction corresponds to final coalgebras. Formally,  {\em inductive inference} studied by Gold \cite{GOLD1967447}, Solomonoff \cite{SOLOMONOFF19641}, Valiant \cite{DBLP:journals/cacm/Valiant84}, and Vapnik \cite{DBLP:journals/tnn/Vapnik99} corresponds to mathematical induction defined by initial objects over algebras.  We contrast inductive inference with coinductive inference in universal coalgebras \cite{Aczel1988-ACZNS,rutten2000universal}, which formally corresponds to final objects. Reinforcement learning algorithms, such as TD-learning \cite{DBLP:books/lib/SuttonB98}, are typically analyzed as stochastic approximation methods \cite{rm,borkar}. In our paper, we model RL in terms of the metric coinduction framework \cite{kozen} over a coalgebra. The Markov decision process (MDP) model, and its innumerable variants, are all easily modeled as probabilistic coalgebras \cite{SOKOLOVA20115095}, and the process of TD-learning is essentially a stochastic coinduction inference step towards determining the final coalgebra. This perspective differs considerably from the standard stochastic approximation perspective used to analyze RL algorithms thus far \cite{bertsekas:alphazero,bertsekas:rlbook}. The fundamental idea underlying dynamic programming (DP) and reinforcement learning (RL) methds is the Bellman optimality principle, which essentially states that the restriction of any shortest path to a segment must again produce a shortest path, otherwise we can produce a shorter overall path by switching to a shorter segment. This principle is just a special case of the general principle of {\em sheaves}, which we discuss in Section~\ref{sheavestopoi}. To the best of our knowledge, the relationship between the Bellman optimality equation and the theory of sheaves \cite{maclane:sheaves} has not been made before, and represents one of many such examples of the deep insights that come from  category-theoretic perspective. Also, by understanding that RL is essentially based on the discovery of final coalgebras in a category of coalgebras, it is possible to explore the vast repertoire of probabilistic coalgebras that have been studied by \cite{SOKOLOVA20115095} and others, which may yield many new RL approaches.

Finally, we explore {\em evolutionary UIGs}, where a society of participants is playing a large-scale imitation game. Participants in evolutionary UIGs can go extinct from ``birth-death" evolutionary processes that model how novel technologies or ``mutants" disrupt previously stable equilibria.  In the case when the participants are {\em evolving}, we introduce the framework of game theory as developed by von Neumann and Morgenstern \cite{vonneumann1947} and Nash \cite{nash}, combined with the use ideas from Darwin's model of evolution by natural selection. We propose category-theoretic formulations of {\em evolutionary game theory} \cite{nowak}. 

Figure~\ref{ttquestions} illustrates a sample of interactions between an evaluator and a participant, with the questions and replies taken from Turing's original paper \cite{turing}. The essence of our paper is to model such interactions by morphisms into and out of an object in a category \cite{maclane:71}. A number of previous studies have explored how natural language can be modeled in terms of objects and morphisms of a category \cite{asudeh,bradley2021enriched,coecke2020mathematics}. At the outset, it is important to clarify that we are proposing a significantly broader vision of imitation games, which we refer to as {\em universal imitation games}. More specifically, we are not constraining ourselves to interactions that are purely focused on text, but we include other forms of interaction, including {\em causal experimentation} \cite{rubin-book,pearl-book}, use of machine learning methods more broadly, including distance-based kernel methods \cite{kernelbook}, statistical methods \cite{tibshirani2010regularization}, information theoretic methods \cite{cover}, inductive inference \cite{GOLD1967447,SOLOMONOFF19641}, and universal coalgebraic methods that rely on coinduction \cite{jacobs:book,rutten2000universal}. In short, the aim of our framework is to explore the full gamut of approaches available to determine the identity of abstract objects in a category, with a primary goal being its application in AI. To clarify this point further, we view the problem of deciding if a particular image was created by a human or a diffusion based generative AI system \cite{DBLP:conf/nips/SongE19}  as an example of an imitation game,  or deciding if a particular computer program was written by a human or an AI copilot. 

One significant strength of category theory is that it can reveal surprising similarities between algebraic structures that superficially look very different, such as metric spaces and partially ordered sets (see Table~\ref{homtable}). A key principle that is often exploited is to explicitly represent the structure in the collection of morphisms between two objects. That is, for some category {\cal C}, the {\bf Hom}$_{\cal C}(a,b)$ between objects $a$ and $b$ might itself have some additional structure beyond that of merely being a collection or a set. For example, in the category of vector spaces, the set of morphisms (linear transformations) between two vector spaces $U$ and $V$ is itself a vector space. So-called {\cal V}-enriched categories signify cases when the {\bf Hom} values are specified in some structure {\cal V}. Examples include metric spaces, where the {\bf Hom} values are non-negative real numbers representing distances, and partially ordered sets (posets) where the {\bf Hom} values are Boolean. 

\begin{table}[t] 
\caption{UIGs can be ``played" in many different categories.}
\vskip 0.1in
\begin{minipage}{0.7\textwidth}
 \begin{small}\hfill 
  \begin{tabular}{|c|c|c|c|c|} \hline 
{\cal C}  & {\bf Hom}$_{\cal C}$ values   & Composition  and & Domain  & Domain for \\ 
& & and identity law & for composition & for identity laws \\ \hline 
General category & Sets & Functions & Cartesian product & One element set \\ \hline 
Metric spaces & Non-negative numbers  & $\geq$  & sum & zero \\ \hline  
Posets & Truth values & Entailment & Conjunction & true \\ \hline
{\cal V}-enriched  & objects  & morphisms  & tensor product  & unit object for   \\ 
category & in {\cal V} & in {\cal V} & in {\cal V} & tensor product in {\cal V} \\ \hline 
$F:{\cal C}^{op} \times C \rightarrow D$ & Bivalent functors & Dinatural transformations & Probabilities, distances & Unit object \\
Coends, ends & & & topological embeddings & \\ \hline 
\end{tabular}
\end{small}
\end{minipage}
\label{homtable}
\end{table}

We view imitation games as fundamental to much of the research in AI in many areas. For example, an autonomous car that is traversing the streets of a busy city like San Francisco is engaged in playing an imitation game at many levels. It has to decide the identity of unknown objects that appear in its perceptual field: is the perceived object a pedestrian or another car?  It has to determine its current ``state", which could be a complex function of its past observations. Many approaches to modeling state have been studied in the literature, including diversity based representations in automata based on tests \cite{DBLP:journals/jacm/RivestS94}, extensions of diversity automata to stochastic sequential models such as partially observable Markov decision processes \cite{singh-uai04}, coalgebraic notions of state \cite{jacobs:book}, as well as causal experimentation methods \cite{rubin-book}. One can view the problem of ``black box identification" as containing within it much of the complexity of AI. For example, conversational models involve determining what someone else knows or is thinking about, which involves formal models of knowledge such as modal logic \cite{halpern:ac}. 

We build on the extensive literature on object isomorphisms in category theory, the bulk of which was developed after Turing's death in 1954, to formulate {\em universal properties} of imitation games, building on the rich mathematical literature on {\em category theory} \cite{maclane:71,riehl2017category,richter2020categories,Lurie:higher-topos-theory}. As we do not assume the prospective reader to have a background in category theory, we have sought to provide a condensed but detailed overview of those parts of category theory that are relevant to formulating UIGs. The aim of category theory is to build a ``unified field theory" of mathematics based on a simple model of {\em objects} that {\em interact} with each other, analogous to directed graph representations. In graphs, vertices represent arbitrary entities, and the edges denote some form of (directional) interaction. In categories, there is no restriction on how many edges exist between any two objects. In a {\em locally small} category, there is assumed to be a ``set's worth" of edges, meaning that it could still be infinite! In addition, small categories are assumed to contain a set's worth of objects (again, which might not be finite).  The framework is {\em compositional}, in that categories can be formed out of objects, {\em arrows} that define the interaction between objects, or {\em functors} that define the interactions between categories. This compositionality gives us a rich {\em generative} space of models that will be invaluable in modeling UIGs. 

We expand on Turing's concept and analyze a broader class of  {\em universal imitation games} (UIGs) using mathematics -- principally {\em category theory} \cite{maclane:71,riehl2017category,richter2020categories,Lurie:higher-topos-theory} -- that was largely developed after Turing. Category theory gives an exceptional set of  ``measuring tools" for imitation games. Choosing a category means selecting a collection of objects and a collection of composable arrows by which each pair of objects interact. This choice of objects and arrows defines the measurement apparatus that is used in formulating and solving an imitation game. A key result called the Yoneda Lemma shows that {\em objects can be identified up to isomorphism solely by their interactions with other objects}. Category theory also embodies the principle of  {\em universality}: a property is universal if it defines an {\em initial} or {\em final} object in a category. Many approaches for solving imitation games, such as probabilistic generative models or distance metrics, can be abstractly characterized as initial or final objects in a category of {\em wedges}, where the objects are bifunctors and the arrows are dinatural transformations.  \cite{loregian_2021} has an excellent treatment of the calculus of coends, which we will discuss in detail later in the paper. 

We classify UIGs into three types: {\em static UIGs}, which is closest to Turing's original formulation, but based  on (weak) isomorphism and homotopy; {\em dynamic UIGs}, where one participant -- the ``learner" -- is seeking to ``imitate" the other participant -- a ``teacher" -- where we contrast the initial object framework of theoretical machine learning defined by  {\em inductive inference} first studied by Gold and Solomonoff in the 1960s with the dynamical systems final object framework of {\em coinductive inference} developed by Aczel and Rutten since the 1980s; and finally {\em evolutionary UIGs}, which models a network economy of participants and we analyze the possibility that a new ``mutant" can cause the entire economy to imitate it by being better fit. Evolutionary imitation games model both the spread of diseases through a population or the spread of new ideas or technologies that promise investors a better return. We give a category-theoretic model of evolutionary game theory, combining Darwin's model of evolution by natural selection with the framework of games and economic behavior pioneered by von Neumann, Morgenstern, and Nash. 

Abstractly, the problem of solving imitation games can be viewed as comparing two potentially infinitely long strings of tokens that represent the communication from and to the participants. At a high level, our approach builds on the notion of object isomorphism in category theory. We formulate the question of imitation games as deciding if two objects are isomorphic:  

\begin{definition}
Two objects $X$ and $Y$ in a category ${\cal C}$ are deemed {\bf {isomorphic}}, or $X \cong Y$ if and only if there is an invertible morphism $f: X \rightarrow Y$, namely $f$ is both {\em left invertible} using a morphism $g: Y \rightarrow X$ so that $g \circ f = $ {{\bf id}}$_X$, and $f$ is {\em right invertible} using a morphism $h$ where $f \circ h = $ {{\bf id}}$_Y$. 
\end{definition}

Category theory provides a rich language to describe how objects interact, including notions like {\em braiding} that plays a key role in quantum computing \cite{Coecke_2016}. The notion of isomorphism can be significantly weakened to include notions like homotopy.  This notion of homotopy generalizes the notion of homotopy in topology, which defines why an object like a coffee cup is topologically homotopic to a doughnut (they have the same number of ``holes''). 

In the category {\bf {Sets}}, two finite sets are considered isomorphic if they have the same number of elements, as it is then trivial to define an invertible pair of morphisms between them. In the category {\bf {Vect}}$_k$ of vector spaces over some field $k$, two objects (vector spaces) are isomorphic if there is a set of invertible linear transformations between them. As we will see below, the passage from a set to the ``free'' vector space generated by elements of the set is another manifestation of the universal arrow property. In the category of topological spaces {\bf Top}, two objects are isomorphic if there is a pair of continuous functions that makes them {\em homeomorphic} \cite{may2012more}. A more refined category is {\em hTop}, the category defined by topological spaces where the arrows are now given by homotopy classes of continuous functions. 
     
 \begin{definition}
 Let $C$ and $C'$ be a pair of objects in a category ${\cal C}$. We say $C$ is {\bf {a retract}} of $C'$ if there exists maps $i: C \rightarrow C'$ and $r: C' \rightarrow C$ such that $r \circ i = \mbox{id}_{\cal C}$. 
 \end{definition}
 
 \begin{definition}
 Let ${\cal C}$ be a category. We say a morphism $f: C \rightarrow D$ is a {\bf {retract of another morphism}} $f': C \rightarrow D$ if it is a retract of $f'$ when viewed as an object of the functor category {\bf {Hom}}$([1], {\cal C})$. A collection of morphisms $T$ of ${\cal C}$ is {\bf {closed under retracts}} if for every pair of morphisms $f, f'$ of ${\cal C}$, if $f$ is a retract of $f'$, and $f'$  is in $T$, then $f$ is also in $T$. 
 \end{definition}

 The point of these examples is to illustrate that choosing a category, which means choosing a collection of objects and arrows, is like defining a measurement system for deciding if two objects are isomorphic. Thus, in the framework of universal imitation games, as we will show in detail later, there are many choices of categories and each one provides some set of measurement tools. Application areas, like natural language processing or computer vision or robotics, may dictate which set of measurement tools is most appropriate. It is, however, possible to state results that hold regardless of what  measurement tools are used (such as the Yoneda Lemma), and we will review the most important of these results for imitation games. 

To motivate why homotopy might be useful, a number of recent generative AI systems based on neural or state-space sequence models \cite{DBLP:conf/iclr/GuJTRR23,DBLP:conf/nips/VaswaniSPUJGKP17} enable the summarization of long documents, which can be viewed as compression of a sequence of tokens. In what sense can we determine if the compressed sequence and the original sequence denote the same object? Homotopy provides an answer. 

A richer model of interaction is provided by {\em simplicial sets} \cite{may1992simplicial}, which is a {\em graded set} $S_n, \ n \geq 0$, where $S_0$ represents a set of non-interacting objects, $S_1$ represents a set of pairwise interactions, $S_2$ represents a set of three-way interactions, and so on. We can map any category into a simplicial set by constructing sequences of length $n$ of composable morphisms. For example, we can model sequences of words in a language as composable morphisms, thereby constructing a simplicial set representation of language-based interactions in an imitation game. Then, the corresponding notion of homotopy between simplicial sets is defined as \cite{richter2020categories}: 

 \begin{definition}
  Let X and Y be simplicial sets, and suppose we are given a pair of morphisms $f_0, f_1: X \rightarrow Y$. A {\bf {homotopy}} from $f_0$ to $f_1$ is a morphism $h: \Delta^1 \times X \rightarrow Y$ satisfying $f_0 = h |_{{0} \times X}$ and $f_1 = h_{ 1 \times X}$. 
 \end{definition}

 These definitions illustrate how category theory can be useful in defining general ways to formulate the problem of imitation games. We now illustrate how category theory is useful in defining generative models that capture much of the work in generative AI, including sequence models. 

We expand significantly on Turing's model of imitation games, which limited itself to static games, to include {\em dynamic} imitation games where one participant is a ``teacher" and the other is a ``learner", as well as {\em evolutionary} imitation games, where there is an entire economy of participants that are competing to maximize individual fitness values.  

Our core theoretical framework is based defining participants in a UIG as {\em functors} that act on {\em categories},  collections of {\em objects} that interact through {\em arrows} or {\em morphisms}. To solve Turing's imitation game, we need to decide if the infinite stream of tokens (e.g., words in a natural language) from two participants in an imitation game (possibly one or both being humans or machines) are ``indistinguishable" from each other. Category theory asks whether two objects are {\em isomorphic}, not whether they are equal. Accordingly, our formulation of UIGs is based on determining whether two UIG objects are isomorphic, not whether they are equal as Turing originally phrased it. In addition, we can bring to bear the notion of {\em homotopy} in category theory \cite{richter2020categories,Quillen:1967} and ask if the two participants in a static UIG are homotopic. 

\begin{definition}
    {\bf Static UIGs}: Given a potentially infinite stream of (multimedia) tokens from and to two participants in an imitation game (see Figure~\ref{img}),  modeled abstractly as objects $A$ and $B$ in some category ${\cal C}$,  are these two objects {\em (weakly) isomorphic}?
\end{definition}

In the second type of UIG that we study, it is known that the two participants are in fact different {\em initially} from each other. The goal of {\em dynamic UIGs} is to determine if asymptotically, the two participants are indistinguishable. This second model captures the process by which generative AI systems are being developed today to pass imitation games, using large language models (LLMs) \cite{DBLP:conf/nips/VaswaniSPUJGKP17} or through building generative models by diffusion \cite{DBLP:conf/nips/SongE19}. An LLM at the beginning outputs random tokens, but over a long process of training on potentially trillions of tokens from datasets like {\tt Common Crawl}, they appear capable in principle of passing a Turing test. 

\begin{definition}
    {\bf Dynamic UIGs}: This formulation differs from  Turing's original conception, in that the two participants are known to be different initially, but the question is to determine if asymptotically, the two participants become indistinguishable. That is, given a potentially infinite stream of tokens from the two participants in a dynamic UIG, modeled again as objects $A$ and $B$ in some category ${\cal C}$, do these objects become isomorphic {\em in the limit}? We can intuitively think of participant $A$ as a ``teacher" and participant $B$ as the ``learner" in a dynamic UIG. 
\end{definition}

Our formulation of dynamic UIGs relates closely to the original theoretical model of {\em inductive inference} or {\em language identification in the limit} studied by Gold \cite{GOLD1967447} and Solomonoff \cite{SOLOMONOFF19641}, and subsequently refined by many others, incuding Valiant \cite{DBLP:journals/cacm/Valiant84} and Vapnik \cite{DBLP:journals/tnn/Vapnik99}. We use the mathematics of {\em non-well-founded} sets \cite{Aczel1988-ACZNS} and {\em universal coalgebras} \cite{jacobs:book, rutten2000universal} to define a relation called {\em bisimulation} to address this problem. We then illustrate how our formulation of dynamic UIGs gives rise to a new framework called {\em coinductive inference} for machine learning (ML), and contrast it to the 60-year-old framework of {\em inductive inference} proposed by Gold and Solomonoff.

Finally, we explore a third formulation of UIGs termed {\em evolutionary UIGs}, combining the principle of evolution by natural selection pioneered by Darwin with the framework of games and economic behavior pioneered by von Neumann and Morgenstern, Nash, and subsequently developed by many others over the past 70 years \cite{Maschler_Solan_Zamir_2013}. Fundamentally, humans and all other biological organisms on Earth evolved by natural selection, and as the recent Covid-19 pandemic has illustrated, the evolutionary game is one that is indeed a game of the {\em survival of the fittest}. 

\begin{definition}
    {\bf Evolutionary UIGs}: The third formulation differs from the previous two, in that there is a network economy of participants, each of which has a certain {\em fitness} to an environment, and the goal is to understand if a particular {\em mutant} can cause the entire economy to ``imitate" it.  Evolutionary UIGs model the spread of diseases through societies, or the spread of new ideas and technologies that promise investors a better return.  The problem of evolutionary UIGs is to determine if the economy find an {\em equilibrium} point. 
\end{definition}

Evolutionary UIGs integrates  the framework of evolutionary dynamics \cite{nowak}, and non-cooperative games pioneered by Nash \cite{nash}. In particular, we focus on identifying fundamental differences between adaptation by incremental processing of experience, as in dynamic UIGs, with the process of random mutation by natural selection, where global changes occur to disrupt an equilibrium. We investigate several models of UIGs, from Valiant's evolvability \cite{evolvability}, to birth-death stochastic processes \cite{novak:book}. We give a detailed introduction to variational inequalities (VIs) \cite{facchinei-pang:vi}, which generalize game theory and optimization and illustrate it with an example of playing a generative AI network economy game. We describe a metric coinduction stochastic approximation method for solving VIs \cite{iusem,DBLP:journals/mp/WangB15}, and outline how to extend such methods to evolutionary UIGs. 

The length of this paper is largely due to its tutorial nature, beginning in the next section with a detailed overview of category theory for the reader who may be new to this topic. In Section~\ref{staticuig}, besides giving a detailed introduction to category theory, we also define several examples of static UIGs. We turn to describe dynamic UIGs in Section~\ref{dynamicuig}. Finally, in Section~\ref{evolutionaryuig}, we define evolutionary UIGs.

\section{Static Universal Imitation Games: An Introduction to Categories and Functors}

\label{staticuig} 

In this  section, we introduce category theory more formally, and illustrate how it is useful in modeling universal imitation games (UIGs). As noted above, UIGs fundamentally involve determining answering the question {\tt Human or Machine?}, in the original model that Turing proposed \cite{turing}. We propose to generalize this notion in the remainder of this paper, fundamentally starting with our use of category theory, a field of mathematics that has largely developed since Turing. We turn the question posed by Turing into a question of deciding if two objects in a category are (weakly) isomorphic. Thus, in reading this somewhat long section summarizing a lot of work in category theory over the past seven decades, it is useful to keep in mind that at the heart of what we are trying to do is to decide if two participants in a UIG are in some abstract sense ``similar" using the various tools provided by category theory. Different categories allow us to ``play" the imitation game in various ways, and this chapter is intended to whet the reader's appetite for the enormous variety of possible ways in which this question can be posed. 

Categories are comprised of objects, which can be arbitrarily complex, which gives it an unmatched power of abstraction in virtually all of mathematics \cite{maclane:71,riehl2017category,richter2020categories}.  Categories are compositional structures, and can be built out of smaller category-like objects. Objects in a category interact with each other through arrows or morphisms. One of the most remarkable results in category theory -- the Yoneda Lemma -- shows that objects in a category can be defined by their interactions. This result forms the underlying motif throughout the rest of this paper, as it relates closely to the problem posed by UIGs. Another central principle in category theory is the notion of a {\em universal property}: a property is defined to be universal if it represents an {\em initial} or {\em final} object in a category. There is a unique arrow from the initial object in a category to every other object. Analogously, there is a unique morphism from every object in the category into the final object. These two special cases illustrate the theme of the Yoneda Lemma of characterizing every object from its interactions. We introduce universal constructions through colimits and limits, pullbacks, equalizers and co-equalizers, which give a rich and flexible way of integrating over objects to find common properties. Functors map one category to another, mapping not just objects, but also the morphisms in a category. Many of the generative models we define in this paper will correspond to functors over some category. 

Category theory \cite{maclane:71,riehl2017category,maclane:sheaves} differs substantially from earlier formulations of AI in terms of causal inference, logic, probability theory, neural networks, statistics, or optimization, all of which essentially can be shown to define special types of categories (e.g., causal models are endofunctors in the category of preorders, probabilities are defined as endofunctors over the category of measurable spaces, optimization methods represent endofunctors over the category of vector spaces, etc). As we will see in the remainder of the paper, many of the theoretical models used in AI can be described in the language of category theory, which provides an unmatched power of abstraction in all of mathematics. It is also singularly suited to modeling computation, and has been extensively explored in other areas of computer science \cite{jacobs:book}. 

\begin{figure}[h]
\centering
\includegraphics[scale=.4]{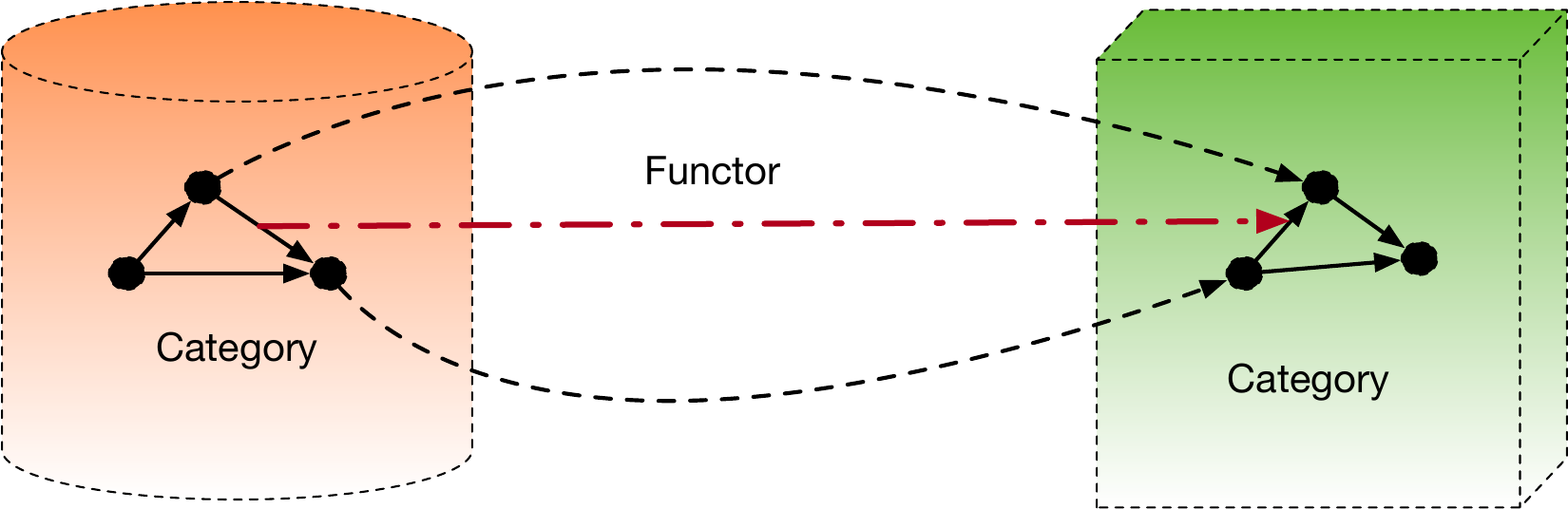}
\caption{Categories are defined by collection of arbitrary objects that interact through morphisms (also called arrows). Functors map objects from one category into another, but also map the arrows of the domain category into corresponding arrows in the co-domain category. An underlying motif that runs through this entire paper is to view AI and ML systems as functors acting on categories.}
\label{functors} 
\end{figure}

\subsection{Categories and Functors} 

Table~\ref{UIGcategories} gives a few examples from a myriad of ways in which categories can be defined for formulating UIGs, including approaches based on information theory \cite{baez}, statistical methods based on a decomposable probability distribution \cite{pearl-book}, causal methods based on {\em experimentation} to test a difference between the participants \cite{pearl-book,rubin-book}, reinforcement learning methods that are based on modeling the participants as a stochastic dynamical system \cite{DBLP:books/lib/SuttonB98},  formal models of knowledge \cite{fagin}, universal coalgebraic methods of modeling dynamical systems \cite{jacobs:book,rutten2000universal}, and last but not least, generative AI models based on large language models (LLMs) \cite{DBLP:conf/nips/VaswaniSPUJGKP17} and diffusion models \cite{DBLP:conf/nips/SongE19}. 

\begin{table}[t]
    \centering
    \begin{tabular}{|c|c|c|} \hline 
      {\bf Category}   &  {\bf Characterization} & {\bf UIG Application} \\ \hline 
     Topological spaces  &  Continuous functions & Analyze document summarization\\ \hline 
      Measurable spaces     &  Probability monads  & Statistical model of participants\\ \hline 
      Coalgebras    &  $X \rightarrow F(X)$, $F$ is a functor over sets &  Generative AI models \\ \hline
      Simplicial sets & $[n] = (0, 1, \ldots, n)$ as objects & Evolutionary UIGs \\ \hline 
      Wedges & Dinatural transformations between $F: {\cal C}^{op} \times {\cal C} \rightarrow {\cal D}$ & Generalized metric spaces \\ \hline 
      Coends & $\int^D {\cal C}(D, D)$ models topological embeddings & Manifold representations \\ \hline
       Ends & $\int_D {\cal C}(D, D)$ models integration over measures & Categorical probability \\ \hline
    \end{tabular} \vskip 0.2in
    \caption{Categories can be defined in a myriad of ways for formulating UIGs. We illustrate a few possibilities here.}
    \label{UIGcategories}
\end{table}

Category theory is based fundamentally on defining {\em universal properties} \cite{riehl2017category}, which can be defined as the {\em initial} or {\em final} object in some category. To take a simple example, the Cartesian product of two sets can be defined as the set of ordered pairs, which tells us what it is, but not what it is good for, or why it is special in some way. Alternatively, we can define the Cartesian product of two sets as an object in the category {\bf Sets} that has the unique property that every function onto those sets must decompose uniquely as a composition of a function into the Cartesian product object, and then a projection component onto each component set. Furthermore, among all such objects that share this property, the Cartesian product is the final object. In sum, we seek such universal properties that define solutions to imitation games.

We propose a {\em categorical calculus} for formulating {\em universal imitation games} (UIGs),  building on a $70$-year rich legacy of research in AI, mathematics, and computer science. Principally, our framework builds on ideas in category theory that were principally developed since Turing tragically died in 1954 \cite{maclane:71,maclane:sheaves,riehl2017category,richter2020categories}. That very year, Saunders Maclane, a distinguished American mathematician, met with a young Japanese mathematician, Nobuo Yoneda, in the {\em Gare du Nord} train station in Paris. That possibly chance encounter led to the development of a profound series of mathematical ideas that revolve around a central core question: {\em can objects can be identified by their interactions with other objects}?. The relevance of this question to UIGs is clear, if we abstractly model the participants in a UIG as objects in some category. In a nutshell, our categorical calculus for UIGs posits that two participants in a UIG are indistinguishable if they are {\em isomorphic}. One cardinal principle of category theory is to never ask if two objects are {\em equal}, only if they are equivalent under a pair of invertible mappings. This categorical perspective succinctly summarizes our formulation of UIGs. 

Maclane published a classic textbook on category theory \cite{maclane:71}, where he introduced the mathematical world to the {\em Yoneda Lemma}.  Our categorical calculus for UIGs builds on the Yoneda Lemma, as well as many ideas that have been developed since, both in category theory \cite{loregian_2021,richter2020categories,Lurie:higher-topos-theory}, as well as in theoretical computer science on {\em universal coalgebras} \cite{jacobs:book,rutten2000universal}, and the axiomatic foundations of non-well-founded sets \cite{Aczel1988-ACZNS}. In 1958, Daniel Kan introduced the concept of {\em adjoint functors}, denoted $F \vdash G$, where the functors $F$ and $G$ map between two categories ${\cal C}$ and ${\cal D}$ in opposite directions \cite{kan}. The fundamental principle here was to note that the collection of interactions between two objects $A$ and $B$ in a category ${\cal C}$, denoted variously as ${\bf Hom}_{{\cal C}}(A, B)$, or more simply as ${\cal C}(A, B)$ was {\em contravariant} in the first argument, but {\em covariant} in the second argument. It appeared to differ from a {\em tensor product} $\otimes$ functor that produced a new object $A \otimes B$, and was covariant in both arguments. As it turned out, both the ${\bf Hom}$ functor and the tensor product functor $\otimes$ were not independent, but formally adjoints of each other. Adjoint functors gives us a further refinement of our categorical calculus for UIGs, since we can now construct functors that map between two apparently dissimilar categories, which may indeed be comparable, such as the category of conditional independence models, modeled as {\em separoids} \cite{DBLP:journals/amai/Dawid01}, and the category of causal models, modeled by an algebraic structure, such as a directed acyclic graph (DAG) \cite{pearl-book}. This relationship was explored previously in \cite{DBLP:journals/entropy/Mahadevan23} In the context of UIGs, two objects in the category of separoids are only {\em statistically distinguishable}, whereas two objects in the causal category of DAG models are {\em causally distinct}. By suitably {\em enriching} the {\bf Hom} functor, we can model a rich variety of structures that capture properties of the interactions between two objects $A$ and $B$. 

Kan also introduced {\em Kan extensions} in his paper on adjoint functors, which provide us deeper insight into the study of dynamic UIGs, where initially one participant is a ``teacher" and the other is a ``learner" that are distinguishable initially, but may become indistinguishable in the limit. \citep{GOLD1967447} proposed the theoretical framework of {\em identification in the limit}, motivated by the remarkable ability of human children in acquiring natural language from hearing spoken text.  \citep{SOLOMONOFF19641} proposed {\em inductive inference} as a model for our ability to generalize from examples. Gold's framework was subsequently elaborated by \citep{DBLP:journals/cacm/Valiant84} and  \citep{DBLP:journals/tnn/Vapnik99}, based on introducing a probabilistic perspective, as well as introducing the need for efficient estimation with a polynomial number of samples or time.  Abstractly, the problem of inductive inference or identification in the limit can be viewed as extrapolating a function from samples of the function. Given a function $f: C \rightarrow D$ that is defined on some subset $C \subset M$ that we want to {\em extend} to the entire set $M$, there are no obvious canonical solutions to this problem as it is inherently ill-defined. Much of the work in machine learning over the past six decades has explored various formulations of this function induction problem, usually introducing some {\em regularization} principle such as {\em Occam's Razor}, or prefer the simplest hypothesis \cite{manifoldregularization}, or using some intrinsic measure of the complexity of an object, such as {\em algorithmic information theory} \cite{chaitin} based on the shortest program that can generate an object (also referred to as {\em Kolmogorov complexity}). In contrast, if instead, we assume that we are extending a {\em functor} $F: C \rightarrow D$ from some subcategory $C$ of a larger category $M$, there are two canonical and ``universal" solutions to the functor extension problem given by Kan extensions. Thus, the Kan extension gives us a powerful set of tools to formulate dynamic UIGs. 

Yoneda \cite{yoneda-end} developed another remarkable idea in 1960 based on defining the {\bf end} of a bifunctor $F: C^{op} \times C \rightarrow D$ that, like the {\bf Hom} functor above, is contravariant in the first argument, but covariant in the second argument. The end of a bifunctor $F: C^{op} \times C \rightarrow D$ refers to an object $\mbox{end}(P) \in {\cal D}$ that represents the terminal object in a category of {\em wedges} defined by {\em  dinatural transformations} between bifunctors $F,G: C^{op} \times C \rightarrow {\cal D}$. A dual notion is the {\bf coend}, which represents the initial object. The significance of the end and the coend of a bifunctor to this paper is that they capture universal properties of many AI and ML approaches. In particular, the end can be shown to define  the universal property of probability distributions on a space \cite{Avery_2016} in terms of the {\em codensity monad}, which is defined as the right Kan extension of a functor against itself. Analogously, the coend can be shown to define the topological realization functor of a (fuzzy) simplicial set, which has been the basis for a widely used manifold learning method called UMAP \cite{umap}. A detailed discussion of the calculus of coends is given in \cite{loregian_2021}. 

Finally, we build on the framework of {\em non-well-founded} sets \cite{Aczel1988-ACZNS} and {\em universal coalgebras} \cite{jacobs:book,rutten2000universal} in modeling UIGs. Here, we have a collection of participants that are neither static, nor are they dynamic, but rather they {\em evolve} by interacting with each other in a {\em non-cooperative game} \cite{Maschler_Solan_Zamir_2013}. von Neumann and Morgenstern \cite{vonneumann1947}, and subsequently Nash \cite{nash},  developed non-cooperative games as a model of economic behavior, which has had a persuasive influence on the development of AI, computer science, and of course, economics. It has also had profound significance in biology in the study of evolutionary games. Our framework builds on the theory of {\em network economics} \cite{nagurney:vibook}, where the participants in an evolutionary UIG are interacting on a graph. The economic activity that occurs in a network economy classifies the participants as {\em producers}, such as vendors of generative AI products, {\em transporters} who control the network infrastructure over which data is transmitted, and finally {\em consumers} who constitute the demand market that must choose between a producer and a transporter. Each participant in an evolutionary UIG is seeking to maximize a selfish goal, and the formulation of evolutionary UIGs is based on determining if the network economy, formulated as a category of coalgebras, has a {\em final coalgebra} \cite{rutten2000universal}.

\begin{figure}[t]
\begin{center}
\begin{tabular}{|c |c | } \hline 
{\bf Set theory } & {\bf Category theory} \\ \hline 
 set & object \\ 
 subset & subobject \\
 truth values $\{0, 1 \} $ & subobject classifier $\Omega$ \\
power set $P(A) = 2^A$ & power object $P(A) = \Omega^A$ \\ \hline
bijection & isomorphims \\ 
injection & monic arrow \\
surjection & epic arrow \\ \hline
singleton set $\{ * \}$ & terminal object ${\bf 1}$ \\ 
empty set $\emptyset$ & initial object ${\bf 0}$ \\
elements of a set $X$ & morphism $f: {\bf 1} \rightarrow X$ \\
- & functors, natural  transformations \\ 
- & limits, colimits, adjunctions \\ \hline
\end{tabular}
\end{center}
\caption{Comparison of notions from set theory and category theory.} 
\label{setvscategories}
\end{figure} 

\begin{figure}[h]
\centering
\includegraphics[scale=.5]{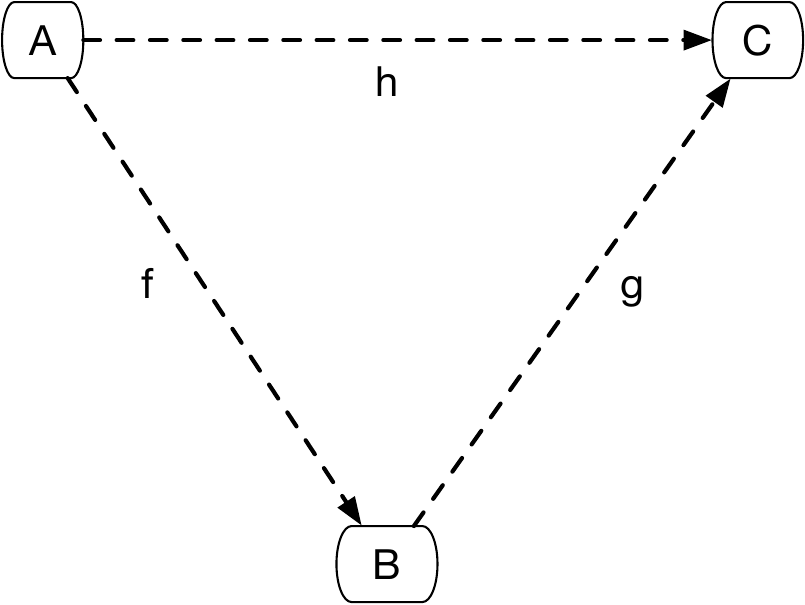}
\caption{Category theory is a compositional model of a system in terms of objects and their interactions.}
\label{morphisms} 
\end{figure}

Figure~\ref{setvscategories} compares the basic notions in set theory vs. category theory. Figure~\ref{morphisms} illustrates a simple category of 3 objects: $A$, $B$, and $C$ that interact through the morphisms $f: A \rightarrow B$, $g: B \rightarrow C$, and $h: A \rightarrow C$. Categories involve a fundamental notion of {\em composition}: the morphism $h: A \rightarrow C$ can be defined as the composition $g \circ f$ of the morphisms from $f$ and $g$. What the objects and morphisms represent is arbitrary, and like the canonical directed graph model, this abstractness gives category theory -- like graph theory -- a universal quality in terms of applicability to a wide range of problems. While categories and graphs and intimately related, in a category, there is no assumption of finiteness in terms of the cardinality of objects or morphisms. A category is defined to be {\em small} or {\em locally small} if there is a set's worth of objects and between any two objects, a set's worth of morphisms, but of course, a set need not be finite. As a simple example, the set of integers $\mathbb{Z}$ defines a category, where each integer $z$ is an object and there is a morphism $f: a \rightarrow b$ between integers $a$ and $b$ if $a \leq b$. This example serves to immediately clarify an important point: a category is only defined if both the objects and morphisms are defined. The category of integers $\mathbb{Z}$ may be defined in many ways, depending on what the morphisms represent. 

Briefly, a category is a collection of objects, and a collection of morphisms between pairs of objects, which are closed under composition, satisfy associativity, and include an identity morphism for every object. For example, sets form a category under the standard morphism of functions. Groups, modules, topological spaces and vector spaces all form categories in their own right, with suitable morphisms (e.g, for groups, we use group homomorphisms, and for vector spaces, we use linear transformations). 

A simple way to understand the definition of a category is to view it as a ``generalized" graph, where there is no limitation on the number of vertices, or the number of edges between any given pair of vertices. Each vertex defines an object in a category, and each edge is associated with a morphism. The underlying graph induces a ``free'' category where we consider all possible paths between pairs of vertices (including self-loops) as the set of morphisms between them. In the reverse direction, given a category, we can define a ``forgetful'' functor that extracts the underlying graph from the category, forgetting the composition rule. 

\begin{definition}
\label{cat-defn}
A {\bf {graph}} ${\cal G}$ (sometimes referred to as a quiver) is a labeled directed multi-graph defined by a set $O$ of {\em objects}, a set $A$ of {\em arrows},  along with two morphisms $s: A \rightarrow O$ and $t: A \rightarrow O$ that specify the domain and co-domain of each arrow.  In this graph, we define the set of composable pairs of arrows by the set 
\[ A \times_O A = \{\langle g, f  \rangle | \ g, f \in A, \ \ s(g) = t(f) \} \]

A {\bf {category}} ${\cal C}$ is a graph ${\cal G}$ with two additional functions: {${\bf id}:$} $O \rightarrow A$, mapping each object $c \in C$ to an arrow {${\bf id}_c$} and $\circ: A \times_O A \rightarrow A$, mapping each pair of composable morphisms $\langle f, g \rangle$ to their composition $g \circ f$. 
\end{definition}

It is worth emphasizing that no assumption is made here of the finiteness of a graph, either in terms of its associated objects (vertices) or arrows (edges). Indeed, it is entirely reasonable to define categories whose graphs contain an infinite number of edges. A simple example is the group $\mathbb{Z}$ of integers under addition, which can be represented as a single object, denoted $\{ \bullet \}$ and an infinite number of morphisms $f: \bullet \rightarrow \bullet$, each of which represents an integer, where composition of morphisms is defined by addition. In this example, all morphisms are invertible. In a general category with more than one object, a {\em groupoid} defines a category all of whose morphisms are invertible. A central principle in category theory is to avoid the use of equality, which is pervasive in mathematics, in favor of a more general notion of {\em isomorphism} or weaker versions of it. Many examples of categories can be given that are relevant to specific problems in AI and ML.  Some examples of categories of common mathematical structures are illustrated below. 

\begin{itemize} 

\item {\bf Set}: The canonical example of a category is {\bf Set}, which has as its objects, sets, and morphisms are functions from one set to another. The {\bf Set} category will play a central role in our framework, as it is fundamental to the universal representation constructed by Yoneda embeddings. 

\item {\bf Top:} The category {\bf Top} has topological spaces as its objects, and continuous functions as its morphisms. Recall that a topological space $(X, \Xi)$ consists of a set $X$, and a collection of subsets $\Xi$ of $X$ closed under finite intersection and arbitrary unions. 

\item {\bf Group:} The category {\bf Group} has groups as its objects, and group homomorphisms as its morphisms. 

\item {\bf Graph:} The category {\bf Graph} has graphs (undirected) as its objects, and graph morphisms (mapping vertices to vertices, preserving adjacency properties) as its morphisms. The category {\bf DirGraph} has directed graphs as its objects, and the morphisms must now preserve adjacency as defined by a directed edge. 

\item {\bf Poset:} The category {\bf Poset} has partially ordered sets as its objects and order-preserving functions as its morphisms. 

\item {\bf Meas:} The category {\bf Meas} has measurable spaces as its objects and measurable functions as its morphisms. Recall that a measurable space $(\Omega, {\cal B})$ is defined by a set $\Omega$ and an associated $\sigma$-field of subsets {\cal B} that is closed under complementation, and arbitrary unions and intersections, where the empty set $\emptyset \in {\cal B}$. 

\end{itemize}

Functors can be viewed as a generalization of the notion of morphisms across algebraic structures, such as groups, vector spaces, and graphs. Functors do more than functions: they not only map objects to objects, but like graph homomorphisms, they need to also map each morphism in the domain category to a corresponding morphism in the co-domain category. Functors come in two varieties, as defined below. 

 \begin{definition} 
A {\bf {covariant functor}} $F: {\cal C} \rightarrow {\cal D}$ from category ${\cal C}$ to category ${\cal D}$, and defined as \mbox{the following: }

\begin{itemize} 
    \item An object $F X$ (also written as $F(X)$) of the category ${\cal D}$ for each object $X$ in category ${\cal C}$.
    \item An  arrow  $F(f): F X \rightarrow F Y$ in category ${\cal D}$ for every arrow  $f: X \rightarrow Y$ in category ${\cal C}$. 
   \item The preservation of identity and composition: $F \ id_X = id_{F X}$ and $(F f) (F g) = F(g \circ f)$ for any composable arrows $f: X \rightarrow Y, g: Y \rightarrow Z$. 
\end{itemize}
\end{definition} 

\begin{definition} 
A {\bf {contravariant functor}} $F: {\cal C} \rightarrow {\cal D}$ from category ${\cal C}$ to category ${\cal D}$ is defined exactly like the covariant functor, except all the arrows are reversed. 
\end{definition} 

The {\em functoriality} axioms dictate how functors have to be behave: 

\begin{itemize} 

\item For any composable pair $f, g$ in category $C$, $Fg \cdot Ff = F(g \cdot f) $.

\item For each object $c$ in $C$, $F (1_c) = 1_{Fc}$.

\end{itemize} 

One of the motifs that underlies much of the application of category theory to AI and ML in this paper is to model problems in terms of functors. Figure~\ref{clustering} illustrates the use of functoriality in designing an algorithm for clustering points in a finite metric space based on pairwise distances, one of the most common ways to preprocess data in ML and statistics. Treating clustering as a functor implies the resulting algorithm should behave appropriately under suitable modifications of the input space. Here, we can define clustering formally as a functor $F$ that maps the input category of finite metric spaces {\bf FinMet} defined by $(X, d)$, where $X$ is a finite set of points in $\mathbb{R}^n$ and $d:X \times X \rightarrow [0, \infty]$ is a (generalized) finite metric space, and the output category {\bf Part} is the set of all partitions $X$ into subsets $X_i$ such that $\cup_i X_i = X$. 

\begin{figure}[h]
\centering
\includegraphics[scale=.5]{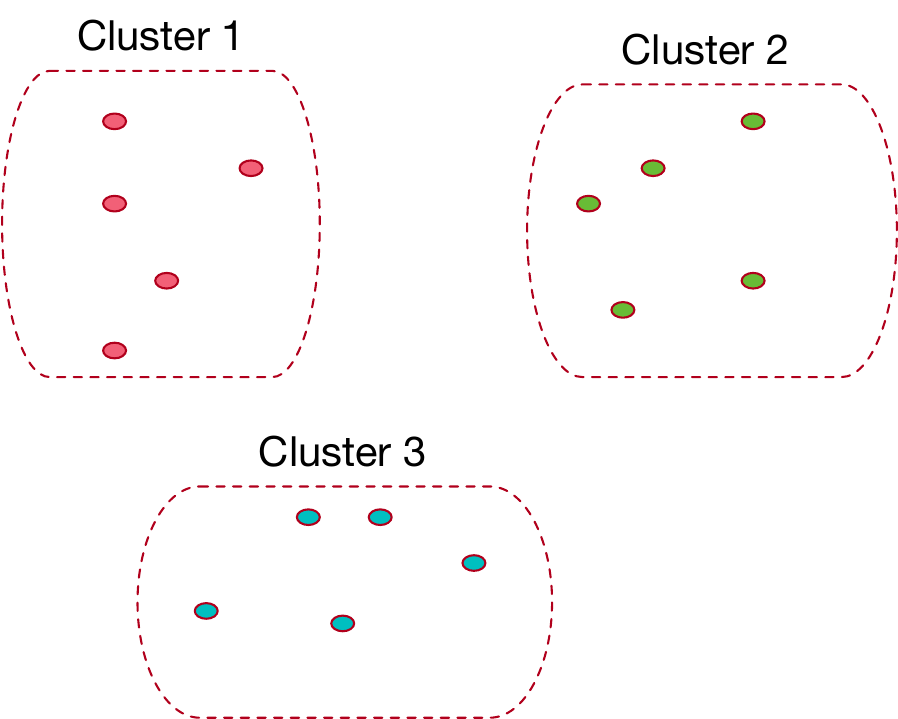}
\caption{One of the most traditional problems in statistics and ML is {\em clustering} by constructing a partition of a finite metric space by grouping points together based on their pairwise distances. Treating clustering as a functor implies designing an algorithm that behaves functorially: if the distances were scaled by some factor, the clustering should not change.}
\label{clustering} 
\end{figure}

One can impose three criteria on a clustering algorithm, which seem entirely natural, and yet, as Kleinberg \cite{kleinbergimpossibility} showed, no standard clustering algorithm satisfies all these conditions: 

\begin{itemize} 

\item {\em Scale invariance:} If the distance metric $d$ is increased or decreased by $c \cdot d$, where $c$ is a scalar real number, the output clustering should not change. In terms of Figure~\ref{clustering}, if the points in each cluster became closer together or further apart proportionally, the clustering should remain the same. 

\item {\em Completeness:} For any given partition of the space $X$, there should exist some distance function $d$ such that the clustering algorithm when given that distance function should return the desired partition. 

\item {\em Monotonicity:} If the  distance between points within each cluster in Figure~\ref{clustering} were decreased, and the distances between points in different clusters were increased, the clustering should not change either. 

\end{itemize} 

Remarkably, it turns out that no clustering algorithm exists that satisfies all these three basic conditions. Yet, as Carlsson  and Memoli showed \cite{Carlsson2010}, treating clustering as a functor makes it possible to define a modified clustering problem in terms of creating {\em persistent clusters} that overcomes Kleinberg's impossibility result. This simple but deep example reveals the power of functorial design, and gives a concrete illustration of the importance of categorical thinking in AI and ML.

\subsection{Natural Transformations and Universal Arrows}

Given any two functors $F: C \rightarrow D$ and $G: C \rightarrow D$ between the same pair of categories, we can define a mapping between $F$ and $G$ that is referred to as a natural transformation. These are defined through a collection of mappings, one for each object $c$ of $C$, thereby defining a morphism in $D$ for each object in $C$. 

\begin{definition}
    Given categories $C$ and $D$, and functors $F, G: C \rightarrow D$, a {\bf natural transformation} $\alpha: F \Rightarrow G$ is defined by the following data: 

    \begin{itemize}
        \item an arrow $\alpha_c: Fc \rightarrow Gc$ in $D$ for each object $c \in C$, which together define the components of the natural transformation. 
        \item For each morphism $f: c \rightarrow c'$, the following commutative diagram holds true: 

\[\begin{tikzcd}
	Fc &&& Gc \\
	\\
	{Fc'} &&& {Gc'}
	\arrow["{\alpha_c}", from=1-1, to=1-4]
	\arrow["Ff"', from=1-1, to=3-1]
	\arrow["{\alpha_{c'}}"', from=3-1, to=3-4]
	\arrow["Gf", from=1-4, to=3-4]
\end{tikzcd}\]

    \end{itemize}
    A {\bf natural isomorphism} is a natural transformation $\alpha: F \Rightarrow G$ in which every component $\alpha_c$ is an isomorphism. 
\end{definition}

We will see many examples of natural transformations in this paper, but we will use one example here to illustrate its versatility. 
Figure~\ref{causalfunctors} shows that we can model causal models as functors mapping from an input category of algebraic structures, whose objects represent structural descriptions of the causal model, into an output category of probabilistic representations, whose objects specify the semantics of the causal model. 

\begin{figure}[h]
\centering
\includegraphics[scale=.4]{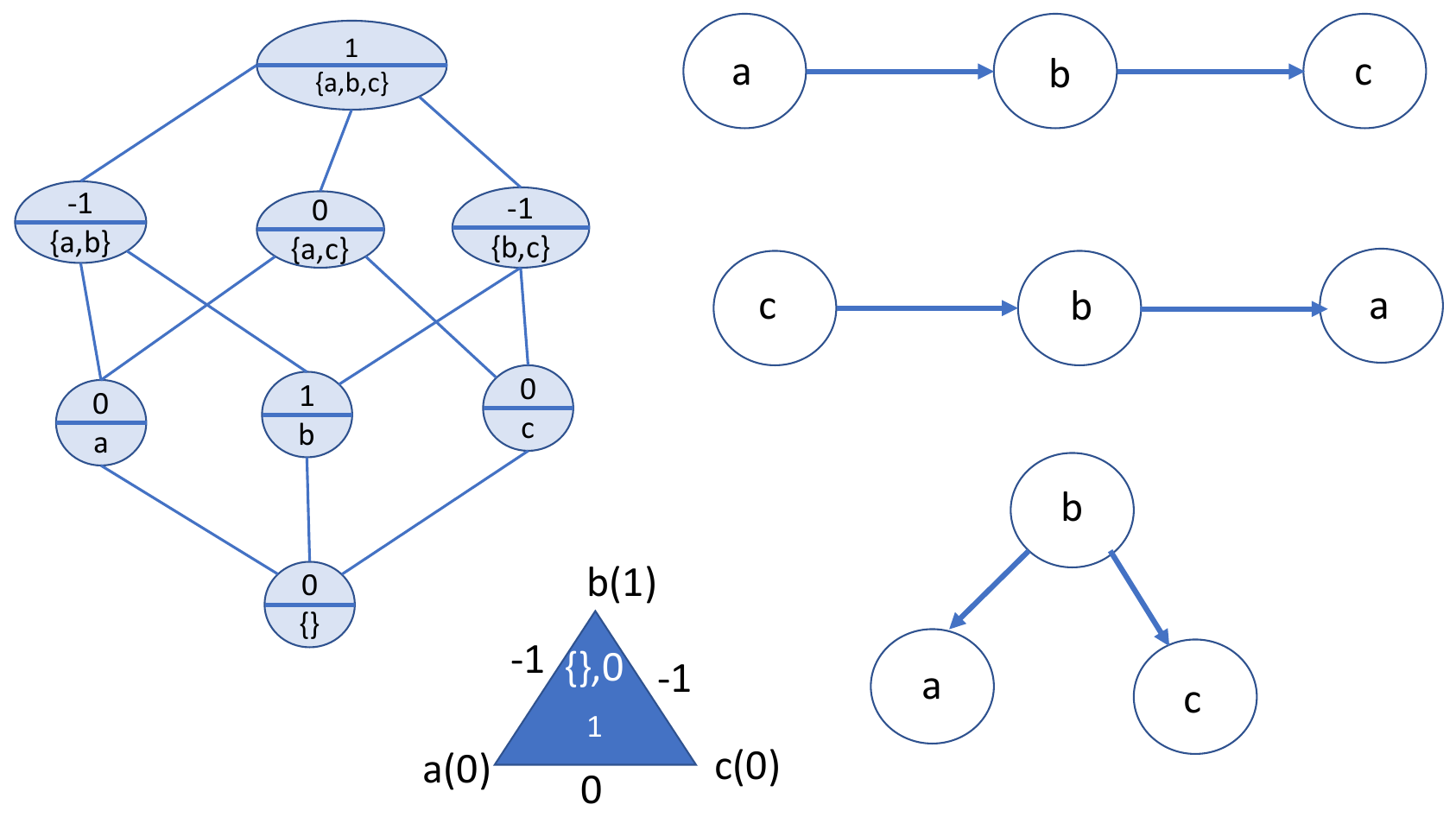}
\caption{Causal models capture directional relationships between entities. A causal model can be viewed as a functor that maps a category of algebraic structures, whose objects are directed acyclic graph (DAG) models as shown here, or integer-valued multisets shown here as a lattice, into a category of probabilistic representations, each of whose objects can be viewed as a structured probability distribution.}
\label{causalfunctors} 
\end{figure}

This process of going from a category to its underlying directed graph  embodies a fundamental universal construction in category theory, called the {\em {universal arrow}}. It lies at the heart of many useful results, principally the Yoneda lemma that shows how object identity itself emerges from the structure of morphisms that lead into (or out of) it. 

\begin{definition}
Given a functor $S: D \rightarrow C$ between two categories, and an object $c$ of category $C$, a {\bf {universal arrow}} from $c$ to $S$ is a pair $\langle r, u \rangle$, where $r$ is an object of $D$ and $u: c \rightarrow Sr$ is an arrow of $C$, such that the following universal property holds true: 

\begin{itemize} 
\item For every pair $\langle d, f \rangle$ with $d$ an object of $D$ and $f: c \rightarrow Sd$ an arrow of $C$, there is a unique arrow $f': r \rightarrow d$ of $D$ with $S f' \circ u = f$. 
\end{itemize}
\end{definition}

\begin{definition}
If $D$ is a category and $H: D \rightarrow$ {\bf {Set}} is a set-valued functor, a {\bf {universal element}} associated with the functor $H$ is a pair $\langle r, e \rangle$ consisting of an object $r \in D$ and an element $e \in H r$ such that for every pair $\langle d, x \rangle$ with $x \in H d$, there is a unique arrow $f: r \rightarrow d$ of $D$ such \mbox{that $(H f) e = x$. }
\end{definition}

\begin{example}
Let $E$ be an equivalence relation on a set $S$, and consider the quotient set $S/E$ of equivalence classes, where $p: S \rightarrow S/E$ sends each element $s \in S$ into its corresponding equivalence class. The set of equivalence classes $S/E$ has the property that any function $f: S \rightarrow X$ that respects the equivalence relation can be written as $f s = f s'$ whenever $s \sim_E s'$, that is, $f = f' \circ p$, where the unique function $f': S/E \rightarrow X$. Thus, $\langle S/E, p \rangle$ is a universal element for the functor $H$. 
\end{example}

Figure~\ref{universal-arrow} illustrates the concept of universal arrows through the connection between categories and graphs. For every (directed) graph $G$, there is a universal arrow from $G$ to the ``forgetful'' functor $U$ mapping the category {\bf {Cat}} of all categories to {\bf {Graph}}, the category of all (directed) graphs, where for any category $C$, its associated graph is defined by $U(C)$. Intuitively, this forgetful functor ``throws'' away all categorical information, obliterating for example the distinction between the primitive morphisms $f$ and $g$ vs. their compositions $g \circ f$, both of which are simply viewed as edges in the graph $U(C)$.  To understand this functor,  consider a directed graph $U(C)$ defined from a category $C$, forgetting the rule for composition. That is, from the category $C$, which associates to each pair of composable arrows $f$ and $g$, the composed arrow $g \circ f$, we derive the underlying graph $U(C)$ simply by forgetting which edges correspond to elementary arrows, such as $f$ or $g$, and which are composites. For example, consider a partial order as the category ${\cal C}$, and then define  $U({\cal C} )$ as the directed graph that results from the transitive closure of the partial ordering.

 The universal arrow from a graph $G$ to the forgetful functor $U$  is defined as a pair $\langle C, u: G \rightarrow U(C) \rangle$, where $u$ is a ``universal''  graph homomorphism. This arrow possesses the following {\em {universal property}}: for every other pair $\langle D, v: G \rightarrow H \rangle$, where $D$ is a category, and $v$ is an arbitrary graph homomorphism, there is a functor  $f': C \rightarrow D$, which is an arrow in the category {\bf {Cat}} of all categories, such that {\em {every}} graph homomorphism $\phi: G \rightarrow H$ uniquely factors through the universal graph homomorphism $u: G \rightarrow U(C)$  as the solution to the equation $\phi = U(f') \circ u$, where $U(f'): U(C) \rightarrow H$ (that is, $H = U(D)$).  Namely, the dotted arrow defines a graph homomorphism $U(f')$ that makes the triangle diagram ``commute'', and the associated ``extension'' problem of finding this new graph homomorphism $U(f')$ is solved by ``lifting'' the associated category arrow $f': C \rightarrow D$. This property of universal arrows, as we show in the paper, provide the conceptual underpinnings of universal properties in many applications in AI and ML, as we will see throughout this paper.  

 \begin{figure}[t] 
\centering
\vskip 0.1in
\begin{minipage}{0.9\textwidth}
\includegraphics[scale=0.4]{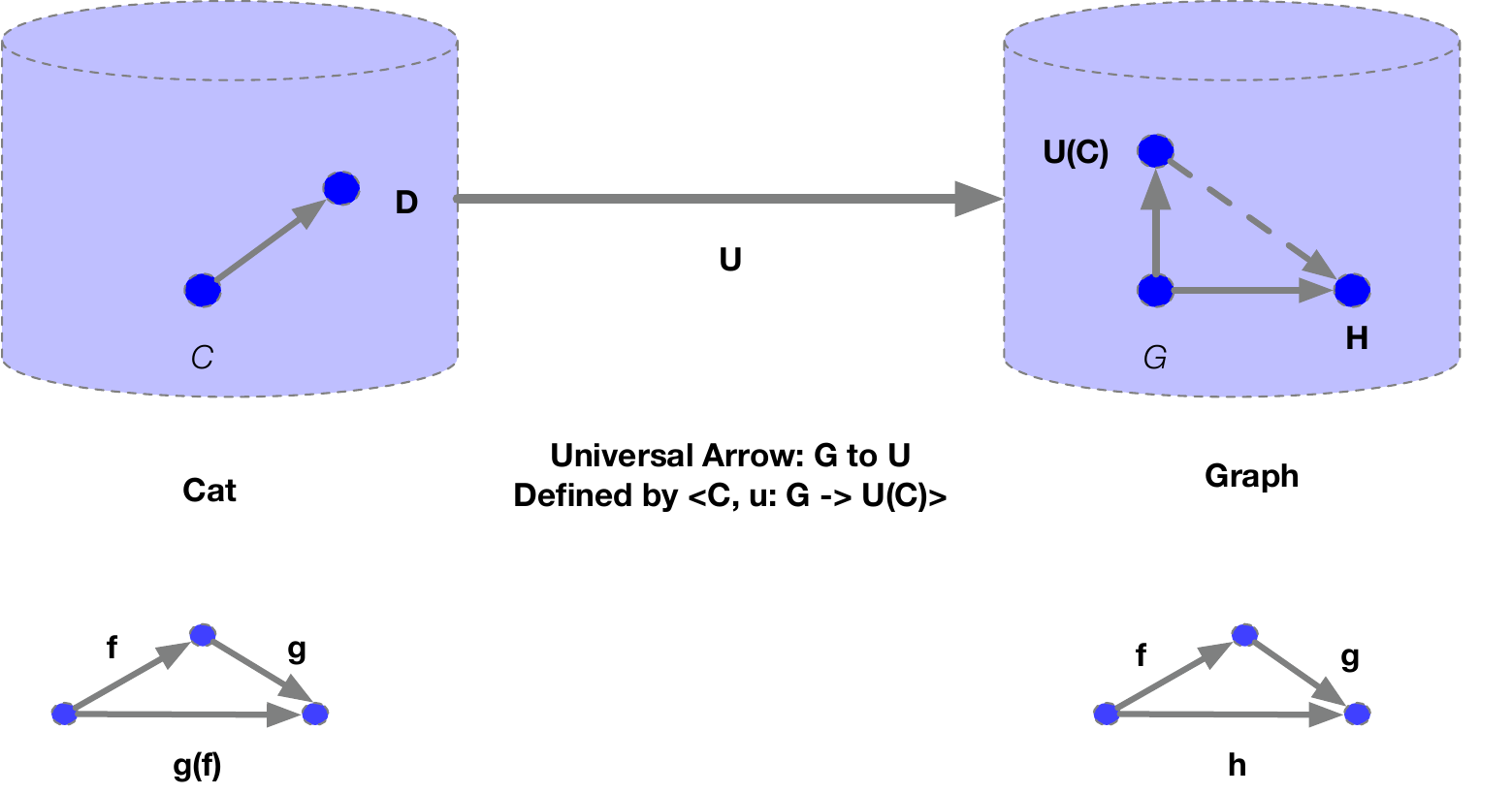}
\end{minipage}
\caption{The concept of universal arrows is illustrated through the connection between directed graphs, and their associated ``free" categories. In this example, the forgetful functor $U$ between {\bf {Cat}}, the category of all categories, and {\bf {Graph}}, the category of all (directed) graphs, maps any category into its underlying graph, forgetting which arrows are primitive and which are compositional.  The universal arrow from a graph $G$ to the forgetful functor $U$  is defined as a pair $\langle C, u: G \rightarrow U(C) \rangle$, where $u$ is a ``universal''  graph homomorphism. The universal arrow property asserts that every graph homomorphism $\phi: G \rightarrow H$ uniquely factors through the universal graph homomorphism $u: G \rightarrow U(C)$, where $U(C)$ is the graph induced  by category $C$ defining the universal arrow property. In other words, the associated {\em {extension}} problem of ``completing'' the triangle of graph homomorphisms in the category of {\bf {Graph}} can be uniquely solved by ``lifting'' the associated category arrow $h: C \rightarrow D$.}
\label{universal-arrow}
 \end{figure} 

\subsection{Yoneda lemma and the Universality of Diagrams} 

The Yoneda Lemma  is one of the most celebrated results in category theory, and it provides a concrete example of the power of categorical thinking. Stated in simple terms, it states the mathematical objects are determined (up to isomorphism) by the interactions they make with other objects in a category. We will show the surprising results of applying this lemma to problems involving computing distances between objects in a metric space, reasoning about causal inference, and many other problems of importance in AI and ML. An analogy from particle physics proposed by Theo Johnson-Freyd might help give insight into this remarkable result: ``You work at a particle accelerator. You want to understand some particle. All you can do is throw other particles at it and see what happens. If you understand how your mystery particle responds to all possible test particles at all possible test energies, then you know everything there is to know about your mystery particle". The Yoneda Lemma states that the set of all morphisms into an object $d$ in a category ${\cal C}$, denoted as {\bf Hom}$_{\cal C}(-,d)$ and called the {\em contravariant functor} (or presheaf),  is sufficient to define $d$ up to isomorphism. The category of all presheaves forms a {\em category of functors}, and is denoted $\hat{{\cal C}} = $ {\bf Set}$^{{\cal C}^{op}}$.We will briefly describe two concrete applications of this lemma to two important areas in AI and ML in this section: reasoning about causality and reasoning about distances. The Yoneda lemma plays a crucial role in this paper because it defines the concept of a {\em universal representation} in category theory. We first show that associated with universal arrows is the corresponding induced isomorphisms between {\bf {Hom}} sets of morphisms in categories. This universal property then leads to the Yoneda lemma. 

\begin{theorem}
Given any functor $S: D \rightarrow C$, the universal arrow $\langle r, u: c \rightarrow Sr \rangle$ implies a bijection exists between the {\bf {Hom}} sets 
\[ \mbox{{\bf Hom}}_{D}(r, d) \simeq \mbox{{\bf Hom}}_{C}(c, Sd) \]
\end{theorem}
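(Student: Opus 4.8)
The plan is to write down the obvious candidate map and then observe that the universal arrow hypothesis is, almost word for word, the assertion that this map is a bijection. First I would define
\[
\Phi \colon \mathbf{Hom}_{D}(r, d) \longrightarrow \mathbf{Hom}_{C}(c, Sd), \qquad \Phi(f') = Sf' \circ u,
\]
for each morphism $f' \colon r \to d$ of $D$. This is well typed: since $S$ is a functor we have $Sf' \colon Sr \to Sd$ in $C$, and $u \colon c \to Sr$ is an arrow of $C$ by hypothesis, so $Sf' \circ u \colon c \to Sd$ does lie in $\mathbf{Hom}_{C}(c, Sd)$.

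Next I would verify bijectivity by unwinding the defining property of the universal arrow $\langle r, u \rangle$. That property says: for every $d$ and every $f \colon c \to Sd$ there is a \emph{unique} $f' \colon r \to d$ with $Sf' \circ u = f$, i.e. with $\Phi(f') = f$. The existence clause, applied to an arbitrary $f$, is precisely surjectivity of $\Phi$; the uniqueness clause is precisely injectivity of $\Phi$. Hence $\Phi$ is a bijection and $\mathbf{Hom}_{D}(r, d) \simeq \mathbf{Hom}_{C}(c, Sd)$, as claimed.

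Although the statement is phrased for a fixed $d$, I would also record that this bijection is natural in $d$, since that is the form in which it is used to derive the Yoneda lemma. Given $g \colon d \to d'$ in $D$, naturality comes down to the identity $S(g \circ f') \circ u = Sg \circ (Sf' \circ u)$, which follows from functoriality of $S$ and associativity of composition in $C$; thus the maps $\Phi_{d}$ assemble into a natural isomorphism $\mathbf{Hom}_{D}(r, -) \cong \mathbf{Hom}_{C}(c, S-)$ of set-valued functors on $D$.

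The argument is essentially pure bookkeeping, so there is no substantive obstacle; the only points that demand a moment's care are checking that $\Phi$ really lands in the correct Hom-set (i.e.\ that domains and codomains match up after applying $S$ and composing with $u$) and, for the naturality refinement, chasing the single commuting square — both routine.
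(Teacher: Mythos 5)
Your proposal is correct and follows essentially the same route as the paper: the map $f' \mapsto Sf' \circ u$ is the one the paper implicitly uses, with surjectivity coming from the existence clause and injectivity from the uniqueness clause of the universal property. Your explicit naturality check corresponds to the commutative square the paper displays immediately after the theorem, so the two arguments coincide; yours is simply spelled out in more detail.
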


A special case of this natural transformation that transforms the identity morphism {\bf {1}}$_r$ leads us to the Yoneda lemma.
\begin{center}
 \begin{tikzcd}
  D(r,r) \arrow{d}{D(r, f')} \arrow{r}{\phi_r}
    & C(c, Sr) \arrow[]{d}{C(c, S f')} \\
  D(r,d)  \arrow[]{r}[]{\phi_d}
&C(c, Sd)\end{tikzcd}
 \end{center} 

 As the two paths shown here must be equal in a commutative diagram, we get the property that a bijection between the {\bf {Hom}} sets holds precisely when $\langle r, u: c \rightarrow Sr \rangle$ is a universal arrow from $c$ to $S$. Note that for the case when the categories $C$ and $D$ are small, meaning their {\bf Hom} collection of arrows forms a set, the induced functor {\bf {Hom}}$_C(c, S - )$ to {\bf Set} is isomorphic to the functor {\bf {Hom}}$_D(r, -)$. This type of isomorphism defines a universal representation, and is at the heart of the causal reproducing property (CRP) defined below. 

\begin{lemma}
{\bf {Yoneda lemma}}: For any functor $F: C \rightarrow {\bf Set}$, whose domain category $C$ is ``locally small" (meaning that the collection of morphisms between each pair of objects forms a set), any object $c$ in $C$, there is a bijection 

\[ \mbox{Hom}(C(c, -), F) \simeq Fc \]

that defines a natural transformation $\alpha: C(c, -) \Rightarrow F$ to the element $\alpha_c(1_c) \in Fc$. This correspondence is natural in both $c$ and $F$. 
\end{lemma}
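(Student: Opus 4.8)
The plan is to exhibit an explicit two-sided inverse to the evaluation map $\Phi \colon \mathrm{Hom}(C(c,-), F) \to Fc$, $\alpha \mapsto \alpha_c(1_c)$, and then to verify the two naturality clauses by chasing the relevant commutative squares. Before anything else one notes that $\Phi$ is well-defined precisely because local smallness guarantees that $C(c,c)$ is a set (so $1_c$ is a bona fide element on which $\alpha_c$ can be evaluated) and that $\mathrm{Hom}(C(c,-),F)$ is itself a legitimate set of natural transformations; this is the only place the hypothesis is used.

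First I would define the candidate inverse $\Psi \colon Fc \to \mathrm{Hom}(C(c,-), F)$. Given $x \in Fc$, let $\Psi(x)$ be the family whose component at an object $d$ is $\Psi(x)_d \colon C(c,d) \to Fd$, $(f \colon c \to d) \mapsto (Ff)(x)$. The first thing to check is that $\Psi(x)$ is genuinely natural: for any $g \colon d \to d'$ the square with top edge $C(c,g) = g \circ (-)$, bottom edge $Fg$, and verticals $\Psi(x)_d, \Psi(x)_{d'}$ commutes because, for $f \colon c \to d$, both composites send $f$ to $(F(g \circ f))(x) = (Fg)((Ff)(x))$ by functoriality of $F$.

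Next I would verify that $\Phi$ and $\Psi$ are mutually inverse. One direction is immediate: $\Phi(\Psi(x)) = \Psi(x)_c(1_c) = (F 1_c)(x) = x$ since $F$ preserves identities. For the other direction, given $\alpha \colon C(c,-) \Rightarrow F$, I must show $\Psi(\alpha_c(1_c)) = \alpha$. Fixing $d$ and $f \colon c \to d$, the naturality square for $\alpha$ at the morphism $f$ reads $\alpha_d \circ C(c,f) = Ff \circ \alpha_c$; evaluating both sides at $1_c \in C(c,c)$ gives $\alpha_d(f) = \alpha_d(f \circ 1_c) = (Ff)(\alpha_c(1_c)) = \Psi(\alpha_c(1_c))_d(f)$. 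Since $d$ and $f$ were arbitrary, the two natural transformations agree componentwise, establishing the bijection.

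Finally, for naturality in $F$ and in $c$: a morphism $\beta \colon F \Rightarrow G$ of functors induces postcomposition $\mathrm{Hom}(C(c,-),F) \to \mathrm{Hom}(C(c,-),G)$ together with $\beta_c \colon Fc \to Gc$, and the resulting square commutes because $\beta_c(\Phi(\alpha)) = \beta_c(\alpha_c(1_c)) = (\beta \circ \alpha)_c(1_c) = \Phi(\beta \circ \alpha)$. Dually, a morphism $h \colon c \to c'$ induces, contravariantly, precomposition along $C(h,-) \colon C(c',-) \Rightarrow C(c,-)$ together with $Fh \colon Fc' \to Fc$, and commutativity of that square again unwinds to the naturality identity used above. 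I expect the genuine content to be minimal: the whole argument is a short sequence of diagram chases pivoting on the single move ``evaluate the naturality of $\alpha$ at $1_c$,'' and the only place demanding care is keeping the variances straight in the naturality-in-$c$ square (the hom-functor is contravariant in its first slot, so the induced map on natural-transformation sets goes the opposite way to $h$).
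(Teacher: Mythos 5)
Your proof is correct and complete, and it is the standard direct argument: build the evaluation map $\Phi(\alpha)=\alpha_c(1_c)$, exhibit the explicit inverse $\Psi(x)_d(f)=(Ff)(x)$, and verify everything by evaluating naturality squares at the identity. The paper does not actually carry out this verification; it instead approaches the lemma through the machinery of universal arrows, first asserting that a universal arrow $\langle r, u\colon c\to Sr\rangle$ induces a bijection $\mathrm{Hom}_D(r,d)\simeq \mathrm{Hom}_C(c,Sd)$ and then remarking that specializing the transformation at the identity morphism $1_r$ ``leads us to the Yoneda lemma,'' with only a single commutative square displayed and no construction of the inverse. So the pivotal move --- evaluate at the identity --- is the same in both, but your write-up supplies the content the paper leaves implicit (well-definedness of $\Psi(x)$ as a natural transformation, the two composite identities, and the two naturality clauses), whereas the paper's route buys a conceptual link between representability and universal arrows at the cost of never proving the bijection.

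One small correction in your final paragraph: for the covariant functor $F\colon C\to\mathbf{Set}$ and $h\colon c\to c'$, the induced map is $Fh\colon Fc\to Fc'$, not $Fc'\to Fc$, and correspondingly $c\mapsto \mathrm{Hom}(C(c,-),F)$ is \emph{covariant} in $c$ (the two contravariances, of the hom-functor in its first slot and of $\mathrm{Hom}(-,F)$, cancel), so precomposition with $C(h,-)\colon C(c',-)\Rightarrow C(c,-)$ carries $\mathrm{Hom}(C(c,-),F)$ to $\mathrm{Hom}(C(c',-),F)$ in the \emph{same} direction as $h$. The square you need,
\[ Fh\bigl(\alpha_c(1_c)\bigr)=\alpha_{c'}(h)=\bigl(\alpha\circ C(h,-)\bigr)_{c'}(1_{c'}), \]
does commute, and it follows from exactly the naturality-of-$\alpha$-at-$h$ identity you already used, so the slip is directional only and does not affect the validity of the argument.
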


There is of course a dual form of the Yoneda Lemma in terms of the contravariant functor $C(-, c)$ as well using the natural transformation $C(-, c) \Rightarrow F$. A very useful way to interpret the Yoneda Lemma is through the notion of universal representability through a covariant or contravariant functor.

\begin{definition}
    A {\bf universal representation} of an object $c \in C$ in a category $C$ is defined as a contravariant functor $F$ together with a functorial representation $C(-, c) \simeq F$ or by a covariant functor $F$ together with a representation $C(c, -) \simeq F$. The collection of morphisms $C(-, c)$ into an object $c$ is called the {\bf presheaf}, and from the Yoneda Lemma, forms a universal representation of the object. 
\end{definition}

Later in this Section, we will see how the Yoneda Lemma gives us a novel perspective on causal inference, as well as distance metrics that are at the core of many applications in AI and ML. For causal inference, any causal influence on a variable $c$ in a causal model defined as an object in a category must be transmitted through its presheaf. The collection of all such causal influences in effect defines the object itself, giving a more rigorous way to substantiate an observation by the philosopher Plato made more than two millennia ago that 

\begin{quote}
    Everything that becomes or changes must do so owing to some cause; for nothing can come to be without a cause \cite{plato}
\end{quote}

Another useful concept was introduced by the mathematician Grothendieck, who made many important contributions to category theory. 

\begin{definition}
    The {\bf category of elements} $\int F$ of a covariant functor $F: C \rightarrow \mbox{{\bf Set}}$ is defined as

    \begin{itemize}
        \item a collection of objects  $(c, x)$ where $c \in C$ and $x \in Fc$

        \item a collection of morphisms $(c, x) \rightarrow (c', x')$ for every morphism $f: c \rightarrow c'$ such that $F f(x) = x'$. 
    \end{itemize}
\end{definition}

\begin{definition}
    The {\bf category of elements} $\int F$ of a contravariant functor $F: C^{op} \rightarrow \mbox{{\bf Set}}$ is defined as

    \begin{itemize}
        \item a collection of objects  $(c, x)$ where $c \in C$ and $x \in Fc$

        \item a collection of morphisms $(c, x) \rightarrow (c', x')$ for every morphism $f: c \rightarrow c'$ such that $F f(x') = x$. 
    \end{itemize}
\end{definition}

There is a natural ``forgetful" functor $\pi: \int F \rightarrow C$ that maps the pairs of objects $(c, x) \in \int F$ to $c \in C$ and maps morphisms $(c, x) \rightarrow (c', x') \in \int F$ to $f: c \rightarrow c' \in C$. Below we will show that the category of elements $\int F$ can be defined through a universal construction as the pullback in the diagram of categories.

A key distinguishing feature of category theory is the use of diagrammatic reasoning. However, diagrams are also viewed more abstractly as functors mapping from some indexing category to the actual category. Diagrams are useful in understanding universal constructions, such as limits and colimits of diagrams. To make this somewhat abstract definition concrete, let us look at some simpler examples of universal properties, including co-products and quotients (which in set theory correspond to disjoint unions). Coproducts refer to the universal property of abstracting a group of elements into a larger one.

 Before we formally the concept of limit and colimits, we consider some examples.  These notions generalize the more familiar notions of Cartesian products and disjoint unions in the category of {\bf {Sets}}, the notion of meets and joins in the category {\bf {Preord}} of preorders, as well as the  least upper bounds and greatest lower bounds in lattices, and many other concrete examples from mathematics. 

\begin{example} 
If  we consider a small  ``discrete'' category ${\cal D}$ whose only morphisms are identity arrows, then the colimit of a functor ${\cal F}: {\cal D} \rightarrow {\cal C}$ is the {\em categorical coproduct} of ${\cal F}(D)$ for $D$, an object of category {\cal D}, is denoted as 
\[ \mbox{Colimit}_{\cal D} F = \bigsqcup_D {\cal F}(D) \]

In the special case when the category {\cal C} is the category {\bf {Sets}}, then the colimit of this functor is simply the disjoint union of all the sets $F(D)$ that are mapped from objects $D \in {\cal D}$. 
\end{example} 

\begin{example} 
Dual to the notion of colimit of a functor is the notion of {\em limit}. Once again, if we consider a small  ``discrete'' category ${\cal D}$ whose only morphisms are identity arrows, then the limit of a functor ${\cal F}: {\cal D} \rightarrow {\cal C}$ is the {\em categorical product} of ${\cal F}(D)$ for $D$, an object of category {\cal D}, is denoted as 
\[ \mbox{limit}_{\cal D} F = \prod_D {\cal F}(D) \]

In the special case when the category {\cal C} is the category {\bf {Sets}}, then the limit of this functor is simply the Cartesian product of all the sets $F(D)$ that are mapped from objects $D \in {\cal D}$. 
\end{example} 

Category theory relies extensively on {\em universal constructions}, which satisfy a universal property. One of the central building blocks is the identification of universal properties through formal diagrams.  Before introducing these definitions in their most abstract form, it greatly helps to see some simple examples. 

 We can illustrate the limits and colimits in diagrams using pullback and pushforward mappings.

\begin{tikzcd}
    & Z\arrow[r, "p"] \arrow[d, "q"]
      & X \arrow[d, "f"] \arrow[ddr, bend left, "h"]\\
& Y \arrow[r, "g"] \arrow[drr, bend right, "i"] &X \sqcup Y \arrow[dr, "r"]  \\ 
& & & R 
\end{tikzcd}

An example of a universal construction is given by the above commutative diagram, where the coproduct object $X \sqcup Y$ uniquely factorizes any mapping $h: X \rightarrow R$, such that any mapping $i: Y \rightarrow R$, so that $h = r \circ f$, and furthermore $i = r \circ g$. Co-products are themselves special cases of the more general notion of co-limits. Figure~\ref{univpr}  illustrates the fundamental property of a {\em {pullback}}, which along with {\em pushforward}, is one of the core ideas in category theory. The pullback square with the objects $U,X, Y$ and $Z$ implies that the composite mappings $g \circ f'$ must equal $g' \circ f$. In this example, the morphisms $f$ and $g$ represent a {\em {pullback}} pair, as they share a common co-domain $Z$. The pair of morphisms $f', g'$ emanating from $U$ define a {\em {cone}}, because the pullback square ``commutes'' appropriately. Thus, the pullback of the pair of morphisms $f, g$ with the common co-domain $Z$ is the pair of morphisms $f', g'$ with common domain $U$. Furthermore, to satisfy the universal property, given another pair of morphisms $x, y$ with common domain $T$, there must exist another morphism $k: T \rightarrow U$ that ``factorizes'' $x, y$ appropriately, so that the composite morphisms $f' \ k = y$ and $g' \ k = x$. Here, $T$ and $U$ are referred to as {\em cones}, where $U$ is the limit of the set of all cones ``above'' $Z$. If we reverse arrow directions appropriately, we get the corresponding notion of pushforward. So, in this example, the pair of morphisms $f', g'$ that share a common domain represent a pushforward pair. 
As Figure~\ref{univpr}, for any set-valued functor $\delta: S \rightarrow$ {\bf {Sets}}, the Grothendieck category of elements $\int \delta$ can be shown to be a pullback in the diagram of categories. Here, {${\bf Set}_*$} is the category of pointed sets, and $\pi$ is a projection that sends a pointed set $(X, x \in X)$ to its \mbox{underlying set $X$.}

\begin{figure}[h]
\hspace{21pt}


\centering
\begin{tikzcd}
  T
  \arrow[drr, bend left, "x"]
  \arrow[ddr, bend right, "y"]
  \arrow[dr, dotted, "k" description] & & \\
    & 
    U\arrow[r, "g'"] \arrow[d, "f'"]
      & X \arrow[d, "f"] \\
& Y \arrow[r, "g"] &Z
\end{tikzcd}
\begin{tikzcd}
  T
  \arrow[drr, bend left, "x"]
  \arrow[ddr, bend right, "y"]
  \arrow[dr, dotted, "k" description] & & \\
    & 
    \int \delta \arrow[r, "\delta'"] \arrow[d, "\pi_\delta"]
      & {\bf Set}_* \arrow[d, "\pi"] \\
& S \arrow[r, "\delta"] & {\bf Set}
\end{tikzcd}
\caption{(\textbf{Left})
 Universal Property of pullback mappings. (\textbf{Right}) The Grothendieck category of elements $\int \delta$ of any set-valued functor $\delta: S \rightarrow$ {\bf {Set}} can be described as a pullback in the diagram of categories. Here, {\bf Set}$_*$ is the category of pointed sets $(X, x \in X)$, and $\pi$ is the ``forgetful" functor that sends a pointed set $(X, x \in X)$ into the underlying set $X$.  } 
\label{univpr}
\end{figure}
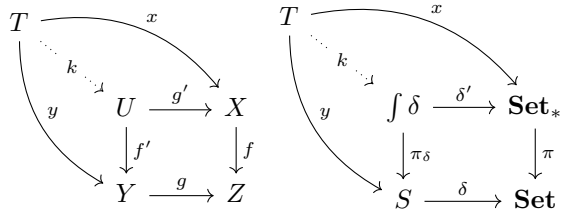

We can now proceed to define limits and colimits more generally. We define a {\em diagram} $F$ of {\em shape} $J$ in a category $C$ formally as a functor $F: J \rightarrow C$. We want to define the somewhat abstract concepts of {\em limits} and {\em colimits}, which will play a central role in this paper in identifying properties of AI and ML techniques.  A convenient way to introduce these concepts is through the use of {\em universal cones} that are {\em over} and {\em under} a diagram.

For any object $c \in C$ and any category $J$, the {\em constant functor} $c: J \rightarrow C$ maps every object $j$ of $J$ to $c$ and every morphism $f$ in $J$ to the identity morphisms $1_c$. We can define a constant functor embedding as the collection of constant functors $\Delta: C \rightarrow C^J$ that send each object $c$ in $C$ to the constant functor at $c$ and each morphism $f: c \rightarrow c'$ to the constant natural transformation, that is, the natural transformation whose every component is defined to be the morphism $f$. 

\begin{definition}
    A {\bf cone over} a diagram $F: J \rightarrow C$ with the {\bf summit} or {\bf apex} $c \in C$ is a natural transformation $\lambda: c \Rightarrow F$ whose domain is the constant functor at $c$. The components $(\lambda_j: c \rightarrow Fj)_{j \in J}$ of the natural transformation can be viewed as its {\bf legs}. Dually, a {\bf cone under} $F$ with {\bf nadir} $c$ is a natural transformation $\lambda: F \Rightarrow c$ whose legs are the components $(\lambda_j: F_j \rightarrow c)_{j \in J}$.

\[\begin{tikzcd}
	&& c &&&& Fj &&&& Fk \\
	\\
	{F j} &&&& Fk &&&& c
	\arrow["{\lambda_j}", from=1-3, to=3-1]
	\arrow["{\lambda_k}"', from=1-3, to=3-5]
	\arrow["{F f}", from=3-1, to=3-5]
	\arrow["Ff", from=1-7, to=1-11]
	\arrow["{\lambda_j}"', from=1-7, to=3-9]
	\arrow["{\lambda_k}", from=1-11, to=3-9]
\end{tikzcd}\]
    
\end{definition}

Cones under a diagram are referred to usually as {\em cocones}. Using the concept of cones and cocones, we can now formally define the concept of limits and colimits more precisely. 

\begin{definition}
    For any diagram $F: J \rightarrow C$, there is a functor 

    \[ \mbox{Cone}(-, F): C^{op} \rightarrow \mbox{{\bf Set}} \]

    which sends $c \in C$ to the set of cones over $F$ with apex $c$. Using the Yoneda Lemma, a {\bf limit} of $F$ is defined as an object $\lim F \in C$ together with a natural transformation $\lambda: \lim F \rightarrow F$, which can be called the {\bf universal cone} defining the natural isomorphism 

    \[ C(-, \lim F) \simeq \mbox{Cone}(-, F) \]

    Dually, for colimits, we can define a functor 

    \[ \mbox{Cone}(F, -): C \rightarrow \mbox{{\bf Set}} \]

    that maps object $c \in C$ to the set of cones under $F$ with nadir $c$. A {\bf colimit} of $F$ is a representation for $\mbox{Cone}(F, -)$. Once again, using the Yoneda Lemma, a colimit is defined by an object $\mbox{Colim} F \in C$ together with a natural transformation $\lambda: F \rightarrow \mbox{colim} F$, which defines the {\bf colimit cone} as the natural isomorphism 

    \[ C(\mbox{colim} F, -) \simeq \mbox{Cone}(F, -) \]
\end{definition}

Limit and colimits of diagrams over arbitrary categories can often be reduced to the case of their corresponding diagram properties over sets. One important stepping stone is to understand how functors interact with limits and colimits. 

\begin{definition}
    For any class of diagrams $K: J \rightarrow C$, a functor $F: C \rightarrow D$ 

    \begin{itemize}
        \item {\bf preserves} limits if for any diagram $K: J \rightarrow C$ and limit cone over $K$, the image of the cone defines a limit cone over the composite diagram $F K: J \rightarrow D$. 

        \item {\bf reflects} limits if for any cone over a diagram $K: J \rightarrow C$ whose image upon applying $F$ is a limit cone for the diagram $F K: J \rightarrow D$ is a limit cone over $K$

        \item {\bf creates} limits if whenever $FK : J \rightarrow D$ has a limit in $D$, there is some limit cone over $F K$ that can be lifted to a limit cone over $K$ and moreoever $F$ reflects the limits in the class of diagrams. 
    \end{itemize}
\end{definition}

To interpret these abstract definitions, it helps to concretize them in terms of a specific universal construction, like the pullback defined above $c' \rightarrow c \leftarrow c''$ in $C$. Specifically, for pullbacks: 

\begin{itemize} 

\item A functor $F$ {\bf preserves pullbacks} if whenever $p$ is the pullback of  $c' \rightarrow c \leftarrow c''$ in $C$, it follows that $Fp$ is the pullback of  $Fc' \rightarrow Fc \leftarrow Fc''$ in $D$.

\item A functor $F$ {\bf reflects  pullbacks}  if  $p$ is the pullback of  $c' \rightarrow c \leftarrow c''$ in $C$ whenever $Fp$ is the pullback of  $Fc' \rightarrow Fc \leftarrow Fc''$ in $D$.

\item A functor $F$ {\bf creates pullbacks} if there exists some $p$ that is the pullback of  $c' \rightarrow c \leftarrow c''$ in $C$ whenever there exists a $d$ such  that $d$ is the pullback of  $Fc' \rightarrow Fc \leftarrow Fc''$ in $F$.

\end{itemize} 

\subsection*{Universality of Diagrams}

In the category {\bf {Sets}}, we know that every object (i.e., a set) $X$ can be expressed as a coproduct (i.e., disjoint union)  of its elements $X \simeq \sqcup_{x \in X} \{ x \}$, where $x \in X$. Note that we can view each element $x \in X$ as a morphism $x: \{ * \} \rightarrow X$ from the one-point set to $X$. The categorical generalization of this result is called the {\em {density theorem}} in the theory of sheaves. First, we define the key concept of a {\em comma category}. 

\begin{definition}
Let $F: {\cal D} \rightarrow {\cal C}$ be a functor from category ${\cal D}$ to ${\cal C}$. The {\bf {comma category}} $F \downarrow {\cal C}$ is one whose objects are pairs $(D, f)$, where $D \in {\cal D}$ is an object of ${\cal D}$ and $f \in$ {\bf {Hom}}$_{\cal C}(F(D), C)$, where $C$ is an object of ${\cal C}$. Morphisms in the comma category $F \downarrow {\cal C}$ from $(D, f)$ to $(D', f')$, where $g: D \rightarrow D'$, such that $f' \circ F(g) = f$. We can depict this structure through the following commutative diagram: 
\begin{center} 
\begin{tikzcd}[column sep=small]
& F(D) \arrow{dl}[near start]{F(g)} \arrow{dr}{f} & \\
  F(D')\arrow{rr}{f'}&                         & C
\end{tikzcd}
\end{center} 
\end{definition} 

We first introduce the concept of a {\em {dense}} functor: 

\begin{definition}
Let {\cal D} be a small category, {\cal C} be an arbitrary category, and $F: {\cal D} \rightarrow {\cal D}$ be a functor. The functor $F$ is {\bf {dense}} if for all objects $C$ of ${\cal C}$, the natural transformation 
\[ \psi^C_F: F \circ U \rightarrow \Delta_C, \ \ (\psi^C_F)_{({\cal D}, f)} = f\]
is universal in the sense that it induces an isomorphism $\mbox{Colimit}_{F \downarrow C} F \circ U \simeq C$. Here, $U: F \downarrow C \rightarrow {\cal D}$ is the projection functor from the comma category $F \downarrow {\cal C}$, defined by $U(D, f) = D$. 

\end{definition} 

A fundamental consequence of the category of elements is that every object in the functor category of presheaves, namely contravariant functors from a category into the category of sets, is the colimit of a diagram of representable objects, via the Yoneda lemma. Notice this is a generalized form of the density notion from the category {\bf {Sets}}.

\begin{theorem}
\label{presheaf-theorem}
{\bf {Universality of Diagrams}}: In the functor category of presheaves {\bf {Set}}$^{{\cal C}^{op}}$, every object $P$ is the colimit of a diagram of representable objects, in a canonical way. 
\end{theorem}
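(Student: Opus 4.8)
The plan is to realize $P$ as the colimit of the canonical diagram indexed by its own category of elements. Given a presheaf $P \colon \mathcal{C}^{op} \to \mathbf{Set}$, form the category of elements $\int P$ (in the contravariant sense defined above), whose objects are pairs $(c,x)$ with $c \in \mathcal{C}$ and $x \in Pc$, together with the projection functor $\pi \colon \int P \to \mathcal{C}$, $(c,x) \mapsto c$. Composing with the Yoneda embedding gives a diagram
\[ D \;=\; \yo \circ \pi \colon \int P \longrightarrow \mathbf{Set}^{\mathcal{C}^{op}}, \qquad (c,x) \longmapsto \mathcal{C}(-,c), \]
whose values are exactly the representable presheaves. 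The claim to prove is $P \cong \colim D$, naturally in $P$; this canonical diagram is what the statement refers to.

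First I would build the comparison cocone $\lambda \colon D \Rightarrow P$. For each object $(c,x)$ of $\int P$ the Yoneda Lemma identifies $Pc$ with $\Hom(\mathcal{C}(-,c), P)$, so the element $x$ names a natural transformation $\lambda_{(c,x)} \colon \mathcal{C}(-,c) \Rightarrow P$, explicitly $(\lambda_{(c,x)})_d(g) = Pg(x)$ for $g \colon d \to c$. A short check, using that a morphism $(c,x)\to(c',x')$ in $\int P$ is an arrow $f\colon c\to c'$ of $\mathcal{C}$ with $Pf(x')=x$, shows that these components are compatible with $D$, i.e. $\lambda$ is a cocone under $D$ with nadir $P$.

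The heart of the argument is showing this cocone is universal, i.e. initial among cocones under $D$. Since colimits in the functor category $\mathbf{Set}^{\mathcal{C}^{op}}$ are computed objectwise, it suffices to verify that for each $c \in \mathcal{C}$ the cocone of sets $\big(\mathcal{C}(c,d) \to Pc\big)_{(d,y)\in\int P}$ exhibits $Pc$ as $\colim_{(d,y)} \mathcal{C}(c,d)$. A general element of this colimit of sets is represented by a pair $\big((d,y),\, g\big)$ with $g \colon c \to d$; sending it to $Pg(y) \in Pc$ is well defined on equivalence classes (using $P(h\circ g)=Pg\circ Ph$ for contravariant $P$) and surjective (hit $y\in Pc$ via $((c,y),\id_c)$). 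It is injective because every class has a representative of the form $((c,z),\id_c)$ with $z \in Pc$ — since $((d,y),g)$ is identified with $((c,Pg(y)),\id_c)$ through the morphism $g\colon (c,Pg(y))\to(d,y)$ of $\int P$ — and such a representative is sent to $z$, yielding a two-sided inverse $z\mapsto[((c,z),\id_c)]$. Checking naturality of this bijection in $c$ then gives $\colim D \cong P$ in $\mathbf{Set}^{\mathcal{C}^{op}}$. Equivalently, one can argue representably: for any presheaf $Q$, $\Hom(\colim D, Q) \cong \lim_{(c,x)\in(\int P)^{op}} \Hom(\yo c, Q) \cong \lim_{(c,x)} Qc$, and unwinding this limit shows a compatible family here is precisely the data of a natural transformation $P \Rightarrow Q$, so $\colim D$ and $P$ corepresent the same functor and are isomorphic.

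The main obstacle I anticipate is purely the bookkeeping of variances: fixing the direction of morphisms in $\int P$ correctly for a contravariant $P$, confirming that $\lambda$ is genuinely a cocone (not a cone), and handling the equivalence relation defining the pointwise colimit of sets carefully enough to see that $((d,y),g)\mapsto Pg(y)$ is a bijection. The conceptual inputs — the Yoneda Lemma and the objectwise computation of colimits in a presheaf category — are exactly the tools developed above, so once the indexing category is set up correctly the remaining verification is routine, and the construction via $\int P$ is manifestly canonical (functorial in $P$).
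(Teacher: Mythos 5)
Your proposal is correct and is the standard density-theorem argument: index the diagram by the category of elements $\int P$, compose with the Yoneda embedding, and verify the colimit pointwise (or representably) using the Yoneda Lemma. The paper itself states this theorem without supplying a proof, but the surrounding discussion --- introducing the Grothendieck category of elements and remarking that the result follows ``via the Yoneda lemma'' --- points to exactly the route you take, so your write-up simply supplies the details the paper leaves to the literature; in particular your canonical-representative argument $\bigl((d,y),g\bigr)\sim\bigl((c,Pg(y)),\mathrm{id}_c\bigr)$ for the pointwise bijection is sound.
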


\subsection*{Lifting Problems}
\label{lift} 

Lifting problems provide elegant ways to define solutions to computational problems in category theory regarding the existence of mappings. We will use these lifting diagrams later in this paper. For example, the notion of injective and surjective functions, the notion of separation in topology, and many other basic constructs can be formulated as solutions to lifting problems. Lifting problems define ways of decomposing structures into simpler pieces, and putting them back together again. 
 
 \begin{definition}
 Let ${\cal C}$ be a category. A {\bf {lifting problem}} in ${\cal C}$ is a commutative diagram $\sigma$ in ${\cal C}$. 
 \begin{center}
 \begin{tikzcd}
  A \arrow{d}{f} \arrow{r}{\mu}
    & X \arrow[]{d}{p} \\
  B  \arrow[]{r}[]{\nu}
&Y \end{tikzcd}
 \end{center} 
 \end{definition}
 
 \begin{definition}
 Let ${\cal C}$ be a category. A {\bf {solution to a lifting problem}} in ${\cal C}$ is a morphism $h: B \rightarrow X$ in ${\cal C}$ satisfying $p \circ h = \nu$ and $h \circ f = \mu$ as indicated in the diagram below. 
 \begin{center}
 \begin{tikzcd}
  A \arrow{d}{f} \arrow{r}{\mu}
    & X \arrow[]{d}{p} \\
  B \arrow[ur,dashed, "h"] \arrow[]{r}[]{\nu}
&Y \end{tikzcd}
 \end{center} 
 \end{definition}

 \begin{definition}
 Let ${\cal C}$ be a category. If we are given two morphisms $f: A \rightarrow B$ and $p: X \rightarrow Y$ in ${\cal C}$, we say that $f$ has the {\bf {left lifting property}} with respect to $p$, or that p has the {\bf {right lifting property}} with respect to f if for every pair of morphisms $\mu: A \rightarrow X$ and $\nu: B \rightarrow Y$ satisfying the equations $p \circ \mu = \nu \circ f$, the associated lifting problem indicated in the diagram below. 
 \begin{center}
 \begin{tikzcd}
  A \arrow{d}{f} \arrow{r}{\mu}
    & X \arrow[]{d}{p} \\
  B \arrow[ur,dashed, "h"] \arrow[]{r}[]{\nu}
&Y \end{tikzcd}
 \end{center} 
admits a solution given by the map $h: B \rightarrow X$ satisfying $p \circ h = \nu$ and $h \circ f = \mu$. 
 \end{definition}
 
 \begin{example}
 Given the paradigmatic non-surjective morphism $f: \emptyset \rightarrow \{ \bullet \}$, any morphism p that has the right lifting property with respect to f is a {\bf {surjective mapping}}. 
. 
 \begin{center}
 \begin{tikzcd}
  \emptyset \arrow{d}{f} \arrow{r}{\mu}
    & X \arrow[]{d}{p} \\
  \{ \bullet \} \arrow[ur,dashed, "h"] \arrow[]{r}[]{\nu}
&Y \end{tikzcd}
 \end{center} 

 \end{example}

  \begin{example}
 Given the paradigmatic non-injective morphism $f: \{ \bullet, \bullet \} \rightarrow \{ \bullet \}$, any morphism p that has the right lifting property with respect to f is an {\bf {injective mapping}}. 
.
 \begin{center}
 \begin{tikzcd}
  \{\bullet, \bullet \} \arrow{d}{f} \arrow{r}{\mu}
    & X \arrow[]{d}{p} \\
  \{ \bullet \} \arrow[ur,dashed, "h"]  \arrow[]{r}[]{\nu}
&Y \end{tikzcd}
 \end{center} 

 \end{example}

\subsection{Adjoint Functors} 

Adjoint functors naturally arise in a number of contexts, among the most important being between ``free" and ``forgetful" functors. Let us consider a canonical example that  is of prime significance in many applications in AI and ML. 

\begin{figure}[h] 
\centering
\caption{Adjoint functors provide an elegant characterization of the relationship between the category of statistical models and that of causal models. Statistical models can be viewed as the result of applying a ``forgetful" functor to a causal model that drops the directional structure in a causal model, whereas causal models can be viewed as ``words" in a ``free" algebra that results from the left adjoint functor to the forgetful functor.  \label{causalstatistical}}
\vskip 0.1in
\begin{minipage}{0.7\textwidth}
\vskip 0.1in
\includegraphics[scale=0.45]{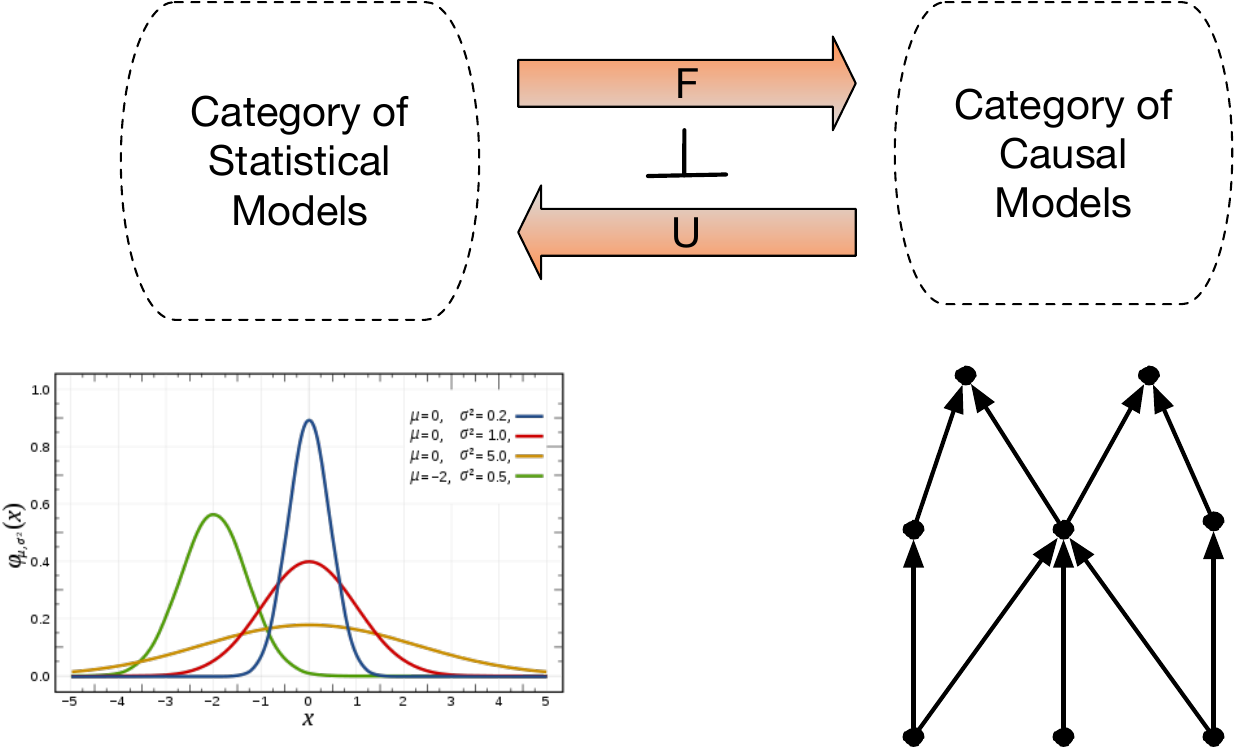}
\end{minipage}
\end{figure}

Figure~\ref{causalstatistical} provides a high level overview of the relationship between a category of statistical models and a category of causal models that can be seen as being related by a pair of adjoint ``forgetful-free" functors. A statistical model can be abstractly viewed in terms of its conditional independence properties. More concretely, the category of {\em separoids}, defined in Section 2, consists of objects called separoids $(S, \leq)$, which are semilattices with a preordering $\leq$ where the elements $x, y, z \in S$ denote entities  in a statistical model. We define a ternary relation $(\bullet \perp \bullet | \bullet) \subseteq S \times S \times S$, where $(x \perp y | z)$ is interpreted as the statement $x$ is conditionally independent of $y$ given $z$ to denote a relationship between triples that captures abstractly the property that occurs in many applications in AI and ML. For example, in statistical ML, a sufficient statistic $T(X)$ of some dataset $X$, treated as a random variable, is defined to be any function for which the conditional independence relationship $(X \perp  \theta| T(X))$, where $\theta \in \mathbb{R}^k$ denotes the parameter vector of some statistical model $P(X)$ that defines the true distribution of the data. Similarly, in causal inference, $(x \perp y | z)  \Rightarrow p(x, y, z) = p(x | z) p(y | z)$ denotes a statement about the probabilistic conditional independence of $x$ and $y$ given $z$. In causal inference, the goal is to recover a partial order defined as a directed acyclic graph (DAG) that ascribes causality among a set of random variables from a dataset specifying a sample of their joint distribution. It is well known that without non-random interventions, causality cannot be inferred uniquely, since because of Bayes rule, there is no way to distinguish causal models such as $x \rightarrow y \rightarrow z$ from the reverse relationship $z \rightarrow y \rightarrow x$. In both these models, $x \perp z | y$ and because of Bayes inversion, one model can be recovered from the other. We can define a ``free-forgetful" pair of adjoint functors between the category of conditional independence relationships, as defined by separoid objects, and the category of causal models parameterized by DAG models.

We first review some basic material relating to adjunctions defined by adjoint functors, before proceeding to describe the theory of monads, as the two are intimately related. Our presentation of adjunctions and monads is based on Riehl's excellent textbook on category theory \cite{riehl2017category} to which the reader is referred to for a more detailed explanation.
Adjunctions are defined by an opposing pair of functors $F: C \leftrightarrow D: G$ that can be defined more precisely as follows. 

\begin{definition}
    An {\bf adjunction} consists of a pair of functors $F: C \rightarrow D$ and $G: D \rightarrow C$, where $F$ is often referred to {\em left adjoint} and  $G$ is referred to as the {\em right adjoint}, that result in the following isomorphism relationship holding between their following sets of homomorphisms in categories $C$ and $D$: 

    \[ D(Fc, d) \simeq C(c, Gd) \]
\end{definition}

We can express the isomorphism condition more explicitly in the form of the following commutative diagram: 

\begin{center}
\begin{tikzcd}
  D(Fc, d) \arrow[r, "\simeq"] \arrow[d, "k_*"]
    & C(c, Gd) \arrow[d, "Gk_*" ] \\
  D(Fc, d') \arrow[r,  "\simeq"]
& C(c, Gd')
\end{tikzcd}
\end{center}

Here, $k: d \rightarrow d'$ is any morphism in $D$, and $k_*$ denotes the ``pullback" of $k$ with the mapping $f: Fc \rightarrow d$ to yield the composite mapping $k \circ f$. The adjunction condition holds that the transpose of this composite mapping is equal to the composite mapping $g: c \rightarrow Gd$ with $G k: Gd \rightarrow G d'$. We can express this dually as well, as follows:

\begin{center}
\begin{tikzcd}
  D(Fc, d) \arrow[r, "\simeq"] \arrow[d, "Fh^*"]
    & C(c, Gd) \arrow[d, "h^*" ] \\
  D(Fc', d) \arrow[r,  "\simeq"]
& C(c', Gd')
\end{tikzcd}
\end{center}

where now $h: c' \rightarrow c$ is a morphism in $C$, and $h^*$ denote the ``pushforward" of $h$. Once again, the adjunction condition is a statement that the transpose of the composite mapping $f \circ Fh: F c' \rightarrow d$ is identical to the composite of the mappings $h: c \rightarrow c'$ with $f: c \rightarrow Gd$.

It is common to denote adjoint functors in this turnstile notation, indicating that $F: C \rightarrow D$ is left adjoint to $G: D \rightarrow C$, or more simply as $F \vdash G$. 

\[
        \begin{tikzcd}
            \mathcal{D}\arrow[r, shift left=.75ex, "G"{name=G}] & \mathcal{C}\arrow[l, shift left=.75ex, "F"{name=F}] 
            \arrow[phantom, from=F, to=G, "\dashv" rotate=90].      
        \end{tikzcd}
    \]

We can use the concept of universal arrows introduced in Section 2 to give more insight into adjoint functors. The adjunction condition for a pair of adjoint functors $F \vdash G$ 

\[ D(Fc, d) \simeq C(c, Gd) \]

implies that for any object $c \in C$, the object $Fc \in D$ represents the functor $C(c, G -): D \rightarrow {\bf Set}$. Recall from the Yoneda Lemma that the natural isomorphism $D(Fc, -) \simeq C(c, G-)$ is determined by an element of $C(c,GFc)$, which can be viewed as the transpose of $1_{Fc}$. Denoting such elements as $\eta_c$, they can be assembled jointly into the natural transformation $\eta: 1_C \rightarrow GF$. Below we will see that this forms one of the conditions for an endofunctor to define a monad. 

\begin{theorem}
    The {\bf unit} $\eta: 1_C \rightarrow GF$ is a natural transformation defined by an adjunction $F \vdash G$, whose component $\eta_c: c \rightarrow GF c$ is defined to be the transpose of the identity morphism $1_{Fc}$. 
\end{theorem}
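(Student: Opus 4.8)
The plan is to read $\eta$ off from the two-variable naturality of the adjunction bijection and then verify the single naturality square by hand. Write $\phi_{c,d}\colon D(Fc,d)\to C(c,Gd)$ for the isomorphism supplied by the adjunction $F\vdash G$. Since $\phi_{c,Fc}$ is a bijection, the element $\eta_c=\phi_{c,Fc}(1_{Fc})$ is a well-defined morphism $c\to GFc$ of $C$ (its ``transpose''), so the content of the statement is that the family $(\eta_c)_{c\in C}$ assembles into a natural transformation $1_C\Rightarrow GF$; concretely, that for every $h\colon c\to c'$ in $C$ one has $GFh\circ\eta_c=\eta_{c'}\circ h$.

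First I would use naturality of $\phi$ in its second argument (the square expressing $\phi_{c,d'}(k\circ f)=Gk\circ\phi_{c,d}(f)$, instantiated with $k=Fh\colon Fc\to Fc'$): the identity $\phi_{c,Fc'}(Fh\circ g)=GFh\circ\phi_{c,Fc}(g)$ evaluated at $g=1_{Fc}$ gives $\phi_{c,Fc'}(Fh)=GFh\circ\eta_c$. Then I would use naturality of $\phi$ in its first (contravariant) argument (the square expressing $\phi_{c,d}(f'\circ Fh)=\phi_{c',d}(f')\circ h$ for $h\colon c\to c'$ and $f'\in D(Fc',d)$): evaluated at $d=Fc'$ and $f'=1_{Fc'}$ it gives $\phi_{c,Fc'}(Fh)=\eta_{c'}\circ h$. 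Comparing the two computations, $GFh\circ\eta_c=\phi_{c,Fc'}(Fh)=\eta_{c'}\circ h$, which is exactly the required commutativity; hence $\eta$ is natural.

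There is no genuinely hard step here — the argument is a two-line diagram chase — so the only thing to watch is the bookkeeping: one must invoke precisely the right instance of each naturality square (second-argument naturality postcomposes by $GFh$ on the $C$-side and precomposes by $Fh$ on the $D$-side, whereas first-argument naturality precomposes by $h$ and by $Fh$ respectively) and must feed in an identity morphism of the appropriate object. Equivalently, and more conceptually, this is just the Yoneda-lemma observation recalled immediately before the statement: for fixed $c$, the natural isomorphism $D(Fc,-)\simeq C(c,G-)$ corresponds under Yoneda exactly to the element $\eta_c\in C(c,GFc)$, and the naturality of $(\eta_c)_c$ in $c$ expresses that these isomorphisms themselves vary naturally in $c$, which is part of the data packaged by the adjunction. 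I would include the explicit chase above as the concrete verification of this point.
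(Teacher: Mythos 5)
Your proof is correct and is essentially the paper's own argument: the paper asserts that the naturality square for $\eta$ is the transpose of the evidently commutative square $Ff\circ 1_{Fc}=1_{Fc'}\circ Ff$, and your two displayed computations (second-argument naturality giving $\phi_{c,Fc'}(Fh)=GFh\circ\eta_c$, first-argument naturality giving $\phi_{c,Fc'}(Fh)=\eta_{c'}\circ h$) are exactly the details that transposition leaves implicit. You have simply spelled out the diagram chase the paper sketches, so no substantive difference.
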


{\bf Proof:} We need to show that for every $f: c \rightarrow c'$, the following diagram commutes, which follows from the definition of adjunction and the isomorphism condition that it imposes, as well as the obvious commutativity of the second transposed diagram below the first one. 

\begin{center}
\begin{tikzcd}
  c \arrow[r, "\eta_c"] \arrow[d, "f"]
    & GF c\arrow[d, "GF f" ] \\
  c' \arrow[r,  "\eta_{c'}"]
& GF c'
\end{tikzcd}
\end{center}

\begin{center}
\begin{tikzcd}
  Fc \arrow[r, "1_{Fc}"] \arrow[d, "Ff"]
    & Fc \arrow[d, "F f" ] \\
  Fc' \arrow[r,  "1_{Fc'}"]
& Fc'
\end{tikzcd}
\end{center}

The dual of the above theorem leads to the second major component  of an adjunction. 

\begin{theorem}
    The {\bf counit} $\epsilon: FG \Rightarrow 1_D$ is a natural transformation defined by an adjunction $F \vdash G$, whose components $\epsilon_c: F G d \rightarrow d$ at $d$ is  defined to be the transpose of the identity morphism $1_{Gd}$. 
\end{theorem}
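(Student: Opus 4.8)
The plan is to obtain this statement as the formal dual of the preceding theorem on the unit, and then, for completeness, to spell out the short direct verification that mirrors it. For the duality route: an adjunction $F \vdash G$ between $C$ and $D$ is the same thing as an adjunction $G^{\mathrm{op}} \vdash F^{\mathrm{op}}$ between $D^{\mathrm{op}}$ and $C^{\mathrm{op}}$, since the defining isomorphism $D(Fc,d) \simeq C(c,Gd)$ reads, in the opposite categories, as $C^{\mathrm{op}}(Gd,c) \simeq D^{\mathrm{op}}(d,Fc)$. I would apply the unit theorem to $G^{\mathrm{op}} \vdash F^{\mathrm{op}}$, which yields a natural transformation $1_{D^{\mathrm{op}}} \to F^{\mathrm{op}} G^{\mathrm{op}}$ whose component at $d$ is the transpose of the identity $1_{G^{\mathrm{op}} d} = 1_{Gd}$; reading this back in $D$ gives exactly a natural transformation $\epsilon : FG \Rightarrow 1_D$ with $\epsilon_d$ the transpose of $1_{Gd}$, so no genuinely new argument is required.

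For the direct argument, I would write $\flat : D(Fc,d) \xrightarrow{\sim} C(c,Gd)$ for the natural bijection and $\sharp$ for its inverse, and set $\epsilon_d := \sharp(1_{Gd}) : FGd \to d$, so that $\flat(\epsilon_d) = 1_{Gd}$. To establish naturality one must show, for each $k : d \to d'$ in $D$, that $k \circ \epsilon_d = \epsilon_{d'} \circ FGk$ as maps $FGd \to d'$; since $\flat$ is a bijection it suffices to compare the two transposes. Using naturality of $\flat$ in the $D$-variable (the square with $k_*$ displayed just above the statement) gives $\flat(k \circ \epsilon_d) = Gk \circ \flat(\epsilon_d) = Gk$, and using naturality of $\flat$ in the $C$-variable (the square with $h^*$, taken with $h = Gk : Gd \to Gd'$) gives $\flat(\epsilon_{d'} \circ F(Gk)) = \flat(\epsilon_{d'}) \circ Gk = Gk$. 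The two transposes agree, hence the original square commutes. Concretely, the verification reduces — exactly as in the unit case — to the ``obvious'' commutativity of the transposed square
\begin{center}
\begin{tikzcd}
  Gd \arrow[r, "1_{Gd}"] \arrow[d, "Gk"]
    & Gd \arrow[d, "Gk"] \\
  Gd' \arrow[r,  "1_{Gd'}"]
& Gd'
\end{tikzcd}
\end{center}
together with the fact that transposition across $F \vdash G$ carries this square to the naturality square for $\epsilon$.

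There is essentially no serious obstacle here; the only thing to be careful about is the bookkeeping — keeping straight which of the two naturality squares of the hom-isomorphism (naturality in $c \in C$ versus naturality in $d \in D$) is invoked at each step, and noting that $\epsilon_{d'}$ is the transpose of an identity on $Gd'$, so that precomposition with $F(Gk)$ is handled by $C$-variable naturality while postcomposition of $\epsilon_d$ with $k$ is handled by $D$-variable naturality. Everything else is formal, and the duality remark in the first paragraph makes even this bookkeeping unnecessary, so I would likely present the proof in the one-line form ``dualize the previous theorem'' with the explicit computation relegated to a remark.
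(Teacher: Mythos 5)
Your proposal is correct and matches the paper's treatment: the paper proves the unit theorem by transposing the naturality square and then simply declares the counit statement to be its dual, which is exactly your first paragraph. Your explicit $\flat$/$\sharp$ verification is a faithful (and more careful) elaboration of the same transposition argument used for the unit, not a different route.
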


Adjoint functors interact with universal constructions, such as limits and colimits, in ways that turn out to be important for a variety of applications in AI and ML. We state the main results here, but refer the reader to \cite{riehl2017category} for detailed proofs. Before getting to the general case, it is illustrative to see the interaction of limits and colimits with adjoint functors for preorders. Recall from above that separoids are defined by a preorder $(S, \leq)$ on a join lattice of elements from a set $S$. Given two separoids $(S, \leq_S)$ and $(T, \leq_T)$, we can define the functors $F: S \rightarrow T$ and $G: T \rightarrow S$ to be order-preserving functions such that 

\[ Fa \leq_T b \ \ \ \mbox{if and only if} \ \ \ a \leq_S Gb \]

Such an adjunction between preorders is often called a {\em Galois connection}.  For preorders, the limit is defined by the {\em meet} of the preorder, and the colimit is defined by the {\em join} of the preorder. We can now state a useful result. For a fuller discussion of preorders and their applications from a category theory perspective, see \cite{fong2018seven}. 

\begin{theorem}
    {\bf Right adjoints preserve meets in a preorder}: Let $f: P \rightarrow Q$ be left adjoint to $g: Q \rightarrow P$, where $P, Q$ are both preorders, and $f$ and $g$ are monotone order-preserving functions. For any subset $A \subseteq Q$, let $g(A) = \{ g(a) | a \in Q \}$. If $A$ has a meet $\bigwedge A \in Q$, then $g(A)$ has a meet $\wedge g(A) \in P$, and we can see that $g(\wedge A) \simeq \bigwedge g(A) $, that is, right adjoints preserve meets. Similarly, left adjoints preserve meets, so that if $A \subset P$ such that $\bigvee A \in P$ then $f(A)$ has a join $\vee f(A) \in Q$ and we can set $f(\vee A) \simeq \bigvee f(A)$, so that left adjoints preserve joins. 
\end{theorem}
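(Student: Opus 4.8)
The plan is to unwind both sides using only the defining adjunction equivalence $f(a)\leq_Q b \iff a \leq_P g(b)$ together with monotonicity, since for preorders ``being a meet'' is exactly ``being a greatest lower bound'' and ``$\simeq$'' just means ``$\leq$ in both directions.'' First I would show that $g(\bigwedge A)$ is a lower bound of $g(A)$: for every $a\in A$ we have $\bigwedge A \leq_Q a$ because $\bigwedge A$ is a meet, and applying the monotone functor $g$ gives $g(\bigwedge A)\leq_P g(a)$. Hence $g(\bigwedge A)$ lies below every element of $g(A)$.

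Next I would verify it is the \emph{greatest} such lower bound. Suppose $p\in P$ is any element with $p\leq_P g(a)$ for all $a\in A$. Transposing across the adjunction, $p\leq_P g(a)$ is equivalent to $f(p)\leq_Q a$, so $f(p)$ is a lower bound of $A$ in $Q$. Since $\bigwedge A$ is the greatest lower bound of $A$, we get $f(p)\leq_Q \bigwedge A$, and transposing back across the adjunction yields $p\leq_P g(\bigwedge A)$. Combining the two parts, $g(\bigwedge A)$ is a greatest lower bound of $g(A)$, which is precisely the assertion that $g(A)$ has a meet and $g(\bigwedge A)\simeq \bigwedge g(A)$ in the preorder $P$.

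The dual claim -- that a left adjoint $f$ preserves joins -- follows by the order-reversing symmetry: run the identical argument in the opposite preorders $P^{op}, Q^{op}$, under which $f\dashv g$ becomes $g\dashv f$ with the roles of meets and joins interchanged; concretely, one shows $f(\bigvee A)$ is a least upper bound of $f(A)$ by the same two transpositions applied to upper bounds. I do not expect a genuine obstacle here -- the content is a short order-theoretic diagram chase -- so the only point requiring care is bookkeeping: applying the adjunction equivalence in the correct direction each time, and remembering that in a preorder meets are unique only up to the equivalence $\simeq$, which is why the conclusion is phrased with $\simeq$ rather than $=$.
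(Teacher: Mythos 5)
Your proof is correct and takes essentially the same route as the paper's: both establish that $g(\bigwedge A)$ is a lower bound of $g(A)$ by monotonicity of $g$, then show it is greatest by transposing an arbitrary lower bound across the adjunction to a lower bound of $A$ in $Q$ and transposing back, with the join statement handled by duality. The only cosmetic difference is that you invoke the Galois-connection biconditional $f(p)\leq_Q a \iff p\leq_P g(a)$ directly in both directions, whereas the paper routes one direction through the unit/counit inequalities $p\leq g(f(p))$ and $f(g(q))\leq q$; these are equivalent formulations of the same step.
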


{\bf Proof:} The proof is not difficult in this special case of the category being defined as a preorder. If $f: P \rightarrow Q$ and $g: Q \rightarrow P$ are monotone adjoint maps on preorders $P, Q$, and $A \subset Q$ is any subset such that its meet is $m = \wedge A$. Since $g$ is monotone, $g(m) \leq g(a), \ \forall a \in A$, hence it follows that $g(m) \leq g(A)$. To show that $g(m$ is the greatest lower bound, if we take any other lower bound $b \leq g(a), \ \forall a \in A$, then we want to show that $b \leq g(m)$. Since $f$ and $g$ are adjoint, for every $p \in  P, q \in Q$, we have

\[ p \leq g(f(p)) \ \ \ \mbox{and} \ \ \ f(g(q)) \leq q \]

Hence, $f(b) \leq a$ for all $a \in A$, which implies $f(b)$ is  a lower bound for $A$ on $Q$. Since the meet $m$ is the greatest lower bound, we have $f(b) \leq m$. Using the Galois connection, we see that $b \leq g(m)$, and hence showing that $g(m)$ is the greatest lower bound as required. An analogous proof follows to show that left adjoints preserve joins. $\qed$

We can now state the more general cases for any pair of adjoint functors, as follows. 

\begin{theorem}
    A category ${\cal C}$ admits all limits of diagrams indexed by a small category ${\cal J}$ if and only if the constant functor $\Delta: {\cal C} \rightarrow {\cal C}^{{\cal J}}$ admits a right adjoint, and admits all colimits of ${\cal J}$-indexed diagrams if and only if $\Delta$ admits a left adjoint. 
\end{theorem}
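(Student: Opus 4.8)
The plan is to recognize the claimed adjunction as a restatement, in Hom-set language, of the representability definitions of limits and colimits given above. The crucial observation is purely formal: for any object $c \in {\cal C}$ and any diagram $F: {\cal J} \rightarrow {\cal C}$, a natural transformation $\Delta c \Rightarrow F$ is exactly a cone over $F$ with apex $c$. Indeed, its components are arrows $\lambda_j: c \rightarrow Fj$, and the naturality square for a morphism $f: j \rightarrow k$ of ${\cal J}$ reads $Ff \circ \lambda_j = \lambda_k$ — precisely the commutativity required of the legs of a cone. Hence ${\cal C}^{\cal J}(\Delta c, F) = \mbox{Cone}(c, F)$, and this identification is visibly natural in both $c$ and $F$.

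Given this, the ``if'' direction is immediate. Suppose $\Delta: {\cal C} \rightarrow {\cal C}^{\cal J}$ admits a right adjoint $R: {\cal C}^{\cal J} \rightarrow {\cal C}$. Then for every $F$ we have natural isomorphisms ${\cal C}(c, RF) \simeq {\cal C}^{\cal J}(\Delta c, F) = \mbox{Cone}(c, F)$ in $c$, so $RF$ is a representation of the functor $\mbox{Cone}(-, F): {\cal C}^{op} \rightarrow {\bf Set}$; by the definition of limit recalled above, $RF$ together with the image of $1_{RF}$ under the isomorphism is a limit of $F$. Since $F$ was arbitrary, ${\cal C}$ admits all ${\cal J}$-indexed limits. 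Dually, a left adjoint to $\Delta$ produces a representation of $\mbox{Cone}(F, -)$ for every $F$, hence all ${\cal J}$-indexed colimits.

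For the ``only if'' direction, suppose every diagram $F: {\cal J} \rightarrow {\cal C}$ has a limit; choose one, $\lim F$, with universal cone $\lambda^F: \Delta(\lim F) \Rightarrow F$. Its universal property gives, for each $c$, a bijection between arrows $c \rightarrow \lim F$ and cones $\Delta c \Rightarrow F$, i.e. a bijection ${\cal C}(c, \lim F) \simeq {\cal C}^{\cal J}(\Delta c, F)$ natural in $c$ — this is the universal-arrow theorem above, dualized, applied to $S = \Delta$. It remains to assemble $F \mapsto \lim F$ into a functor and to upgrade the bijection to naturality in $F$. Given $\alpha: F \Rightarrow G$, the composite $\alpha \circ \lambda^F: \Delta(\lim F) \Rightarrow G$ is a cone over $G$ with apex $\lim F$, hence factors uniquely through $\lambda^G$ via a morphism $\lim \alpha: \lim F \rightarrow \lim G$; uniqueness of factorizations forces $\lim(1_F) = 1_{\lim F}$ and $\lim(\beta \circ \alpha) = \lim \beta \circ \lim \alpha$, so $\lim$ is a functor, and the same uniqueness makes the square expressing naturality of ${\cal C}(c, \lim -) \simeq {\cal C}^{\cal J}(\Delta c, -)$ commute. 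Thus $\Delta \vdash \lim$. The colimit statement follows by the dual argument — repeating the above with cocones and $\mbox{Cone}(F, -)$, or applying the limit case to ${\cal C}^{op}$ — and establishes $\mbox{colim} \vdash \Delta$.

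The routine algebra is negligible here; the one step demanding care is the passage from a pointwise family of universal cones to an actual right-adjoint functor — checking functoriality of $\lim$ and naturality of the Hom-isomorphism in the functor variable. This is the standard ``parametrized representability yields an adjoint'' lemma, and it rests entirely on the uniqueness clause in the universal property of each limit cone together with the universal-arrow/Hom-bijection correspondence already established. (One also makes a choice of a limit for each $F$; this is the usual harmless appeal to choice, avoidable via anafunctors if desired.)
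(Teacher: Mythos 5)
Your proof is correct and follows essentially the same route as the paper: identify natural transformations $\Delta c \Rightarrow F$ with cones over $F$ with apex $c$, and read the adjunction isomorphism ${\cal C}^{\cal J}(\Delta c, F) \simeq {\cal C}(c, \lim F)$ off the representability definition of (co)limits. You go further than the paper's sketch by treating both implications explicitly and by verifying that the pointwise choice of limits assembles into a functor $\lim$ with the Hom-isomorphism natural in $F$ — a step the paper omits, deferring the details to Riehl — and that extra care is exactly where the real content of the ``only if'' direction lies.
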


By way of explanation, the constant functor $c: J \rightarrow C$ sends every object of $J$ to $c$ and every morphism of $J$ to the identity morphism  $1_c$. Here, the constant functor $\Delta$ sends every object $c$ of $C$ to the constant diagram $\Delta c$, namely the functor that maps each object $i$ of $J$ to the object $c$ and each morphism of $J$ to the identity $1_c$. The theorem follows from the definition of the universal properties of colimits and limits. Given any object $c \in C$, and any diagram (functor) $F \in {\cal C}^{{\cal J}}$, the set of morphisms ${\cal C}^{{\cal J}}(\Delta c, F)$ corresponds to the set of natural transformations from the constant ${\cal J}$-diagram at $c$ to the diagram $F$. These natural transformations precisely correspond to the cones over $F$ with summit $c$ in the definition given earlier in Section 2. It follows that there is an object $\lim F \in {\cal C}$ together with an isomorphism 

\[ {\cal C}^{{\cal J}}(\Delta c, F) \simeq {\cal C}(c, \lim F) \]

We can now state the more general result that we showed above for the special case of adjoint functors on preorders. 

\begin{theorem}
    Right adjoints preserve limits, whereas left adjoints preserve colimits. 
\end{theorem}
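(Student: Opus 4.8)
The plan is to prove the first clause (right adjoints preserve limits) directly by a Yoneda argument, and then obtain the second clause (left adjoints preserve colimits) for free by dualizing. Write $F \vdash G$ with $F: {\cal C} \to {\cal D}$ and $G: {\cal D} \to {\cal C}$, and let $K: {\cal J} \to {\cal D}$ be an arbitrary diagram that admits a limit $\lim K$ with limit cone $\lambda: \lim K \Rightarrow K$. Applying $G$ to each leg produces a cone $G\lambda: G(\lim K) \Rightarrow GK$ over the composite diagram $GK$, and the goal is exactly to show that this cone is a limit cone for $GK$.

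The core step is to compute, for an arbitrary test object $c \in {\cal C}$, the set of maps into $G(\lim K)$ through the chain of natural isomorphisms
\[ {\cal C}(c, G(\lim K)) \;\simeq\; {\cal D}(Fc, \lim K) \;\simeq\; \mathrm{Cone}(Fc, K) \;\simeq\; \mathrm{Cone}(c, GK). \]
The first isomorphism is the adjunction $F \vdash G$; the second is the universal property of $\lim K$ in ${\cal D}$ (equivalently, the fact that the representable functor ${\cal D}(Fc, -)$ preserves limits, which is immediate from the definition of $\lim K$ as the object representing $\mathrm{Cone}(-, K)$); the third is obtained by transposing each leg of a cone across the adjunction bijection, where naturality of the transpose guarantees that the compatibility squares defining a cone over $K$ correspond bijectively to those defining a cone over $GK$. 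Since the composite is natural in $c$, the Yoneda lemma forces $G(\lim K)$ to be a representing object for $\mathrm{Cone}(-, GK)$, i.e.\ a limit of $GK$.

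To upgrade this from an abstract isomorphism to the statement that $G\lambda$ itself is the limit cone, I would trace the identity $1_{G(\lim K)}$ through the chain above: it transposes to the component $\varepsilon_{\lim K}: FG(\lim K) \to \lim K$ of the counit, which the universal property of $\lim K$ identifies with the cone $\lambda \circ \varepsilon_{\lim K}$, and transposing each leg back returns precisely $G\lambda_j$ for $j \in {\cal J}$. I expect this bookkeeping --- keeping the transposes, the unit and counit, and the structure maps $K(u)$ for $u: j \to j'$ all commuting --- to be the only real obstacle; the remainder is formal. A slicker but less self-contained alternative would be to invoke the earlier theorem characterizing the existence of ${\cal J}$-indexed limits as a right adjoint to the constant-diagram functor $\Delta$, together with the calculus of mates, to conclude $G \circ \lim^{{\cal D}} \simeq \lim^{{\cal C}} \circ\, G^{{\cal J}}$; but the Yoneda route avoids assembling that machinery.

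Finally, the colimit clause follows by passing to opposite categories: $F \vdash G$ yields $G^{op} \vdash F^{op}$ between ${\cal D}^{op}$ and ${\cal C}^{op}$, a colimit of $H: {\cal J} \to {\cal C}$ is the same as a limit of $H^{op}: {\cal J}^{op} \to {\cal C}^{op}$, and the assertion that $F$ preserves the colimit of $H$ translates into the assertion that $F^{op}$ preserves the limit of $H^{op}$, which holds because $F^{op}$ is a right adjoint --- the case just established.
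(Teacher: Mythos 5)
Your proof is correct, but it is worth noting that the paper itself does not actually prove this theorem in the general case: it explicitly defers the detailed proofs to Riehl's textbook, and the only argument it supplies is for the special case of preorders (Galois connections), where ``right adjoints preserve meets'' is established by a direct inequality-chasing argument using $p \leq g(f(p))$ and $f(g(q)) \leq q$. Your Yoneda-style argument --- the chain ${\cal C}(c, G(\lim K)) \simeq {\cal D}(Fc, \lim K) \simeq \mathrm{Cone}(Fc, K) \simeq \mathrm{Cone}(c, GK)$, naturality in $c$, and representability of $\mathrm{Cone}(-, GK)$ --- is the standard general proof and dovetails nicely with the paper's own definition of a limit as a representing object for the cone functor, so it is arguably the ``native'' proof in this paper's framework. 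The paper's preorder argument buys concreteness and elementary readability at the cost of generality; your argument buys the full theorem, and your dualization to get the colimit clause for free mirrors exactly the duality the paper gestures at but does not carry out. Two small points to tidy up if you flesh out the sketch: the step where transposing $\lambda_j \circ \varepsilon_{\lim K}$ back returns $G\lambda_j$ silently uses a triangle identity ($G\varepsilon \circ \eta G = 1$), which you should cite explicitly since the paper introduces the unit and counit but never states the triangle identities; and your ``slicker alternative'' via mates relies on the paper's earlier theorem that $\Delta$ admits a right adjoint exactly when all ${\cal J}$-indexed limits exist, which the paper also states without full proof, so the self-contained Yoneda route is indeed the better choice here.
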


\subsection{Ends, Coends, and Kan Extensions} 

Perhaps the most canonical category for formulating UIGs is the category of {\em wedges}, which are defined by a collection of objects comprised of bifunctors $F: {\cal C}^{op} \times C \rightarrow {\cal D}$, and a collection of arrows between each pair of bifunctors $F, G$ called a {\em dinatural transformation} (as an abbreviation for diagonal natural transformation). We will see below that the initial and terminal objects in the category of wedges correspond to a beatiful idea first articulated by Yoneda called the {\em coend} or {\em end} \cite{yoneda-end}. \cite{loregian_2021} has an excellent treatment of coend calculus, which we will use below. 

\begin{definition}
    Given a pair of bifunctors $F, G: {\cal C}^{op} \times {\cal C} \rightarrow {\cal D}$, a {\bf dinatural transformation} is defined as follows: 

\[\begin{tikzcd}
	&& {F(c',c)} \\
	{F(c,c)} &&&& {F(c',c')} \\
	\\
	{G(c,c)} &&&& {G(c',c')} \\
	&& {G(c,c')}
	\arrow["{F(f,c)}", from=1-3, to=2-1]
	\arrow["{F(c',f)}"', from=1-3, to=2-5]
	\arrow[dashed, from=2-1, to=4-1]
	\arrow[dashed, from=2-5, to=4-5]
	\arrow["{G(c,f)}", from=4-1, to=5-3]
	\arrow["{G(f,c)}"', from=4-5, to=5-3]
\end{tikzcd}\]

\end{definition}

As \cite{loregian_2021} observes, just as a natural transformation interpolates between two regular functors $F$ and $G$ by filling in the gap between their action on a morphism $Ff$ and $Fg$ on the codomain category, a dinatural transformation ``fills in the gap" between the top of the hexagon above and the bottom of the hexagon. 

We can define a {\em constant bifunctor} $\Delta_d: {\cal C}^{op} \times {\cal C} \rightarrow {\cal D}$ by the object it maps everything to, namely the input pair of objects $(c, c') \rightarrow d$ are both mapped to the object $d \in {\cal D}$, and the two input morphisms $(f, f') \rightarrow {\bf 1}_d$ are both mapped to the identity morphism on $d$. We can now define {\em wedges} and {\em cowedges}. 

\begin{definition}
    A {\bf wedge} for a bifunctor $F: {\cal C}^{op} \times {\cal C} \Rightarrow {\cal D}$ is a dinatural transformation $\Delta_d \rightarrow F$ from the constant functor on the object $d \in {\cal D}$ to $F$. Dually, we can define a {\bf cowedge} for a bifunctor $F$ by the dinatural transformation $P \Rightarrow \Delta_d$. 
\end{definition}

We can now define a {\em category of wedges}, each of whose objects are wedges, and for arrows, we choose arrows in the co-domain category that makes the diagram below commute. 

\begin{definition}
    Given a fixed bifunctor $F: {\cal C}^{op} \times {\cal C} \rightarrow {\cal D}$, we define the {\bf category of wedges} ${\cal W}(F)$ where each object is a wedge $\Delta_d \Rightarrow F$ and given a pair of wedges $\Delta_d \Rightarrow F$ and $\Delta_d' \Rightarrow F$, we choose an arrow $f: d \rightarrow d'$ that makes the following diagram commute: 

\[\begin{tikzcd}
	d &&&& {d'} \\
	\\
	&& {F(c,c)}
	\arrow["f", from=1-1, to=1-5]
	\arrow["{\alpha_{cc}}"', from=1-1, to=3-3]
	\arrow["{\alpha'_{cc}}", from=1-5, to=3-3]
\end{tikzcd}\]
Analogously, we can define a {\bf category of cowedges} where each object is defined as a cowedge $F \Rightarrow \Delta_d$. 
\end{definition}

With these definitions in place, we can once again define the universal property in terms of initial and terminal objects. In the category of wedges and cowedges, these have special significance for formulating and solving UIGs, as we will see in the next section. 

\begin{definition}
    Given a bifunctor $F: {\cal C}^{op} \times {\cal C} \rightarrow {\cal D}$, the {\bf end} of $F$ consists of a terminal wedge $\omega: \underline{{\bf end}}(F) \Rightarrow F$. The object $\underline{{\bf end}}(F) \in D$ is itself called the end. Dually, the {\bf coend} of $F$ is the initial object in the category of cowedges $F \Rightarrow \underline{{\bf coend}}(F)$, where the object $\underline{{\bf coend}}(F) \in {\cal D}$ is itself called the coend of $F$.  
\end{definition}

Remarkably, probabilities can be formally shown to define ends of a category \cite{Avery_2016}, and topological embeddings of datasets, as implemented in popular dimensionality reduction methods like UMAP \cite{umap}, correspond to coends \cite{maclane:71}.  These connections suggest the canonical importance of the category of wedges and cowedges in formulating and solving UIGs. First, we introduce another universal construction, the Kan extension, which turns out to be the basis of every other concept in category theory. 

\subsection*{Extending Functors rather than Functions: Kan Extensions}

Often, in machine learning, we are given samples of a function defined on some subset $f: A \rightarrow B$ and we want to extend the function over a larger set $A \subset M$, but there is no obvious or canonical extension of functions. This ill-defined nature of machine learning has prompted a large variety of solutions, such as regularization or Occam's razor (prefer the simplest function). In contrast, if we are given a functor $F: A \rightarrow B$, and we want to extend the functor to a larger category $M$, there are only two canonical solutions that present themselves. These are referred to as the Kan extensions. There is a close connection between the notions of ends and coends and Kan extensions \cite{loregian_2021}, as well as to the categorical foundation of probability \cite{Avery_2016}. 

\begin{definition}
A {\bf {left Kan extension}} of a functor ${\cal F}: {\cal C} \rightarrow {\cal E}$ along another functor ${\cal K}: {\cal C} \rightarrow {\cal D}$, is a functor $\mbox{Lan}_{\cal K} {\cal F}: {\cal D} \rightarrow {\cal E}$ with a natural transformation $\eta: F \Rightarrow \mbox{Lan}_F \circ K$ such that for any other such pair $(G: {\cal D} \rightarrow {\cal E}, \gamma: F \Rightarrow G K)$, $\gamma$ factors uniquely through $\eta$. In other words, there is a unique natural transformation $\alpha: \mbox{Lan}_F \Rightarrow G$. \\

\begin{center}
\begin{tikzcd}[row sep=huge, column sep=huge] 
 \mathcal{C} \arrow[dr, "\mathcal{K}"'{name=F}] 
 \arrow[rr, "\mathcal{F}", ""{name=H, below}] && \mathcal{E} \\ 
 & |[alias=D]| \mathcal{D} \arrow[ur, swap, dashed,
 "\operatorname{Lan}_{\mathcal{K}}\mathcal{F}"] 
 \arrow[Rightarrow, from=H, to=D, "\eta",shorten >=1em,shorten <=1em] 
\end{tikzcd} 
\end{center} 

\end{definition}

\begin{definition}

A {\bf {right Kan extension}} of a functor ${\cal F}: {\cal C} \rightarrow {\cal E}$ along another functor ${\cal K}: {\cal C} \rightarrow {\cal D}$, is a functor $\eta: \mbox{Ran}_F \circ K \rightarrow F$  with a natural transformation $\eta: \mbox{Lan}_F \circ K \Rightarrow {\cal F}$ such that for any other such pair $(G: {\cal D} \rightarrow {\cal E}, \gamma: G K \Rightarrow F)$, $\gamma$ factors uniquely through $\eta$. In other words, there is a unique natural transformation $\alpha: G \Rightarrow \mbox{Ran}_F$.

\begin{center}
\begin{tikzcd}[row sep=huge, column sep=huge] 
 \mathcal{C} \arrow[dr, "\mathcal{K}"'{name=F}] 
 \arrow[rr, "\mathcal{F}", ""{name=H, below}] && \mathcal{E} \\ 
 & |[alias=D]| \mathcal{D} \arrow[ur, swap, dashed,
 "\operatorname{Ran}_{\mathcal{K}}\mathcal{F}"] 
 \arrow[Rightarrow, to=H, from=D, "\eta",shorten >=1em,shorten <=1em] 
\end{tikzcd} 
\end{center} 

\end{definition} 

\subsection{Monads and Categorical Probability} 

Now, we turn to defining monads more formally, and relate them to adjoint functors. Categorically speaking, probabilities are essentially  monads \cite{Avery_2016,giry1982}. Like the case with coalgebras, which we discussed extensively in previous Sections, monads also are defined by an endofunctor on a category, but one that has some special properties. These additional properties make monads possess algebraic structure, which leads to many interesting properties. Monads provide a categorical foundation for probability, based on the property that the set of all distributions on a measurable space is itself a measurable space \cite{Avery_2016}.  The well-known {\em Giry} monad \cite{giry1982} been also shown to arise as the {\em codensity monad}  of a forgetful functor from the category of convex sets with affine maps to the category of measurable spaces \cite{Avery_2016}. Our goal in this paper is to apply monads to shed light into causal inference.   We first review the basic definitions of monads, and then discuss monad algebras, which provide ways of characterizing categories. 

Consider the pair of adjoint free and forgetful functors between graphs and categories. Here, the domain category is {\bf Cat}, the category of all categories whose objects are categories and whose morphisms are functors. The co-domain category is the category {\bf Graph} of all graphs, whose objects are directed graphs, and whose morphisms are graph homomorphisms. Here, a monad $T = U \circ F$ is induced by composing the ``free" functor $F$ that maps a graph into its associated ``free" category, and the ``forgetful" functor $U$ that maps a category into its associated graph. The monad $T$  in effect takes a directed graph $G$ and computes its transitive closure $G_{tc}$. More precisely,  for every (directed) graph $G$, there is a universal arrow from $G$ to the ``forgetful" functor $U$ mapping the category {\bf Cat} of all categories to {\bf Graph}, the category of all (directed) graphs, where for any category $C$, its associated graph is defined by $U(C)$. 

To understand this functor, simply consider a directed graph $U(C)$ as a category $C$ forgetting the rule for composition. That is, from the category $C$, which associates to each pair of composable arrows $f$ and $g$, the composed arrow $g \circ f$, we derive the underlying graph $U(G)$ simply by forgetting which edges correspond to elementary functions, such as $f$ or $g$, and which are composites. The universal arrow from a graph $G$ to the forgetful functor $U$  is defined as a pair $\langle G, u: G \rightarrow U(C) \rangle$, where $u$ is a a graph homomorphism. This arrow possesses the following {\em universal property}: for every other pair $\langle D, v: G \rightarrow H \rangle$, where $D$ is a category, and $v$ is an arbitrary graph homomorphism, there is a functor  $f': C \rightarrow D$, which is an arrow in the category {\bf Cat} of all categories, such that {\em every} graph homomorphism $\phi: G \rightarrow H$ uniquely factors through the universal graph homomorphism $u: G \rightarrow U(C)$  as the solution to the equation $\phi = U(f') \circ u$, where $U(f'): U(C) \rightarrow H$ (that is, $H = U(D)$).  Namely, the dotted arrow defines a graph homomorphism $U(f')$ that makes the triangle diagram ``commute", and the associated ``extension" problem of finding this new graph homomorphism $U(f')$ is solved by ``lifting" the associated category arrow $f': C \rightarrow D$. In causal inference using graph-based models, the transitive closure graph is quite important in a number of situations. It can be the initial target of a causal discovery algorithm that uses conditional independence oracles. It is also common in graph-based causal inference \cite{pearl-book} to model causal effects through a directed acyclic graph (DAG) $G$, which specifies its algebraic structure, and through a set of probability distributions on $G$ that specifies its semantics $P(G)$. Often, reasoning about causality in a DAG requires examining paths that lead from some vertex $x$, representing a causal variable, to some other vertex $y$. The process of constructing the transitive closure of a DAG provides a simple example of a causal monad. 

\begin{definition}
    A {\bf monad} on a category $C$ consists of

    \begin{itemize} 
    \item An endofunctor $T: C \rightarrow C$
    \item A {\bf unit} natural transformation $\eta: 1_C \Rightarrow T$ 
    \item A {\bf multiplication} natural transformation $\mu: T^2 \rightarrow T$
    \end{itemize} 
    such that the following commutative diagram in the category $C^C$ commutes (notice the arrows in this diagram are natural transformations as each object in the diagram is a functor). 
\end{definition}

\begin{center}

\[\begin{tikzcd}
	{T^3} &&&& {T^2} \\
	\\
	\\
	{T^2} &&&& T
	\arrow["{T \mu}", from=1-1, to=1-5]
	\arrow["{\mu T}"', from=1-1, to=4-1]
	\arrow["\mu"', from=4-1, to=4-5]
	\arrow["\mu", from=1-5, to=4-5]
\end{tikzcd}\]

\[\begin{tikzcd}
	T &&& {T^2} &&& T \\
	\\
	\\
	&&& T
	\arrow["{T \eta}"', from=1-7, to=1-4]
	\arrow["\mu", from=1-4, to=4-4]
	\arrow["{\eta T}", from=1-1, to=1-4]
	\arrow["{1_T}"', from=1-1, to=4-4]
	\arrow["{1_T}"', from=1-7, to=4-4]
\end{tikzcd}\]

\end{center}

It is useful to think of monads as the ``shadow" cast by an adjunction on the category corresponding to the co-domain of the right adjoint $G$. Consider the following pair of adjoint functors $F \vdash G$. 

\[
        \begin{tikzcd}
            \mathcal{C}\arrow[r, shift left=.75ex, "F"{name=F}] & \mathcal{D}\arrow[l, shift left=.75ex, "G"{name=G}] 
            \arrow[phantom, from=F, to=G, "\dashv" rotate=270].      
        \end{tikzcd}  \ \ \ \ \eta: 1_C \Rightarrow UF, \ \ \ \epsilon: FU \Rightarrow 1_D
    \]

In the language of ML, if we treat category $C$ as representing ``labeled training data" where we have full information, and category $D$ as representing a new domain for which we have no labels, what can we conclude about category $D$ from the information we have from the adjunction? The endofunctor $UF$ on $C$ is of course available to us, as is the natural transformation $\eta: 1_C \Rightarrow UF$. The map $\epsilon_A: FG A \rightarrow A$ for any object $A \in D$ is an endofunctor on $D$, about which we have no information. However, the augmented natural transformation $U \epsilon F A: G F G F A \rightarrow G F A$ can be studied in category $C$. From this data, what can we conclude about the objects in category $D$? In response to the natural question of whether every monad can be defined by a pair of adjoint functors, two solutions arose that came about from two different pairs of adjoint functors. These are referred to as the {\em Eilenberg-Moore} category and the {\em Kleisli} category \cite{maclane:71}. 

\subsection*{Codensity Monads and Probability} 

A striking recent finding is that categorical probability structures, such as Giry monads, are in essence {\em codensity monads} that result from extending a certain functor along itself \cite{Avery_2016}. 

\begin{definition}

A {\bf {codensity monad}} $T^{\cal F}$ of a functor ${\cal F}$ is the right Kan extension of ${\cal F}$ along itself (if it exists). The codensity monad inherits the university property from the Kan extension.  

\begin{center}
\begin{tikzcd}[row sep=huge, column sep=huge] 
 \mathcal{C} \arrow[dr, "\mathcal{F}"'{name=F}] 
 \arrow[rr, "\mathcal{F}", ""{name=H, below}] && \mathcal{E} \\ 
 & |[alias=D]| \mathcal{E} \arrow[ur, swap, dashed,
 "\operatorname{T}^{\mathcal{F}}"] 
 \arrow[Rightarrow, to=H, from=D, "\eta",shorten >=1em,shorten <=1em] 
\end{tikzcd} 
\end{center} 

\end{definition} 

Codensity monads can also be written using Yoneda's abstract integral calculus as ends: 

\[ T^{\cal F} e = \int_{c \in C} [{\cal E}(e, {\cal F}c), {\cal F}c]\]

Here, the notation $[A, m]$, where $A$ is any set, and $m$ is any object of a category ${\cal M}$, denotes the product in ${\cal M}$ of $A$ copies of $m$. 

\begin{definition}
    A {\bf convex set} $c$ is a convex subset of a real vector space, where for all $x, y \in c$, and for all $r \in [0,1]$, the convex combination $r x + (1 - r) y \in c$. An {\bf affine} map $h: c \rightarrow c'$ is a function such that $h(x +_r y) = h(x) +_r h(y)$ where $x +_r y = r x + (1 - r) y, r \in [0,1]$. 
\end{definition}

To define categorical probability as codensity monads, we need to  define forgetful functors from the category ${\cal C}'$ of compact convex subsets of $\mathbb{R}{^n}$ with affine maps to the category ${\bf Meas}$ of measurable spaces and measurable functions. In addition, let ${\cal D}'$ be the category ${\cal C}'$ with the object $d_0$ adjoined, where $d_0$ is the convex set of convergent sequences in the unit interval $I = [0,1]$. 

\begin{theorem}\cite{Avery_2016}
    The ${\cal C}'$ be the category of compact convex subsets of $\mathbb{R}^n$ for varying $n$ with affine maps between them, and let ${\cal D}'$ be the same with the object $d_0$ adjoined. Then, the codensity monads of the forgetful functors $U': {\cal C}' \rightarrow {\bf Meas}$ and $V': {\cal D}' \rightarrow {\bf Meas}$ are the finitely additive Giry monad and the Giry monad respectively. 
\end{theorem}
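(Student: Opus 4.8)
The plan is to compute the codensity monad $T^{U'}$ explicitly as an end in $\mathbf{Meas}$ and then recognise the resulting endofunctor, unit, and multiplication as the (finitely additive) Giry monad. Since $\mathbf{Meas}$ is complete and $\mathcal{C}'$ (respectively $\mathcal{D}'$) is small, the codensity monad $T^{U'}$, i.e. the right Kan extension of $U'$ along itself, exists and is the end
\[ T^{U'}(X) \;=\; \int_{c \in \mathcal{C}'} \bigl[\, \mathbf{Meas}(X, U'c),\, U'c \,\bigr], \]
whose underlying set consists of the families $\phi = (\phi_c \colon \mathbf{Meas}(X, U'c) \to U'c)_c$ compatible with affine maps in the sense that $\phi_{c'}(h \circ f) = h(\phi_c(f))$ for every affine $h \colon c \to c'$ and every measurable $f \colon X \to c$; the $\sigma$-algebra on $T^{U'}(X)$ is the coarsest one making all the evaluations $\phi \mapsto \phi_c(f)$ measurable.

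The core of the argument is a natural bijection between this set and the finitely additive probability measures on $X$. Given $\phi$, set $\mu_\phi(A) := \phi_{[0,1]}(1_A)$. Dinaturality along the affine coordinate projections of a standard simplex $\Delta_n$ and along the affine maps $(t_i)_i \mapsto \sum_i a_i t_i \colon \Delta_n \to [0,1]$ forces $\phi_{[0,1]}\bigl(\sum_i a_i 1_{A_i}\bigr) = \sum_i a_i\, \mu_\phi(A_i)$ for every finite measurable partition $\{A_i\}$ of $X$ and all $a_i \in [0,1]$; specialising to $\{A, X \setminus A\}$ (and the one-point convex set) shows $\mu_\phi$ is finitely additive of total mass $1$. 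Mapping $X$ into the triangle $\{(s,t) : s,t \geq 0,\, s + t \leq 1\}$ by $x \mapsto (u(x), u'(x) - u(x))$ shows $\phi_{[0,1]}$ is monotone whenever $u \leq u'$ take values in $[0,1]$, and squeezing an arbitrary bounded measurable $u \colon X \to [0,1]$ between simple functions then gives $\phi_{[0,1]}(u) = \int_X u\, d\mu_\phi$, the common bound being exactly the Dunford integral. Conversely, a finitely additive probability $\mu$ yields a family by $\phi^\mu_c(f) := \int_X f\, d\mu$, which lands in $c$ because $c$ is closed and the integral is an iterated limit of genuine convex combinations of values of $f$, and which is dinatural because affine maps commute with the integral. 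One checks $\phi \mapsto \mu_\phi$ and $\mu \mapsto \phi^\mu$ are mutually inverse — injectivity uses that $\phi_c(f)$ is recovered from the numbers $\phi_{[0,1]}(g \circ f)$ as $g$ ranges over the (rescaled) coordinate functions of $c$, which separate points — and that both are measurable, giving $T^{U'} \cong$ the finitely additive Giry functor.

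For $V' \colon \mathcal{D}' \to \mathbf{Meas}$ the same bijection applies, with one extra input from the adjoined object $d_0$ of convergent sequences in $[0,1]$: the coordinate projections $\pi_n \colon d_0 \to [0,1]$ and the limit map $\lim \colon d_0 \to [0,1]$ are affine. Given pairwise disjoint measurable $B_1, B_2, \dots \subseteq X$, the partial sums $f_n := 1_{B_1 \cup \dots \cup B_n}$ converge pointwise, so $x \mapsto (f_n(x))_n$ is a measurable map $X \to d_0$; dinaturality along the $\pi_n$ computes its $\phi$-value coordinatewise as $(\mu_\phi(B_1 \cup \dots \cup B_n))_n$, and dinaturality along $\lim$ then yields $\mu_\phi\bigl(\bigcup_n B_n\bigr) = \lim_n \sum_{k \leq n} \mu_\phi(B_k) = \sum_n \mu_\phi(B_n)$. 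Hence $\mu_\phi$ is countably additive, and the bijection now identifies $T^{V'}$ with the genuine Giry monad.

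Finally, I would verify that the monad structures coincide: the codensity unit $\eta_X \colon X \to T^{U'}X$ sends $x$ to the evaluation family $f \mapsto f(x)$, which corresponds to the Dirac measure $\delta_x$, and unwinding the universal property of the Kan extension identifies the codensity multiplication with $\Xi \mapsto \bigl(A \mapsto \int \mathrm{ev}_A\, d\Xi\bigr)$, the usual Giry multiplication; the monad axioms are then automatic from the codensity structure. I expect the main obstacle to be the identity $\phi_{[0,1]}(u) = \int_X u\, d\mu_\phi$ for \emph{all} bounded measurable $u$ in the finitely additive case, where monotone and dominated convergence are unavailable, so the value must be pinned down purely from additivity on simple functions together with monotonicity (the squeeze that \emph{defines} the Dunford integral). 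A secondary, more bookkeeping obstacle is checking that the $\sigma$-algebra generated by the evaluation maps agrees with the standard Giry $\sigma$-algebra, and that $d_0$ is a legitimate object of $\mathcal{D}'$ carrying the affine maps $\pi_n$ and $\lim$, so that the dinaturality steps of the third paragraph are licensed.
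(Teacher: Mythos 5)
The paper does not actually prove this statement: it is imported verbatim from Avery's paper with only a citation, so there is no in-paper proof to compare against. Your proposal is, in substance, a correct reconstruction of Avery's own argument — computing the codensity monad as the end $\int_{c}[\mathbf{Meas}(X,U'c),U'c]$, identifying its elements with finitely additive probability measures by evaluating at indicator functions and using dinaturality along coordinate projections and affine combinations on simplices to extract additivity, recovering the general $\phi_c$ from rescaled coordinate functionals, and using the projections and the limit functional on the adjoined object $d_0$ of convergent sequences to upgrade finite to countable additivity for $V'$; the unit and multiplication then come out as Dirac measures and integration. The two points you flag as obstacles (pinning down $\phi_{[0,1]}(u)$ for arbitrary bounded measurable $u$ by the monotone squeeze between simple functions, since convergence theorems are unavailable in the finitely additive setting, and matching the evaluation-generated $\sigma$-algebra with the Giry $\sigma$-algebra) are exactly where the real work lies, and are resolved in the source essentially as you describe, so the proposal is sound.
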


The well-known Giry monad \cite{giry1982} defines probabilities in both the discrete case and the continuous case (over Polish spaces) in terms of endofunctor on the category of measurable spaces ${\bf Meas}$. 

\subsection{Sheaves and Topoi}

\label{sheavestopoi} 

In this section, we define an important categorical structure defined by sheaves and topoi \cite{maclane:sheaves}.  Yoneda embeddings $\yo(x): {\cal C}^{op} \rightarrow {\bf Sets}$ define (pre)sheaves, which satisfy a number of crucial properties that make it remarkably similar to the category of {\bf Sets}. The sheaf condition plays an important role in many applications of machine learning, from dimensionality reduction \cite{umap} to causal inference \cite{DBLP:journals/entropy/Mahadevan23}. \cite{maclane:sheaves} provides an excellent overview of sheaves and topoi, and how remarkably they unify much of mathematics, from geometry to logic and topology. We will give only the briefest of overviews here, and apply in the main ideas to the study of UIGs. 

\begin{figure}[h] 
\centering
\caption{Two applications of sheaf theory in AI: (top) minimizing travel costs in weighted graphs satisfies the sheaf principle, one example of which is the Bellman optimality principle in dynamic programming \cite{DBLP:books/lib/Bertsekas05} and reinforcement learning \cite{bertsekas:rlbook,DBLP:books/lib/SuttonB98} (bottom): Approximating a function over a topological space must satisfy the sheaf condition. \label{sheaves}}
\vskip 0.1in
\begin{minipage}{0.7\textwidth}
\vskip 0.1in
\includegraphics[scale=0.35]{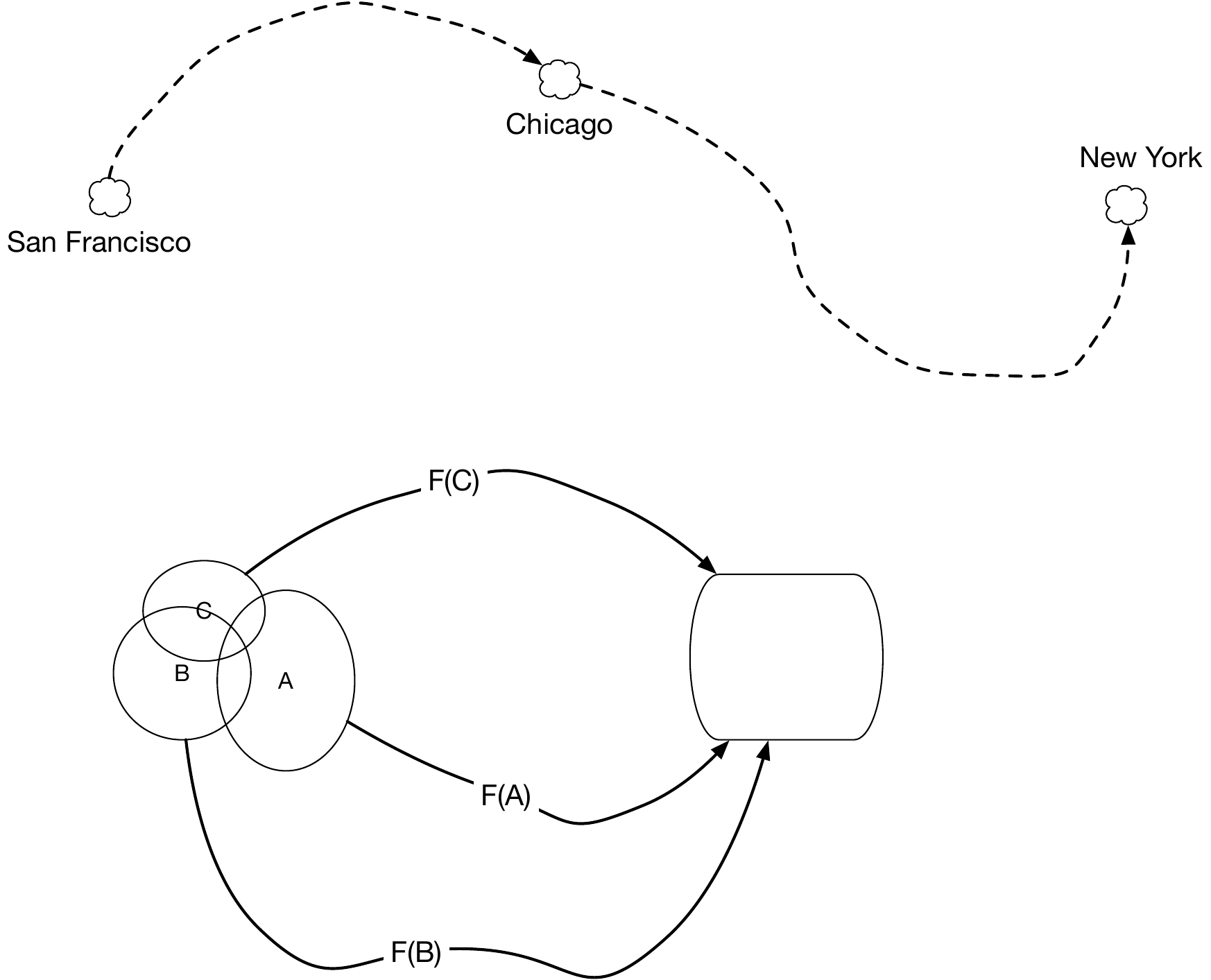}
\end{minipage}
\end{figure}

Figure~\ref{sheaves} gives two concrete examples of sheaves. In a minimum cost transportation problem, say using optimal transport \cite{ot} or reinforcement learning \cite{DBLP:books/lib/SuttonB98}, any optimal solution has the property that any restriction of the solution must also be optimal. In RL, this sheaf principle is codified by the Bellman equation, and leads to the fundamental principle of dynamic programming \cite{DBLP:books/lib/Bertsekas05}. Consider routing candy bars from San Francisco to New York city. If the cheapest way to route candy bars is through Chicago, then the restriction of the overall route to the (sub) route from Chicago to New York City must also be optimal, otherwise it is possible to find a shortest overall route by switching to a lower cost route. Similarly, in function approximation with real-valued functions $F: {\cal C} \rightarrow \mathbb{R}$, where ${\cal C}$ is the category of topological spaces, the (sub)functions $F(A), F(B)$ and $F(C)$ restricted to the open sets $A$, $B$ and $C$ must agree on the values they map the elements in the intersections $A \cap B$, $A \cap C$, $A \cap B \cap C$ and so on. Similarly, in causal inference, any probability distribution that is defined over a causal model must satisfy the sheaf condition in that any restriction of the causal model to a submodel must be consistent, so that two causal submodels that overlap in their domains must agree on the common elements. 

Sheaves can be defined over arbitrary categories, and we introduce the main idea by focusing on the category of sheaves over {\bf Sets}. 

\begin{definition}\cite{maclane:sheaves}
    A {\bf sheaf} of sets $F$ on a topological space $X$ is a functor $F: {\cal O}^{op} \rightarrow {\bf Sets} $ such that each open covering $U = \bigcup_i U_i, i \in I$ of an open set $O$ of $X$ yields an equalizer diagram
\[
\xymatrix{
FU\ar@{-->}[r]^e&} 
\begin{tikzcd}
 \prod_i FU_i \ar[r,shift left=.75ex,"p"]
  \ar[r,shift right=.75ex,swap,"q"]
&
\prod_{i,} F(U_i \cap U_j)
\end{tikzcd}
\]

The above definition succinctly captures what Figure~\ref{sheaves} shows for the example of approximating functions: the value of each subfunction must be consistent over the shared elements in the intersection of each open set. 

\begin{definition}
    The category $\mbox{Sh}(X)$ of sheaves over a space $X$ is a full subcategory of the functor category ${\bf Sets}^{{\cal O}(X)^{op}}$.
\end{definition}

\subsection*{Grothendieck Topologies}

We can generalize the notion of sheaves to arbitrary categories using the Yoneda embedding $\yo(x) = {\cal C}(-, x)$. We explain this generalization in the context of a more abstract topology on categories called the {\em Grothendieck topology} defined by {\em sieves}. A sieve can be viewed as a {\em subobject} $S \subseteq \yo(x)$ in the presheaf ${\bf Sets}^{{\cal C}^{op}}$, but we can define it more elegantly as a family of morphisms in ${\cal C}$, all with codomain $x$ such that

\[ f \in S \Longrightarrow f \circ g \in S \]

Figure~\ref{sieves} illustrates the idea of sieves. A simple way to think of a sieve is as a {\em right ideal}. We can define that more formally as follows: 

\begin{definition}
    If $S$ is a sieve on $x$, and $h: D \rightarrow x$ is any arrow in category ${\cal C}$, then 

    \[ h^* = \{g \ | \ \mbox{cod}(g) = D, hg \in S \}\]
\end{definition}

\begin{figure}[t] 
\centering
\caption{Sieves are subobjects of of $\yo(x)$ Yoneda embeddings of a category ${\cal C}$, which generalizes the concept of sheaves over sets in Figure~\ref{sheaves}. \label{sieves}}
\vskip 0.1in
\begin{minipage}{0.7\textwidth}
\vskip 0.1in
\includegraphics[scale=0.35]{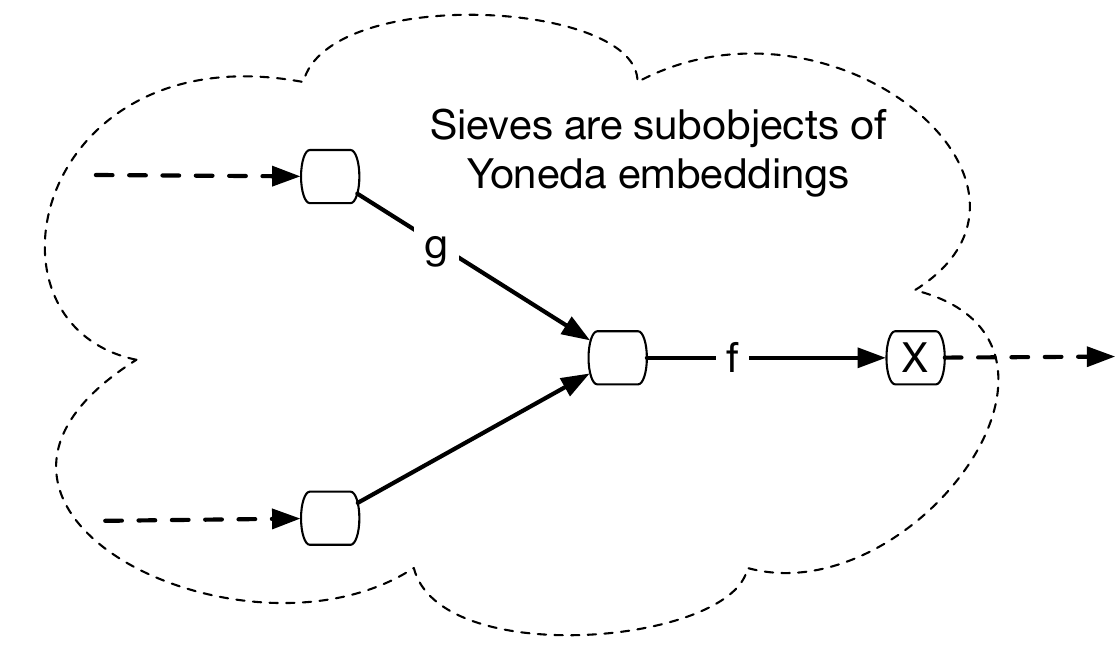}
\end{minipage}
\end{figure}

\begin{definition}\cite{maclane:sheaves}
A {\bf Grothendieck topology} on a category ${\cal C}$ is a function $J$ which assigns to each object $x$  of ${\cal C}$ a collection $J(x)$ of sieves on $x$ such that
\begin{enumerate}
    \item the maximum sieve $t_x = \{ f | \mbox{cod}(f) = x \}$ is in $J(x) $. 
    \item If $S \in J(x)$  then $h^*(S) \in J(D)$ for any arrow $h: D \rightarrow x$. 
    \item If $S \in J(x)$ and $R$ is any sieve on $x$, such that $h^*(R) \in J(D)$ for all $h: D \rightarrow x$, then $R \in J(C)$. 
\end{enumerate}
\end{definition}

We can now define categories with a given Grothendieck topology as {\em sites}. 

\begin{definition}
    A {\bf site} is defined as a pair $({\cal C}, J)$ consisting of a small category ${\cal C}$ and a Grothendieck topology $J$ on ${\cal C}$. 
\end{definition}

An intuitive way to interpret a site is as a generalization of the notion of a topology on a space $X$, which is defined as a set $X$ together with a collection of open sets ${\cal O}(X)$. The sieves on a category play the role of ``open sets". 

\end{definition}

\subsection*{Exponential Objects and Cartesian Closed Categories}

To define a topos, we need to understand the category of {\bf Sets} a bit more. Clearly, the single point set $\{ \bullet \}$ is a terminal object for {\bf Sets}, and the binary product of two sets $A \times B$ can always be defined. Furthermore, given two sets $A$ and $B$, we can define $B^A$ as the exponential object representing the set of all functions $f: A \rightarrow B$. We can define exponential objects in any category more generally as follows. 

\begin{definition}
    Given any category ${\cal C}$ with products, for a fixed object $x$ in ${\cal C}$, we can define the functor 

    \[ x \times - :  \rightarrow {\cal C}\]

    If this functor has a right adjoint, which can be denoted as 

    \[ (-)^x: {\cal C} \rightarrow {\cal C} \]

    then we say $x$ is an {\bf exponentiable} object of ${\cal C}$. 
\end{definition}

\begin{definition}
    A category ${\cal C}$ is {\bf Cartesian closed} if it has finite products (which is equivalent to saying it has a terminal object and binary products) and if all objects in ${\cal C}$ are {\em exponentiable}. 
\end{definition}

A result that is of foundational importance to this paper is that the category defined by Yoneda embeddings is Cartesian closed. 

\begin{theorem}\cite{maclane:sheaves}
    For any small category ${\cal C}$, the functor category ${\bf Sets}^{{\cal C}^{op}}$ is Cartesian closed
\end{theorem}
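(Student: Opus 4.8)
The plan is to verify the three defining properties of a Cartesian closed category for the presheaf category ${\bf Sets}^{{\cal C}^{op}}$: a terminal object, binary products, and an exponential (a right adjoint to $-\times P$) for every presheaf $P$. The first two are immediate because limits in a functor category are computed objectwise: the terminal presheaf is the constant functor at the one-point set $\{\bullet\}$, and the product of presheaves $P$ and $Q$ is $(P\times Q)(c)=P(c)\times Q(c)$ with the evident objectwise ${\cal C}$-action, the universal properties following pointwise from those in ${\bf Sets}$. So the real content is the construction of exponentials.

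For a fixed presheaf $P$, I would \emph{define} the candidate exponential by the Yoneda-motivated formula
\[ Q^P(c)\;:=\;\mathrm{Hom}_{{\bf Sets}^{{\cal C}^{op}}}\!\big(\yo(c)\times P,\;Q\big), \]
where $\yo(c)={\cal C}(-,c)$. This is contravariantly functorial in $c$: a morphism $f\colon c'\to c$ induces $\yo(f)\colon\yo(c')\to\yo(c)$, hence $\yo(f)\times\id_P$, and precomposition gives $Q^P(c)\to Q^P(c')$; functoriality of this assignment is inherited from functoriality of $\yo$ and of $\mathrm{Hom}(-,Q)$. The remaining task is to produce a bijection $\mathrm{Hom}(R\times P,Q)\cong\mathrm{Hom}(R,Q^P)$ natural in the presheaf $R$ (naturality in $Q$, making it an adjunction $(-\times P)\dashv(-)^P$, is then automatic).

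I would establish this bijection first on representables and then extend by density. When $R=\yo(c)$, the Yoneda Lemma gives $\mathrm{Hom}(\yo(c),Q^P)\cong Q^P(c)$, which by definition equals $\mathrm{Hom}(\yo(c)\times P,Q)$; so the bijection holds on representables, and one checks it is natural in $c$ since every map involved is assembled from $\yo$ and hom-functors. To pass to a general presheaf $R$, invoke Theorem~\ref{presheaf-theorem}: $R$ is canonically a colimit of representables, $R\cong\colim_i\yo(c_i)$. The functor $-\times P$ preserves this colimit (colimits of presheaves are objectwise, and in ${\bf Sets}$ the functor $-\times X$ preserves colimits since it has the right adjoint $(-)^X$), so $\mathrm{Hom}(R\times P,Q)\cong\mathrm{Hom}\big(\colim_i(\yo(c_i)\times P),Q\big)\cong\lim_i\mathrm{Hom}(\yo(c_i)\times P,Q)$; applying the representable case and then the fact that $\mathrm{Hom}(-,Q^P)$ sends the colimit $R\cong\colim_i\yo(c_i)$ to a limit yields $\lim_i\mathrm{Hom}(\yo(c_i),Q^P)\cong\mathrm{Hom}(R,Q^P)$. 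Chaining these isomorphisms gives the adjunction $(-\times P)\dashv(-)^P$, i.e.\ exponentiability of $P$; since $P$ was arbitrary, ${\bf Sets}^{{\cal C}^{op}}$ is Cartesian closed.

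The main obstacle is bookkeeping rather than a single hard idea: one must check that the bijection built on representables is genuinely natural in $c$, and that the colimit-decomposition step is compatible with that naturality, so the individual bijections glue to a well-defined natural isomorphism of functors in $R$ rather than merely a family of bijections. An alternative that sidesteps the density argument is to unwind $Q^P(c)=\mathrm{Hom}(\yo(c)\times P,Q)$ into an explicit objectwise description — an element is a family of functions $P(d)\to Q(d)$ indexed by arrows $d\to c$, compatible with the ${\cal C}$-action — and then verify the adjunction by a direct computation with natural transformations; this is more hands-on but avoids appealing to preservation of colimits.
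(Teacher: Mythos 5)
Your proof is correct and is essentially the canonical argument: the paper itself supplies no proof of this theorem, deferring entirely to the cited reference (MacLane--Moerdijk, \emph{Sheaves in Geometry and Logic}, I.6), and the construction there is exactly your Yoneda-motivated definition $Q^P(c)=\mathrm{Hom}(\yo(c)\times P,\,Q)$ followed by verification of the adjunction $(-\times P)\dashv(-)^P$. The ``alternative'' you mention --- unwinding $Q^P(c)$ into an explicit objectwise description and checking the adjunction directly --- is in fact the route the cited text takes, so both variants you sketch are sound.
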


For a detailed proof, the reader is referred to \cite{maclane:sheaves}. A further result of significance is the {\em density theorem}, which can be seen as the generalization of the simple result that any set $S$ can be defined as the union of single point sets $\bigcup_{x \in S} \{ x \}$. 

\begin{theorem}\cite{maclane:sheaves}
    In a functor category ${\bf Sets}^{{\cal C}^{op}}$, any object $x$ is the colimit of a diagram of representable objects in a canonical way. 
\end{theorem}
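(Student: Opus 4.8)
The plan is to realize the canonical colimit presentation through the Grothendieck category of elements, which was already introduced above for contravariant functors. Write $P \colon \mathcal{C}^{op} \to \mathbf{Sets}$ for the presheaf in question (the ``object $x$'' of the statement), and recall that $\int P$ has as objects the pairs $(c,s)$ with $c \in \mathcal{C}$ and $s \in Pc$, a morphism $(c,s) \to (c',s')$ being a map $f \colon c \to c'$ in $\mathcal{C}$ with $Pf(s') = s$, together with the forgetful projection $\pi \colon \int P \to \mathcal{C}$, $(c,s) \mapsto c$. Composing $\pi$ with the Yoneda embedding yields a diagram
\[ D_P \;=\; \yo \circ \pi \;\colon\; \int P \longrightarrow \mathbf{Sets}^{\mathcal{C}^{op}}, \qquad (c,s) \longmapsto \mathcal{C}(-,c), \]
all of whose values are representable by construction, and whose indexing category $\int P$ depends on nothing but $P$ --- this is the content of ``in a canonical way''. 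The theorem is thus the fully unpacked form of Theorem~\ref{presheaf-theorem}.

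First I would exhibit the candidate colimiting cocone $\lambda \colon D_P \Rightarrow P$. By the Yoneda Lemma each element $s \in Pc$ names a unique natural transformation $\widehat{s} \colon \mathcal{C}(-,c) \Rightarrow P$; set $\lambda_{(c,s)} = \widehat{s}$. To check that $\lambda$ is a cocone I must verify, for every morphism $f \colon (c,s) \to (c',s')$ of $\int P$, that $\lambda_{(c',s')} \circ \yo(f) = \lambda_{(c,s)}$; evaluating at an object $d$ and a map $g \colon d \to c$, the left side is $P(f \circ g)(s') = Pg(Pf(s'))$, which equals $Pg(s) = \widehat{s}(g)$ precisely because $Pf(s') = s$. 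So the cocone condition is exactly the defining relation of $\int P$, hence automatic.

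The heart of the argument is the universal property: $(P, \lambda)$ is colimiting. Since colimits in $\mathbf{Sets}^{\mathcal{C}^{op}}$ are computed objectwise, it suffices to prove that for each $d \in \mathcal{C}$ the comparison map
\[ \colim_{(c,s)\in\int P} \mathcal{C}(d,c) \;\longrightarrow\; Pd, \qquad [\,(c,s),\, g \colon d \to c\,] \longmapsto (Pg)(s) \]
is a bijection natural in $d$. Well-definedness with respect to the equivalence relation on $\coprod_{(c,s)} \mathcal{C}(d,c)$ follows from the description of the morphisms of $\int P$ (the same computation as in the cocone check). Surjectivity is witnessed by the classes $[\,(d,y),\, \id_d\,]$ for $y \in Pd$. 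For injectivity one observes that any class $[\,(c,s),\, g\,]$ already coincides with $[\,(d,\, Pg(s)),\, \id_d\,]$, because $g$ is itself a morphism $(d, Pg(s)) \to (c,s)$ in $\int P$ and hence identifies $[\,(d,Pg(s)),\,\id_d\,]$ with $[\,(c,s),\, g\circ\id_d\,]$; so two classes with common image $y$ both collapse to $[\,(d,y),\,\id_d\,]$. Naturality in $d$ is a direct diagram chase. Equivalently, one can verify the universal property directly: given a presheaf $Q$ and a compatible family $\mu_{(c,s)} \colon \mathcal{C}(-,c) \Rightarrow Q$, Yoneda converts it into elements $\mu_{(c,s)} \in Qc$ which by compatibility assemble, via $\theta_d(s) = \mu_{(d,s)}$, into the unique natural transformation $\theta \colon P \Rightarrow Q$ with $\theta \circ \lambda = \mu$.

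The step I expect to be the main obstacle --- or at least the one most prone to slips --- is the injectivity/well-definedness bookkeeping in the objectwise colimit: one must track the contravariant variance in $\int P$ carefully so that the equivalence relation on $\coprod_{(c,s)} \mathcal{C}(d,c)$ collapses precisely onto $Pd$ and nothing more. Beyond the category of elements and the objectwise computation of colimits, the proof uses no ingredient other than the Yoneda Lemma, applied once to build the cocone and once (in the alternative) to identify the comparison map.
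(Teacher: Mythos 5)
Your proof is correct and follows exactly the route the paper itself indicates: the paper states this density theorem without proof (deferring to the cited MacLane--Moerdijk and to its earlier Theorem~\ref{presheaf-theorem}, which it likewise attributes to ``a fundamental consequence of the category of elements \ldots via the Yoneda lemma''), and your argument --- the diagram $\yo \circ \pi$ indexed by $\int P$, the Yoneda-induced cocone, and the objectwise comparison map shown bijective by collapsing every class to $[\,(d,y),\id_d\,]$ --- is precisely that standard argument, with the contravariant bookkeeping handled correctly. No gaps.
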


Recall that an object is representable if it is isomorphic to a Yoneda embedding $\yo(x)$. This result has numerous applications to AI and ML, among them to causal inference \cite{DBLP:journals/entropy/Mahadevan23} and universal decision models \cite{sm:udm}. 

\subsection*{Subobject Classifiers} 

A topos builds on the property of subobject classifiers in {\bf Sets}. Given any subset $S \subset X$, we can define $S$ as the monic arrow $S \hookrightarrow X$ defined by the inclusion of $S$ in $X$, or as the characteristic function $\phi_S$ that is equal to $1$ for all elements $x \in X$ that belong to $S$, and takes the value $0$ otherwise. We can define the set ${\bf 2} = \{0, 1 \}$ and treat {\bf true} as the inclusion $\{1 \}$ in ${\bf 2}$. The characteristic function $\phi_S$ can then be defined as the pullback of {\bf true} along $\phi_S$. 

\[\begin{tikzcd}
	S &&& {{\bf 1}} \\
	\\
	X &&& {{\bf 2}}
	\arrow["m", tail, from=1-1, to=3-1]
	\arrow[from=1-1, to=1-4]
	\arrow["{{\bf true}}"{description}, tail, from=1-4, to=3-4]
	\arrow["{\phi_S}"{description}, dashed, from=3-1, to=3-4]
\end{tikzcd}\]

We can now define subobject classifiers in a category ${\cal C}$ as follows. 

\begin{definition}
    In a category ${\cal C}$ with finite limits, a {\bf subobject classifier} is a {\em monic} arrow ${\bf true}: {\bf 1} \rightarrow \Omega$, such that to every other monic arrow $S \hookrightarrow X$ in ${\cal C}$, there is a unique arrow $\phi$ that forms the following pullback square: 

\[\begin{tikzcd}
	S &&& {{\bf 1}} \\
	\\
	X &&& \Omega
	\arrow["m", tail, from=1-1, to=3-1]
	\arrow[from=1-1, to=1-4]
	\arrow["{{\bf true}}"{description}, tail, from=1-4, to=3-4]
	\arrow["{\phi}"{description}, dashed, from=3-1, to=3-4]
\end{tikzcd}\]
    
\end{definition}

This definition can be rephrased as saying that the subobject functor is representable. In other words, a subobject of an object $x$ in a category ${\cal C}$ is an equivalence class of monic arrows $m: S \hookrightarrow  x$. 

\cite{maclane:sheaves} provide many examples of subobject classifiers.  \cite{vigna2003guided} gives a detailed description of the topos of graphs.  

\subsection*{Heyting Algebras} 

A truly remarkable finding is that the logic of topoi is not classical Boolean logic, but intuitionistic logic defined by {\em Heyting algebras}. 

\begin{definition}
    A {\bf Heyting algebra} is a poset with all finite products and coproducts, which is Cartesian closed. That is, a Heyting algebra is a lattice with ${\bf 0}$ and ${\bf 1}$ which has to each pair of elements $x$ and $y$ an exponential $y^x$. The exponential is written $x \Rightarrow y$, and defined as the adjunction 

    \[ z \leq (x \Rightarrow y) \ \ \mbox{if and only if} \ \ z \wedge x \leq y\]
\end{definition}

Alternatively, $x \Rightarrow y$ is a least upper bound for all those elements $z$ with $z \wedge x \leq y$. Therefore, for the particular case of $y$, we get that $y \leq (x \Rightarrow y)$. In the figure below, the arrows show the partial ordering relationship. As a concrete example, for a topological space $X$ the set of open setx ${\cal O}(X)$ is a Heyting algebra. The binary intersections and unions of open sets yield open sets. The empty set $\emptyset$ represents ${\bf 0}$ and the complete set $X$ represents ${\bf 1}$. Given any two open sets $U$ and $V$, the exponential object $U \Rightarrow W$ is defined as the union $\bigcup_i W_i$ of all open sets $W_i$ for which $W \cap U \subset V$. 

\[\begin{tikzcd}
	&&& {x \Rightarrow y} \\
	x && y \\
	& {x \wedge y}
	\arrow[from=3-2, to=2-3]
	\arrow[from=2-3, to=1-4]
	\arrow[from=3-2, to=2-1]
\end{tikzcd}\]

Note that in a Boolean algebra, we define implication as the relationship 

\[ (x \Rightarrow y) \equiv \neg x \vee y \]

This property, which is sometimes referred to as the ``law of the excluded middle" (because if $x = y$, then this translates to $\neg x \vee x = {\bf true}$), does not hold in a Heyting algebra. For example, on a real line $\mathbb{R}$, if we define the open sets by the open intervals $(a, b), a, b \in \mathbb{R}$, the complement of an open set need not be open. 

We can now state what is a truly remarkable result about the subobjects of a (pre)sheaf. 

\begin{theorem}\cite{maclane:sheaves}
For any functor category $\hat{C} = {\bf Sets}^{{\cal C}^{op}}$ of a small category ${\cal C}$, the partially ordered set $\mbox{Sub}_{\hat{C}}(x)$ of subobjects of $x$, for any object $x$ of $\hat{C}$ is a Heyting algebra. 
\end{theorem}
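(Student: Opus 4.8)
The plan is to work with the explicit description of subobjects of a presheaf and then exhibit the Heyting operations pointwise, with the implication given by the usual Kripke-style formula. Fix a presheaf $x \in \hat{C} = {\bf Sets}^{{\cal C}^{op}}$. A subobject of $x$ is represented by a \emph{subfunctor} $A \subseteq x$, i.e.\ a choice of subset $A(c) \subseteq x(c)$ for each object $c$ of ${\cal C}$ such that $x(f)(A(c)) \subseteq A(c')$ for every arrow $f \colon c' \to c$ of ${\cal C}$ (note that $x$ is contravariant, so $x(f) \colon x(c) \to x(c')$). Ordering subfunctors by pointwise inclusion makes $\mbox{Sub}_{\hat{C}}(x)$ a poset, whose top element is $x$ itself and whose bottom element is the subfunctor that is empty at every stage. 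The first routine step I would carry out is to check that finite meets and joins exist and are computed pointwise: given subfunctors $A, B \subseteq x$, the assignments $c \mapsto A(c) \cap B(c)$ and $c \mapsto A(c) \cup B(c)$ are again subfunctors, and they are visibly the greatest lower bound and least upper bound in the inclusion order, so $\mbox{Sub}_{\hat{C}}(x)$ is already a lattice with ${\bf 0}$ and ${\bf 1}$.

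The key step is then to define, for subfunctors $A, B \subseteq x$,
\[
(A \Rightarrow B)(c) \;=\; \{\, \xi \in x(c) \;:\; \text{for every } f \colon d \to c,\ x(f)(\xi) \in A(d) \text{ implies } x(f)(\xi) \in B(d) \,\},
\]
and to verify two things. First, that this is a subfunctor of $x$: if $\xi \in (A \Rightarrow B)(c)$ and $g \colon c' \to c$, then for any $h \colon d \to c'$ one has $x(h)(x(g)(\xi)) = x(g\circ h)(\xi)$ by contravariance, and since $g \circ h \colon d \to c$ the defining condition for $\xi$ applies, giving $x(g)(\xi) \in (A \Rightarrow B)(c')$. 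Second, the exponential adjunction $C \wedge A \leq B \iff C \leq (A \Rightarrow B)$: for the forward direction, if $C \wedge A \leq B$ and $\xi \in C(c)$, then for any $f \colon d \to c$ with $x(f)(\xi) \in A(d)$ we have $x(f)(\xi) \in C(d)$ since $C$ is a subfunctor, hence $x(f)(\xi) \in (C \wedge A)(d) \subseteq B(d)$, so $\xi \in (A \Rightarrow B)(c)$; conversely, if $C \leq (A \Rightarrow B)$ and $\xi \in (C \wedge A)(c)$, then $\xi \in C(c) \subseteq (A \Rightarrow B)(c)$, and instantiating the defining condition at $f = \id_c$ with $\xi \in A(c)$ yields $\xi \in B(c)$. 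This shows $\mbox{Sub}_{\hat{C}}(x)$ is Cartesian closed as a poset, i.e.\ a Heyting algebra, which is exactly the definition given above.

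The step I expect to be the main obstacle — really the only subtle point — is getting the variance right in the formula for $A \Rightarrow B$: one must quantify over \emph{all} arrows $f \colon d \to c$ into $c$, not merely identities, and this is precisely the right-ideal/sieve (``$h^*$'') construction appearing in the discussion of Grothendieck topologies above. It is this universal quantification that both makes $A \Rightarrow B$ a genuine subfunctor and forces the resulting logic to be intuitionistic rather than Boolean. As an alternative route I would note that one can bypass the explicit formula by invoking the earlier theorem that $\hat{C}$ is Cartesian closed together with the subobject classifier $\Omega$, whose value $\Omega(c)$ is the set of sieves on $c$: then $\mbox{Sub}_{\hat{C}}(x) \cong \hat{C}(x, \Omega)$, and the Heyting operations on $\mbox{Sub}_{\hat{C}}(x)$ are induced pointwise from an internal Heyting-algebra structure on $\Omega$ — but verifying that $\Omega$ carries such a structure reduces to the same sieve-theoretic computation, so the direct argument above is the cleaner one to write out.
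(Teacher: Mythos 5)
Your proposal is correct, and it is essentially the standard argument: representing subobjects of a presheaf $x$ as subfunctors with pointwise meets and joins, and defining $(A \Rightarrow B)(c)$ by quantifying over all arrows $f \colon d \to c$ — the sieve-style condition that makes the implication a genuine subfunctor and yields the exponential adjunction. The paper itself does not include a proof of this theorem (it defers entirely to the cited reference of Mac Lane and Moerdijk), and your argument reproduces the proof given there, correctly identifying the variance issue in the formula for $A \Rightarrow B$ as the one genuinely delicate point.
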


This result has deep implications for a lot of applications in AI and ML that are based modeling presheaves, including causal inference and decision making. It implies that the proper logic to employ in these settings is intuitionistic logic, not classical logic as is often used in AI \cite{pearl-book,fagin,halpern:ac}. 

Finally, we can now define the category of topoi. 

\begin{definition}
    A {\bf topos} is a category ${\cal E}$ with 
    \begin{enumerate}
        \item A pullback for every diagram $X \rightarrow B \leftarrow Y$. 

        \item A terminal object ${\bf 1}$. 

        \item An object $\Omega$ and a monic arrow ${\bf true}: 1 \rightarrow \Omega$ such that any monic $m: S \hookrightarrow B$, there is a unique arrow $\phi: B \rightarrow \Omega$ in ${\cal E}$ for which the following square is a pullback: 
        
        \[\begin{tikzcd}
	S &&& {{\bf 1}} \\
	\\
	X &&& \Omega
	\arrow["m", tail, from=1-1, to=3-1]
	\arrow[from=1-1, to=1-4]
	\arrow["{{\bf true}}"{description}, tail, from=1-4, to=3-4]
	\arrow["{\phi}"{description}, dashed, from=3-1, to=3-4]
\end{tikzcd}\]

\item To each object $x$ an object $P x$ and an arrow $\epsilon_x: x \times P x \rightarrow \Omega$ such that for every arrow $f: x \times y \rightarrow \Omega$, there is a unique arrow $g: y \rightarrow P x$ for which the following diagrams commute: 

\[\begin{tikzcd}
	y && {x \times y} &&& \Omega \\
	\\
	Px && {x \times P x} &&& \Omega
	\arrow["g", dashed, from=1-1, to=3-1]
	\arrow["f", from=1-3, to=1-6]
	\arrow["{\epsilon_x}", from=3-3, to=3-6]
	\arrow["{1 \times g}"{description}, dashed, from=1-3, to=3-3]
\end{tikzcd}\]

    \end{enumerate}
\end{definition}

\subsection{Higher-Order Categories} 

Simplicial sets are higher-dimensional generalizations of directed graphs, partially ordered sets, as well as regular categories themselves.  Importantly, simplicial sets and simplicial objects form a foundation for higher-order category theory. Simplicial objects have long been a foundation for algebraic topology, and  more recently in  higher-order category theory. The category $\Delta$ has non-empty ordinals $[n] = \{0, 1, \ldots, n]$ as objects, and order-preserving maps $[m] \rightarrow [n]$ as arrows. An important property in $\Delta$ is that any many-to-many mapping is decomposable as a composition of an injective and a surjective mapping,  each of which is decomposable into a sequence of elementary injections $\delta_i: [n] \rightarrow [n+1]$, called {\em {coface}} mappings, which omits $i \in [n]$, and a sequence of elementary surjections $\sigma_i: [n] \rightarrow [n-1]$, called {\em {co-degeneracy}} mappings, which repeats $i \in [n]$. The fundamental simplex $\Delta([n])$ is the presheaf of all morphisms into $[n]$, that is, the representable functor $\Delta(-, [n])$.  The Yoneda Lemma  assures us that an $n$-simplex $x \in X_n$ can be identified with the corresponding map $\Delta[n] \rightarrow X$. Every morphism $f: [n] \rightarrow [m]$ in $\Delta$ is functorially mapped to the map $\Delta[m] \rightarrow \Delta[n]$ in ${\cal S}$. 

Any morphism in the category $\Delta$ can be defined as a sequence of {\em co-degeneracy} and {\em co-face} operators, where the co-face operator $\delta_i: [n-1] \rightarrow [n], 0 \leq i \leq n$ is defined as: 
\[ 
\delta_i (j)  =
\left\{
	\begin{array}{ll}
		j,  & \mbox{for } \ 0 \leq j \leq i-1 \\
		j+1 & \mbox{for } \  i \leq j \leq n-1 
	\end{array}
\right. \] 

Analogously, the co-degeneracy operator $\sigma_j: [n+1] \rightarrow [n]$ is defined as 
\[ 
\sigma_j (k)  =
\left\{
	\begin{array}{ll}
		j,  & \mbox{for } \ 0 \leq k \leq j \\
		k-1 & \mbox{for } \  j < k \leq n+1 
	\end{array}
\right. \] 

Note that under the contravariant mappings, co-face mappings turn into face mappings, and co-degeneracy mappings turn into degeneracy mappings. That is, for any simplicial object (or set) $X_n$, we have $X(\delta_i) \coloneqq d_i: X_n \rightarrow X_{n-1}$, and likewise, $X(\sigma_j) \coloneqq s_j: X_{n-1} \rightarrow X_n$. 

The compositions of these arrows define certain well-known properties \cite{may1992simplicial,richter2020categories}: 
\begin{eqnarray*}
    \delta_j \circ \delta_i &=& \delta_i \circ \delta_{j-1}, \ \ i < j \\
    \sigma_j \circ \sigma_i &=& \sigma_i \circ \sigma_{j+1}, \ \ i \leq j \\ 
    \sigma_j \circ \delta_i (j)  &=&
\left\{
	\begin{array}{ll}
		\sigma_i \circ \sigma_{j+1},  & \mbox{for } \ i < j \\
		1_{[n]} & \mbox{for } \  i = j, j+1 \\ 
		\sigma_{i-1} \circ \sigma_j, \mbox{for} \ i > j + 1
	\end{array}
\right.
\end{eqnarray*}

\begin{example}
The ``vertices'' of a simplicial object ${\cal C}_n$ are the objects in  ${\cal C}$, and the ``edges'' of ${\cal C}$ are its arrows $f: X \rightarrow Y$, where $X$ and $Y$ are objects in ${\cal C}$. Given any such arrow, the degeneracy operators $d_0 f = Y$ and $d_1 f = X$ recover the source and target of each arrow. Also, given an object $X$ of category ${\cal C}$, we can regard the face operator $s_0 X$ as its identity morphism ${\bf 1}_X: X \rightarrow X$. 
\end{example}

\begin{example} 
Given a category ${\cal C}$, we can identify an $n$-simplex $\sigma$ of a simplicial set ${\cal C}_n$ with \mbox{the sequence: }
\[ \sigma = C_o \xrightarrow[]{f_1} C_1 \xrightarrow[]{f_2} \ldots \xrightarrow[]{f_n} C_n \] 
the face operator $d_0$ applied to $\sigma$ yields the sequence 
\[ d_0 \sigma = C_1 \xrightarrow[]{f_2} C_2 \xrightarrow[]{f_3} \ldots \xrightarrow[]{f_n} C_n \] 
where the object $C_0$ is ``deleted'' along with the morphism $f_0$ leaving it.  

\end{example} 

\begin{example} 
Given a category ${\cal C}$, and an $n$-simplex $\sigma$ of the simplicial set ${\cal C}_n$, the face operator $d_n$ applied to $\sigma$ yields the sequence 
\[ d_n \sigma = C_0 \xrightarrow[]{f_1} C_1 \xrightarrow[]{f_2} \ldots \xrightarrow[]{f_{n-1}} C_{n-1} \] 
where the object $C_n$ is ``deleted'' along with the morphism $f_n$ entering it.  Note this face operator can be viewed as analogous to interventions on leaf nodes in a causal DAG model. 

\end{example} 

\begin{example} 
Given a category  ${\cal C}$, and an $n$-simplex $\sigma$ of the simplicial set ${\cal C}_n$
the face operator $d_i, 0 < i < n$ applied to $\sigma$ yields the sequence 
\[ d_i \sigma = C_0 \xrightarrow[]{f_1} C_1 \xrightarrow[]{f_2} \ldots C_{i-1} \xrightarrow[]{f_{i+1} \circ f_i} C_{i+1} \ldots \xrightarrow[]{f_{n}} C_{n} \] 
where the object $C_i$ is ``deleted'' and the morphisms $f_i$ is composed with morphism $f_{i+1}$.  Note that this process can be abstractly viewed as intervening on object $C_i$ by choosing a specific value for it (which essentially ``freezes'' the morphism $f_i$ entering object $C_i$ to a constant value). 

\end{example} 

\begin{example} 
Given a category ${\cal C}$, and an $n$-simplex $\sigma$ of the simplicial set ${\cal C}_n$, 
the degeneracy operator $s_i, 0 \leq i \leq n$ applied to $\sigma$ yields the sequence 
\[ s_i \sigma = C_0 \xrightarrow[]{f_1} C_1 \xrightarrow[]{f_2} \ldots C_{i} \xrightarrow[]{{\bf 1}_{C_i}} C_{i} \xrightarrow[]{f_{i+1}} C_{i+1}\ldots \xrightarrow[]{f_{n}} C_{n} \] 
where the object $C_i$ is ``repeated'' by inserting its identity morphism ${\bf 1}_{C_i}$. 

\end{example} 

\begin{definition} 
Given a category ${\cal C}$, and an $n$-simplex $\sigma$ of the simplicial set ${\cal C}_n$, 
$\sigma$ is a {\bf {degenerate}} simplex if some $f_i$ in $\sigma$  is an identity morphism, in which case $C_i$ and $C_{i+1}$ are equal. 
\end{definition} 

\subsection*{Simplicial Subsets and Horns}

We now describe more complex ways of extracting parts of categorical structures using simplicial subsets and horns. These structures will play a key role in defining suitable lifting problems. 
 
 \begin{definition}
 The {\bf {standard simplex}} $\Delta^n$ is the simplicial set defined by the construction 
 \[ ([m] \in \Delta) \mapsto {\bf Hom}_\Delta([m], [n]) \] 
 
 By convention, $\Delta^{-1} \coloneqq \emptyset$. The standard $0$-simplex $\Delta^0$ maps each $[n] \in \Delta^{op}$ to the single element set $\{ \bullet \}$. 
 \end{definition}
 
 \begin{definition}
 Let $S_\bullet$ denote a simplicial set. If for every integer $n \geq 0$, we are given a subset $T_n \subseteq S_n$, such that the face and degeneracy maps 
 \[ d_i: S_n \rightarrow S_{n-1} \ \ \ \ s_i: S_n \rightarrow S_{n+1} \] 
 applied to $T_n$ result in 
 \[ d_i: T_n \rightarrow T_{n-1} \ \ \ \ s_i: T_n \rightarrow T_{n+1} \] 
 then the collection $\{ T_n \}_{n \geq 0}$ defines a {\bf {simplicial subset}} $T_\bullet \subseteq S_\bullet$
 \end{definition}
 
 \begin{definition}
 The {\bf {boundary}} is a simplicial set $(\partial \Delta^n): \Delta^{op} \rightarrow$ {\bf {Set}} defined as
 \[ (\partial \Delta^n)([m]) = \{ \alpha \in {\bf Hom}_\Delta([m], [n]): \alpha \ \mbox{is not surjective} \} \]
 \end{definition}
 
 Note that the boundary $\partial \Delta^n$ is a simplicial subset of the standard $n$-simplex $\Delta^n$. 
 
 \begin{definition}
 The {\bf {Horn}} $\Lambda^n_i: \Delta^{op} \rightarrow$ {\bf {Set}} is defined as
 \[ (\Lambda^n_i)([m]) = \{ \alpha \in {\bf Hom}_\Delta([m],[n]): [n] \not \subseteq \alpha([m]) \cup \{i \} \} \] 
 \end{definition}
 
 Intuitively, the Horn $\Lambda^n_i$ can be viewed as the simplicial subset that results from removing the interior of the $n$-simplex $\Delta^n$ together with the face opposite its $i$th vertex.   
 
Consider the problem of composing $1$-dimensional simplices  to form a $2$-dimensional simplicial object. Each simplicial subset of an $n$-simplex induces a  a {\em horn} $\Lambda^n_k$, where  $ 0 \leq k \leq n$. Intuitively, a horn is a subset of a simplicial object that results from removing the interior of the $n$-simplex and the face opposite the $i$th vertex. Consider the three horns defined below. The dashed arrow  $\dashrightarrow$ indicates edges of the $2$-simplex $\Delta^2$ not contained in the horns. 
\begin{center}
 \begin{tikzcd}[column sep=small]
& \{0\}  \arrow[dl] \arrow[dr] & \\
  \{1 \} \arrow[rr, dashed] &                         & \{ 2 \} 
\end{tikzcd} \hskip 0.5 in 
 \begin{tikzcd}[column sep=small]
& \{0\}  \arrow[dl] \arrow[dr, dashed] & \\
  \{1 \} \arrow{rr} &                         & \{ 2 \} 
\end{tikzcd} \hskip 0.5in 
 \begin{tikzcd}[column sep=small]
& \{0\}  \arrow[dl, dashed] \arrow[dr] & \\
  \{1 \} \arrow{rr} &                         & \{ 2 \} 
\end{tikzcd}
\end{center}

The inner horn $\Lambda^2_1$ is the middle diagram above, and admits an easy solution to the ``horn filling'' problem of composing the simplicial subsets. The two outer horns on either end pose a more difficult challenge. For example, filling the outer horn $\Lambda^2_0$ when the morphism between $\{0 \}$ and $\{1 \}$ is $f$ and that between $\{0 \}$ and $\{2 \}$ is the identity ${\bf 1}$ is tantamount to finding the left inverse of $f$ up to homotopy. Dually, in this case, filling the outer horn $\Lambda^2_2$ is tantamount to finding the right inverse of $f$ up to homotopy. A considerable elaboration of the theoretical machinery in category theory is required to describe the various solutions proposed, which led to different ways of defining higher-order category theory \cite{weakkan,quasicats,Lurie:higher-topos-theory}. 

\subsection*{Higher-Order Categories} 

We now formally introduce higher-order categories, building on the framework proposed in a number of formalisms. We briefly summarize various approaches to the horn filling problem in higher-order category theory.
 
 \begin{definition}
 Let $f: X \rightarrow S$ be a morphism of simplicial sets. We say $f$ is a {\bf {Kan fibration}} if, for each $n > 0$, and each $0 \leq i \leq n$, every lifting problem. 
 \begin{center}
 \begin{tikzcd}
  \Lambda^n_i \arrow{d}{} \arrow{r}{\sigma_0}
    & X \arrow[]{d}{f} \\
  \Delta^n \arrow[ur,dashed, "\sigma"] \arrow[]{r}[]{\bar{\sigma}}
&S \end{tikzcd}
 \end{center}  
 admits a solution. More precisely, for every map of simplicial sets $\sigma_0: \Lambda^n_i \rightarrow X$ and every $n$-simplex $\bar{\sigma}: \Delta^n \rightarrow S$ extending $f \circ \sigma_0$, we can extend $\sigma_0$ to an $n$-simplex $\sigma: \Delta^n \rightarrow X$ satisfying $f \circ \sigma = \bar{\sigma}$. 
 \end{definition}
 
 \begin{example}
Given a simplicial set $X$, then a projection map $X \rightarrow \Delta^0$ that is a Kan fibration is called a {\bf {Kan complex}}. 
\end{example} 

\begin{example}
Any isomorphism between simplicial sets is a Kan fibration. 
\end{example}

\begin{example}
The collection of Kan fibrations is closed under retracts. 
\end{example}

\begin{definition}\cite{Lurie:higher-topos-theory}
\label{ic} 
An $\infty$-category is a simplicial object $S_\bullet$ which satisfies the following condition: 

\begin{itemize} 
\item For $0 < i < n$, every map of simplicial sets $\sigma_0: \Lambda^n_i \rightarrow S_\bullet$ can be extended to a map $\sigma: \Delta^n \rightarrow S_i$. 
\end{itemize} 
\end{definition}

This definition emerges out of a common generalization of two other conditions on a simplicial set $S_i$: 

\begin{enumerate} 
\item {\bf {Property K}}: For $n > 0$ and $0 \leq i \leq n$, every map of simplicial sets $\sigma_0: \Lambda^n_i \rightarrow S_\bullet$ can be extended to a map $\sigma: \Delta^n \rightarrow S_i$. 

\item {\bf {Property C}}:  for $0 < 1 < n$, every map of simplicial sets $\sigma_0: \Lambda^n_i \rightarrow S_i$ can be extended uniquely to a map $\sigma: \Delta^n \rightarrow S_i$. 
\end{enumerate} 

Simplicial objects that satisfy property K were defined above to be Kan complexes. Simplicial objects that satisfy property C above can be identified with the nerve of a category, which yields a full and faithful embedding of a category in the category of sets.  definition~\ref{ic} generalizes both of these definitions, and was called a {\em {quasicategory}} in \cite{quasicats} and {\em {weak Kan complexes}} in \cite{weakkan} when ${\cal C}$ is a category. We will use the nerve of a category below in defining homotopy colimits as a way of characterizing a causal model. 

 \subsection*{Example: Simplicial Objects over Causal String Diagrams}  

We now illustrate the above formalism of simplicial objects by illustrating how it applies to the special case where causal models are defined over symmetric monoidal categories \cite{fong:ms,string-diagram-surgery,fritz:jmlr}. For a detailed overview of symmetric monoidal categories, we recommend the book-length treatment by \citet{fong2018seven}. Symmetric monoidal categories (SMCs) are useful in modeling processes where objects can be combined together to give rise to new objects, or where objects disappear. For example, \citet{Coecke_2016} propose a mathematical framework for resources based on SMCs. We focus on the work of \citet{string-diagram-surgery}.  It is important to point out that monoidal categories can be defined as a special type of Grothendieck fibration \cite{richter2020categories}. We discuss one specific case of the general  Grothendieck construction in the next section  construction, and refer the reader to \cite{richter2020categories} for how the structure of monoidal categories itself emerges from \mbox{this construction. }

Our goal in this section is to illustrate how we can define simplicial objects over the SMC category CDU category {\bf {Syn}}$_G$ constructed by \citet{string-diagram-surgery} to mimic the process of working with an actual Bayesian network DAG $G$ For the purposes of our illustration, it is not important to discuss the intricacies involved in this model, for which we refer the reader to the original paper. In particular, we can solve an associated lifting problem that is defined by the functor mapping the simplicial category $\Delta$ to their SMC category. They use the category of stochastic matrices to capture the process of working with the joint distribution as shown in the figure. Instead, one can use some other category, such as the category of {\bf {Sets}}, or {\bf {Top}} (the category of topological spaces), or indeed, the category {\bf {Meas}} of measurable spaces. 

Recall that Bayesian networks \cite{pearl-book} define a joint probability distribution 
 \[ P(X_1, \ldots, X_n) = \prod_{i=1}^n P(X_i | \mbox{Pa}(X_i)], \]  
 where $\mbox{Pa}(X_i) \subset \{X_1, \ldots, X_n \} \setminus {X_i}$ represents a subset of variables (not including the variable itself). \citet{string-diagram-surgery} show Bayesian network models can be constructed using symmetric monoidal categories, where the tensor product operation is used to combine multiple variables into a ``tensored'' variable that then probabilistically maps into an output variable. In particular, the monoidal category {\bf {Stoch}} has as objects finite sets, and morphisms $f: A \rightarrow B$ are $|B| \times |A|$ dimensional stochastic matrices. Composition of stochastic matrices corresponds to matrix multiplication. The monoidal product $\otimes$ in {\bf {Stoch}} is the cartesian product of objects, and the Kronecker product of matrices $f \otimes g$. \mbox{\citet{string-diagram-surgery}} define three additional operations, the copy map, the discarding map, and the uniform state. 

 \begin{definition}
 A CDU category (for copy, discard, and uniform) is a SMC category ({{\bf C}}, $\otimes$, $I$), where each object $A$ has a copy map $C_A: A \rightarrow A \otimes A$, and discarding map $D_A: A \rightarrow I$, and a uniform state map $U_A: I \rightarrow A$, satisfying a set of equations detailed in \citet{string-diagram-surgery}. CDU functors are symmetric monoidal functors between CDU categories, preserving the CDU maps. 
 \end{definition}

 The key theorem we are interested in is the following from the original paper \cite{string-diagram-surgery}: 

 \begin{theorem}
There is an isomorphism (1-1 correspondence) between Bayesian networks based on a DAG $G$ and CDU functors $F:$ {\bf {Syn}}$_G \rightarrow$ {\bf {Stoch}}. 
 \end{theorem}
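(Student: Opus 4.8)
The strategy is to exhibit both a Bayesian network on $G$ and a CDU functor $F : \mathbf{Syn}_G \to \mathbf{Stoch}$ as two presentations of the very same data — a finite state space at each node of $G$ together with one conditional probability table per node — and then to invoke the universal property of $\mathbf{Syn}_G$ as a \emph{free} CDU category to upgrade ``same data'' to ``natural bijection.'' First I would spell out the generators-and-relations description of $\mathbf{Syn}_G$: its objects are generated under $\otimes$ and $I$ by one object $X_i$ for each node of $G$; its morphisms are generated by the CDU structure maps $C_A, D_A, U_A$ together with one box $\phi_i : \bigotimes_{j \in \mathrm{Pa}(X_i)} X_j \to X_i$ for each node; and the \emph{only} imposed equations are the CDU axioms of \cite{string-diagram-surgery}. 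Dually, I would record that $\mathbf{Stoch}$ is itself a CDU category, with $C_A$ the copy stochastic matrix, $D_A$ the marginalisation map onto $I$, and $U_A$ the uniform state, so that ``CDU functor into $\mathbf{Stoch}$'' is a meaningful notion.

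The central step is the universal property: because $\mathbf{Syn}_G$ is freely generated, a CDU functor $F : \mathbf{Syn}_G \to \mathbf{Stoch}$ is determined by, and may be prescribed arbitrarily by, (i) a finite set $F(X_i)$ for each node and (ii) a stochastic matrix $F(\phi_i) : \prod_{j \in \mathrm{Pa}(X_i)} F(X_j) \to F(X_i)$ for each node — no compatibility constraints arise, since a CDU functor preserves the CDU maps by definition and there are no further relations to verify. Unpacking (ii), $F(\phi_i)$ is exactly a conditional distribution $P(X_i \mid \mathrm{Pa}(X_i))$ valued in the state space $F(X_i)$, so the data of $F$ is precisely the data of a Bayesian network on $G$. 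I would then check that the joint distribution $P(X_1,\dots,X_n) = \prod_i P(X_i \mid \mathrm{Pa}(X_i))$ is recovered by evaluating $F$ on the canonical string diagram assembled from the boxes $\phi_i$ and copy maps, wiring each $X_j$ into all of its children. The two assignments — Bayesian network $\mapsto$ functor-on-generators, and functor $\mapsto$ (state spaces, CPTs) — are then mutually inverse essentially by construction.

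I expect the main obstacle to be a careful treatment of \emph{freeness}, i.e. establishing that $\mathbf{Syn}_G$ really is the free CDU category on the signature determined by $G$, so that any assignment on generators extends uniquely to a CDU functor. This is exactly what promotes the obvious injection (read the CPTs off a functor) to a bijection: one must be sure no accidental equations hold between string diagrams in $\mathbf{Syn}_G$ beyond those forced by the CDU axioms, for otherwise some tuples of CPTs would fail to assemble into a well-defined functor. Subsidiary care is needed in fixing a convention for the object part — whether the state spaces count as part of the Bayesian-network datum or are fixed in advance (the latter yielding the cleanest statement) — and in confirming that the copy/discard/uniform maps of $\mathbf{Stoch}$ satisfy the CDU axioms. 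Once freeness is in hand, everything else is bookkeeping on string diagrams.
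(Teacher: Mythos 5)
The paper does not actually prove this theorem: it imports it verbatim from Jacobs, Kissinger, and Zanasi \cite{string-diagram-surgery} and explicitly defers all details to that source. Your reconstruction — read $\mathbf{Syn}_G$ as the free CDU category on one generating object per node and one box $\phi_i \colon \bigotimes_{j \in \mathrm{Pa}(X_i)} X_j \to X_i$ per node, check that $\mathbf{Stoch}$ carries a CDU structure, and let the universal property of the free CDU category turn the assignment of state spaces and conditional probability tables into a bijection with CDU functors — is correct and is essentially the argument given in that original paper, including your identification of freeness as the one point requiring real work.
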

 
 \subsection*{Nerve of a Category}

The nerve of a category ${\cal C}$ enables embedding ${\cal C}$ into the category of simplicial objects, which is a fully faithful embedding \cite{Lurie:higher-topos-theory,richter2020categories}. 

\begin{definition} 
\label{fully-faithful} 
Let ${\cal F}: {\cal C} \rightarrow {\cal D}$ be a functor from category ${\cal C}$ to category ${\cal D}$. If for all arrows $f$ the mapping $f \rightarrow F f$

\begin{itemize}
    \item injective, then the functor ${\cal F}$ is defined to be {\bf {faithful}}. 
    \item surjective, then the functor ${\cal F}$ is defined to be {\bf {full}}.  
    \item bijective, then the functor ${\cal F}$ is defined to be {\bf {fully faithful}}. 
\end{itemize}
\end{definition}

\begin{definition}
The {\bf {nerve}} of a category ${\cal C}$ is the set of composable morphisms of length $n$, for $n \geq 1$.  Let $N_n({\cal C})$ denote the set of sequences of composable morphisms of length $n$.  
\[ \{ C_o \xrightarrow[]{f_1} C_1 \xrightarrow[]{f_2} \ldots \xrightarrow[]{f_n} C_n \ | \ C_i \ \mbox{is an object in} \ {\cal C}, f_i \ \mbox{is a morphism in} \ {\cal C} \} \] 
\end{definition}

The set of $n$-tuples of composable arrows in {\cal C}, denoted by $N_n({\cal C})$,  can be viewed as a functor from the simplicial object $[n]$ to ${\cal C}$.  Note that any nondecreasing map $\alpha: [m] \rightarrow [n]$ determines a map of sets $N_m({\cal C}) \rightarrow N_n({\cal C})$.  The nerve of a category {\cal C} is the simplicial set $N_\bullet: \Delta \rightarrow N_n({\cal C})$, which maps the ordinal number object $[n]$ to the set $N_n({\cal C})$.

The importance of the nerve of a category comes from a key result \cite{richter2020categories}, showing it defines a full and faithful embedding of a category: 

\begin{theorem}
The {\bf {nerve functor}} $N_\bullet:$ {\bf {Cat}} $\rightarrow$ {\bf {Set}} is fully faithful. More specifically, there is a bijection $\theta$ defined as: 
\[ \theta: {\bf Cat}({\cal C}, {\cal C'}) \rightarrow {\bf Set}_\Delta (N_\bullet({\cal C}), N_\bullet({\cal C'}) \] 
\end{theorem}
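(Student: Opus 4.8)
The plan is to build the comparison map $\theta$ by hand, verify it is injective (faithfulness of $N_\bullet$) and surjective (fullness), and observe it is natural; the fullness half carries essentially all the content.

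First I would define $\theta$. Given a functor $F: {\cal C} \to {\cal C}'$, let $N_\bullet(F)$ have components $N_n(F): N_n({\cal C}) \to N_n({\cal C}')$ sending a composable chain $C_0 \xrightarrow{f_1} \cdots \xrightarrow{f_n} C_n$ to $FC_0 \xrightarrow{Ff_1} \cdots \xrightarrow{Ff_n} FC_n$. The two functoriality axioms --- $F$ preserves composition and identities --- are precisely what is needed to check that $N_\bullet(F)$ commutes with the face operators $d_i$ (which delete an endpoint object, or compose two adjacent morphisms, as in the Examples above) and the degeneracy operators $s_i$ (which insert an identity). Hence $N_\bullet(F)$ is a morphism of simplicial sets, $N_\bullet$ is a functor ${\bf Cat} \to {\bf Set}_\Delta$, and $\theta$ is patently natural in ${\cal C}$ and ${\cal C}'$. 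Faithfulness is then immediate: a functor is determined by its action on objects and morphisms, i.e.\ by the components $N_0(F)$ and $N_1(F)$ of $N_\bullet(F)$, using $N_0({\cal C}) = \mathrm{ob}({\cal C})$ and $N_1({\cal C}) = \mathrm{mor}({\cal C})$; so $N_\bullet(F) = N_\bullet(F')$ forces $F = F'$.

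The crux is fullness. Given an arbitrary morphism of simplicial sets $g: N_\bullet({\cal C}) \to N_\bullet({\cal C}')$, I must produce a functor $F$ with $N_\bullet(F) = g$. Define $F$ on objects by $g_0$ and on morphisms by $g_1$. Compatibility of $g$ with the two face maps $d_0, d_1: N_1 \to N_0$, which return the target and source of a $1$-simplex, shows $F$ respects source and target; compatibility with $s_0: N_0 \to N_1$, which sends an object to its identity arrow, shows $F$ preserves identities. For composition, the key input is the ``spine'' description of a nerve: an $n$-simplex of $N_\bullet({\cal C})$ is determined uniquely by the $n$-tuple of its consecutive edges, so $N_n({\cal C}) \cong N_1({\cal C}) \times_{N_0({\cal C})} \cdots \times_{N_0({\cal C})} N_1({\cal C})$; in particular $N_2({\cal C})$ is the set of composable pairs, and an element of $N_2({\cal C}')$ is pinned down by its $d_2$- and $d_0$-faces. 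Given a composable pair $(f_1, f_2) \in N_2({\cal C})$, the equalities $d_2 g_2(f_1, f_2) = g_1 d_2(f_1, f_2) = g_1 f_1$ and $d_0 g_2(f_1, f_2) = g_1 f_2$ force $g_2(f_1, f_2) = (g_1 f_1, g_1 f_2)$, and then compatibility with the inner face $d_1: N_2 \to N_1$, which sends $(f_1, f_2)$ to the composite $f_2 \circ f_1$, gives $g_1(f_2 \circ f_1) = d_1 g_2(f_1, f_2) = d_1(g_1 f_1, g_1 f_2) = (g_1 f_2) \circ (g_1 f_1)$. So $F$ is a functor, and unwinding definitions gives $N_\bullet(F) = g$. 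Thus $\theta$ is a bijection, natural in both variables, which is the assertion.

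I expect the one genuine obstacle to be justifying the spine decomposition $N_n({\cal C}) \cong N_1 \times_{N_0} \cdots \times_{N_0} N_1$ cleanly and keeping straight which face maps are ``outer'' (deleting an endpoint object) and which are ``inner'' (composing two morphisms), since the reconstruction of $g_n$ from $g_1$ and the extraction of composition-preservation both hinge on using the right ones; everything else is a routine check of the simplicial identities already recorded in the Examples above. An alternative, more structural route would be to exhibit $\Delta$ as a dense subcategory of ${\bf Cat}$ via $[n] \mapsto (0 \to 1 \to \cdots \to n)$ and invoke the general fact that the nerve associated to a dense inclusion is fully faithful, but this merely relocates the same combinatorics into the density proof.
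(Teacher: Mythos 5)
The paper states this theorem without proof, merely citing Richter's textbook, so there is no in-paper argument to compare against; your proof is the standard one (reconstruct $F$ from $g_0$ and $g_1$, use the outer faces of $N_2$ together with the spine description to pin down $g_2$, and extract preservation of composition from compatibility with the inner face $d_1$), and it is correct. Note that with the paper's definition of $N_n({\cal C})$ as the set of composable $n$-chains, the spine decomposition $N_n({\cal C}) \cong N_1({\cal C}) \times_{N_0({\cal C})} \cdots \times_{N_0({\cal C})} N_1({\cal C})$ that you flag as the one genuine obstacle holds essentially by definition, so that step needs no further justification.
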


Using this concept of a nerve of a category, we can now state a theorem that shows it is possible to easily embed the CDU symmetric monoidal category defined above that represents Bayesian Networks and their associated ``string diagram surgery'' operations for causal inference as a simplicial set. 

\begin{theorem}
  Define the {\bf {nerve}} of the CDU symmetric monoidal  category  ({\bf {C}}, $\otimes$, $I$), where each object $A$ has a copy map $C_A: A \rightarrow A \otimes A$, and discarding map $D_A: A \rightarrow I$, and a uniform state map $U_A: I \rightarrow A$ as the set of composable morphisms of length $n$, for $n \geq 1$.  Let $N_n({\cal C})$ denote the set of sequences of composable morphisms of length $n$.  
\[ \{ C_o \xrightarrow[]{f_1} C_1 \xrightarrow[]{f_2} \ldots \xrightarrow[]{f_n} C_n \ | \ C_i \ \mbox{is an object in} \ {\cal C}, f_i \ \mbox{is a morphism in} \ {\cal C} \} \]   

The associated {\bf {nerve functor}} $N_\bullet:$ {\bf {Cat}} $\rightarrow$ {\bf  {Set}} from the CDU category is fully faithful. More specifically, there is a bijection $\theta$ defined as: 
\[ \theta: {\bf Cat}({\cal C}, {\cal C'}) \rightarrow {\bf Set}_\Delta (N_\bullet({\cal C}), N_\bullet({\cal C'}) \] 
\end{theorem}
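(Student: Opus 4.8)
The plan is to reduce this statement to the general nerve functor theorem established just above, since a CDU symmetric monoidal category is, after forgetting its monoidal product, copy, discard, and uniform maps, simply an ordinary (small) category. First I would observe that the nerve construction $N_\bullet$ appearing in the theorem statement is defined purely in terms of composable chains of morphisms $C_0 \xrightarrow{f_1} C_1 \xrightarrow{f_2} \cdots \xrightarrow{f_n} C_n$ together with the face and degeneracy maps $d_i, s_i$ built from composition and insertion of identities; none of this data references the symmetric monoidal structure $(\otimes, I)$ or the maps $C_A, D_A, U_A$. Hence the simplicial set $N_\bullet({\bf C})$ associated to a CDU category ${\bf C}$ coincides, on the nose, with the nerve of its underlying category $U({\bf C})$, where $U$ is the evident forgetful functor to ${\bf Cat}$.

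Next I would invoke the previously stated nerve functor theorem: $N_\bullet\colon {\bf Cat} \rightarrow {\bf Set}_\Delta$ is fully faithful, so for any small categories ${\cal C}, {\cal C}'$ there is a bijection $\theta\colon {\bf Cat}({\cal C}, {\cal C}') \xrightarrow{\ \sim\ } {\bf Set}_\Delta(N_\bullet({\cal C}), N_\bullet({\cal C}'))$. Applying this with ${\cal C} = U({\bf C})$ and ${\cal C}' = U({\bf C}')$ for CDU categories ${\bf C}, {\bf C}'$ immediately yields the claimed bijection, which is exactly the assertion of full faithfulness for the nerve restricted to (underlying categories of) CDU categories. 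The only step I would spell out is the description of $\theta$ itself: a functor $G\colon {\cal C} \rightarrow {\cal C}'$ is sent to the simplicial map whose $n$-th component takes a composable chain $(f_1, \ldots, f_n)$ to $(Gf_1, \ldots, Gf_n)$; one checks this respects all $d_i$ and $s_i$ because $G$ preserves composition and identities, that it is injective because a functor is recovered from its action on $N_0$ (objects) and $N_1$ (arrows), and that it is surjective because compatibility of a simplicial map with $d_1\colon N_2 \rightarrow N_1$ forces it to preserve composition, hence to arise from a functor.

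I do not expect the CDU axioms to enter anywhere, so the "hard part" here is essentially nothing beyond this bookkeeping — the theorem as stated is a corollary of the general nerve theorem, and the main thing to be careful about is simply not to overclaim. In particular I would add a closing remark that if one wanted \emph{CDU functors} (rather than arbitrary functors) between ${\bf C}$ and ${\bf C}'$ to be recovered from simplicial maps, the plain nerve is insufficient and one would have to decorate $N_\bullet({\bf C})$ with extra operations encoding $\otimes$, $C_A$, $D_A$, and $U_A$; that refinement lies outside the present statement, which only concerns $ {\bf Cat}({\cal C},{\cal C}')$.
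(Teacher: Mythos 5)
Your proposal takes essentially the same route as the paper, whose entire justification is the single remark that the statement ``is just a special case of the above theorem'' on the full and faithful embedding of any category via its nerve; your observation that the nerve only sees the underlying category and ignores $(\otimes, I)$ and the maps $C_A, D_A, U_A$ is exactly the reduction the paper intends. Your explicit description of $\theta$, the injectivity/surjectivity check, and the closing caveat that recovering \emph{CDU functors} would require decorating the nerve all go beyond what the paper writes down, but they are consistent with it and correct.
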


This theorem is just a special case of the above theorem attesting to the full and faithful embedding of any category using its nerve, which then makes it a simplicial set. We can then use the theoretical machinery at the top layer of the UCLA architecture to manipulate causal interventions in this category using face and degeneracy operators as defined above. 

Note that the functor $G$ from a simplicial object $X$ to a category ${\cal C}$ can be lossy. For example, we can define the objects of ${\cal C}$ to be the elements of $X_0$, and the morphisms of ${\cal C}$ as the elements $f \in X_1$, where $f: a \rightarrow b$, and $d_0 f = a$, and $d_1 f = b$, and $s_0 a, a \in X$ as defining the identity morphisms {${\bf 1}_a$}. Composition in this case can be defined as the free algebra defined over elements of $X_1$, subject to the constraints given by elements of $X_2$. For example, if $x \in X_2$, we can impose the requirement that $d_1 x = d_0 x \circ d_2 x$. Such a definition of the left adjoint would be quite lossy because it only preserves the structure of the simplicial object $X$ up to the $2$-simplices. The right adjoint from a category to its associated simplicial object, in contrast, constructs a full and faithful embedding of a category into a simplicial set.  In particular, the  nerve of a category is such a right adjoint. 

 \subsection*{Topological Embedding of Simplicial Sets}

Simplicial sets can be embedded in a topological space using coends \cite{maclane:71}, which is the basis for a popular machine learning method for reducing the dimensionality of data called UMAP (Uniform Manifold Approximation and Projection) \cite{umap}. 

\begin{definition}
 The {\bf geometric realization} $|X|$ of a simplicial set $X$ is defined as the topological space 

 \[ |X| = \bigsqcup_{n \geq 0} X_n \times \Delta^n / ~\sim \]

 where the $n$-simplex $X_n$ is assumed to have a {\em discrete} topology (i.e., all subsets of $X_n$ are open sets), and $\Delta^n$ denotes the {\em topological} $n$-simplex 

 \[ \Delta^n = \{(p_0, \ldots, p_n) \in \mathbb{R}^{n+1} \ | \ 0 \leq p_i \leq 1, \sum_i p_i = 1 \]

 The spaces $\Delta^n, n \geq 0$ can be viewed as {\em cosimplicial} topological spaces with the following degeneracy and face maps: 

 \[ \delta_i(t_0, \ldots, t_n) = (t_0, \ldots, t_{i-1}, 0, t_i, \ldots, t_n)  \ \mbox{for} \ 0 \leq i \leq n\]

 \[ \sigma_j(t_0, \ldots, t_n) = (t_0, \ldots, t_{j} + t_{j+1}, \ldots, t_n) \ \mbox{for} \ 0 \leq i \leq n\]

 Note that $\delta_i: \mathbb{R}^n \rightarrow \mathbb{R}^{n+1}$, whereas $\sigma_j: \mathbb{R}^n \rightarrow \mathbb{R}^{n-1}$. 

 The equivalence relation $\sim$ above that defines the quotient space is  given as: 

 \[ (d_i(x), (t_0, \ldots, t_n)) \sim (x, \delta_i(t_0, \ldots, t_n) )\]

 \[ (s_j(x), (t_0, \ldots, t_n)) \sim (x, \sigma_j (t_0, \ldots, t_n)) \]
\end{definition}

\subsection*{Topological Embeddings as Coends}

We now bring in the perspective that topological embeddings can be interpreted as coends as well. Consider the functor 

\[ F: \Delta^o \times \Delta \rightarrow \mbox{Top} \] 

where 

\[ F([n], [m]) = X_n \times \Delta^m \]

where $F$ acts {\em contravariantly} as a functor from $\Delta$ to ${\bf Sets}$ mapping $[n] \mapsto X_n$,  and {\em covariantly} mapping $[m] \mapsto \Delta^m$ as a functor from $\Delta$ to the category $\mbox{Top}$ of topological spaces. 

\subsection{Homotopy in Categories} 

To motivate the need to consider {\em homotopical equivalence}, we consider the following problem: a generative AI system can be used to construct summaries of documents, which raises the question of how to decide if a document summary reflects the actual document. If we view a document as an object in a category, then the question becomes one of deciding object equivalence in a looser sense of homotopy, namely is there an invertible transformation between the original document and its summary?  We discuss how to construct the topological embedding of an arbitrary category by embedding it into a simplicial set by constructing its nerve, and then finding the topological embedding of the nerve using the {\em homotopy colimit} \cite{richter2020categories}. First, we discuss the topological embedding of a simplicial set, and formulate it in terms of computing a coend.  As another example, causal models can only be determined up to some equivalence class from data, and while many causal discovery algorithms assume arbitrary interventions can be carried out to  discover the unique structure, such interventions are generally impossible to do in practical applications. The concept of {\em {essential graph}} \cite{anderson-annals} is based on defining a ``quotient space'' of graphs, but similar issues arise more generally for non-graph based models as well. Thus, it is useful to understand how to formulate the notion of equivalent classes of causal models in an arbitrary category. For example, given the conditional independence structure $A \CI B | C$, there are at least three different symmetric monoidal categorical representations that all satisfy this conditional independence \cite{fong:ms,string-diagram-surgery,fritz:jmlr}, and we need to define the quotient space over all such equivalent categories. 

In our previous work on causal homotopy \cite{sm:homotopy}, we exploited the connection between causal DAG graphical models and finite topological spaces. In particular, for a DAG model $G = (V, E)$, it is possible to define a finite space topology ${\cal T} = (V, {\cal O})$, whose open sets ${\cal O}$ are subsets of the vertices $V$ such that each vertex $x$ is associated with an open set $U_x$ defined as the intersection of all open sets that contain $x$. This structure is referred to an {\em Alexandroff} topology, which can be shown to emerge from universal representers defined by Yoneda embeddings in generalized metric spaces.  An intuitive way to construct an Alexandroff topology is to define the open set for each variable $x$ by the set of its ancestors $A_x$, or by the set of its descendants $D_x$. This approach transcribes a DAG graph into a finite topological space, upon which the mathematical tools of algebraic topology can be applied to construct homotopies among equivalent causal models. Our approach below generalizes this construction to simplicial objects, as well as general categories. 

\subsection*{The Category of Fractions: Localizing Invertible Morphisms in a Category}

One way to pose the question of homotopy is to ask whether a category can be reduced in some way such that all invertible morphisms can be ``localized" in some way. The problem of defining a category with a given subclass of invertible morphisms, called the category of fractions \citep{gabriel1967calculus}, is another concrete illustration of the close relationships between categories and graphs. \citet{borceux_1994} has a detailed discussion of the ``calculus of fractions'', namely how to define a category where a subclass of morphisms are to be treated as isomorphisms. The formal definition is as follows: 

\begin{definition}
Consider a category ${\cal C}$ and a class $\Sigma$ of arrows of ${\cal C}$. The {\bf {category of fractions}} ${\cal C}(\Sigma^{-1})$ is said to exist when a category ${\cal C}(\Sigma^{-1})$ and a functor $\phi: {\cal C} \rightarrow {\cal C}(\Sigma^{-1})$ can be found with the following properties: 

\begin{enumerate}
    \item $\forall f, \phi(f)$ is an isomorphism. 
    \item If ${\cal D}$ is a category, and $F: {\cal C} \rightarrow {\cal D}$ is a functor such that for all morphisms $f \in \Sigma$, $F(f)$ is an isomorphism, then there exists a unique functor $G: {\cal C}(\Sigma^{-1}) \rightarrow {\cal D}$ such that $G \circ \phi = F$. 
\end{enumerate}
\end{definition}

A detailed construction of the category of fractions is given in \cite{borceux_1994}, which uses the underlying directed graph skeleton associated with the category.

\subsection*{Homotopy of Simplicial Objects}

We will discuss homotopy in categories more generally now.  This notion of homotopy generalizes the notion of homotopy in topology, which defines why an object like a coffee cup is topologically homotopic to a doughnut (they have the same number of ``holes''). 

 \begin{definition}
 Let $C$ and $C'$ be a pair of objects in a category ${\cal C}$. We say $C$ is {\bf {a retract}} of $C'$ if there exists maps $i: C \rightarrow C'$ and $r: C' \rightarrow C$ such that $r \circ i = \mbox{id}_{\cal C}$. 
 \end{definition}
 
 \begin{definition}
 Let ${\cal C}$ be a category. We say a morphism $f: C \rightarrow D$ is a {\bf {retract of another morphism}} $f': C \rightarrow D$ if it is a retract of $f'$ when viewed as an object of the functor category {\bf {Hom}}$([1], {\cal C})$. A collection of morphisms $T$ of ${\cal C}$ is {\bf {closed under retracts}} if for every pair of morphisms $f, f'$ of ${\cal C}$, if $f$ is a retract of $f'$, and $f'$  is in $T$, then $f$ is also in $T$. 
 \end{definition}

 \begin{definition}
  Let X and Y be simplicial sets, and suppose we are given a pair of morphisms $f_0, f_1: X \rightarrow Y$. A {\bf {homotopy}} from $f_0$ to $f_1$ is a morphism $h: \Delta^1 \times X \rightarrow Y$ satisfying $f_0 = h |_{{0} \times X}$ and $f_1 = h_{ 1 \times X}$. 
 \end{definition}

 \subsection*{Classifying Spaces and Homotopy Colimits}

Building on the intuition proposed above, we now introduce a construction of a topological space associated with the nerve of a category. As we saw above, the nerve of a category is a full and faithful embedding of a category as a simplicial object. 

\begin{definition}
The {\bf {classifying space}} of a category ${\cal C}$ is the topological space associated with the nerve of the category $|N_\bullet {\cal C}|$
\end{definition}

To understand the classifying space $|N_\bullet {\cal C}|$ of a category ${\cal C}$, let us go over some simple examples to gain some insight. 

\begin{example}
For any set $X$, which can be defined as a discrete category ${\cal C}_X$ with no non-trivial morphisms, the classifying space $|N_\bullet {\cal C}_X|$ is just the discrete topology over $X$ (where the open sets are all possible subsets of $X$). 
\end{example}

\begin{example} 
If we take  a partially ordered set $[n]$, with its usual order-preserving morphisms, then the nerve of $[n]$ is isomorphic to the representable functor $\delta(-, [n])$, as shown by the Yoneda Lemma, and in that case, the classifying space is just the topological space $\Delta_n$ defined above. 
\end{example}

\begin{definition}
The {\bf {homotopy colimit}} of the nerve of the category of elements associated with the set-valued functor $\delta: {\cal C} \rightarrow$ {\bf {Set}} mapping the  category ${\cal C}$ into the category of {\bf Sets}, namely $N_\bullet \left(\int \delta \right)$. 
\end{definition}

 We can extend the above definition straightforwardly to these cases using an appropriate functor ${\cal T}$: {\bf {Set}} $\rightarrow$ {\bf {Top}}, or alternatively ${\cal M}$: {\bf {Set}} $\rightarrow$ {\bf {Meas}}. These augmented constructions can then be defined with respect to a more general notion called the {\em {homotopy colimit}} \cite{richter2020categories} of a causal model. 

\begin{definition}
The {\bf  {topological homotopy colimit}} $\mbox{hocolim}_{{\cal T} \circ \delta}$ of a category ${\cal C}$, along with its associated category of elements associated with  a set-valued functor $\delta: {\cal C} \rightarrow$ {\bf {Set}}, and a topological functor ${\cal T}$: {\bf {Set}} $\rightarrow$ {\bf {Top}} is isomorphic to topological space associated with the nerve of the category of elements, that is  $\mbox{hocolim}_{{\cal T} \circ \delta} \simeq|N_\bullet \left(\int \delta \right) |$. 
\end{definition}

\subsection*{Singular Homology} 

Our goal is to define an abstract notion of an object in terms of its underlying classifying space as a category, and show how it can be useful in defining homotopy. We will also clarify how it relates to determining equivalences among objects, namely homotopical invariance, and also how it sheds light on UIGs. We build on the topological realization of $n$-simplices defined above.  Define the set of all morphisms $\mbox{Sing}_n(X) = {\bf Hom}_{\bf Top}(\Delta_n, |{\cal N}_\bullet({\cal C})|)$ as the set of singular $n$-simplices of $|{\cal N}_\bullet({\cal C})|$. 

\begin{definition}
For any topological space defined  by  $|{\cal N}_\bullet({\cal C})|$,  the {\bf {singular homology groups}} {$H_*(|{\cal N}_\bullet({\cal C})|; {\bf Z})$} are defined as the homology groups of a chain complex 
\[ \ldots \xrightarrow[]{\partial} {\bf Z}(\mbox{Sing}_2(|{\cal N}_\bullet({\cal C})|)) \xrightarrow[]{\partial} {\bf Z}(\mbox{Sing}_1(|{\cal N}_\bullet({\cal C})|)) \xrightarrow[]{\partial} {\bf Z}(\mbox{Sing}_0(|{\cal N}_\bullet({\cal C})|)) \] 
where {${\bf Z}(\mbox{Sing}_n(|{\cal N}_\bullet({\cal C})|))$} denotes the free Abelian group generated by the set $\mbox{Sing}_n(|{\cal N}_\bullet({\cal C})|)$ and the differential $\partial$ is defined on the generators by the formula 
\[ \partial (\sigma) = \sum_{i=0}^n (-1)^i d_i \sigma \] 
\end{definition}

Intuitively, a chain complex builds a sequence of vector spaces that can be used to construct an algebraic invariant of a causal model from its classifying space  by choosing the left {\bf {k}} module {${\bf Z}$} to be a vector space. Each differential $\partial$ then becomes a linear transformation whose representation is constructed by modeling its effect on the basis elements in \mbox{each {${\bf Z}(\mbox{Sing}_n(X))$.}} 

\begin{example}
Let us illustrate the singular homology groups defined by an integer-valued multiset~\cite{studeny2010probabilistic} used to model conditional independence. Imsets over a DAG of three variables $N = \{a, b, c \} $ can be viewed as a finite discrete topological space. For this topological space $X$, the singular homology groups $H_*(X; {\bf Z})$ are defined as the homology groups of a \mbox{chain complex} 
\[  {\bf Z}(\mbox{Sing}_3(X)) \xrightarrow[]{\partial}  {\bf Z}(\mbox{Sing}_2(X)) \xrightarrow[]{\partial} {\bf Z}(\mbox{Sing}_1(X)) \xrightarrow[]{\partial} {\bf Z}(\mbox{Sing}_0(X)) \] 
where {${\bf Z}(\mbox{Sing}_i(X))$} denotes the free Abelian group generated by the set $\mbox{Sing}_i(X)$ and the differential $\partial$ is defined on the generators by the formula 
\[ \partial (\sigma) = \sum_{i=0}^4 (-1)^i d_i \sigma \] 

The set $\mbox{Sing}_n(X)$ is the set of all morphisms {${\bf Hom}_{Top}(|\Delta_n|, X)$}. For an imset over the three variables $N = \{a, b, c \}$, we can define the singular $n$-simplex $\sigma$ as: 
\[ \sigma: |\Delta^4| \rightarrow X \ \ \mbox{where} \ \ |\Delta^n | = \{t_0, t_1, t_2, t_3 \in [0,1]^4 : t_0 + t_1 + t_2 + t_3 = 1 \} \] 

The $n$-simplex $\sigma$ has a collection of faces denoted as $d_0 \sigma, d_1 \sigma, d_2 \sigma$ and $ d_3 \sigma$.  If we pick the $k$-left module {${\bf Z}$} as the vector space over real numbers $\mathbb{R}$, then the above chain complex represents a sequence of vector spaces that can be used to construct an algebraic invariant of a topological space defined by the integer-valued multiset.  Each differential $\partial$ then becomes a linear transformation whose representation is constructed by modeling its effect on the basis elements in each {${\bf Z}(\mbox{Sing}_n(X))$}. An alternate approach to constructing a chain homology for an integer-valued multiset is to use M\"obius inversion to define the chain complex in terms of the nerve of a category (see our recent work on categoroids \citep{categoroids} for details). 
\end{example}

Below,  we illustrate the application of concepts from category theory summarized above to {\em static} universal imitation games. Here, the two (or more) participants interact with an evaluator, and the goal is to determine if the two participants are {\em isomorphic} or {\em homotopic}. As the name ``static" suggests, we are mainly interested in characterizing participants who are in a ``steady state" and not adapting or evolving their behavior based on the interactions. The latter cases will be addressed in the next two sections of the paper. Given two static participants, whom we model abstractly as objects in some category, we want to determine if these objects are isomorphic or some weaker notion of isomorphisms, like homotopy \cite{richter2020categories}. These notions depend on the ``measuring instruments" provided by the category chosen to model an imitation game. For example, if the category chosen is the category of {\bf Sets}, the answer is relatively straightforward: two sets are isomorphic if they contain the same number of elements. Even in such a simple case, there can be subtleties: if the sets include non-well-founded sets \cite{Aczel1988-ACZNS}, where the well-foundedness axiom does not hold, then the comparison of the two sets must be made using the concept of {\em bisimulation} or the {\em anti-foundation-axiom} (AFA) pioneered in \cite{Aczel1988-ACZNS}. To clarify the distinction between static and dynamic UIGs to be discussed in the next section, in static UIGs, we are not concerned with the process by which the participants are adapting based on the interactions that they are engaged in. In dynamic UIGs, the participants are adapting (in particular, one participant is viewed as a ``teacher" and the other is viewed as a ``learner"). Although some ideas will be useful in both, such as causal models or reinforcement learning, here we remain focused on a static characterization of objects, and their ``steady state" behavior, rather than their convergence to some steady state behavior.

\subsection{Static Imitation Games over Non-Well-Founded Sets and Universal Coalgebras} 

\begin{figure}[h]
\centering
\includegraphics[scale=.35]{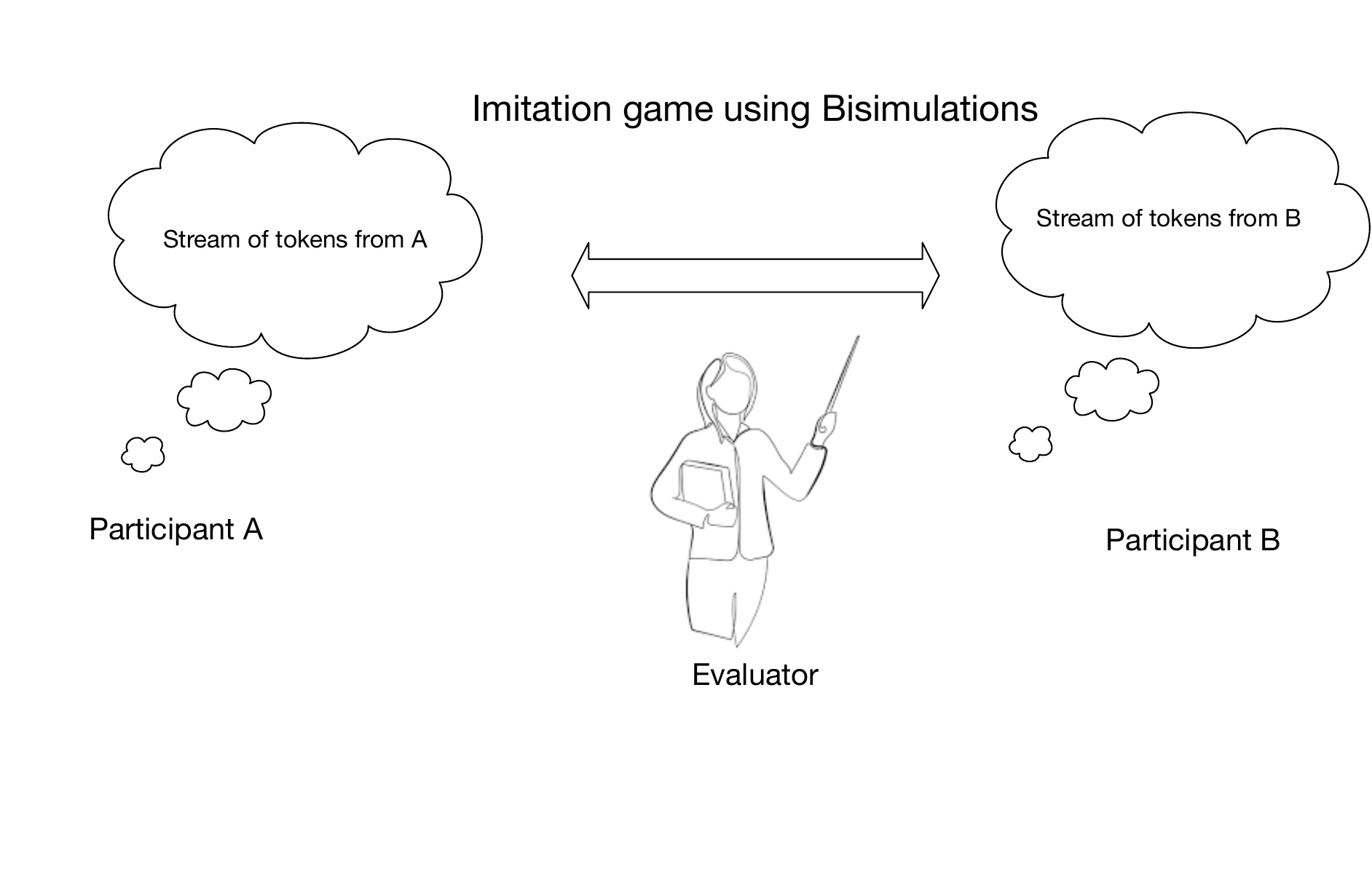}
\caption{Our approach to solving Turing's imitation game is through the use of {\em bisimulation} \cite{Aczel1988-ACZNS,rutten2000universal} to compare two infinite stream of tokens.} 
\label{imitgamebisim} 
\end{figure}

To begin with, we present an elegant formalism for solving static UIGs that is based on the concept of universal coalgebras \cite{rutten2000universal}, and non-well-founded sets \cite{Aczel1988-ACZNS}. Figure~\ref{imitgamebisim} illustrates the main idea. We define the two participants in an imitation game as universal coalgebras (or non-well-founded sets) and ask if there is a bisimulation relationship between them. This characterization covers a wide range of probabilistic models, including Markov chains and Markov decision processes \cite{DBLP:books/lib/SuttonB98}, and automata-theoretic models, as well as generative AI models \cite{DBLP:conf/iclr/GuJTRR23}.

Generative AI has become  popular recently due to the successes of neural and structured-state space sequence models  \cite{DBLP:conf/iclr/GuGR22,DBLP:conf/nips/VaswaniSPUJGKP17} and text-to-image diffusion models \cite{DBLP:conf/nips/SongE19}. The underlying paradigm of building generative models has a long history in computer science and AI, and it is useful to begin with the simplest models that have been studied for several decades, such as deterministic finite state machines, Markov chains, and context-free grammars.  We use category theory to build generative AI models and analyze them, which is one of the unique and novel aspects of this paper. To explain briefly, we represent a generative model in terms of {\em universal coalgebras} \cite{rutten2000universal} generated by an {\em endofunctor} $F$ acting on a category $C$. Coalgebras provide an elegant way to model dynamical systems, and capture the notion of state \cite{jacobs:book} in ways that provide new insight into the design of AI and ML systems. Perhaps the simplest and in some ways, the most general, type of generative AI model that is representable as a coalgebra is the {\em powerset functor} 

\[ F: S \Rightarrow {\cal P}(S)\]

where $S$ is any set (finite or not), and ${\cal P}(S)$ is the set of all subsets of $S$, that is: 

\[ {\cal P}(S) = \{ A | A \subseteq S \} \]

Notice in the specification of the powerset functor coalgebra, the same term $S$ appears on both sides of the equation. That is a hallmark of coalgebras, and it is what distinguishes coalgebras from algebras. Coalgebras generate search spaces, whereas algebras compact search spaces and summarize them.  This admittedly simple structure nonetheless is extremely versatile and enables modeling a remarkably rich and diverse set of generative AI models, including the ones listed in Figure~\ref{genaimodel}. To explain briefly, we can model a context-free grammar as a mapping from a set $S$ that includes all the vertices in the context-free grammar graph shown in Figure~\ref{genaimodel} to the powerset of the set $S$. More specifically, if $S = N \cup T$ is defined as the non-terminals $N$ as well as the terminal symbols (the actual words) $T$, any context-free grammar rule can be represented in terms of a power set functor. We will explain how this approach can be refined later in this Section, and in much more detail in later sections of the paper. To motivate further why category theory provides an elegant way to model generative AI systems, we look at some actual examples of generative AI systems to see why they can be modeled as functors. 

\begin{figure}[h]
\centering
\includegraphics[scale=.3]{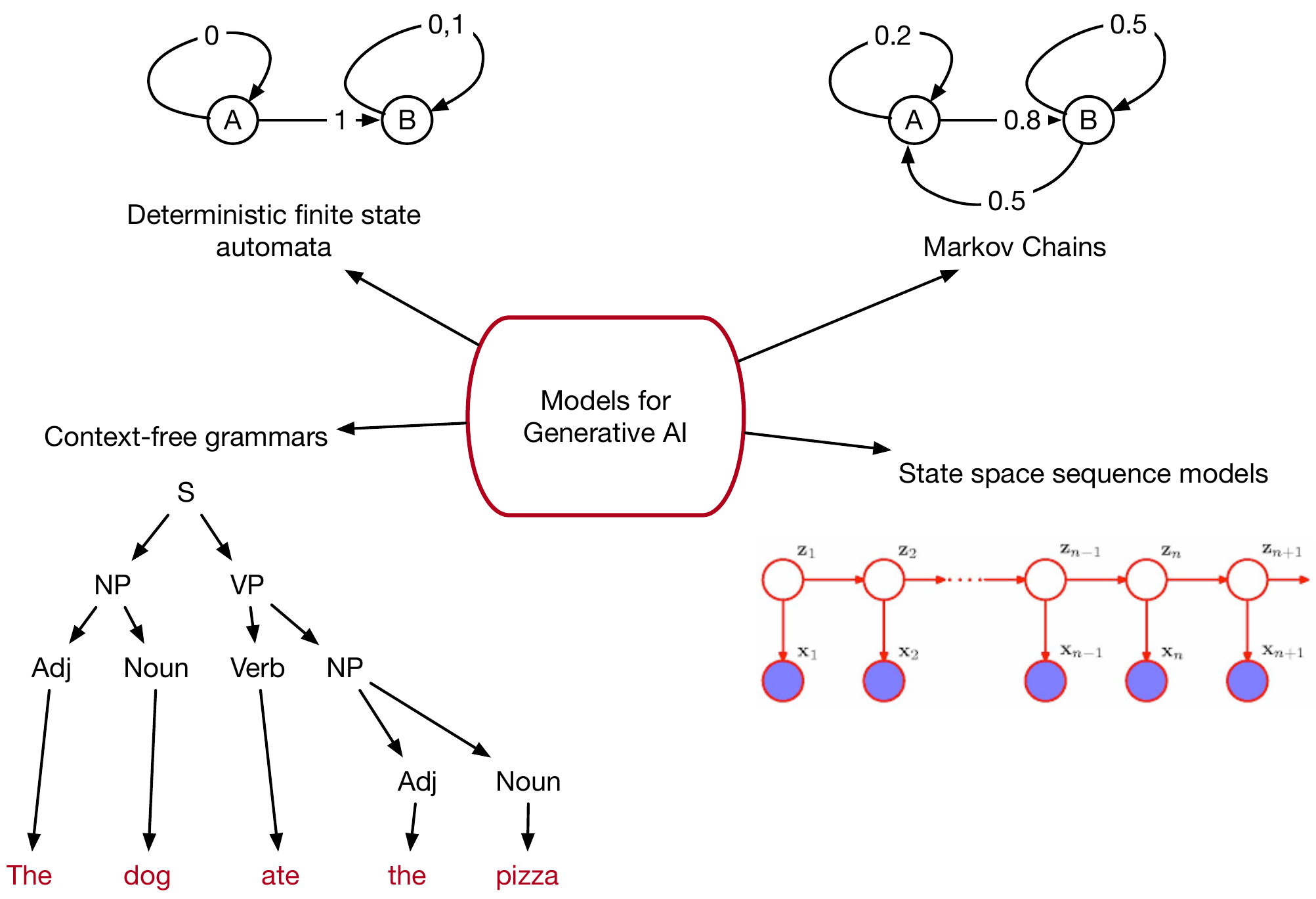}
\caption{In this paper, generative AI models, from the earliest models studied in computer science such as deterministic finite state automata and context-free grammars, to models in statistics and information theory like Markov chains, and lastly, sequence models can be represented as universal coalgebras. }
\label{genaimodel} 
\end{figure}

To solve Turing's imitation game, we need to compare two potentially infinite data streams of tokens (e.g., words, or in general, other forms of communication represented digitally by bits). Many problems in AI and ML involve reasoning about circular phenomena. These include reasoning about {\em common knowledge} \cite{barwise,fagin} such as social conventions, dealing with infinite data structures such as lists or trees in computer science, and causal inference in systems with feedback where part of the input comes from the output of the system. In all these situations, there is an intrinsic problem of having to deal with infinite sets that are recursive and violate a standard axiom called well-foundedness in set theory. First, we explain some of the motivations for including non-well-founded sets in AI and ML, and then proceed to define the standard ZFC axiomatization of set theory and how to modify it to allow circular sets. We build on the pioneering work of Peter Aczel on the anti-foundation-axiom in modeling non-well-founded sets \cite{Aczel1988-ACZNS}, which has elaborated previously in other books as well \cite{barwise,jacobs:book}, although we believe this paper is perhaps one of the first to focus on the application of non-well-founded sets and universal coalgebras to problems in AI and ML at a broad level. 

\begin{figure}[t]
\centering
\includegraphics[scale=.5]{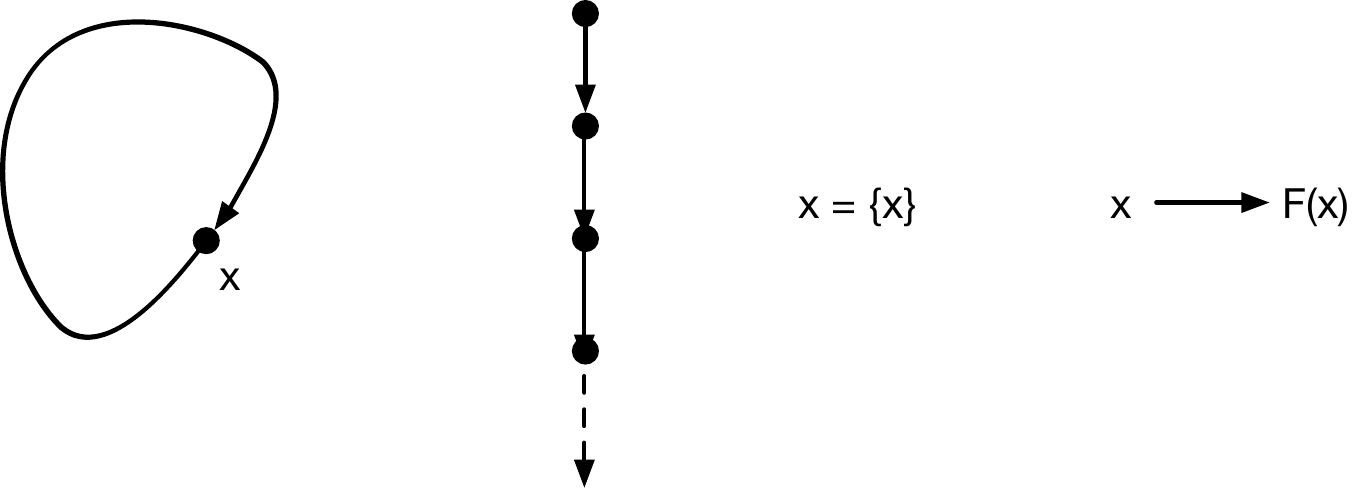}
\caption{Three representations of infinite data streams to solve Turing's imitation game: non-well-founded set $x = \{ x \}$: accessible pointed graphs (AGPs), non-well-founded sets specified by systems of equations, and universal coalgebras. We can view these as generative models of the recursive set $\{ \{ \{ \ldots \} \} \} $.}
\label{threereps} 
\end{figure}

Figure~\ref{threereps} illustrates three ways to represent an infinite object, as a directed graph, a (non-well-founded) set or as a system of equations.  We begin with perhaps the simplest approach introduced by Peter Aczel called accessible pointed graphs (APGs) (see Figure~\ref{threereps}), but also include the category-theoretic approach of using {\em universal coalgebras} \cite{rutten2000universal}, as well as systems of equations \cite{barwise}. 

We now turn to describing coalgebras, a much less familiar construct that will play a central role in the proposed ML framework of coinductive inference. Coalgebras capture hidden state, and enable modeling infinite data streams. Recall that in the previous Section, we explored non-well-founded sets, such as the set $\Omega = \{ \Omega \}$, which gives rise to a circularly defined object. As another example, consider the infinite data stream comprised of a sequence of objects, indexed by the natural numbers: 

\[ X = (X_0, X_1, \ldots, X_n, \ldots ) \]

We can define this infinite data stream as a coalgebra, comprised of an accessor function {\bf head} that returns the head of the list, and a destructor function that gives the {\bf tail}  of the list, as we will show in detail below. 

To take another example, consider a deterministic finite state machine model defined as the tuple $M = (X, A, \delta)$, where $X$ is the set of possible states that the machine might be in, $A$ is a set of input symbols that cause the machine to transition from one state to another, and $\delta: X \times A \rightarrow X$ specifies the transition function. To give a coalgebraic definition of a finite state machine, we note that we can define a functor $F: X \rightarrow {\cal P}(A \times X)$ that maps any given state $x \in X$ to the subset of possible future states $y$ that the machine might transition to for any given input symbol $a \in A$. 

We can now formally define $F$-coalgebras analogous to the definition of $F$-algebras given above. 

\begin{definition}
Let $F: {\cal C} \rightarrow {\cal C}$ be an endofunctor on the category ${\cal C}$. An {\bf $F$-coalgebra} is defined as a pair $(A, \alpha)$ comprised of an object $A$ and an arrow $\alpha: A \rightarrow F(A)$. 
\end{definition}

The fundamental difference between an algebra and a coalgebra is that the structure map is reversed! This might seem to be a minor distinction, but it makes a tremendous difference in the power of coalgebras to model state and capture dynamical systems. Let us use this definition to capture infinite data streams, as follows. 

\[ {\bf Str}: {\bf Set} \rightarrow {\bf Set}, \ \ \ \ \ \ {\bf Str}(X) = \mathbb{N} \times X\]

Here, {\bf Str} is defined as a functor on the category {\bf Set}, which generates a sequence of elements. Let $N^\omega$ denote the set of all infinite data streams comprised of natural numbers:

\[ N^\omega = \{ \sigma | \sigma: \mathbb{N} \rightarrow \mathbb{N} \} \]

To define the accessor function {\bf head} and destructor function {\bf tail} alluded to above, we proceed as follows: 

\begin{eqnarray}
{\bf head}&:& \mathbb{N}^\omega \rightarrow \mathbb{N} \ \ \  \ \ \ \ {\bf tail}: \mathbb{N}^\omega \rightarrow\mathbb{N}^\omega \\
{\bf head}(\sigma) &=& \sigma(0) \ \ \ \ \ \ \ {\bf tail}(\sigma) = (\sigma(1), \sigma(2), \ldots )
\end{eqnarray}

Another standard example that is often used to illustrate coalgebras, and provides a foundation for many AI and  ML applications, is that of a {\em labelled transition system}. 

\begin{definition}
    A {\bf labelled transition system} (LTS) $(S, \rightarrow_S, A)$ is defined by a set $S$ of states, a transition relation $\rightarrow_S \subseteq S \times A \times S$, and a set $A$ of labels (or equivalently, ``inputs" or ``actions"). We can define the transition from state $s$ to $s'$ under input $a$ by the transition diagram $s \xrightarrow[]{a} s'$, which is equivalent to writing $\langle s, a,  s' \rangle \in \rightarrow_S$. The ${\cal F}$-coalgebra for an LTS is defined by the functor 

    \[ {\cal F}(X) = {\cal P}(A \times X) = \{V | V \subseteq A \times X\} \]
\end{definition}

Just as before, we can also define a category of $F$-coalgebras over any category ${\cal C}$, where each object is a coalgebra, and the morphism between two coalgebras is defined as follows, where $f: A \rightarrow B$ is any morphism in the category ${\cal C}$. 

\begin{definition}
Let $F: {\cal C} \rightarrow {\cal C}$ be an endofunctor. A {\em homomorphism} of $F$-coalgebras $(A, \alpha)$ and $(B, \beta)$ is an arrow $f: A \rightarrow B$ in the category ${\cal C}$ such that the following diagram commutes:

\begin{center}
\begin{tikzcd}
  A \arrow[r, "f"] \arrow[d, "\alpha"]
    & B \arrow[d, "\beta" ] \\
  F(A) \arrow[r,  "F(f)"]
& F(B)
\end{tikzcd}
\end{center}
\end{definition}

For example, consider two labelled transition systems $(S, A, \rightarrow_S)$ and $(T, A, \rightarrow_T)$ over the same input set $A$, which are defined by the coalgebras $(S, \alpha_S)$ and $(T, \alpha_T)$, respectively. An $F$-homomorphism $f: (S, \alpha_S) \rightarrow (T, \alpha_T)$ is a function $f: S \rightarrow T$ such that $F(f) \circ \alpha_S  = \alpha_T \circ f$. Intuitively, the meaning of a homomorphism between two labeled transition systems means that: 

\begin{itemize}
    \item For all $s' \in S$, for any transition $s \xrightarrow[]{a}_S s'$ in the first system $(S, \alpha_S)$, there must be a corresponding transition in the second system $f(s) \xrightarrow[]{a}_T f(s;)$ in the second system. 

    \item Conversely, for all $t \in T$, for any transition $t \xrightarrow[]{a}_T t'$ in the second system, there exists two states $s, s' \in S$ such that $f(s) = t, f(t) = t'$ such that $s \xrightarrow[]{a}_S s'$ in the first system. 
\end{itemize}

If we have an $F$-homomorphism $f: S \rightarrow T$ with an inverse $f^{-1}: T \rightarrow S$ that is also a $F$-homomorphism, then the two systems $S \simeq T$ are isomorphic. If the mapping $f$ is {\em injective}, we have a  {\em monomorphism}. Finally, if the mapping $f$ is a surjection, we have an {\em epimorphism}. 

The analog of congruence in universal algebras is {\em bisimulation} in universal coalgebras. Intuitively, bisimulation allows us to construct a more ``abstract" representation of a dynamical system that is still faithful to the original system. We will explore many applications of the concept of bisimulation to AI and ML systems in this paper. We introduce the concept in its general setting first, and then in the next section, we will delve into concrete examples of bisimulations. 

\begin{definition}
Let $(S, \alpha_S)$ and $(T, \alpha_T)$ be two systems specified as coalgebras acting on the same category ${\cal C}$. Formally, a $F$-{\bf bisimulation} for coalgebras defined on a set-valued functor $F: {\bf Set} \rightarrow {\bf Set}$ is a relation $R \subset S \times T$ of the Cartesian product of $S$ and $T$ is a mapping $\alpha_R: R \rightarrow F(R)$ such that the projections of $R$ to $S$ and $T$ form valid $F$-homomorphisms.

\begin{center}
\begin{tikzcd}
  R \arrow[r, "\pi_1"] \arrow[d, "\alpha_R"]
    & S \arrow[d, "\alpha_S" ] \\
  F(R) \arrow[r,  "F(\pi_1)"]
& F(S)
\end{tikzcd}
\end{center}

\begin{center}
\begin{tikzcd}
  R \arrow[r, "\pi_2"] \arrow[d, "\alpha_R"]
    & T \arrow[d, "\alpha_T" ] \\
  F(R) \arrow[r,  "F(\pi_2)"]
& F(T)
\end{tikzcd}
\end{center}

Here, $\pi_1$ and $\pi_2$ are projections of the relation $R$ onto $S$ and $T$, respectively. Note the relationships in the two commutative diagrams should hold simultaneously, so that we get 

\begin{eqnarray*}
    F(\pi_1) \circ \alpha_R &=& \alpha_S \circ \pi_1 \\
    F(\pi_2) \circ \alpha_R &=& \alpha_T \circ \pi_2 
\end{eqnarray*}

Intuitively, these properties imply that we can ``run" the joint system $R$ for one step, and then project onto the component systems, which gives us the same effect as if we first project the joint system onto each component system, and then run the component systems. More concretely, for two labeled transition systems that were considered above as an example of an $F$-homomorphism, an $F$-bisimulation between $(S, \alpha_S)$ and $(T, \alpha_T)$ means that  there exists a relation $R \subset S \times T$ that satisfies for all $\langle s, t \rangle \in R$

\begin{itemize}
    \item For all $s' \in S$, for any transition $s \xrightarrow[]{a}_S s'$ in the first system $(S, \alpha_S)$, there must be a corresponding transition in the second system $f(s) \xrightarrow[]{a}_T f(s;)$ in the second system, so that $\langle s', t' \rangle \in R$

    \item Conversely, for all $t \in T$, for any transition $t \xrightarrow[]{a}_T t'$ in the second system, there exists two states $s, s' \in S$ such that $f(s) = t, f(t) = t'$ such that $s \xrightarrow[]{a}_S s'$ in the first system, and $\langle s', t' \rangle \in R$.
\end{itemize}

\end{definition} 

A simple example of a bisimulation of two coalgebras is shown in Figure~\ref{bisim}. 

\begin{figure}[t]
\centering
\includegraphics[scale=.45]{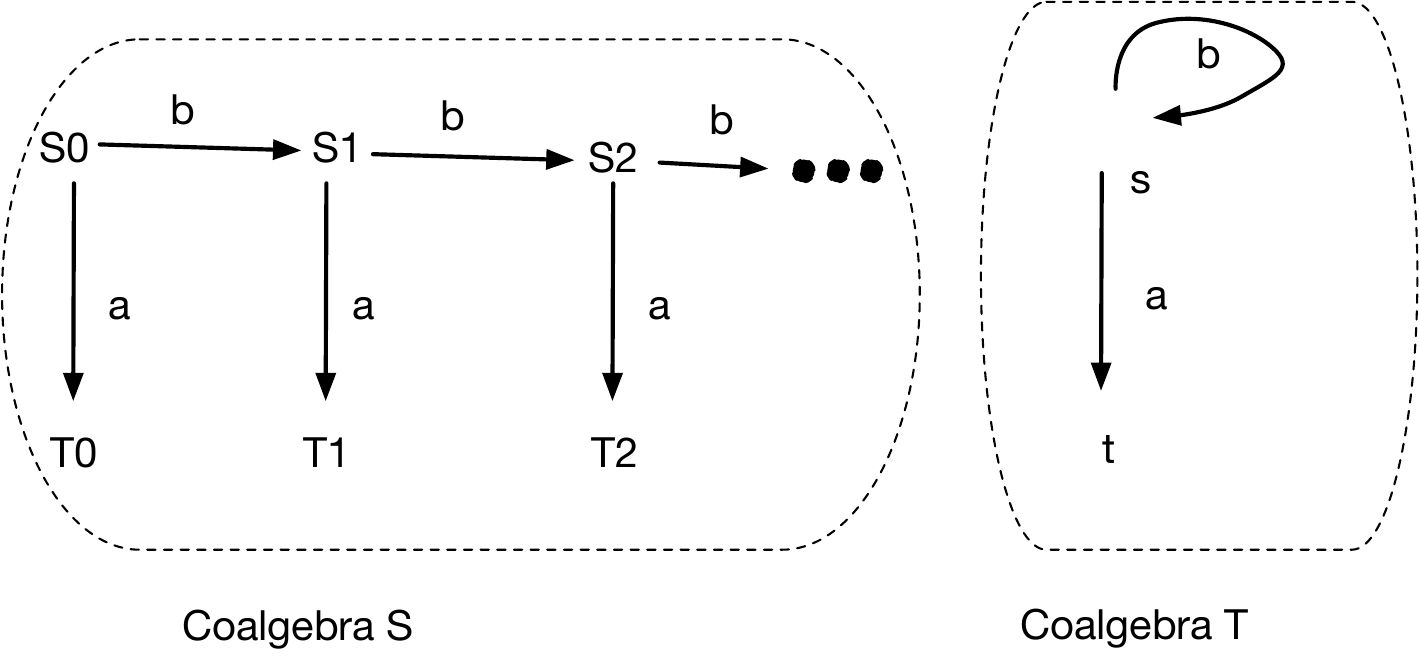}
\caption{A bisimulation among two coalgebras.}
\label{bisim} 
\end{figure}

There are a number of basic properties about bisimulations, which we will not prove, but are useful to summarize here: 

\begin{itemize}
    \item If $(R, \alpha_R)$ is a bisimulation between systems $S$ and $T$, the inverse $R^{-1}$ of $R$ is a bisimulation between systems $T$ and $S$. 

    \item Two homomorphisms $f: T \rightarrow S$ and $g:T \rightarrow U$ with a common domain $T$ define a {\em span}. The {\em image} of the span $\langle f, g \rangle(T) = \{ \langle f(t), g(t) \rangle | t \in T \}$ of $f$ and $g$ is also a bisimulation between $S$ and $U$. 

    \item The composition $R \circ Q$ of two bisimulations $R \subseteq S \times T$ and $Q \subseteq T \times U$ is a bisimulation between $S$ and $U$. 

    \item The union $\cup_k R_k$ of a family of bisimulations between $S$ and $T$ is also a bisimulation. 

    \item The set of all bisimulations between systems $S$ and $T$ is a complete lattice, with least upper bounds and greatest lower bounds given by: 

    \[ \bigvee_k R_k = \bigcup_k R_k \]

    \[ \bigwedge_K R_k = \bigcup \{ R | R \ \mbox{is a bisimulation between} S \ \mbox{and} \ T \ \mbox{and} \ R \subseteq \cap_k R_k \} \]

    \item The kernel $K(f) = \{ \langle s, s' \rangle | f(s) = f(s') \}$ of a homomorphism $f: S \rightarrow T$ is a bisimulation equivalence.

    \end{itemize}

  We can design a generic {\em coinductive inference} framework for systems by exploiting these properties of bisimulations. This framework will give us a broad paradigm for designing AI and ML systems that is analogous to inductive inference, which we will discuss in the next Section.

\subsection{Static Imitation Games over Generalized Metric Spaces}

A  general principle in machine learning (see Figure~\ref{metric-space}) to discriminate two objects (e.g., probability distributions, images, text documents etc.) is to compare them in a suitable metric space. We now describe a category of generalized metric spaces, where a metric form of the Yoneda Lemma gives us surprising insight and yields a metric version of the Causal Reproducing Property stated above.  Often, in category theory, we want to work in an enriched category \cite{kelly-book}.  One of the most interesting ways to design categories for applications in AI and ML is to look to augment the basic structure of a category with additional properties. For example, the collection of morphisms from an object $x$ to an object $y$ in a category ${\cal C}$ often has additional structure, besides just being a set. Often, it satisfies additional properties, such as forming a space of some kind such as a vector space or a topological space. We can think of such categories as {\em enriched} categories that exploit some desirable properties. We will illustrate one such example of primary importance to applications in AI and ML that involve measuring the distance between two objects. A distance function is assumed to return some non-negative value between $0$ and $\infty$, and we will view distances as defining enriched $[0, \infty]$ categories. We summarize some results here from \cite{BONSANGUE19981}. 

\begin{figure}[h]
\centering
\begin{minipage}{0.9\textwidth}
\includegraphics[scale=0.8]{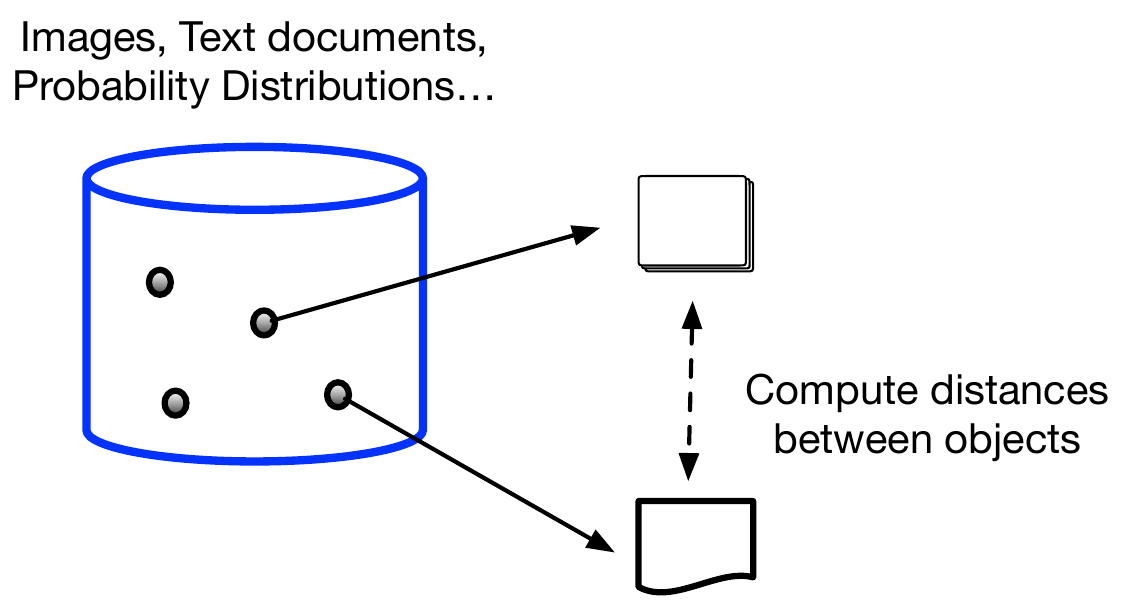}
\end{minipage}
\caption{Many algorithms in AI and ML involve computing distances between objects in a {\em metric space}. Interpreting distances categorically leads to powerful ways to reason about generalized metric spaces.}   
\label{metric-space}
\end{figure}

Figure~\ref{metric-space} illustrates a common motif among many AI and ML algorithms: define a problem in terms of computing distances between a group of objects. Examples of objects include points in $n$-dimensional Euclidean space, probability distributions, text documents represented as strings of tokens, and images represented as matrices. More abstractly, a {\em generalized metric space} $(X, d)$ is a set $X$ of objects, and a non-negative function $X(-,-): X \times X \rightarrow [0, \infty]$ that satisfies the following properties: 

\begin{enumerate}
    \item $X(x,x) = 0$: distance between the same object and itself is $0$. 

    \item $X(x,z) \leq X(x,y) + X(y,z)$: the famous {\em triangle inequality} posits that the distance between two objects cannot exceed the sum of distances between each of them and some other intermediate third object.
\end{enumerate}

In particular, generalized metric spaces are not required to be {\em symmetric}, or satisfy the property that if the distance between two objects $x$ and $y$ is $0$ implies $x$ must be identical to $y$, or finally that distances must be finite. These additional three properties listed below are what defines the usual notion of a {\em metric} space: 

\begin{enumerate}
    \item If $X(x, y) = 0$ and $X(y, x) = 0$ then $x = y$. 

\item $X(x, y) = X(y, x)$. 

\item $X(x, y) < \infty$. 
\end{enumerate}

In fact, we can subsume the previous discussion of causal inference under the notion of generalized metric spaces by defining a category around {\em preorders} $(P, \leq)$, which are relations that are reflexive and transitive, but not symmetric. Causal inference fundamentally involves constructing a preorder over the set of variables in a domain. Here are some examples of generalized metric spaces: 

\begin{enumerate}
    \item Any preorder $(P, \leq)$ such that all $p, q, r \in P$, if $p \leq q$ and $q \leq r$, then, $p \leq r$, and $p \leq p$, where 

   \[ P(p,q) =     \left\{ \begin{array}{rcl}
         0 & \mbox{if}
         & p \leq q \\ \infty  & \mbox{if} & p \not \leq q
                \end{array}\right\} \] 

                 \item The set of strings $\Sigma^*$ over some alphabet defined as the set $\Sigma$ where the distance between two strings $u$ and $v$ is defined as 

   \[ \Sigma^*(u,v) =     \left\{ \begin{array}{rcl}
         0 & \mbox{if}
         & u \ \mbox{is a prefix of} \ v \\ 2^{-n} & \mbox{otherwise} & \mbox{where} \ n \ \mbox{is the longest common prefix of } \ u \ \mbox{and} \ v
                \end{array}\right\} \] 

                 \item The set of non-negative distances $[0,\infty]$ where the distance between two objects $u$ and $v$ is defined as 

   \[ [0,\infty](u,v) =     \left\{ \begin{array}{rcl}
         0 & \mbox{if}
         & u \geq  v \\ v - u & \mbox{otherwise} & \mbox{where} \ r < s 
                \end{array}\right\} \] 

                 \item The powerset ${\cal P}(X)$ of all subsets of a standard metric space, where the distance between two subsets $V, W \subseteq X$ is defined as

   \[  {\cal P}(X)(V, W) = \inf \{ \epsilon > 0 | \forall v \in V, \exists w \in W, X(v, w) \leq \epsilon \} \] 

which is often referred to as the {\em non-symmetric Hausdorff distance}. 
                
\end{enumerate}

Generalized metric spaces can be shown to be $[0, \infty]$-enriched categories as the collection of all morphisms between any two objects itself defines a category.  In particular, the category $[0,\infty]$ is a complete and co-complete symmetric monoidal category. It is a category because objects are the non-negative real numbers, including $\infty$, and for two objects $r$ and $s$ in $[0,\infty]$, there is an arrow from $r$ to $s$ if and only if $r \leq s$. It is complete and co-complete because all equalizers and co-equalizers exist as there is at most one arrow between any two objects. The categorical product $r \sqcap s$ of two objects $r$ and $s$ is simply $\max\{r,s\}$, and the categorical coproduct $r \sqcup s$  is simply $\min\{r,s\}$. More generally, products are defined by supremums, and coproducts are defined by infimums. Finally, the {\em monoidal } structure is induced by defining the tensoring of two objects through ``addition":   

\[ +: [0, \infty] \times [0, \infty] \rightarrow [0,\infty]\]

where $r + s$ is simply their sum, and where as usual $r + \infty = \infty + r = \infty$. 

The category $[0,\infty]$ is also a {\em compact closed} category, which turns out to be a fundamentally important property, and can be simply explained in this case as follows. We can define an ``internal hom functor" $[0,\infty](-, -)$ between any two objects $r$ and $s$ in $[0, \infty]$ the distance $[0,\infty]$ as defined above, and the {\em co pre-sheaf} $[0,\infty](t,-)$ is {\em right adjoint} to $t + -$ for any $t \in [0, \infty]$. 

\begin{theorem}
For all $r, s$ and $t \in [0,\infty]$, 

\[ t + s \geq r \ \ \ \mbox{if and only if} \  \  \ s \geq [0,\infty](t,r)\]
\end{theorem}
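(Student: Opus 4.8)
The plan is to unfold the definition of the internal hom $[0,\infty](t,r)$ recorded above --- namely $[0,\infty](t,r)=0$ when $t\geq r$ and $[0,\infty](t,r)=r-t$ when $t<r$, equivalently the truncated difference $r\ominus t=\max\{r-t,0\}$ --- and then verify the biconditional by a short case analysis, being careful about the conventions $a+\infty=\infty+a=\infty$ and $\infty-a=\infty$ for finite $a$. Conceptually this is just the assertion that the monotone map $t+(-)\colon[0,\infty]\to[0,\infty]$ has the right adjoint $[0,\infty](t,-)$, i.e.\ a Galois connection of posets, so the theorem could also be read off from the general interaction of adjoints with the order discussed earlier; but the direct computation is the cleanest route.

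First I would dispose of the case $t\geq r$. Here $[0,\infty](t,r)=0$, so the right-hand side $s\geq[0,\infty](t,r)$ holds for every $s\in[0,\infty]$; and on the left, monotonicity of addition together with $s\geq 0$ gives $t+s\geq t\geq r$, so the left-hand side also holds for every $s$. Both sides are therefore identically true on this case, hence equivalent. Note that this case in particular absorbs the degenerate possibility $t=\infty$.

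Next, in the remaining case $t<r$ --- so $t$ is finite and $[0,\infty](t,r)=r-t$ --- I would use that a finite element of $[0,\infty]$ may be cancelled from both sides of an inequality: for finite $t$, one has $t+s\geq r$ if and only if $s\geq r-t$, whether or not $s$ or $r$ equals $\infty$ (if $s=\infty$ both inequalities hold; if $r=\infty$ then $r-t=\infty$ and each side just says $s=\infty$). Since $r-t=[0,\infty](t,r)$ in this case, the two sides are again equivalent. The cases $t\geq r$ and $t<r$ are exhaustive, so the biconditional holds for all $r,s,t\in[0,\infty]$.

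The only subtlety --- and the one point I would spell out rather than wave at --- is the handling of $\infty$: cancelling $t$ from both sides of $t+s\geq r$ is legitimate precisely when $t$ is finite, which is exactly the hypothesis of the second case, while the first case ($t\geq r$, which covers $t=\infty$) is settled separately and trivially. There is no genuine obstacle beyond this bookkeeping; the real content is simply that addition by $t$ is left adjoint to $[0,\infty](t,-)$ on the ordered set $[0,\infty]$, which is what exhibits $[0,\infty]$ as a symmetric monoidal closed category and underpins the metric Yoneda lemma used next.
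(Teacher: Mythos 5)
Your proof is correct and complete. For the record, the paper itself offers no proof of this statement: it is asserted in the prose immediately before the theorem (``the co-presheaf $[0,\infty](t,-)$ is right adjoint to $t + -$'') and then stated without argument, with the underlying facts deferred to the cited literature on generalized metric spaces. So there is no in-paper argument to compare against, and your direct verification is exactly what is missing. Your two-case decomposition on whether $t \geq r$ or $t < r$ matches the paper's earlier piecewise definition of $[0,\infty](u,v)$ as the truncated difference, your handling of the $\infty$ conventions (in particular that $t < r$ forces $t$ finite, so cancellation of $t$ is legitimate, and that the $r = \infty$ subcase reduces both sides to $s = \infty$) is the one place where care is genuinely needed and you supply it, and your closing remark that the content is precisely the Galois connection $t + (-) \dashv [0,\infty](t,-)$ is the same conceptual reading the paper intends. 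Nothing to fix.
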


We will explain the significance of compact closed categories for reasoning about AI and ML systems in more detail later, but in particular, we note that reasoning about feedback requires using compact closed categories to represent ``dual" objects that are diagrammatically represented by arrows that run in the ``reverse" direction from right to left (in addition to the usual convention of information flowing from left to right from inputs to outputs in any process model). 

We can also define a category of generalized metric spaces, where each generalized metric space itself as an object, and for the morphism between generalized metric spaces $X$ and $Y$, we can choose a {\em non-expansive function} $f: X \rightarrow Y$ which has the {\em contraction property}, namely 

\[ Y(f(x), f(y)) \leq c \cdot X(x,y) \]

where $0 < c < 1$ is assumed to be some real number that lies in the unit interval. The category of generalized metric spaces will turn out to be of crucial importance in this paper as we will use a central result in category theory -- the Yoneda Lemma -- to give a new interpretation to distances. 

Finally, let us state a ``metric" version of the Yoneda Lemma specifically for the case of $[0,\infty]$-enriched categories in generalized metric spaces: 

\begin{theorem}\cite{BONSANGUE19981}
({\bf Yoneda Lemma for generalized metric spaces}): Let $X$ be a generalized metric space. For any $x \in X$, let 

\[ X(-, x): X^{\mbox{op}} \rightarrow [0, \infty], \ \ y \longmapsto X(y, x)\]
\end{theorem}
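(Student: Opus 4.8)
The statement being proven (the metric Yoneda Lemma of \cite{BONSANGUE19981}) is that for every ``presheaf'' $\phi$ on $X$ one has $\widehat{X}(X(-,x),\phi) = \phi(x)$, and consequently that the assignment $y \mapsto X(-,y)$ is an isometric embedding of $X$ into its presheaf space. The plan is first to unwind the enriched-categorical content. A generalized metric space $(X,d)$ is a category enriched over the symmetric monoidal closed category $([0,\infty],\geq,+,0)$, whose internal hom is the truncated subtraction $[0,\infty](t,r) = \max(r-t,0)$ recorded in the theorem immediately above, and whose limits are suprema (as noted when the compact closed structure of $[0,\infty]$ was described). Hence $X(-,x)$ is an object of the enriched presheaf category $\widehat{X} = [X^{\mathrm{op}},[0,\infty]]$; concretely, an object $\phi$ of $\widehat{X}$ is a function $\phi\colon X \to [0,\infty]$ with $\phi(a) \leq X(a,b) + \phi(b)$ for all $a,b$ (enriched functoriality / non-expansiveness), and the distance in $\widehat{X}$ is the enriched end
\[ \widehat{X}(\phi,\psi) \;=\; \sup_{y \in X}\,[0,\infty]\big(\phi(y),\psi(y)\big) \;=\; \sup_{y \in X}\,\max\big(\psi(y) - \phi(y),\,0\big). \]
That $X(-,x)$ really is such an object is just the triangle inequality $X(a,x) \leq X(a,b) + X(b,x)$.

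With the definitions pinned down, the proof is a two-sided estimate. For $\widehat{X}(X(-,x),\phi) \leq \phi(x)$, I would apply the functoriality inequality for $\phi$ with second argument $x$, namely $\phi(y) \leq X(y,x) + \phi(x)$; this gives $\phi(y) - X(y,x) \leq \phi(x)$ and hence $\max(\phi(y) - X(y,x),0) \leq \phi(x)$ for every $y$, so the supremum is at most $\phi(x)$. For the reverse inequality, I would evaluate the supremum at $y = x$: since $X(x,x) = 0$, that single term already equals $\max(\phi(x) - 0,0) = \phi(x)$. Combining, $\widehat{X}(X(-,x),\phi) = \phi(x)$. The asserted isometry is then the special case $\phi = X(-,x')$: the functoriality inequality for $X(-,x')$ is exactly the triangle inequality, and the computation specializes to $\widehat{X}\big(X(-,x),X(-,x')\big) = X(x,x')$, so $d$ is recovered on the nose and no symmetry is used.

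Finally I would note naturality in both $x$ and $\phi$, which in the enriched setting is not a separate verification: the number just computed is exactly the component of the canonical comparison $\widehat{X}(X(-,x),-) \Rightarrow (\text{evaluation at } x)$, and the square witnessing naturality commutes because all of its edges are sums of distances constrained by the triangle inequality. I expect no deep obstacle here; the only real care needed is bookkeeping --- keeping the variance straight (the presheaf lives on $X^{\mathrm{op}}$, so the triangle inequality is used as $X(a,b)+\phi(b)\geq\phi(a)$), and remembering that the enriched end over $([0,\infty],\geq)$ is a \emph{supremum} of \emph{truncated} differences, not an infimum of ordinary ones. Once those conventions are fixed, the entire content of the theorem is the triangle inequality together with $X(x,x)=0$.
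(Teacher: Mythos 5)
Your proof is correct. Note first that the theorem as printed has no conclusion --- it only defines the functor $X(-,x)$ --- and the paper supplies no proof of its own: it cites \cite{BONSANGUE19981} and asserts in the surrounding prose that $\hat{X}(X(-,x),\phi)=\phi(x)$ ``follows from the general Yoneda Lemma.'' You correctly reconstructed the intended statement from that prose and then proved it directly rather than by appeal to the general enriched Yoneda Lemma. Your route --- unwinding the $[0,\infty]$-enriched definitions so that $\hat{X}(\phi,\psi)=\sup_{y}\max(\psi(y)-\phi(y),0)$, getting the upper bound $\sup_{y}\max(\phi(y)-X(y,x),0)\leq\phi(x)$ from the non-expansiveness inequality $\phi(y)\leq X(y,x)+\phi(x)$, and the lower bound by evaluating at $y=x$ where $X(x,x)=0$ --- is precisely the computation that the paper's citation stands in for, and your conventions (hom in $[0,\infty]$ as truncated subtraction via the adjunction $t+s\geq r \iff s\geq [0,\infty](t,r)$ stated in the paper, ends over $([0,\infty],\geq)$ as suprema) are all consistent with the paper's setup. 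The isometry then falls out by taking $\phi=X(-,x')$, as you note, and you are right that neither symmetry nor finiteness of distances is used. What your elementary two-sided estimate buys over the paper's appeal to the general enriched lemma is that it exposes the entire content as the triangle inequality plus $X(x,x)=0$, and flags the only genuinely delicate points (the variance of the presheaf and the sup-of-truncated-differences form of the functor-category hom); the general-lemma route buys uniformity over arbitrary enriching bases but hides exactly those specifics.
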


Intuitively, what the generalized metric version of the Yoneda Lemma is stating is that it is possible to represent an element of a generalized metric space by its co-presheaf, exactly analogous to what we will see below in the next section for causal inference! If we use the notation

\[ \hat{X} = [0, \infty]^{X^{\mbox{op}}}\]

to indicate the set of all non-expansive functions from $X^{\mbox{op}}$ to $[0, \infty]$, then the Yoneda embedding defined by $y \longmapsto X(y, x)$ is in fact a non-expansive function, and itself an element of $\hat{X}$! Thus, it follows from the general Yoneda Lemma that for any other element $\phi$ in $\hat{X}$, 

\[ \hat{X}(X(-, x), \phi) = \phi(x) \]

Another fundamental result is that the Yoneda embedding for generalized metric spaces is an {\em isometry}. Again, this is exactly analogous to what we see below for causal inference, which we will denote as the causal reproducing property. 

\begin{theorem}
The Yoneda embedding $y: X \rightarrow \hat{X}$, defined for $x \in X$ by $y(x) = X(-, x)$ is {\em isometric}, that is, for all $x, x' \in X$, we have: 

\[ X(x, x') = \hat{X}(y(x), y(x')) = \hat{X}(X(-, x), X(-, x'))\]
\end{theorem}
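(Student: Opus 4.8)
The plan is to derive this theorem as an essentially immediate corollary of the metric Yoneda Lemma stated just above. First I would dispose of the second equality $\hat{X}(y(x), y(x')) = \hat{X}(X(-,x), X(-,x'))$: this is merely the unfolding of the definition $y(x) = X(-,x)$, so nothing is to be proved there. The real content is the first equality. Here I would observe that $X(-,x') \colon X^{op} \to [0,\infty]$ is non-expansive --- this is exactly the assertion of the metric Yoneda Lemma applied at the point $x'$ --- so that $\phi := X(-,x')$ is a legitimate object of the presheaf category $\hat{X}$. Applying the identity $\hat{X}(X(-,x), \phi) = \phi(x)$ established just above with this particular $\phi$ then yields $\hat{X}(X(-,x), X(-,x')) = X(-,x')(x) = X(x,x')$, which is the claim.

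As a self-contained alternative that does not invoke the Yoneda identity as a black box, I would unfold the enriched hom in $\hat{X}$. Since $\hat{X} = [0,\infty]^{X^{op}}$ is a $[0,\infty]$-enriched functor category, its hom is an end, and over the quantale $[0,\infty]$ (whose products are suprema, as recorded in the discussion of its monoidal structure) the end collapses to a supremum: $\hat{X}(\phi,\psi) = \sup_{y \in X} [0,\infty](\phi(y), \psi(y))$, where $[0,\infty](a,b) = \max\{b-a,0\}$ is the internal hom identified above via the adjunction $t + s \geq r \iff s \geq [0,\infty](t,r)$. Taking $\phi = X(-,x)$ and $\psi = X(-,x')$ gives $\hat{X}(y(x), y(x')) = \sup_{y \in X} \max\{X(y,x') - X(y,x), 0\}$. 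The lower bound $\geq X(x,x')$ follows by evaluating at $y = x$ and using $X(x,x) = 0$; the upper bound $\leq X(x,x')$ follows from the triangle inequality $X(y,x') \leq X(y,x) + X(x,x')$, which gives $X(y,x') - X(y,x) \leq X(x,x')$ for every $y$. Combining the two bounds yields equality.

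The only delicate points --- places where I would be careful rather than where I expect genuine difficulty --- are (i) justifying that the $[0,\infty]$-weighted end defining the hom of $\hat{X}$ really is a supremum, and (ii) handling the boundary cases in the truncated subtraction when $X(y,x)$ or $X(y,x')$ is $\infty$, so that ``$X(y,x') - X(y,x)$'' must be read as $[0,\infty](X(y,x), X(y,x'))$ with the usual conventions; in those cases the triangle inequality still forces the bound to hold. Both are routine given the earlier setup of the compact closed structure on $[0,\infty]$. I would present the Yoneda-corollary argument as the main proof and append the direct computation as a remark, since the latter makes transparent that the embedding is an isometry in the literal, element-wise sense and not merely up to some coarser equivalence.
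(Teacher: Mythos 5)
Your main argument is exactly the derivation the paper intends: the paper states this theorem without writing out a proof, but it immediately follows the displayed identity $\hat{X}(X(-,x),\phi)=\phi(x)$, and specializing $\phi = X(-,x')$ gives the claim precisely as you do. Your appended direct computation of the enriched hom as $\sup_{y}\max\{X(y,x')-X(y,x),0\}$, bounded below by evaluation at $y=x$ and above by the triangle inequality, is also correct and is a worthwhile addition since it makes the isometry explicit rather than a formal consequence of the Yoneda identity.
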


Once again, we will see a remarkable resemblance of this result to the Causal Representer Theorem below.  With the metric Yoneda Lemma in hand, we can now define a framework for solving static UIGs in generalized metric spaces. 

 \begin{definition}
Two objects $c$ and $d$ are isomorphic in a generalized metric space category $X$ if they are isometrically mapped into the category $\hat{X}$ by the Yoneda embedding $c \rightarrow X(-, c)$ and $d \rightarrow X(-, d)$ such that $X(c,d) = \hat{X}(X(-, c), X(-, d))$, where they can be defined isomorphically by a suitable pair of suitable natural transformations. 
 \end{definition}

 \subsection*{Imitation Games using Coalgebraic Behavioral Distances} 

 We can build on the theory of generalized metric spaces to define {\em behavioral distance metrics} on coalgebras \cite{baldan2014behavioral}. These distance metrics can be then used to define a way to formulate UIGs based on comparing two participants based on a coalgebraic model of them. The basic idea of behavioral distance metrics is {\em lifting of functors}. A coalgebra defined by a functor $F$ on the category of {\bf Sets} is lifted to the category of (generalized or pseudo) metric spaces. Then, behavioral distances can be defined with respect to the lifted functor $\bar{F}$. These can be shown to correspond to well-known distance metrics in optimal transport \cite{ot}, such as Wasserstein distances or Kantorovich distances. 

\subsection*{Imitation Games in Generalized Metric spaces over the Category of Wedges}

Note we can also generalize  previous work on behavioral metrics for coalgebras by extending them to generalized metric spaces using the metric Yoneda embedding \cite{BONSANGUE19981}. We can also define imitation games over the category of {\em wedges} defined as a collection of objects $F: {\cal C}^{op} \times {\cal C} \rightarrow {\cal D}$, and the arrows are defined as dinatural transformations. We leave these extensions to a future paper. 

\section{Dynamic Universal Imitation Games: From Inductive to Coinductive Inference} 

\label{dynamicuig}

In the case when the participants are changing during the course of the interactions, we  need to consider how a participant changes over the course of interactions. In {\em dynamic UIGs}, ``learner" participants imitate ``teacher" participants.  We show here that initial objects correspond to the framework of {\em passive learning from observation} over well-founded sets using inductive inference -- expensively studied by Gold \cite{GOLD1967447} Solomonoff \cite{SOLOMONOFF19641} Valiant \cite{DBLP:journals/cacm/Valiant84} and Vapnik \cite{DBLP:journals/tnn/Vapnik99}.  In contrast, final objects correspond {\em coinductive inference} over non-well-founded sets and universal coalgebras \cite{jacobs:book,rutten2000universal},  which includes  learning from {\em active experimentation} corresponding to causal inference \cite{rubin-book} or reinforcement learning \cite{DBLP:books/lib/SuttonB98}.  We define a category-theoretic notion of minimum description length or Occam's razor based on final objects in coalgebras.  

We now introduce coinductive inference, a novel theoretical foundation for ML that is based on the conceptualization of non-well-founded sets in terms of APGs \cite{Aczel1988-ACZNS}, graphs not sets, or in terms of universal coalgebras \cite{rutten2000universal}.  Put succinctly, the difference between coinductive inference vs. inductive inference lies in that in the former, APGs (or equivalently universal coalgebras) are enumerated to find one consistent with the teacher's presentation. In contrast, as we saw above, inductive inference is based on enumerating recursively enumerable sets, which do not allow for non-well-founded sets. 

The term coinductive inference, which we introduce in this paper, is based on {\em coinduction},  a proof principle, developed initially by Peter Aczel in his ground-breaking work on non-well-founded circular (or recursively defined) sets \cite{Aczel1988-ACZNS}. Figure~\ref{coindfig} illustrates the general framework of coinductive inference that is based on the theory of universal coalgebras. Comparing the framework for inductive inference presented above, the major difference is the use of AGPs or universal coalgebras as the formalism for describing the language used by the teacher to specify a particular model from which data is generated, and by the learner to produce a solution. Universal coalgebras provide a universal language for AI and ML in being able to describe a wide class of problems, ranging from causal inference, (non)deterministic and probabilistic finite state machines, game theory and reinforcement learning. 

\begin{figure}[t]
\includegraphics[scale=.35]{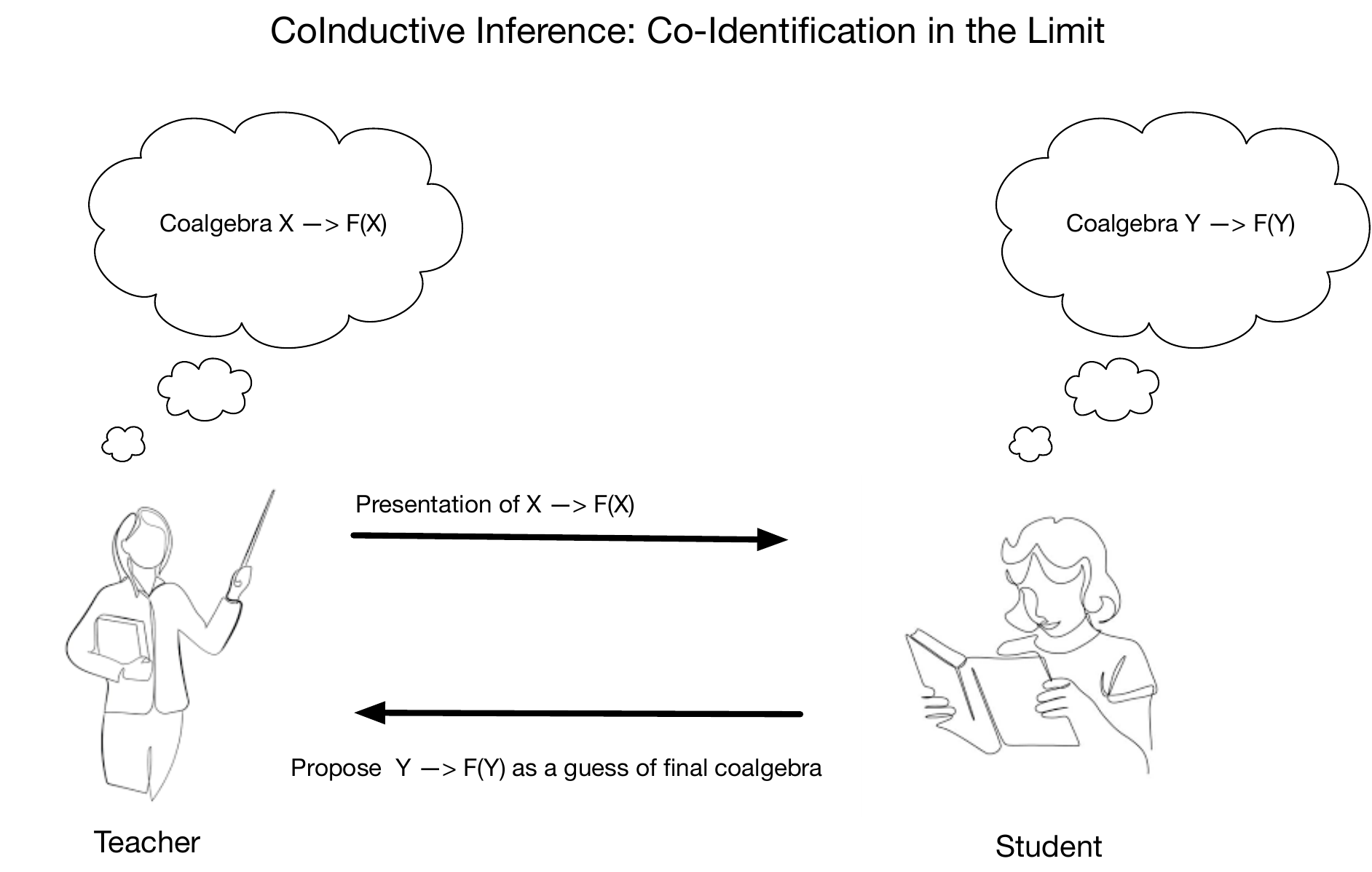}
%
\caption{The coinductive inference framework proposed in this paper models the process of machine learning as discovering a final coalgebra in the category of coalgebras. Rather than enumerating sets, as in inductive inference, representations of non-well-founded sets as accessible pointed graphs (APGs) or universal coalgebras are enumerated. Coidentification refers to the use of {\em bisimulation}, a fundamental relation used to compare two non-well-founded sets or universal coalgebras.}
\label{coindfig} 
\end{figure}

To take some concrete examples, let us begin with the standard problem of learning a deterministic finite state machine model. We can specify a finite state machine model as a {\em labeled transition system} $(S, \rightarrow_S, A)$ consisting of a set $S$ of states, a transition relation $\rightarrow_S \subseteq S \times A \times S$, where $A$ is a set of input labels. Let us define a functor ${\cal B}(X) = {\cal P}(A \times X) = \{V | V \subseteq A \times X$ for any set $X$. $B$ is then an endofunctor on the category of sets, which specifies the transition dynamics of the finite state machine. We can then represent a labelled transition system as a universal coalgebra of the following type: 

\[ \alpha_S: S \rightarrow {\cal B}(S), \ \ \ \ s \rightarrow \{(a, s') | s \xrightarrow[]{a} s' \} \]

for some fixed set of input symbols $A$. The coinductive inference framework assumes the teacher selects a particular coalgebra specifying a finite state machine and generates a presentation of it for the learner. Upon receiving a sequence of examples, the learner produces a hypothesis coalgebra describing the finite state machine. The learner's ultimate goal can be succinctly summarized as discovering the final coalgebra that represents the minimal finite state machine model that is isomorphic to the coalgebra selected by the teacher. 

In this second class of UIGs, we turn from Turing's original question ``Can machines think?" to ``Can machines learn?". We , we turn to discuss the fundamental question ``Can machines learn?". As with the previous case, we have to define what ``learn" means. We introduce a categorical perspective that neatly spans the initial object framework of {\em inductive inference} over well-founded sets  proposed by Gold \cite{GOLD1967447} and Solomonoff \cite{SOLOMONOFF19641} and later refined by Valiant \cite{DBLP:journals/cacm/Valiant84} and Vapnik \cite{DBLP:journals/tnn/Vapnik99} to the probabilistic case, with the  final object {\em coinductive inference}, a categorical framework over non-well-founded sets based on universal coalgebras \cite{jacobs:book,rutten2000universal}  framework for ML based on {\em coidentification in the limit}. Inductive inference is formulated in the traditional realm of ZFC set theory,  which does not permit non-well-founded sets. Aczel \cite{Aczel1988-ACZNS} introduced a formalism for non-well-founded sets using graphs, and developed a coinduction principle for reasoning about circular sets. This framework was later extended by Rutten for the study of dynamical systems.  We show a wide range of generative AI and ML systems can be modeled as {\em endofunctors} that act on specific categories. Algebras compress and summarize information. Coalgebras, in contrast act as generators and enable rigorously modeling non-well-founded infinite data streams produced by generative models, including automata, grammars, probabilistic transition models among others. Coidentification in the limit is defined as finding a {\em final coalgebra} in a category of coalgebras. Final coalgebras generalizes the use of (greatest) fixed points. We introduce the theory of monads, which gives an elegant framework to integrate algebraic and coalgebraic reasoning, and provides a categorical foundation for modeling probabilities, Markov chains and stochastic processes.

Before introducing our novel formulation of ML as coinductive inference over non-well-founded sets, we want to give a brief historical review of ML, focusing on the theoretical framework of inductive inference that has been the cornerstone of the field for many decades, originally introduced by Gold \cite{GOLD1967447}, Solomonoff \cite{SOLOMONOFF19641} and others based on the principle of mathematical induction over well-founded sets. Induction is a natural way to model learning as a process of generalizing rules from examples. A long tradition in philosophy, dating back to Hume \cite{hume} and even before to Aristotle \cite{aristotle} have thought about learning as induction.  We first briefly review the framework of Gold on inductive inference or identification in the limit, and then describe probably approximately correct (PAC) learning introduced by Valant \cite{DBLP:journals/cacm/Valiant84} that addressed some limitations of the Gold paradigm. We finally discuss some limitations of the PAC model as well.  

Humans appear singularly adept at inducing general rules from even a relatively small number of  well-chosen examples, an ability that provides the foundation for many of our intellectual capabilities, from learning language to motor behavior and social skills. Figure~\ref{bongard} illustrates a problem of visual induction introduced by Bongard \cite{bongard} in his book on pattern recognition.  Bongard's book featured a collection of several hundred such problems of varying difficulty, which can be found online as well. \footnote{See {\tt https://www.foundalis.com/res/bps/bpidx.htm} for a listing of Bongard problems.} 

\begin{figure}[t]
\includegraphics[scale=.65]{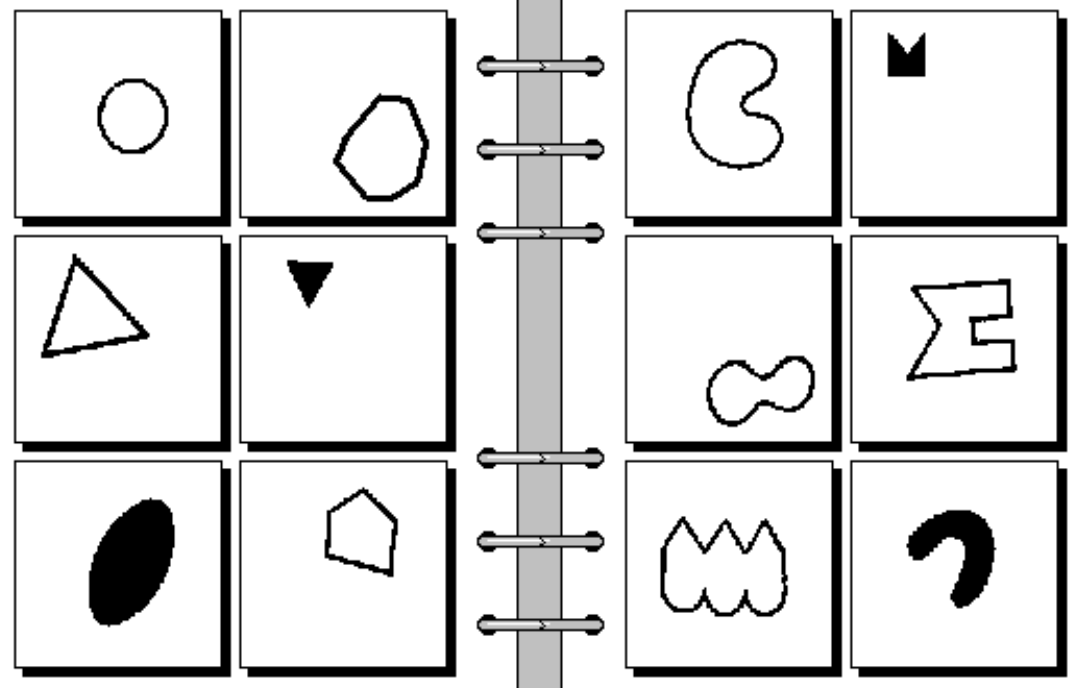}
%
\caption{An example of visual pattern recognition from a collection of several hundred such problems originally from a book by Bongard \cite{bongard}. The learner is given a series of figures grouped into two classes, and is required to generate a rule that distinguishes the six figures on the left from a similar number on the right. .}
\label{bongard} 
\end{figure}

Our ability at inducing patterns from examples extends across modalities. Consider, for example, the problem of inducing a function over the natural numbers, from a sequence of examples such as the one below: 

\begin{verbatim}
1, 2, 3, 5, 7, 11, 13, ...
\end{verbatim}

This sequence immediately brings our attention to the possibility that the sequence in question represents the {\em prime numbers}, numbers that are not divisible by any other number except $1$ and themselves. We can immediately see that the problem of induction seems {\em ill-posed}, as there may be many hypotheses consistent with a given sequence of examples, and it may not be possible to determine the right hypothesis from the given data. For example, consider the inductive inference problem posed by the initial part of the above sequence: 

\begin{verbatim}
1, 2, 3, ...
\end{verbatim}

In this case, the number of possible hypotheses could be enormous. Certainly, it would include the previous hypothesis about the sequence being the prime numbers, but it could also be all natural numbers, or many other possibilities. It is relatively easy to design inductive inference problems that can defeat any learner based on some initial presentation. Consider the following somewhat contrived example: 

\begin{verbatim}
0, 0, 0, 0, ...
\end{verbatim}

In this case, the learner might conjecture that this sequence is simply the set of all $0$'s, but it could have been generated from the following polynomial: 

\[ f(n) = (n - 1) (n - 2) (n -3) (n - 4) \]

in which case the correct answer for the fifth element in the sequence is: 

\begin{verbatim}
0, 0, 0, 0, 24, ...
\end{verbatim}

Finally, let us consider an {\em adversarial} setting, where a teacher simply generates a sequence that keeps changing depending on a learner's guesses. Such considerations suggest that to define a useful model of ML using inductive inference, additional structure must be imposed to make the problem well-defined. We now review the most influential models of ML next, beginning with the model of inductive inference: identification in the limit. 

\subsection{Passive Learning: Inductive Inference over Well-Founded Sets}
\label{ch1:sec:3}

\begin{figure}[t]
\includegraphics[scale=.35]{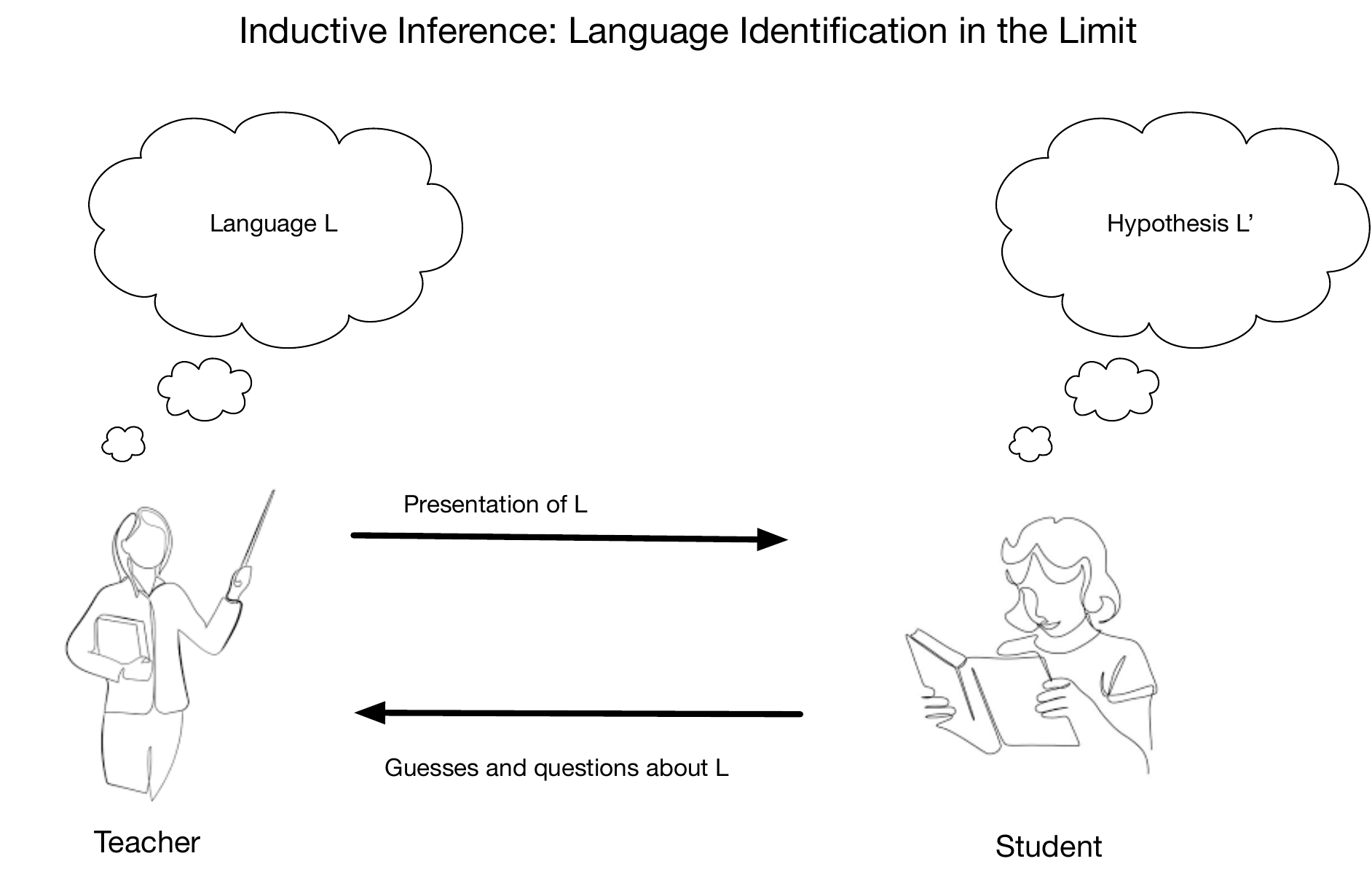}
%
\caption{The model of inductive inference.}
\label{gold-paradigm} 
\end{figure}

We briefly describe a formal theory of inductive inference proposed by Solomonoff in 1964, and further studied by Gold in a classic paper in 1967. Figure~\ref{gold-paradigm} describes the setting of language identification in the limit, a model of inductive inference proposed by Gold in 1967.  In this framework, a {\em teacher} selects a candidate language $L$ from some hypothesis space of possible languages, and constructs a {\em presentation} of $L$ to teacher a student learner. The student's task is to identify the unknown language from the teacher's presentation. The term {\em identification in the limit} is intended to reflect the continuing nature of this inductive inference task. This original setting of inductive inference is intended similar to the theoretical model of computability: are there intrinsic limitations to induction as a model of ML, independent of computational complexity criteria or any finiteness considerations?  Table~\ref{gold-table} summarizes Gold's principal results. To understand the table, we need to define the main components of this model more carefully, as this structure will also inform the design of our coinductive inference framework later in this paper. 

\begin{table}[!t]
\caption{Gold's results on language identification in the limit.}
\label{gold-table}       
%
%
		\begin{tabular}{l l l}
			\toprule
			\textbf{Model} & \textbf{Languages} & \textbf{Learnable?}\\
			\midrule
			Anomalous text &Recursively enumerable & Yes \\
   	 &Recursive  & Yes \\ \hline 
			Informant & Primitive Recursive & Yes (not above) \\
    & Context-sensitive & Yes \\
      & Context-free & Yes \\
          & Regular & Yes \\
              & Super-finite & Yes \\ \hline 
              	Text  & Finite & Yes (not above) \\
			\bottomrule
		\end{tabular}
		\caption{Gold, Information and Control, 1967}
	\end{table}

A {\em language learnability model}, as defined by Gold, consisted of the following components:

\begin{enumerate}
\item A definition of learnability: Gold focused on {\em identifiability in the limit}. 

\item A method of information presentation: Table~\ref{gold-table} includes anomalous text, informant, and text. 

\item A naming relation: Gold focused on principally two naming relations, including a {\em tester}, and a {\em generator}. 
\end{enumerate}

This conceptualization is fundamentally set-theoretic: identification in the limit implies naming a well-founded set through some method, such as {\em enumeration} of all recursively enumerable sets. In practice, more efficient methods have been developed over the past 60 years, and we focus on one particularly instructive framework next. 

\subsection*{Inductive Inference using Version Spaces}
\label{ch1:sec:4}

As a concrete example of an inductive inference algorithm, in this section we describe Tom Mitchell's {\em version spaces} framework \cite{DBLP:conf/ijcai/Mitchell77} (see Figure~\ref{versionspaces}). Version spaces is particularly interesting as it specifies not one specific algorithm, but rather a broad framework that can be specialized in many ways to design more specific methods. The $S$ set represents the set of all {\em maximally specific} hypotheses consistent with the data, whereas the $G$ set represents the set of all {\em maximally general} hypotheses consistent with the data. The notion of specificity or generality is with respect to the partial ordering implicit in the lattice structure shown in Figure~\ref{gold-paradigm}. Bringing in the category-theoretic perspective, the lattice structure can be defined as a {\em preorder} category $C = ({\cal H}, \leq)$, where $H$ is the set of all hypotheses under consideration (e.g., all context-free languages), and $\leq$ is a reflexive transitive relationship on hypotheses. A morphism $h_1 \rightarrow h_2$ exists between hypotheses $h_1$ and $h_2$ when $h_1 \leq h_2$. As we will show later, computing the maximally specific and maximally general hypotheses in a version space corresponds to computing the meets and joins in a partial order, respectively, or more generally, finding the {\em limits} and {\em colimits} of universal diagrams $F: J \rightarrow H$. 

\begin{figure}[t]
\includegraphics[scale=.3]{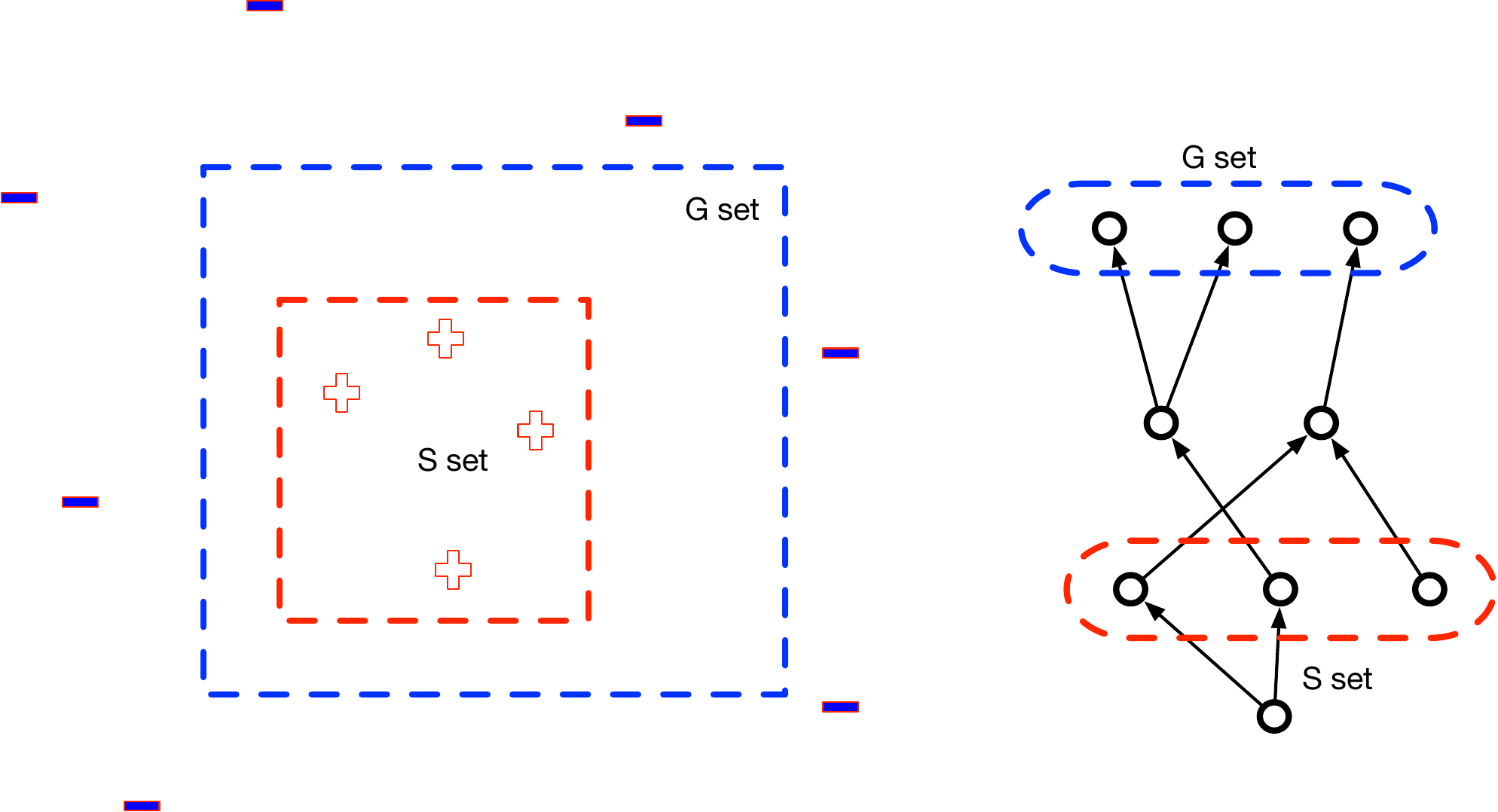}
%
\caption{The version spaces algorithm is a canonical example of an inductive inference method. In terms of category theory, the lattice defines a preorder category. The set of maximally specific hypotheses $S$ and maximally general hypotheses $G$ consistent with the training data set can be computed as the {\em meets} and {\em joins} in a preorder, a special case of computing the {\em limits} and {\em colimits} of a universal diagram $F: J \rightarrow H$, a functor from some indexing category $J$ into the category $H$ of hypotheses. In Gold's paradigm, the learner has to guess the right axis-parallel rectangle that the teacher has selected, such that every positive example lies within the rectangle, and every negative example lies outside it. In Valiant's PAC learning framework, it is sufficient to guess a rectangle whose error as measured by the probability of misclassification is $\leq \epsilon$, and do so reliably with probability $\geq 1 -  \delta$. In Vapnik and Chervonenkis formulation, the VC-dimension of all axis-parallel rectangles on $\mathbb{R}^2$ is $4$, and hence this hypothesis space ${\cal H}$ is PAC-learnable from a polynomial number of samples. In our coalgebraic framework of coinduction, the hypothesis space is modeled as a category of coalgebras that define dynamical systems that generate positive or negative examples. }
\label{versionspaces} 
\end{figure}

\subsection*{Probably Approximately Correct Learning}
\label{ch1:sec:5}

A significant weakness of the original inductive inference model proposed by Gold is that it required the learner to exactly identify a language, and it did not impose any computational requirement in terms of the number of examples required or the amount of time needed for identification. In an influential paper published in the {\em Communications of the ACM}, Valiant proposed a significantly revised model of inductive inference in 1984 that has come to be known as {\em probably approximately correct} (PAC) learning \cite{DBLP:journals/cacm/Valiant84}. In PAC learning, the data is assumed to be generated by some unknown (to the learner) but fixed probability distribution $P$ over a hypothesis space $X$ (e.g., strings generated by a regular language associated with a finite state automata). The goal for the learner is to discover an approximation $h(x)$ of the unknown function $f(x)$ that is accurate as measured by its differences with the true function on data that is sampled from the distribution $P$. In this setting, accuracy is defined as the probability that a randomly chosen point $x \in X$ will result in the outcome $h(x) \neq f(x)$. The goal is to find an approximate hypothesis such that the probability of disagreement can be bounded by some scalar value $0 < \epsilon < 1$: 

\[ P(h(x) \neq f(x)) \leq \epsilon \]

However, since the teacher uses the distribution $P$ to sample instances, an unlucky run of truly ``bad" examples cannot be ruled out, but the goal is to ensure that such an unrepresentative set of training examples occurs rarely. In other words, the learner is allowed to fail, but with probability bounded by a second parameter $0 < \delta < 1$. The aim of PAC learning is to determine for any given class of hypotheses ${\cal H}$ whether every function $f \in {\cal H}$ can be identified in the PAC sense using only polynomially many examples in the parameters $n$ (size of a hypothesis), $\frac{1}{\epsilon}$ and $\frac{1}{\epsilon}$. The intuition here is that $\epsilon$ and $\delta$ are reduced closer to $0$, requiring the learner to produce a more accurate hypothesis more reliably, the number of examples will grow as well. Valiant's PAC framework was further generalized by combining it with the notion of dimensionality proposed by Vapnik and Chervonenkis, which came to be known as the VC-dimension \cite{DBLP:journals/tnn/Vapnik99}. A fundamental result in this setting was that a hypothesis class is polynomially sample PAC learnable if its VC-dimension was finite (in the continuous setting) or polynomial in the relevant parameters ($n$, $\frac{1}{\epsilon}$, $\frac{1}{\delta}$). In the particular case where the hypothesis space is all axis-parallel rectangles on $\mathbb{R}^2$, the VC-dimension is $4$ as no training set of size larger than $4$ can be ``shattered". A training data set ${\cal D}$ can be shattered if there exists a hypothesis $h \in {\cal H}$ that is consistent with every possible labeling of the points in ${\cal D}$. In the case of axis-parallel rectangles, there exists a set of $5$ points on the plane where the interior point can be labeled $-$ and the remaining points labeled $+$, and no hypothesis in ${\cal H}$ is consistent with this labeling. In more concrete terms, using the example hypothesis space of all axis-parallel rectangles in Figure~\ref{versionspaces}, it suffices to maintain one hypothesis rectangle consistent with all the training data seen so far, and it can be shown that the version space of all consistent hypotheses (i.e., axis-parallel rectangles) can be $\epsilon$-exhausted given polynomially many samples in the VC-dimension (which is $4$ in this case) and accuracy parameter $\epsilon$ and reliability parameter $\delta$ \cite{DBLP:journals/ai/Haussler88,HAUSSLER1987324}. 

\subsection*{High-Dimensional Statistics: Limitations of the PAC Model}

In practice, there are a number of significant limitations of the PAC model, mostly having to do with its inability to exploit the actual structure that exists in many real-world machine learning problems (see Figure~\ref{highdim}). In high-dimensional statistics, data tends to exhibit a {\em concentration} effect. In low dimensions, a uniform distribution of data points in a a hypersphere appear scattered throughout, but as the dimensionality increases, much of the data concentrates on the boundary. The set of all covariance matrices that might summarize statistical properties of datasets that lie in vector spaces forms a nonlinear manifold. It is possible to exploit such statistical and topological regularities to design efficient inductive inference algorithms that behave far better in high-dimensional problems that the idealized PAC bounds would  predict. In addition, there seems to be a growing realization that the phenomenon of {\em overfitting}, where a model has a very large number of parameters, such as the billions of weights in large language models \cite{DBLP:conf/nips/VaswaniSPUJGKP17}, is not as problematic as some of the simplified statistical models would predict \cite{overfitting}.  

\begin{figure}[t]
\centering
\includegraphics[scale=.35]{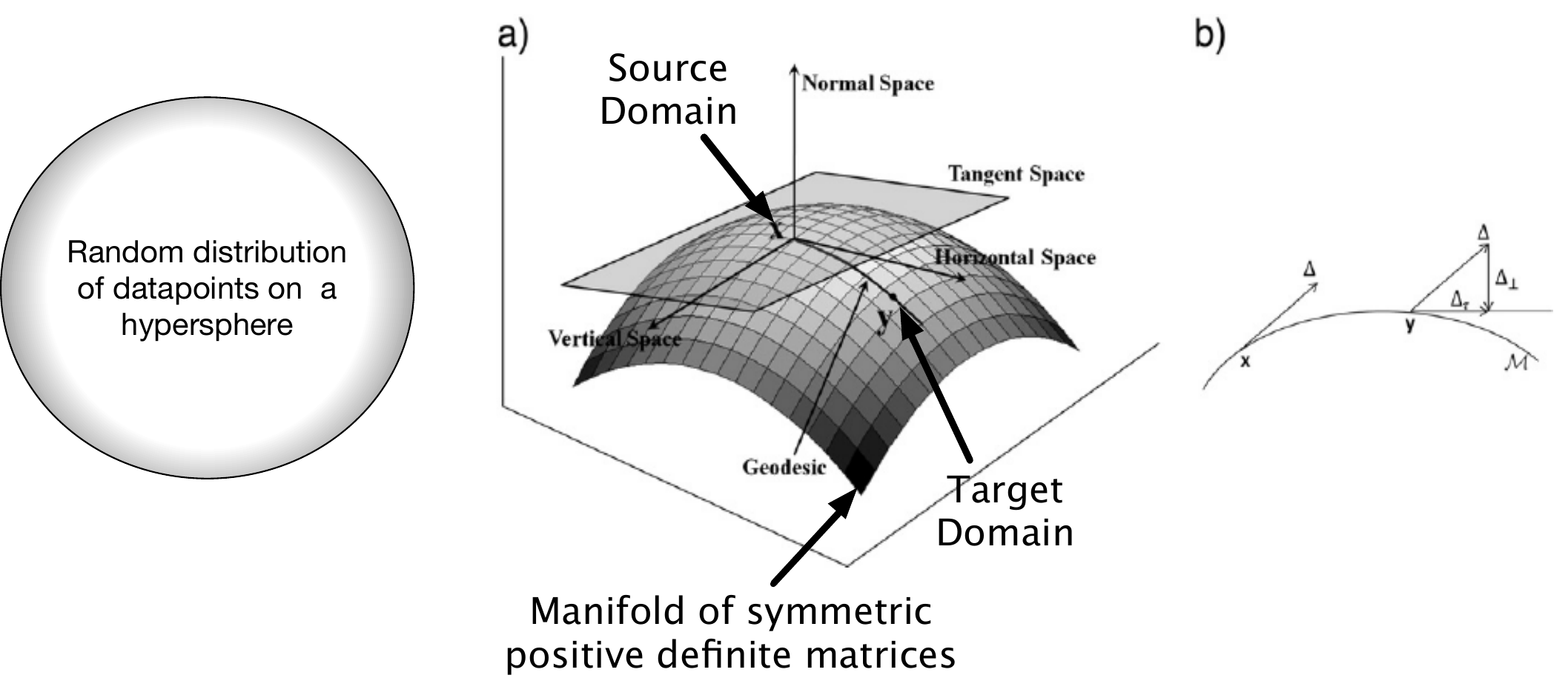}
\caption{(Left): High-dimensional datasets behave counter intuitively: a random collection of points distributed uniformly on a hypersphere concentrates near the boundary as the dimensionality increases. (Right) The set of symmetric positive definite matrices that model covariances across variables in datasets data defines a nonlinear Riemannian manifold surface.}
\label{highdim} 
\end{figure}

\subsection*{Algorithmic Information Theory and Kolmogorov Complexity}
\label{ch1:sec:6}

The mathematician Kolmogorov proposed in 1965 a model of the {\em intrinsic complexity} of an object, which can be viewed as a rigorous definition of inductive inference based on the heuristic of Occam's razor: {\em the simplest explanation is the best}. Kolmogorov complexity can be viewed as an intellectual predecessor to information theory, as it does not depend on a probability distribution, and has been proposed as a {\em universal probability model} \cite{DBLP:journals/tcs/Vitanyi13}. Chaitin proposed a similar framework called {\em algorithmic information theory} \cite{chaitin}. 

As we discuss below, we use coinductive inference to propose a new perspective on characterizing minimum description length type formulations in terms of the final coalgebras. As we will see later, in categories of coalgebras generated by a functor that admits a final coalgebra, a representation is minimal if the unique morphism mapping it to the final coalgebra is injective. This category-theoretic reformulation of the MDL principle is one of the novel aspects of our formulation of ML, which we turn to describe in more detail now.

\subsection*{Categorical Version Spaces using Ends and Coends}

We can generalize the classic version space algorithm \cite{DBLP:conf/ijcai/Mitchell77} to arbitrary categories, by modeling the process of finding the $S$ and $G$ sets in terms of the limits and colimits of a set of data points. We can use the simplicial set representation introduced in the previous section, and used in dimensionality reduction methods such as UMAP \cite{umap}.  We leave this extension to a future paper.

\subsection{Passive vs. Active Learning in Dynamic UIGs: Statistical vs. Causal Inference}   

To make the transition from inductive inference over well-founded sets to coinductive inference over non-well-founded sets and universal coalgebras, we work through a series of more concrete examples, starting with the distinction between passive statistical learning from data \cite{manifoldregularization} vs. learning from active experimentation. We use as examples the difference between building a statistical model based on conditional independences, vs. doing causal inference, as well as learning a finite state machine model from passive observation vs. learning a diversity automata from experimentation \cite{DBLP:journals/jacm/RivestS94}. 

To define UIGs over statistical models, we define a category of objects, where each object abstractly represents a statistical model in terms of a set of conditional independences \cite{DBLP:journals/amai/Dawid01,pearl-book}.
{\em Conditional independence} is a foundational concept in statistics, which finds many applications to problems in AI and ML, including structured representations of probabilistic graphical models, causal inference, and estimation of statistical models from data. One can define a generalized algebraic notion of conditional independence in terms of {\em separoids}. 

A separoid $({\cal S}, \leq, \mathrel{\perp\mspace{-10mu}\perp})$ is defined as a semi-lattice ${\cal S}$, where the join $\vee$ operator over the semi-lattice ${\cal S}$ defines a preorder $\leq$, and the ternary relation $\CI$ is defined over triples of the form $(x \CI y | z)$ (which are interpreted to mean $x$ is conditionally independent of $y$ given $z$).  We show briefly how to define a category for universal conditional independence, where each object is a separoid, and the morphisms are homomorphisms from one separoid to another. It is possible to define ``lattice'' objects in any category by interpreting an arrow $f: x \rightarrow y$ as defining the partial ordering.

\begin{definition}
\label{separoid}
A {\bf {separoid}} defines a category over a preordered set $({\cal S}, \leq)$, namely $\leq$ is reflexive and transitive, equipped with a {\em ternary} relation $\CI$ on triples $(x,y,z)$, where $x, y, z \in {\cal S}$ satisfy the following properties: 

\begin{itemize}
    \item {\bf {S1:}} $({\cal S}, \leq)$ is a join semi-lattice. 
    \item {\bf {P1:}} $x \mathrel{\perp\mspace{-10mu}\perp} y \ | \ x$
    \item {\bf {P2:}} $x \mathrel{\perp\mspace{-10mu}\perp} y \ | \ z \ \ \ \Rightarrow \ \ \ y \CI x \ | z$ 
    \item {\bf {P3:}} $x \mathrel{\perp\mspace{-10mu}\perp}  y \ | \ z \ \ \ \mbox{and} \ \ \ w \leq y \ \ \ \Rightarrow \ \ \ x \mathrel{\perp\mspace{-10mu}\perp} w \ | z$ 
    \item {\bf {P4:}} $x \mathrel{\perp\mspace{-10mu}\perp} y \ | \ z \ \ \  \mbox{and} \ \ \ w \leq y \ \ \ \Rightarrow \ \ \ x \CI y \ | \ (z \vee w)$
    \item {\bf {P5:}} $x \CI y \ | \ z \ \ \ \mbox{and} \ \ \ x \CI w \ | \ (y \vee z) \ \ \ \Rightarrow \ \ \ x \CI (y \vee w) \ | \ z$
\end{itemize}

A {\bf {strong separoid}} also defines a categoroid. A strong separoid is defined over a lattice ${\cal S}$ has in addition to a join $\vee$, a meet $\wedge$ operation, and satisfies an additional axiom: 

\begin{itemize} 

\item {\bf {P6}}: If $z \leq y$ and $w \leq y$, then $x \CI y \ | \ z \ \ \ \mbox{and} \ \ \ x \CI y \ | \ w \ \ \ \Rightarrow \ \ \ x \CI y  \ | \ z \ \wedge  \ w$
\end{itemize}
\end{definition} 

To define a category of separoids, we have to define the notion of a homomorphism between separoids: 

\begin{definition}
Let $\langle {\cal S}, \leq, \CI \rangle$ and $\langle {\cal S'}, \leq', \CI' \rangle$ be two separoids. A map $f: {\cal S} \rightarrow {\cal S}'$ is a {\bf {separoid homomorphism}} if:

\begin{enumerate}
    \item It is a join-lattice homomorphism, namely $f(x \vee y) = f(x) \vee' f(y)$, which implies that $x \leq y \rightarrow f(x) \leq' f(y)$. 
    \item $x \CI y \ | z \rightarrow f(x) \CI' f(y) \ | f(z) $. 
    \item In case both {\cal S} and {\cal S}' are strong separoids, we can define the notion of a strong separoid homomorphism to additionally include the condition: $f(x \wedge y) \rightarrow f(x) \wedge' f(y)$.
\end{enumerate}
\end{definition}

With this definition, we can now define the category of separoids and a representation-independent characterization of universal conditional independence as follows: 

\begin{theorem}
 The category of separoids is defined as one where each object in the category is defined as a separoid $\langle {\cal S}, \leq, \CI \rangle$, and the arrows are defined as (strong) separoid homomorphisms. The category of separoids provides an axiomatization of universal conditional independence, namely that it enables a universal representation through the use of universal arrows and Yoneda lemma. 
\end{theorem}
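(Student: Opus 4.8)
The plan is to establish the statement in two stages: first verifying that separoids and separoid homomorphisms genuinely assemble into a locally small category, and then invoking the Yoneda Lemma proved above to obtain the universal representation. The mathematical content is modest; the real work is in making the interpretive claim precise.

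For the first stage I would check the data of Definition~\ref{cat-defn} directly. Given a separoid $\langle \mathcal{S}, \leq, \CI \rangle$, the identity set-function $\id_{\mathcal{S}}$ preserves joins, the preorder, and the ternary relation, hence is a separoid homomorphism, and it acts as a two-sided unit since composition of morphisms here is ordinary function composition. For closure under composition, given separoid homomorphisms $f \colon \mathcal{S} \to \mathcal{S}'$ and $g \colon \mathcal{S}' \to \mathcal{S}''$, I would verify each clause in turn: join-preservation composes because $g(f(x \vee y)) = g(f(x) \vee' f(y)) = g(f(x)) \vee'' g(f(y))$; monotonicity of $\leq$ follows from join-preservation as noted in the definition; and the conditional-independence clause composes because $x \CI y \mid z$ implies $f(x) \CI' f(y) \mid f(z)$, which in turn implies $g(f(x)) \CI'' g(f(y)) \mid g(f(z))$. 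The strong-separoid meet-preservation clause composes identically. Associativity of $\circ$ is inherited from composition of functions, and since a separoid homomorphism is in particular a function between the underlying sets of two (small) separoids, the collection of homomorphisms between any two fixed separoids is a set. Hence the category --- call it $\mathbf{Sep}$, with strong variant $\mathbf{Sep}_s$ --- is locally small.

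For the second stage, once $\mathbf{Sep}$ is known to be locally small, the Yoneda Lemma applies verbatim: for any separoid $\mathcal{S}$ the presheaf $\mathbf{Sep}(-, \mathcal{S}) \colon \mathbf{Sep}^{op} \to {\bf Set}$ is a fully faithful representer, so $\mathcal{S}$ is determined up to isomorphism by the system of morphisms into it, and the universal-arrow correspondence $\mathbf{Sep}(r,d) \simeq \mathbf{Sep}(c, Sd)$ of the preceding subsections transfers without change. To justify the phrase ``axiomatization of universal conditional independence'' I would exhibit, for each candidate triple $(x,y,z)$, a small \emph{witness} separoid $\mathcal{W}_{x \CI y \mid z}$ --- the free separoid on three generators subject to the axioms P1--P5 together with the single assertion that the distinguished triple lies in $\CI$ --- and observe that $x \CI y \mid z$ holds in $\mathcal{S}$ precisely when there is a morphism $\mathcal{W}_{x \CI y \mid z} \to \mathcal{S}$ carrying the generators to $x,y,z$. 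Thus the entire conditional-independence structure of $\mathcal{S}$ is already encoded in the functor $\mathbf{Sep}(-, \mathcal{S})$, which is what ``universal representation via universal arrows and the Yoneda lemma'' amounts to here.

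The main obstacle I anticipate is not the categorical bookkeeping, which is routine, but pinning down the precise content of ``axiomatization of universal conditional independence.'' Because a homomorphism is required only to \emph{preserve} and not to \emph{reflect} the relation $\CI$, the presheaf $\mathbf{Sep}(-,\mathcal{S})$ a priori sees only a lax shadow of the CI-structure; the delicate point is to show that the witness separoids $\mathcal{W}_{x \CI y \mid z}$ actually exist (i.e.\ that the quotient of the free join-semilattice by the separoid axioms is well-defined and that its ternary relation behaves as intended) and that, taken together, they \emph{detect} the relation --- so that two separoids with naturally isomorphic presheaves carry isomorphic CI-structure. It is at exactly this step that axioms P1--P6 must be used in earnest rather than merely quoted, and completing it is what would finally identify $\mathbf{Sep}$ (and $\mathbf{Sep}_s$) as the promised universal setting for conditional independence in UIGs over statistical models.
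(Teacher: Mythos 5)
Your proposal matches the paper's proof in its essentials: the paper likewise disposes of the category axioms by observing that (strong) separoid homomorphisms compose to give (strong) separoid homomorphisms, and then derives the universal representation by passing to the presheaf category over separoids and invoking the Yoneda Lemma, exactly as you do. The one place you go beyond the paper is the witness-separoid construction $\mathcal{W}_{x \CI y \mid z}$ meant to show that the presheaf $\mathbf{Sep}(-,\mathcal{S})$ actually \emph{detects} the conditional-independence relation rather than merely preserving it; the paper does not attempt this step and simply asserts that the universal property ``derives from the use of the Yoneda lemma,'' so your identification of the preservation-versus-reflection issue is a genuine refinement of, not a missing prerequisite for, the published argument.
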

\begin{proof}
First, we note that the category of separoids indeed forms a category as it straightforwardly satisfies all the basic properties. The (strong) separoid homomorphisms compose, so that $g \circ f$ as a composition of two (strong) separoid homomorphisms produces another (strong) separoid homomorphism. The universal property derives from the use of the Yoneda lemma to define a category of presheaves that map from the category of separoids to the category {\bf {Sets}}.
\end{proof}

With this category of separoids defined, we can formulate a static UIG over separoids as follows: 

\begin{definition}
    Two objects $c$ and $d$  in a category ${\cal C}$ of separoids are isomorphic if there is a separoid homomorphism $f: c \rightarrow d$ and a separoid homomorphism $g: d \rightarrow c$ such that $g \circ f \simeq {\bf id}_c$ and $f \circ g \simeq {\bf id}_d$. 
\end{definition}

\subsection*{Causal Imitation Games} 

\begin{figure}[t]
\begin{minipage}{0.7\textwidth}
\includegraphics[scale=0.4]{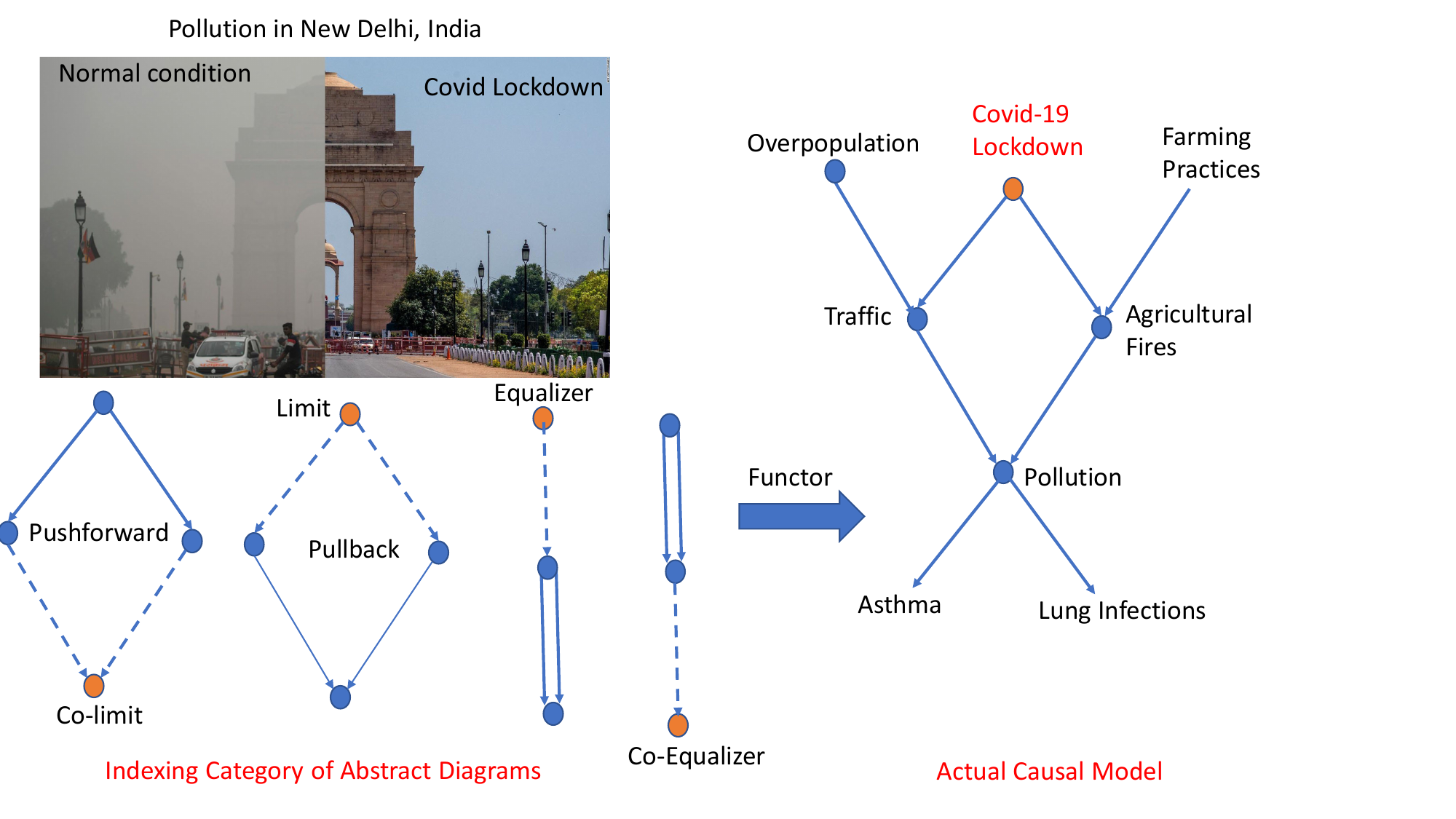}
\end{minipage}
\caption{A causal model of air pollution reduction due to the natural intervention imposed by a Covid-19 pandemic lockdown in New Delhi, India.}   
\label{a-causal-category}
\end{figure}

Causal imitation games have been extensively studied (see Table~\ref{causalcats}) for over a hundred years, ever since Darwin asked Galton to help him design a test that would separate out two species of plants that he had grown under different conditions \cite{darwin:1876}. Galton proposed a rank-ordered statistic, where the plants are all ranked by height, and the imitation game can be solved (i.e., variant of type 1 can be  discriminated from variant of type 2) if the specific ordering can be shown to be different from what one would expect in a random ordering. Since that time, for 150 years, researchers in every area of science have developed techniques for deciding if two groups of objects are the same or different, under the condition where one group can be subjected to some experiment. A thorough review of these techniques under the framework of {\em potential outcomes} is given in \cite{rubin-book}. The name derives from the fact that causal inference invariably involves counterfactuals due to unobservable confounders: the test group of objects is administered a treatment, whereas the control group is not, and the data from the experiment does not reveal the counterfactual outcomes. In computer science, causal inference has been studied on directed acyclic graphs (DAGs) \cite{pearl-book}, and more recently, on categories \cite{fong:ms,fritz:jmlr,DBLP:journals/entropy/Mahadevan23}. We review some of the main concepts here, with a special emphasis on our recent framework of universal causality \cite{DBLP:journals/entropy/Mahadevan23}. 

Humans have sought to understand causality since ancient times. 2500 years ago, Plato remarked that {\em "Everything that becomes or changes must do so owing to some cause; for nothing can come to be without a cause"}. Even a cursory study of the past literature on causal inference would immediately overwhelm the reader, since a veritable ``Tower of Babel" repertoire of languages and representations have been proposed to understand causality  (see  Table~\ref{causalcats}).  Category theory can be viewed as a ``Rosetta Stone" \cite{Baez_2010} to translate across diverse representations used in causal inference.  In a recent paper \cite{DBLP:journals/entropy/Mahadevan23}, we showed that the Yoneda Lemma provides deep insight into the nature of causal reasoning, through both theoretical results as well as providing an elegant way to capture universal properties of causality.  Specifically, we identified the {\em Causal Reproducing Property}, a special case of the Yoneda Lemma, that shows that the contravariant functor ${\cal C}^{op}(-, x)$ is the principal carrier of causal information in a category (which by default includes all the work on categorical modeling over directed graphs \cite{pearl-book}). 

\begin{table}[h] 
\caption{Category theory applies across a diverse range of causal imitation games.}
\vskip 0.1in
\begin{minipage}{0.7\textwidth}
 \begin{small}\hfill 
  \begin{tabular}{|c|c|c|c|} \hline 
Representation & Objects  & Morphisms & References  \\ \hline 
Graphs & Variables $X,Y,\ldots$ & Paths & \cite{pearl-book} \\ \hline 
Topological Spaces & Open sets $\{\{a\}, \{ b \}, \{a,b\}\}$  & Fences & \cite{Bradley_2022} \\ \hline  
Information fields & Measurable Spaces & Measurable mappings & \cite{sm:udm} \\ \hline
Resource Models & Monoidal resources & Profunctors & \cite{fong2018seven} \\ \hline
Concurrent Systems & Program variables & Bisimulation morphisms & \cite{JOYAL1996164} \\ \hline
Dynamical systems & States & Processes & \cite{sm:udm} \\ \hline
Counterfactuals & Propositions & Proofs & \cite{DBLP:journals/jphil/Lewis73} \\ \hline 
Network Economy & Consumers/Producers & Trade & \cite{nagurney:vibook} \\ \hline 
Discourse Sheaves & Users & Communication & \cite{discourse-sheaves} \\ \hline
\end{tabular}
\end{small}
\end{minipage}
\label{causalcats}
\end{table}

As a concrete example, Judea Pearl developed an approach to causal inference based on directed acyclic graph (DAG) models, where a variable $X$ can exert a causal influence on variable $Y$ if a directed path exists between $X$ and $Y$. Pearl's framework suggests an immediate categorical treatment, as the paths in a directed graph in fact correspond directly to the morphisms in the so-called {\em free category} associated with the graph.  We can define causal models in terms of functors that act on a category of algebraic structures, mapping them to a category of probabilistic representations defined as morphisms over a Kleisli category of a Giry monad \cite{giry1982}. In effect, a Bayesian network causal model is in fact a functor, because it is more than a graph, it is a mapping of a graph onto a structured probability distribution. A Bayesian network is more appropriately thought of as a functor that maps between two categories, a syntactic category of graphs as algebraic structures and a semantic category of probability distributions. Thinking of Bayesian networks as functors opens the door to including more sophisticated ideas. We can introduce universal constructions from category theory, including the {\em limit} or {\em co-limit} of an abstract causal diagram, or more generally, the Kan extension. We can formalize Plato's insight using the Yoneda Lemma, since every variable in a causal model is influenced by the set of all variables that can act on it through some causal pathway. This notion can be made formal through the Yoneda Lemma by constructing the presheaf contravariant functor from the category defining a causal model into a category of probability distributions or sets. 

Figure~\ref{a-causal-category} illustrates the main idea of using category theory to formalize causal inference using a real-world example showing how a natural intervention experiment caused by the Covid pandemic caused a significant reduction in the air quality in New Delhi, India, one of the world's most polluted cities. Causality studies the question of how an object (e.g., air pollution in New Delhi, India) {\em changes} due to an intervention on another object (e.g., Covid-19 lockdown). Causal influences are transmitted along paths in a causal model. In the directed acyclic graph (DAG) model shown, causal influence corresponds to a directed path from a variable $X$ into another variable $Y$ \cite{pearl-book}.  Covid-19 lockdown caused both a reduction in traffic as well as less burning of crops, both of which dramatically reduced air pollution in New Delhi. Universal Causality (UC) formulates the problem of causal inference in an abstract category ${\cal C}$ of causal diagrams, of which DAGs are one example, where objects can represent  arbitrary entities that interact in diverse ways (see the range of possibilities in Table~\ref{causalcats}).

One fundamental theoretical insight provided by category theory is the {\em universality of diagrams} in causal inference. More formally, any causal inference can be defined as an object in the contravariant functor category {\bf Set}$^{{\cal C}^{op}}$ of presheaves, which is representable as the co-limit of a small indexing diagram that serves as its universal element. The notion of a diagram is more abstract than previous diagrammatic representations in causal inference (see Figure~\ref{ucdiagram}). For example, in causal inference using DAGs, a collider node $B$ defines the structure $A \rightarrow B \leftarrow C$. In our framework, an abstract causal diagram functorially maps from an indexing category of diagrams, such as $\bullet \rightarrow \bullet \leftarrow \bullet$ into the actual causal diagram $A \rightarrow B \leftarrow C$. Abstract causal diagrams themselves form a category of functors.  The fundamental notion of a {\em limit} or {\em co-limit} of a causal diagram are abstract versions of more specialized {\em universal constructions}, such as ``pullbacks", ``pushouts", ``co-equalizers" and ``equalizers", commonly used in category theory, but hitherto not been studied in causal inference. For example, the limit of the abstract causal diagram mapping $\bullet \rightarrow \bullet \leftarrow \bullet$ into the actual causal model $A \rightarrow B \leftarrow C$ is defined as a variable, say $Z$, that acts as a ``common" cause between $A$ and $C$ in a ``universal" manner, in that any other common cause of $A$ and $C$ must factor through $Z$. In fact, a rich repertoire of other construction tools have been developed in category theory, including Galois extensions, adjunctions, decorated cospans, operads, and props, all of which enable building richer causal representations than have been previously explored in the causal inference literature. 

\begin{figure}[t]
\begin{minipage}{0.95\textwidth}
\includegraphics[scale=0.4]{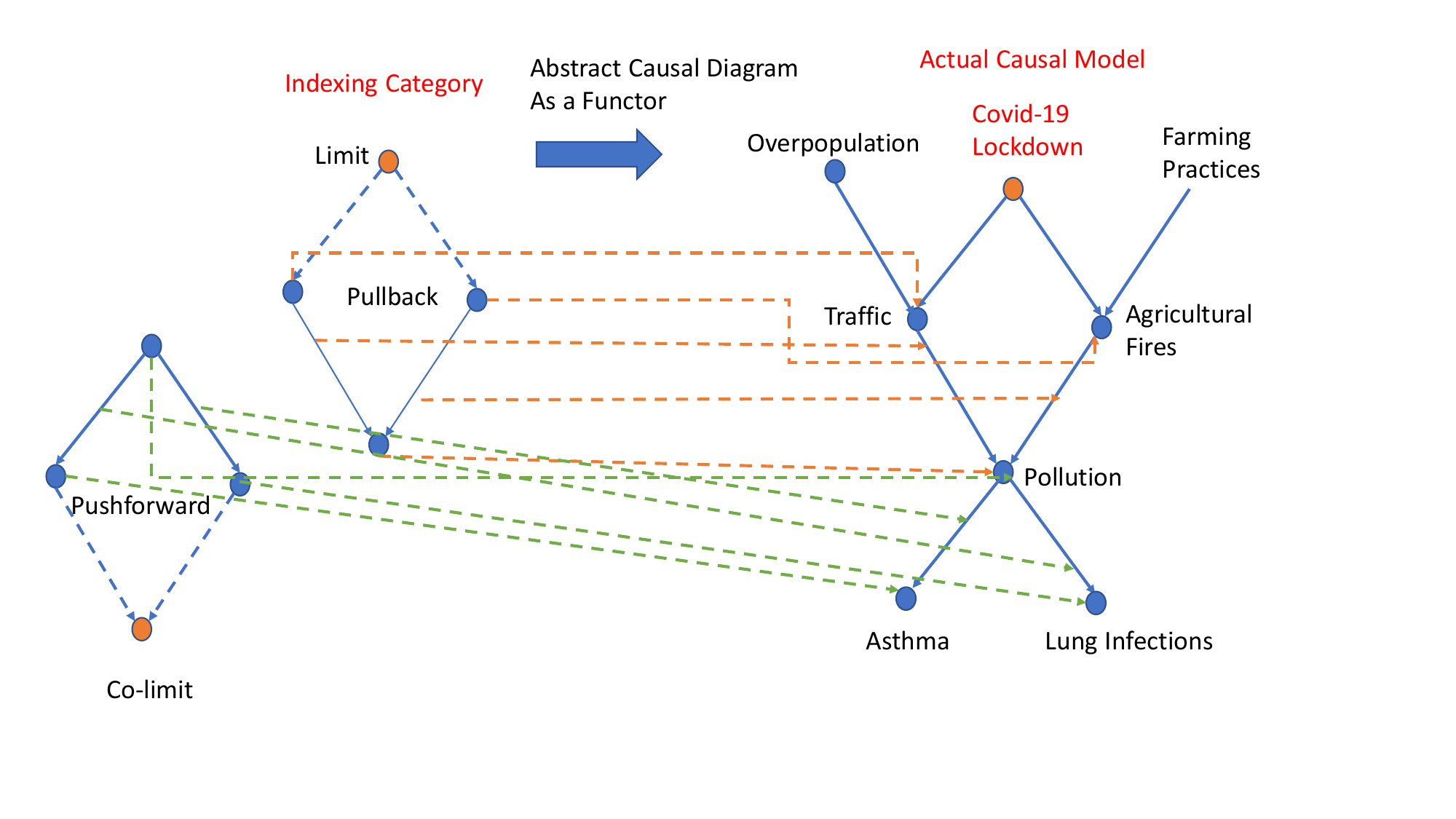}
\end{minipage} \vskip -0.3in
\caption{The notion of diagram in category theory is more abstract than typical diagrammatic representations in causal inference. A diagram in category theory is actually a functor mapping between an indexing category of diagrams to the actual causal model. Thus, the abstract diagram on the left maps functorially into the actual causal model by mapping each object $\bullet$ into a causal variable in the model, and each morphism into an edge in the DAG. .  Diagrams rely on universal constructions, such as pullbacks, pushouts, co-kernels and kernels, which are all special cases of {\em limit} or {\em co-limits}. For example, the limit of the abstract causal diagram in this example is a common cause $Z$ of {\bf Traffic} and {\bf Agricultural Fires}, such that $Z$ satisfies a universal property, namely every other common cause $W$ must factor through it. Although {\bf Covid-19} is certainly a common cause of {\bf Traffic} and {\bf Agricultural Fires} both being significantly lower than normal, in general, it might not be the limit, as it is possible there might be many other common causes (e.g., a weather event). Dually, the co-limit of {\bf Asthma} and {\bf Lung infection} is some common effect $E$ such that every other effect of these conditions must factor through $E$. Co-limits and limits generalize notions such as disjoint unions and products in sets, and joins and meets in partial orders.}
\label{ucdiagram}
\end{figure}

We want to briefly discuss a central result that we call the {\em Causal Reproducing Property}, as it is analogous to a key result in machine learning in the literature on kernel methods.  Reproducing Kernel Hilbert Spaces (RKHS's)  transformed the study of machine learning, precisely because they are the unique subcategory in the category of all Hilbert spaces that have representers of evaluation defined by a kernel matrix $K(x,y)$ \cite{kernelbook}. The reproducing property in an RKHS is defined as $\langle K(x, -), K(-, y) \rangle = K(x,y)$. An analogous but far more general reproducing property holds in the categorical causality framework, based on the Yoneda Lemma. 

\begin{theorem}\cite{DBLP:journals/entropy/Mahadevan23}
The {\bf causal reproducing property} (CRP) states that the set of all causal influences between any two objects $X$ and $Y$ can be defined from its presheaf functor objects, namely {\bf Hom}$_{\cal C}(X,Y) \simeq$ {\bf Nat}({\bf Hom}$_{\cal C}(-, X)$,{\bf Hom}$_{\cal C}(-, Y))$.  
\end{theorem}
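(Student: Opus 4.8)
The plan is to recognize the claimed isomorphism as a direct instance of the contravariant form of the Yoneda Lemma already stated in this excerpt, applied to the representable presheaf $\mathbf{Hom}_{\cal C}(-,Y)$. First I would observe that, for a locally small category ${\cal C}$ and a fixed object $Y$, the assignment $Z \mapsto \mathbf{Hom}_{\cal C}(Z,Y)$ together with the action $f \mapsto (-\circ f)$ on morphisms defines a presheaf, i.e.\ a functor $F_Y = \mathbf{Hom}_{\cal C}(-,Y) : {\cal C}^{op} \to \mathbf{Set}$; the functoriality axioms (preservation of identities and of composition) are exactly the unitality and associativity of composition in ${\cal C}$, and local smallness is precisely what guarantees each $F_Y(Z)$ is a genuine set.

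Next I would invoke the dual form of the Yoneda Lemma noted immediately after its statement above (the variant phrased with $\mathbf{Hom}_{\cal C}(-,X)$): for any presheaf $F : {\cal C}^{op} \to \mathbf{Set}$ and any object $X \in {\cal C}$ there is a bijection
\[ \mathbf{Nat}\bigl(\mathbf{Hom}_{\cal C}(-,X),\, F\bigr) \;\simeq\; F(X), \]
natural in both $X$ and $F$, sending a natural transformation $\eta$ to the component value $\eta_X(\id_X) \in F(X)$. Taking $F = F_Y$ gives $F_Y(X) = \mathbf{Hom}_{\cal C}(X,Y)$, which is exactly the left-hand side of the CRP, so the bijection reads $\mathbf{Hom}_{\cal C}(X,Y) \simeq \mathbf{Nat}(\mathbf{Hom}_{\cal C}(-,X),\mathbf{Hom}_{\cal C}(-,Y))$. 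Spelling out the directions: a natural transformation $\eta$ yields the morphism $\eta_X(\id_X) : X \to Y$; conversely a morphism $g : X \to Y$ yields the natural transformation whose $Z$-component is post-composition $g\circ(-) : \mathbf{Hom}_{\cal C}(Z,X) \to \mathbf{Hom}_{\cal C}(Z,Y)$, whose naturality in $Z$ holds because $(g\circ k)\circ h = g\circ(k\circ h)$ for any $h : Z' \to Z$. That these two assignments are mutually inverse is the content of the Yoneda bijection, which I would cite rather than recompute.

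The step I expect to need the most care — though it remains routine — is checking that the CRP bijection is natural simultaneously in $X$ and in $Y$, since the stated conclusion is an isomorphism of (bi)functors ${\cal C}^{op}\times{\cal C} \to \mathbf{Set}$ and not merely a bijection of sets. Naturality in $X$ is the naturality clause of Yoneda in its object slot; naturality in $Y$ follows from the fact that $Y \mapsto F_Y$ is itself a (covariant) functor into the presheaf category together with the naturality of Yoneda in the presheaf argument $F$. Assembling these two into the single statement that $\mathbf{Hom}_{\cal C}(-,-)$ and $\mathbf{Nat}(\mathbf{Hom}_{\cal C}(-,-),\mathbf{Hom}_{\cal C}(-,-))$ coincide as functors is the only place requiring bookkeeping; everything else is immediate from the Yoneda Lemma quoted earlier in the excerpt.
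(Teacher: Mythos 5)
Your proposal is correct and follows essentially the same route as the paper: the paper likewise observes that the claimed isomorphism is the Yoneda bijection $\mbox{{\bf Nat}}({\bf Hom}_{\cal C}(-,X), F) \simeq FX$ specialized to the representable presheaf $F = {\bf Hom}_{\cal C}(-,Y)$. The extra bookkeeping you supply (functoriality of ${\bf Hom}_{\cal C}(-,Y)$, the explicit inverse assignments, and naturality in both slots) is a more careful rendering of the same one-line argument the paper gives.
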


{\bf Proof:} The proof of this theorem is a direct consequence of the Yoneda Lemma, which states that for every presheaf functor object $F$ in  $\hat{{\cal C}}$ of a category ${\cal C}$, {\bf Nat}({\bf Hom}$_{\cal C}(-, X), F) \simeq F X$. That is, elements of the set $F X$ are in $1-1$ bijections with natural transformations from the presheaf {\bf Hom}$_{\cal C}(-, X)$ to $F$. For the special case where the functor object $F = $ {\bf Hom}$_{\cal C}(-, Y)$, we get the result immediately that  {\bf Hom}$_{\cal C}(X,Y) \simeq$ {\bf Nat}({\bf Hom}$_{\cal C}(-, X)$,{\bf Hom}$_{\cal C}(-, Y))$. $\bullet$

The significance of the Causal Reproducing Property is that presheaves act as ``representers" of causal information, precisely analogous to how kernel matrices act as representers in an RKHS.  Remarkably, we show below that an analogous result holds in a completely different setting, that of generalized metric spaces that is again of central importance in machine learning. Thus, the framework for causal imitation games can be defined in terms of the causal reproducing property. 

\begin{definition}
    Two objects $c$ and $d$ in a category of causal models ${\cal C}$ are isomorphic if there is a natural transformation $F: {\cal C}(-, c) \rightarrow {\cal C}(-,d)$ and a natural transformation $G: {\cal C}(-, d) \rightarrow {\cal C}(-, c)$ such that $G \circ F$ defines a natural isomorphism between the presheaves of $c$ and $d$ that act as representers of causal information. 
\end{definition}

\subsection{Dynamic UIGs using Experimentation: Learning Diversity Automata}

In this section, we give another example of solving dynamic UIGs using active experimentation, which illustrates an interesting {\em diversity} representation of automata to model {\em environments} that are often studied in AI, including games such as Rubik's cube. The goal is to understand how a robot can construct a model of an unknown environment whose states it cannot completely observe by conducting experiments with it. This approach of using tests to reveal the underlying state of an environment relates to work in coalgebras on using traces to define the underlying semantics, which provides an interesting example to explain the coalgebraic approach. We call this diversity coalgebras to reflect the marriage of two viewpoints. 

This approach of using diversity representations for modeling finite state automata was originally proposed in the category theory literature by Arbib, Bainbridge and others, but popularized in the ML literature by Rivest and Schapire. The use of diversity representations provides an interesting example of universal coalgebras. Rivest and Schapire specify environments as (Moore) finite state automata ${\cal E} = (Q, B, P, q_0, \delta, \gamma)$ is defined by them as a finite nonempty set $Q$ of states, a finite nonempty set of input symbols or ``actions" $B$, a finite nonempty set of {\em predicate symbols} or ``sensations" $P$, an {\em initial state} $q_0 \in Q$, a state {\em transition function} $\delta: Q \times B \rightarrow Q$, and finally an output function $\gamma: Q \times P \rightarrow \{ {\bf True}, {\bf False} \}$. What is interesting about  the {\em diversity representation} is that it represents an alternate way to specify environments, not in terms of specifying states (which might be unobservable), but in terms of a collection of {\em tests}. The standard notion of accepting states in a finite state machine can be viewed as a special case of a single accept predicate. Rivest and Schapire show  a diversity representation can be constructed using an {\em update graph} whose vertices are {\em tests}, defined as sequences of input symbols followed by one or more predicate symbols that determine if a particular test succeeds in that state. We explore the problem of constructing the diversity automata from experiments in a later Section. Right now, we want to focus on defining a {\em diversity coalgebra} to show how to design more compact representations of finite state machines than the usual state-based framework. 

Let us define by $A = B^*$ the set of all strings of input symbols, and extend the transition function suitably as $\delta: A \times Q \rightarrow Q$. A {\em test} is defined as an element of the concatenated set $AP$, namely an action $a \in A$ followed by a predicate $p \in P$. The set of all possible tests is denoted by $T$. A test $t = a p$ {\em succeeds} in a state $q$ if $q t = q (a p) = (q a) p = {\bf True}$. Otherwise, we declare the test as {\em failed}. An environment is {\em reduced} if every pair of states can be distinguished by executing some test: 

\[ (\forall q \in Q) (\forall r \in Q) (q \neq r) \Rightarrow (\exists t \in T) qt \neq qr \]

The goal is to be able to predict the results of doing any test, by which a robot can be assumed to have a perfect model of its environment even if it is unable to perceive the true underlying state of the environment. We will see interesting examples of environments, such as the Rubik's cube, where the diversity representation is far more compact than the traditional state-based approach. A key notion is the use of {\em equivalences} among tests. Two tests $t_1$ and $t_2$ are considered {\em equivalent}, or $t_1 \equiv t_2$ if 

\[ (\forall q \in Q) (q t_1 = q t_2) \]

This equivalence relation partitions the state space into equivalence classes, so we can denote the equivalence class of a test $t$ by $[t]$. The {\em diversity} of an environment, defined as $D({\cal E})$ is defined as the number of equivalence classes of tests. 

\[ D({\cal E}) = \{ [t] | t \in T \} \]

A simple result to prove is that the diversity can be exponentially smaller than the number of states, but it can also be substantially larger. Thus, whether the diversity approach is beneficial depends entirely on a specific environment. It turns out that a large number of natural environments seem to be highly compact in terms of their diversity. Figure~\ref{regenv} illustrates a simple $n$-bit register world environment, where the state space grows exponentially in $n$, but the diversity representation can be exponentially smaller. 

\begin{theorem}
    For any reduced finite-state automaton ${\cal E} = (Q, B, P, q_0, \delta, \gamma)$, the diversity measure is in the following range: 

    \[ \log_2 (|Q|) \leq D({\cal E}) \leq 2^{|Q|} \]
\end{theorem}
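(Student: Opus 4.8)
The plan is to recognize both inequalities as elementary counting (pigeonhole) facts, once tests and states are each identified with functions into the two-element set $\{{\bf True},{\bf False}\}$. Concretely, for a test $t\in T$ define its \emph{behavior function} $b_t\colon Q\to\{{\bf True},{\bf False}\}$ by $b_t(q)=qt$. By the definition of test equivalence in the excerpt, $t_1\equiv t_2$ holds exactly when $qt_1=qt_2$ for all $q\in Q$, i.e.\ when $b_{t_1}=b_{t_2}$. Hence the assignment $[t]\mapsto b_t$ is a well-defined injection from the set $D({\cal E})=\{[t]\mid t\in T\}$ of equivalence classes into the set $\{{\bf True},{\bf False}\}^{Q}$ of all such functions. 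The first step is therefore just to count: $\lvert D({\cal E})\rvert\le \lvert\{{\bf True},{\bf False}\}^{Q}\rvert = 2^{\lvert Q\rvert}$, which is the upper bound $D({\cal E})\le 2^{\lvert Q\rvert}$.

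For the lower bound I would dualize the construction and pass to a \emph{state signature} map. Define $\sigma\colon Q\to \{{\bf True},{\bf False}\}^{D({\cal E})}$ by sending a state $q$ to the tuple $\big([t]\mapsto qt\big)_{[t]\in D({\cal E})}$; this is well-defined on equivalence classes precisely because equivalent tests agree on every state (the same observation used above). The key input is that ${\cal E}$ is \emph{reduced}: for any two distinct states $q,r$ there is a test $t$ with $qt\neq rt$ — I would read the displayed condition in the excerpt with this (evidently intended) meaning — which says exactly that $\sigma$ is injective. Counting once more, $\lvert Q\rvert\le \lvert\{{\bf True},{\bf False}\}^{D({\cal E})}\rvert = 2^{\lvert D({\cal E})\rvert}$, and taking $\log_2$ of both sides yields $\log_2\lvert Q\rvert\le \lvert D({\cal E})\rvert$, which is the claimed lower bound $\log_2(\lvert Q\rvert)\le D({\cal E})$.

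There is essentially no hard analytic step here; the whole argument is a pair of injections followed by pigeonhole, and it mirrors the Yoneda-style ``an object is determined by its probes'' motif running through the paper — states are separated by tests, tests are separated by their action on states. The only points requiring care, which I would state explicitly, are: (i) that the behavior and signature maps genuinely factor through $\equiv$-classes, so that $D({\cal E})$ (the count of classes, by the mild abuse in the statement) is the right index set on both sides; and (ii) the correct reading of the ``reduced'' hypothesis as $(\forall q\neq r)(\exists t)\,qt\neq rt$, since that is precisely what makes $\sigma$ injective. If one wished, both bounds could be packaged together as the single chain $\log_2\lvert Q\rvert \le \lvert D({\cal E})\rvert \le 2^{\lvert Q\rvert}$ coming from the two injections $\sigma\colon Q\hookrightarrow \{0,1\}^{D({\cal E})}$ and $D({\cal E})\hookrightarrow \{0,1\}^{Q}$, after which I would add the two short remarks (a register world realizing near-equality on the left, a state-minimal automaton with all $2^{\lvert Q\rvert}$ behaviors realized on the right) to show neither bound is vacuous.
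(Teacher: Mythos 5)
Your proposal is correct and follows essentially the same route as the paper's own argument: the paper likewise obtains the upper bound from the observation that an equivalence class of tests is determined by the set of states where it succeeds (your injection $D({\cal E})\hookrightarrow\{{\bf True},{\bf False}\}^{Q}$), and the lower bound from the fact that, in a reduced automaton, a state is determined by which test classes succeed there (your injection $\sigma$). Your write-up is merely more explicit about the two injections and about the reading of the reducedness hypothesis, which the paper states with a typo.
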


The proof is straightforward: a state is uniquely identified by the set of equivalence classes of tests that are true at that state, as we assume the automaton is reduced, and hence any two states can be distinguished by some test. The lower bound simply states that the number of states $|Q| \leq 2^{\log_2 D({\cal E})}$. The upper bound holds because the equivalence class that a test defines is defined by the set of possible states where that equivalence class succeeds. 

\begin{figure}[t]
\centering
\includegraphics[scale=.45]{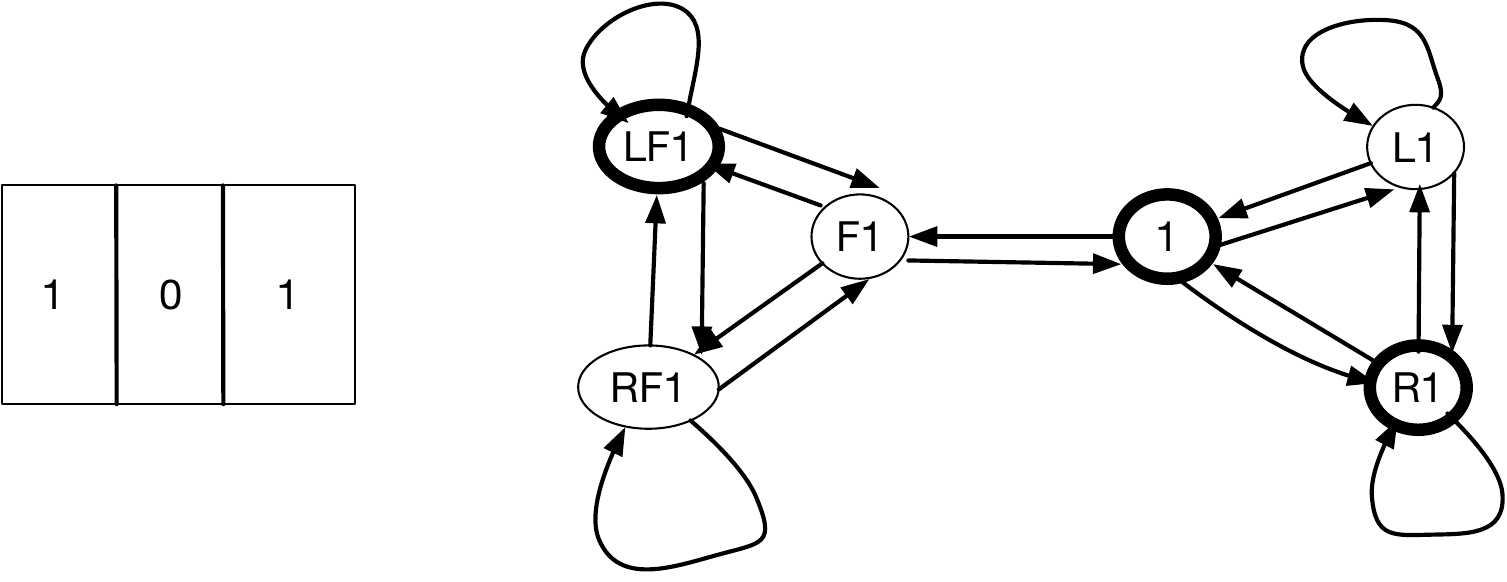}
\caption{The state space for an $n$-bit register environment is $2^n$, but its diversity size is only $2n$ because there is one test for checking whether bit $i$ is a $1$ and another test for checking if bit $i$ is a $0$. In the $3$-bit register environment shown here, $1$ refers to a test that returns {\bf True} if the leftmost bit is a $1$. $L$, $R$, and $F$ refer to input actions that rotate left or rotate right the entire register with wraparound, or flip the leftmost bit, respectively. The state $101$ is represented by the tests $LF1$, $1$, and $R1$ all returning {\bf True}. }
\label{regenv} 
\end{figure}

As shown in Figure~\ref{adjfsa}, we can define two categories of representations of environments, one based on a traditional state-based finite automata and the other based on the Rivest and Schapire diversity representation. We can define an adjoint functor that maps between the two categories, where the left adjoint maps a given state based environment (such as the one illustrated in Figure~\ref{regenv}) into its equivalent diversity-based representation, and the right adjoint maps a diversity-based automata into its state-based representation. Note that these functors must act functorially, which means that they map not only objects (i.e., state-based or diversity-based environment automata), but also the corresponding morphisms. To define these functors more formally, let us make more precise the two categories. 

\begin{figure}[t]
\centering
\includegraphics[scale=.4]{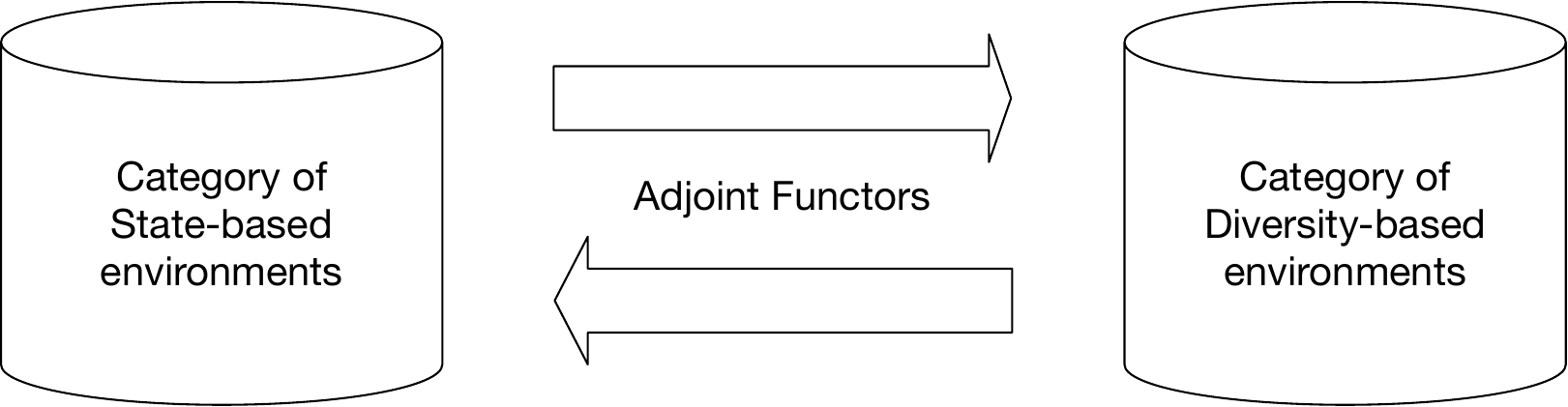}
\caption{Adjoint functors map between the categories of state-based finite state environments and diversity-based environments. }
\label{adjfsa} 
\end{figure}

\begin{definition}
    The category {\cal C}$_{S}$ of {\em state-based}  environments is defined by as a collection of objects, where each object is a state-based automata ${\cal E} = (Q, B, P, q_0, \delta, \gamma)$ as defined previously. Given a pair of environments ${\cal E}_1, {\cal E}_2$ in ${\cal C}_S$, each morphism in ${\cal C}_S({\cal E}_1, {\cal E}_2)$ is defined by a set of functions $f_Q: Q_1 \rightarrow Q_2$ that maps each state $q$ of ${\cal E}_1$ into the corresponding state $f_Q(q)$ of ${\cal E}_2$, $f_B: B_1 \rightarrow B_2$ maps each input action symbol $a$  of ${\cal E}_1$ into the corresponding action symbol $f_B(a)$ of ${\cal E}_2$, $f_P: P_1 \rightarrow P_2$ maps each predicate symbol $p$ of ${\cal E}_1$ into the corresponding symbol $f_P(p)$ of ${\cal E}_2$, $f_\delta: \delta_1 \rightarrow \delta_2$ that maps the transition function $\delta_1$ of ${\cal E}_1$ into the corresponding transition function $\delta_2$ of ${\cal E}_2$ such that $\delta_2(f_Q(q),f_B(a)) = f_Q(\delta_1(q,a))$, and finally $f_\gamma: \gamma_1 \rightarrow \gamma_2$ maps the output function of ${\cal E}_1$ into the corresponding output function of ${\cal E}_2$ such that $\gamma_2(f_Q(q),f_P(p)) = f_Q(\gamma(q,p))$. 

\end{definition}

We will define below more precisely the notion of {\em bisimulation} that specifies when a state-based environment ${\cal E}_2$ can faithfully simulate another state-based environment ${\cal E}_1$ (and equivalently when one diversity-based environment ${\cal D}_1$ can be simulated by another environment ${\cal D}_2$). Here, we want to briefly describe how the adjoint functors can be implemented as well, so that a diversity-based environment ${\cal D}$ can simulate a state-based environment. Note from the example of the $n$-bit register in Figure~\ref{regenv}, the functor mapping a state-based environment to a diversity based environment might be ``lossy" depending on whether an adequate set of tests are used. For example, if all the diversity-based environment cares about is whether the leftmost bit is a $1$ or $0$, it might be sufficient to use tests that are adequate to predict that, but not sufficient to predict the entire state-based automaton's transitions. However, given sufficient number of tests, which as was shown above might be exponentially larger (or smaller) than the number of states, the adjoint functors can be made faithful in both directions.   Given a state-based representation of an environment ${\cal E}_1$, the following result is easy to show. 

\begin{theorem}
    To simulate a state-based environment ${\cal E}$, which means to be able to predict the results of running every test $t \in T$ in ${\cal E}$, it is sufficient to build a diversity-based environment ${\cal D}$, and for each equivalence class of tests $[t]$ represented as a vertex in the diversity-based environment ${\cal D}$, the value $q t$ at the current state $q$ of ${\cal E}$. 
\end{theorem}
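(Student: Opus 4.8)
The plan is to exhibit the pair consisting of the diversity automaton $\mathcal{D}$ together with the current assignment of truth values to its vertices as a complete, self-contained surrogate for $\mathcal{E}$: complete in that it answers every query ``does test $t$ succeed now?'', and self-contained in that it can be updated under any action without reference to the hidden state $q$. The entire argument rests on one lemma, namely that the test-equivalence relation $\equiv$ is a \emph{left congruence} for concatenation by actions; everything else is bookkeeping.

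\textbf{Step 1 (congruence lemma).} First I would show that for every $b \in B$ and all tests $t_1, t_2 \in T$, $t_1 \equiv t_2$ implies $b t_1 \equiv b t_2$. This is immediate: $t_1 \equiv t_2$ means $q' t_1 = q' t_2$ for every state $q'$, and evaluating this at $q' = q b$ gives $q(b t_1) = (qb) t_1 = (qb) t_2 = q(b t_2)$ for all $q$, i.e. $b t_1 \equiv b t_2$; moreover $b t = (b a) p \in B^{*} P = T$ whenever $t = a p \in T$, so we stay inside $T$. Hence $\partial_b([t]) := [b t]$ is a well-defined endofunction of the finite vertex set $D(\mathcal{E}) = \{[t] : t \in T\}$, and this is exactly the edge/update structure carried by $\mathcal{D}$; extending multiplicatively, $\partial_{b_1 \cdots b_k} = \partial_{b_1} \circ \cdots \circ \partial_{b_k}$, so every action string $a \in A = B^{*}$ acts on vertices.

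\textbf{Steps 2--3 (local update and query answering).} Define the configuration at state $q$ to be $\kappa_q : D(\mathcal{E}) \to \{\mathbf{True},\mathbf{False}\}$ with $\kappa_q([t]) := q t$ — well-defined precisely because $\equiv$ preserves the value at each fixed state. I would then prove $\kappa_{q b} = \kappa_q \circ \partial_b$, since $\kappa_{qb}([t]) = (qb)t = q(bt) = \kappa_q([bt]) = \kappa_q(\partial_b([t]))$; so each action updates the configuration by precomposition with the \emph{fixed} relabelling $\partial_b$ read off from $\mathcal{D}$, using nothing about $q$, and by induction $\kappa_{q a} = \kappa_q \circ \partial_a$ for all $a \in A$. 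Now any test is $t = a p$ with $a \in A$, $p \in P$; regarding $p$ as the test with empty action prefix, $[p]$ is one of the (at most $|P|$) distinguished vertices of $\mathcal{D}$, and $q t = \kappa_q([t]) = \kappa_q(\partial_a([p]))$, computed purely inside $(\mathcal{D},\kappa_q)$ by walking the update graph from $[p]$ along $a$ and reading the resulting vertex's truth value. The outcome of $t$ after executing a further action string $a'$, namely $(q a') t = q(a' t) = \kappa_q(\partial_{a'}(\partial_a([p])))$, is obtained the same way (equivalently, by updating the configuration along $a'$ via Step 2 and then reading $[p]$). Since $\mathcal{E}$ is reduced, $\kappa_q$ in fact determines $q$, while $\mathcal{D}$ has only $D(\mathcal{E})$ vertices, which by the preceding bound $\log_2|Q| \leq D(\mathcal{E}) \leq 2^{|Q|}$ may be exponentially smaller than $|Q|$.

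\textbf{Main obstacle.} The only real content is Step~1: establishing that $\equiv$ descends to a genuine action of $B$, hence of $B^{*}$, on the finite set of equivalence classes, so that ``update'' depends on the class rather than on the representative string. Once that well-definedness is secured, Steps~2--3 are routine; the subtlety they quietly handle is that $T$ is infinite while $D(\mathcal{E})$ is finite, so the simulator stores the update graph plus one bit per vertex and evaluates arbitrarily long test strings by graph traversal rather than tabulating $T$. I would also note that parsing a query string into its action prefix and terminal predicate and locating the vertex $[p]$ is part of the data of $\mathcal{D}$ and should be made explicit, though it presents no mathematical difficulty.
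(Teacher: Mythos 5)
Your proposal is correct and follows essentially the same route as the paper's proof, whose entire content is the identity $(qb)t = q(bt)$ together with the fact that $bt$ lies in a unique equivalence class $[s]$ with a $b$-labelled edge to $[t]$ in the update graph. Your Step~1 (the left-congruence lemma making $[t]\mapsto[bt]$ well defined) is exactly the fact the paper's phrase ``the unique equivalence class $[s]$'' silently relies on, so you have merely made the paper's two-sentence argument explicit and airtight.
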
 

To prove this, note that if the state-based automaton moves from state $q$ to the state $qb$ for some input $b$, the diversity-based automaton needs to compute $(q b)t = q (bt)$ for each equivalence class of tests $[t]$. The test $bt$ belongs to the unique equivalence class $[s]$ for which an edge labeled $b$ is directed from $[s]$ to $[t]$ in the diversity-based automata representation.  We can use this insight to define the category of diversity-based environments. 

\begin{definition}
    The category {\cal C}$_{D}$ of {\em diversity-based}  environments is defined by as a collection of objects, where each object is a diversity-based automata ${\cal D} = (T, B, \omega, \alpha)$, where $T$ is a set of {\em tests} comprised of an action sequence $a \in A$ followed by a predicate symbol $p \in P$, $B$ is a set of input ``action" symbols, $\omega: T \times B \rightarrow T$ is the test transition function, and $\alpha: T \rightarrow \{ {\bf True}, {\bf False} \}$ maps each equivalence class of tests into its value ${\bf True}$ or ${\bf False}$ in the current state. Given a pair of environments ${\cal D}_1, {\cal D}_2$ in ${\cal C}_D$, each morphism in ${\cal C}_D({\cal D}_1, {\cal D}_2)$ is defined by a set of functions $f_T: T_1 \rightarrow T_2$ that maps each test $t$ of ${\cal D}_1$ into the corresponding test $f_T(t)$ of ${\cal D}_2$, $f_B: B_1 \rightarrow B_2$ maps each input action symbol $a$  of ${\cal D}_1$ into the corresponding action symbol $f_B(a)$ of ${\cal D}_2$, $f_\omega: \omega_1 \rightarrow \omega_2$ maps the test transition function of ${\cal D}_1$ into the corresponding test transition function ${\cal D}_2$ such that $\omega_2(f_T(t), f_B(a)) = f_T(\omega_1(t,a))$, and finally $f_\alpha: \alpha_1 \rightarrow \alpha_2$ maps the output function for ${\cal D}_1$ into that for ${\cal D}_2$ such that $\alpha_2(f_T(t)) = f_\alpha(\alpha_1(t))$
\end{definition}

Finally, we can define the category of coalgebras for diversity-based environments using the above definitions. 

\begin{definition}
The category ${\cal C}_{DA}$ of coalgebras of diversity-based environments is defined as a collection of objects, where each object is defined as a coalgebra $({\cal D}, \beta_{DA})$ where ${\cal D} = (T, B, \omega, \alpha)$ is a diversity-based environment,  $F_{DA}$ is an endofunctor on diversity-based environments defined by $F_{DA}: T \rightarrow \{{\bf True}, {\bf False} \}^T \times T^B$ that simulates a state-based environment on any input symbol,  and each morphism in ${\cal C}_{DA}$ is defined as $F_{DA}: ({\cal D}_1, \beta^1_{DA}) \rightarrow ({\cal D}_2, \beta^2_{DA})$ such that the following diagram commutes for any morphism $f_{DA}$ over diversity-based environments that maps the environment ${\cal D}_1$ into ${\cal D}_2$:

\begin{center}
\begin{tikzcd}
  {\cal D}_1 \arrow[r, "f"] \arrow[d, "F_{DA}"]
    & {\cal D}_2 \arrow[d, "F_{DA}" ] \\
  F_{DA}({\cal D}_1) \arrow[r,  "F_{DA}(f)"]
& F_{DA}({\cal D}_2)
\end{tikzcd}
\end{center}
\end{definition}

We want to introduce a {\em topological} perspective here, which will become important in later Sections. Essentially, modeling an environment in terms of its diversity is tantamount to defining a topology on the state space of a finite-state machine, where each (open) set represents the set of states where a particular equivalence class of tests succeeds. An automaton is reduced if it defines a {\em Hausdorff} topology, that is, for any pair of elements $s$ and $r$, there is an open set that contains one but not the other. By interpreting the diversity representation as defining a Hausdorff topology, we can bring to bear sophisticated methods to analyze finite state automata using some interesting techniques from category theory combined with topology.

\subsection{Dynamic Games over Universal Coalgebras: from induction to coinduction}

Intuitively, algebras represent generic ways to combine entities to form new entities, whereas coalgebras define systems that capture (hidden) state, and enable representing circularly defined infinite data streams. Algebras give rise to {\em constructors}, whereas coalgebras give rise to {\em destructors} and {\em accessors} to obtain partial views of hidden state.  Table~\ref{univ-algebra-and-coalgebra-table} summarizes some of the key differences between universal algebras and universal coalgebras, which we will now describe in more detail below. Every notion in algebras, such as congruence, has a corresponding notion in coalgebras, in particular {\em bisimulation} will prove to be a central concept in the use of coalgebras to model dynamical systems in AI and ML. A key reason for using coalgebras in AI and ML is that they are extremely useful in modeling infinite data streams, as well as dynamical systems that capture a wide range of practical real-world problems that involve hidden state. 

\begin{table}[!t]
\caption{Universal Algebras vs. Universal Coalgebras}
\label{univ-algebra-and-coalgebra-table}       
%
%
\begin{center}
\begin{tabular}{ | l | r|}  \hline 
 {\bf Algebra} & {\bf Coalgebra} \\ \hline 
 $\Sigma$-algebra & Coalgebra = system \\  \hline 
Algebra Homomorphism \ \ \ & \ \ \ System homomorphism \\ \hline 
Congruence & Bisimulation \\ \hline 
Subalgebra & Subsystem \\ \hline 
Initial algebra & Initial system \\ \hline 
Final algebra & Final system \\ \hline 
Minimal algebra & Minimal subsystem \\  \hline 
\end{tabular}
\end{center}
\end{table}

Most of us are familiar with the concept of algebra from our high school education, but for our purposes, we need a more abstract category-theoretic description of universal algebras. Intuitively, algebras specify how entities are combined to form new entities, such as the famous equation discovered by Albert Einstein linking energy $e$, the mass of an object $m$, and the speed of light $c^2$: 

\[ e = m c^2\]

In an algebra, we combine quantities by multiplication, exponentiation, addition and so on. More abstractly, we can define some space $X$ of entities, which in general could correspond to a given domain of discourse. In relativity theory, $X$ might represent $e, m$ and $c$, as well as any other variables that are necessary. In a finite state machine, $X$ may represent the set of states. An algebra can be viewed as a mapping $F(X) \rightarrow X$, which should be interpreted as meaning that the function $F$ combines the objects in the space $X$ to produce another object $F(X)$ in $X$. In contrast, coalgebras are specified by the mapping $X \rightarrow F(X)$, where the major difference is that the right-hand term $F(X)$ may indeed contain $X$ again, which leads to a circular definition. This circularity is in fact intentional, as it is the route towards modeling infinite data streams. 

In other words, algebras can be viewed as sets equipped with a set of basic operations. For example, in the classic model of inductive inference studied by Gold, Solomonofff and others, ML is theoretically modeled as a problem in which the learner is given a sequence of examples of a function defined over the natural numbers $\mathbb{N}$. We can define the natural numbers as the algebra 

\[ \{\mathbb{N}, 0, {\bf succ} \}, \ \ \, \mathbb{N} = \{0, 1, 2, \ldots \}, \ \ \, 0 \in \mathbb{N}, \ \ \ {\bf succ}: \mathbb{N} \rightarrow \mathbb{N} \]

defined as the set $\mathbb{N}$ of the natural numbers, the constant $0$, and the successor function {\bf succ}$(n) = n+1, \ n \geq 0$. More succinctly, we can define the algebra of the natural numbers as the pair of functions:

\[ [{\bf zero}, {\bf succ}]: ({\bf 1} + \mathbb{N}) \rightarrow \mathbb{N} \]

where {\bf 1} $= \{ * \}$ is the single point set, {\bf 1}$+ \mathbb{N}$ is to be interpreted as a {coproduct}, or disjoint set union, of {\bf 1} and $\mathbb{N}$, and the two functions that are packaged into one unit are each defined as follows:

\begin{eqnarray}
    {\bf zero}: {\bf 1} &\rightarrow& \mathbb{N}, \ \ \ {\bf succ}: \mathbb{N} \rightarrow \mathbb{N} \\
    {\bf zero}(*) &=& 0, \ \ \ {\bf succ}(n) = n+1 
\end{eqnarray}

We now give a more abstract category-theoretic definition of an algebra defined by some functor $F$, which will prove invaluable in this paper for modeling coinductive inference over a diverse range of settings in ML. 

\begin{definition}
Let $F: {\cal C} \rightarrow {\cal C}$ be an endofunctor (mapping a category to itself). An {\bf $F$-algebra} is defined as a pair $(A, \alpha)$ consisting of an object $A$ and an arrow $\alpha: F(A) \rightarrow A$. It is common to denote $A$ as the {\em carrier} of the algebra, $F$ as the {\em type} and $\alpha$ as the {\em structure map} of the algebra $(A, \alpha)$. 
\end{definition}

In any category ${\cal C}$, an {\em initial object} $c$ is one such that for any other object $d$, there is a unique arrow $f: c \rightarrow d$. For example, in the category {\bf Set}, the empty set $\emptyset$ is the initial object. Similarly, in any category ${\cal C}$, the {\em final object} $d$ is one such that for any object $c$, there is a unique arrow $f: c \rightarrow d$. Again, for the category {\bf Set}, the final object is defined by the single point set $d = \{ * \}$, which we have simply denoted by the symbol "*". We can also define a category of $F$-algebras by defining the {\em homomorphism} or arrow between two algebras: 

\begin{definition}
Let $F: {\cal C} \rightarrow {\cal C}$ be an endofunctor. A {\em homomorphism} of $F$-algebras $(A, \alpha)$ and $(B, \beta)$ is an arrow $f: A \rightarrow B$ in the category ${\cal C}$ such that the following diagram commutes:
\begin{center}
\begin{tikzcd}
  F(A) \arrow[r, "F(f)"] \arrow[d, "\alpha"]
    & F(B) \arrow[d, "\beta" ] \\
  A \arrow[r,  "f"]
& B
\end{tikzcd}
\end{center}
\end{definition}

Notice that in the above definition, it was crucial for $F$ to be defined as a functor, since it maps not only the objects $A$ and $B$ to $F(A)$ and $F(B)$, but it also maps the arrow $f$ to $F(f)$. A major theme of this paper is to view AI and ML systems {\em functorially} by defining functors that act in this manner. The use of commutative diagrams such as the one above will also occur repeatedly in this paper, and is often referred to as ``diagram chasing" in the category theory literature. 

We now turn to describing coalgebras, a much less familiar construct that will play a central role in the proposed ML framework of coinductive inference. Coalgebras capture hidden state, and enable modeling infinite data streams. Recall that in the previous Section, we explored non-well-founded sets, such as the set $\Omega = \{ \Omega \}$, which gives rise to a circularly defined object. As another example, consider the infinite data stream comprised of a sequence of objects, indexed by the natural numbers: 

\[ X = (X_0, X_1, \ldots, X_n, \ldots ) \]

We can define this infinite data stream as a coalgebra, comprised of an accessor function {\bf head} that returns the head of the list, and a destructor function that gives the {\bf tail}  of the list, as we will show in detail below. 

To take another example, consider a deterministic finite state machine model defined as the tuple $M = (X, A, \delta)$, where $X$ is the set of possible states that the machine might be in, $A$ is a set of input symbols that cause the machine to transition from one state to another, and $\delta: X \times A \rightarrow X$ specifies the transition function. To give a coalgebraic definition of a finite state machine, we note that we can define a functor $F: X \rightarrow {\cal P}(A \times X)$ that maps any given state $x \in X$ to the subset of possible future states $y$ that the machine might transition to for any given input symbol $a \in A$. 

We can now formally define $F$-coalgebras analogous to the definition of $F$-algebras given above. 

\begin{definition}
Let $F: {\cal C} \rightarrow {\cal C}$ be an endofunctor on the category ${\cal C}$. An {\bf $F$-coalgebra} is defined as a pair $(A, \alpha)$ comprised of an object $A$ and an arrow $\alpha: A \rightarrow F(A)$. 
\end{definition}

The fundamental difference between an algebra and a coalgebra is that the structure map is reversed! This might seem to be a minor distinction, but it makes a tremendous difference in the power of coalgebras to model state and capture dynamical systems. Let us use this definition to capture infinite data streams, as follows. 

\[ {\bf Str}: {\bf Set} \rightarrow {\bf Set}, \ \ \ \ \ \ {\bf Str}(X) = \mathbb{N} \times X\]

Here, {\bf Str} is defined as a functor on the category {\bf Set}, which generates a sequence of elements. Let $N^\omega$ denote the set of all infinite data streams comprised of natural numbers:

\[ N^\omega = \{ \sigma | \sigma: \mathbb{N} \rightarrow \mathbb{N} \} \]

To define the accessor function {\bf head} and destructor function {\bf tail} alluded to above, we proceed as follows: 

\begin{eqnarray}
{\bf head}&:& \mathbb{N}^\omega \rightarrow \mathbb{N} \ \ \  \ \ \ \ {\bf tail}: \mathbb{N}^\omega \rightarrow\mathbb{N}^\omega \\
{\bf head}(\sigma) &=& \sigma(0) \ \ \ \ \ \ \ {\bf tail}(\sigma) = (\sigma(1), \sigma(2), \ldots )
\end{eqnarray}

Another standard example that is often used to illustrate coalgebras, and provides a foundation for many AI and  ML applications, is that of a {\em labelled transition system}. 

\begin{definition}
    A {\bf labelled transition system} (LTS) $(S, \rightarrow_S, A)$ is defined by a set $S$ of states, a transition relation $\rightarrow_S \subseteq S \times A \times S$, and a set $A$ of labels (or equivalently, ``inputs" or ``actions"). We can define the transition from state $s$ to $s'$ under input $a$ by the transition diagram $s \xrightarrow[]{a} s'$, which is equivalent to writing $\langle s, a,  s' \rangle \in \rightarrow_S$. The ${\cal F}$-coalgebra for an LTS is defined by the functor 

    \[ {\cal F}(X) = {\cal P}(A \times X) = \{V | V \subseteq A \times X\} \]
\end{definition}

{\em Kripke} models are widely used in the study of formal models of knowledge \cite{fagin}. We can define a Kripke model as a coalgebra that defines the ``transition dynamics" of a Kripke model under the accessibility relationships $K_i$ for each agent. 

\begin{definition}
    Given a Kripke model ${\cal M} = (S, \pi, K_1, \ldots, K_n)$ defining a Kripke model, the corresponding {\bf Kripke coalgebra} can be defined by the endofunctor 

    \[ {\cal F}_{{\cal M}}(S) = {\cal P}(\{1, \ldots, N \} \times S )\]

    where each labelled edge in the Kripke model denotes the accessibility relation between possible worlds in $S$ for a particular agent $i \in \{1, \ldots, n \}$.
\end{definition}

Just as before, we can also define a category of $F$-coalgebras over any category ${\cal C}$, where each object is a coalgebra, and the morphism between two coalgebras is defined as follows, where $f: A \rightarrow B$ is any morphism in the category ${\cal C}$. 

\begin{definition}
Let $F: {\cal C} \rightarrow {\cal C}$ be an endofunctor. A {\em homomorphism} of $F$-coalgebras $(A, \alpha)$ and $(B, \beta)$ is an arrow $f: A \rightarrow B$ in the category ${\cal C}$ such that the following diagram commutes:

\begin{center}
\begin{tikzcd}
  A \arrow[r, "f"] \arrow[d, "\alpha"]
    & B \arrow[d, "\beta" ] \\
  F(A) \arrow[r,  "F(f)"]
& F(B)
\end{tikzcd}
\end{center}
\end{definition}

For example, consider two labelled transition systems $(S, A, \rightarrow_S)$ and $(T, A, \rightarrow_T)$ over the same input set $A$, which are defined by the coalgebras $(S, \alpha_S)$ and $(T, \alpha_T)$, respectively. An $F$-homomorphism $f: (S, \alpha_S) \rightarrow (T, \alpha_T)$ is a function $f: S \rightarrow T$ such that $F(f) \circ \alpha_S  = \alpha_T \circ f$. Intuitively, the meaning of a homomorphism between two labeled transition systems means that: 

\begin{itemize}
    \item For all $s' \in S$, for any transition $s \xrightarrow[]{a}_S s'$ in the first system $(S, \alpha_S)$, there must be a corresponding transition in the second system $f(s) \xrightarrow[]{a}_T f(s;)$ in the second system. 

    \item Conversely, for all $t \in T$, for any transition $t \xrightarrow[]{a}_T t'$ in the second system, there exists two states $s, s' \in S$ such that $f(s) = t, f(t) = t'$ such that $s \xrightarrow[]{a}_S s'$ in the first system. 
\end{itemize}

If we have an $F$-homomorphism $f: S \rightarrow T$ with an inverse $f^{-1}: T \rightarrow S$ that is also a $F$-homomorphism, then the two systems $S \simeq T$ are isomorphic. If the mapping $f$ is {\em injective}, we have a  {\em monomorphism}. Finally, if the mapping $f$ is a surjection, we have an {\em epimorphism}.

The analog of congruence in universal algebras is {\em bisimulation} in universal coalgebras. Intuitively, bisimulation allows us to construct a more ``abstract" representation of a dynamical system that is still faithful to the original system. We will explore many applications of the concept of bisimulation to AI and ML systems in this paper. We introduce the concept in its general setting first, and then in the next section, we will delve into concrete examples of bisimulations. 

\begin{definition}
Let $(S, \alpha_S)$ and $(T, \alpha_T)$ be two systems specified as coalgebras acting on the same category ${\cal C}$. Formally, a $F$-{\bf bisimulation} for coalgebras defined on a set-valued functor $F: {\bf Set} \rightarrow {\bf Set}$ is a relation $R \subset S \times T$ of the Cartesian product of $S$ and $T$ is a mapping $\alpha_R: R \rightarrow F(R)$ such that the projections of $R$ to $S$ and $T$ form valid $F$-homomorphisms.

\begin{center}
\begin{tikzcd}
  R \arrow[r, "\pi_1"] \arrow[d, "\alpha_R"]
    & S \arrow[d, "\alpha_S" ] \\
  F(R) \arrow[r,  "F(\pi_1)"]
& F(S)
\end{tikzcd}
\end{center}

\begin{center}
\begin{tikzcd}
  R \arrow[r, "\pi_2"] \arrow[d, "\alpha_R"]
    & T \arrow[d, "\alpha_T" ] \\
  F(R) \arrow[r,  "F(\pi_2)"]
& F(T)
\end{tikzcd}
\end{center}

Here, $\pi_1$ and $\pi_2$ are projections of the relation $R$ onto $S$ and $T$, respectively. Note the relationships in the two commutative diagrams should hold simultaneously, so that we get 

\begin{eqnarray*}
    F(\pi_1) \circ \alpha_R &=& \alpha_S \circ \pi_1 \\
    F(\pi_2) \circ \alpha_R &=& \alpha_T \circ \pi_2 
\end{eqnarray*}

Intuitively, these properties imply that we can ``run" the joint system $R$ for one step, and then project onto the component systems, which gives us the same effect as if we first project the joint system onto each component system, and then run the component systems. More concretely, for two labeled transition systems that were considered above as an example of an $F$-homomorphism, an $F$-bisimulation between $(S, \alpha_S)$ and $(T, \alpha_T)$ means that  there exists a relation $R \subset S \times T$ that satisfies for all $\langle s, t \rangle \in R$

\begin{itemize}
    \item For all $s' \in S$, for any transition $s \xrightarrow[]{a}_S s'$ in the first system $(S, \alpha_S)$, there must be a corresponding transition in the second system $f(s) \xrightarrow[]{a}_T f(s;)$ in the second system, so that $\langle s', t' \rangle \in R$

    \item Conversely, for all $t \in T$, for any transition $t \xrightarrow[]{a}_T t'$ in the second system, there exists two states $s, s' \in S$ such that $f(s) = t, f(t) = t'$ such that $s \xrightarrow[]{a}_S s'$ in the first system, and $\langle s', t' \rangle \in R$.
\end{itemize}

\end{definition}

There are a number of basic properties about bisimulations, which we will not prove, but are useful to summarize here: 

\begin{itemize}
    \item If $(R, \alpha_R)$ is a bisimulation between systems $S$ and $T$, the inverse $R^{-1}$ of $R$ is a bisimulation between systems $T$ and $S$. 

    \item Two homomorphisms $f: T \rightarrow S$ and $g:T \rightarrow U$ with a common domain $T$ define a {\em span}. The {\em image} of the span $\langle f, g \rangle(T) = \{ \langle f(t), g(t) \rangle | t \in T \}$ of $f$ and $g$ is also a bisimulation between $S$ and $U$. 

    \item The composition $R \circ Q$ of two bisimulations $R \subseteq S \times T$ and $Q \subseteq T \times U$ is a bisimulation between $S$ and $U$. 

    \item The union $\cup_k R_k$ of a family of bisimulations between $S$ and $T$ is also a bisimulation. 

    \item The set of all bisimulations between systems $S$ and $T$ is a complete lattice, with least upper bounds and greatest lower bounds given by: 

    \[ \bigvee_k R_k = \bigcup_k R_k \]

    \[ \bigwedge_K R_k = \bigcup \{ R | R \ \mbox{is a bisimulation between} S \ \mbox{and} \ T \ \mbox{and} \ R \subseteq \cap_k R_k \} \]

    \item The kernel $K(f) = \{ \langle s, s' \rangle | f(s) = f(s') \}$ of a homomorphism $f: S \rightarrow T$ is a bisimulation equivalence.

    \end{itemize}

    As we will see in a subsequent Section, we can design a generic {\em coinductive inference} framework for systems by exploiting these properties of bisimulations. This framework will give us a broad paradigm for designing AI and ML systems that is analogous to inductive inference, which we will discuss in the next Section.

 We introduced the use of universal constructions in category theory in the previous sections. We will use these constructions now to build complex universal coalgebra systems out of simpler systems, focusing for now on $F$-coalgebras over the category of sets. All of the standard universal constructions in the category of sets straightforwardly carry over to universal coalgebras over sets.

\begin{itemize}
    \item The {\em coproduct} (or sum) of two $F$-coalgebras $(S, \alpha_S)$ and $(T, \alpha_T)$ is defined as follows. Let $i_S: S \rightarrow (S + T)$ and $i_T: T \rightarrow (S + T)$ be the injections (or monomorphisms) of the sets $S$ and $T$, respectively, into their coproduct (or disjoint union) $S + T$. From the universal property of coproducts, defined in the previous Section, it follows that there is a unique function $\gamma: (S + T) \rightarrow F(S+T)$ such that $i_S$ and $i_T$ are both homomorphisms: 

    \begin{center}
\begin{tikzcd}
  S \arrow[r, "i_S"] \arrow[d, "\alpha_S"]
    & S+T \arrow[d, "\gamma" ] \\
  F(S) \arrow[r,  "F(i_S)"]
& F(S+T)
\end{tikzcd}

\begin{tikzcd}
  T \arrow[r, "i_T"] \arrow[d, "\alpha_T"]
    & S+T \arrow[d, "\gamma" ] \\
  F(R) \arrow[r,  "F(i_T)"]
& F(T)
\end{tikzcd}
\end{center}

\item To construct the {\em co-equalizer} of two homomorphisms $f: (S \alpha_S) \rightarrow (T, \alpha_T)$ and $g: (S, \alpha_S) \rightarrow (T, \alpha_T)$, we need to define a system $(U, \alpha_U)$ and a homomorphism $h: (T, \alpha_T) \rightarrow (U, \alpha_T)$ such that $h \circ f = h \circ g$, and for every homomorphism $h': (T, \alpha_T) \rightarrow (U', \alpha_{U'})$ such that $h' \circ f = h' \circ g$, there exists a unique homomorphism $l: (U, \alpha_U) \rightarrow (U', \alpha_{U'})$ with the property that $h'$ factors uniquely as $h' = l \circ h$. Since $f$ and $g$ are by assumption set-valued functions $f: S \rightarrow T$ and $g: S \rightarrow T$, we know that there exists a coequalizer $h: T \rightarrow U$ in the category of sets (by the property that coequalizers exist in {\bf Sets}). If we define $F(h) \circ \alpha_T: T \rightarrow F(U)$, then since

\begin{eqnarray*}
    F(h) \circ \alpha_T \circ f &=& F(h) \circ F(f) \circ \alpha_S \\
    &=& F(h \circ f) \circ \alpha_S \\
    &=& F(h \circ g) \alpha_S \\
    &=& F(h) \circ F(g) \circ \alpha_S \\
    &=& F(h) \circ \alpha_T \circ g
\end{eqnarray*}

and given $h: T \rightarrow U$ is a coequalizer, there must exist a unique function $\alpha_U: U \rightarrow F(U)$ with the property we desire.

\item There is a more general way to establish the existence of coproducts, coequalizers, pullbacks, and in general colimits by defining a {\em forgetful functor} $U: Set_F \rightarrow Set$, which sends a coalgebra system to its underlying set $U(S, \alpha_S) = S$, and sends the $F$-homomorphism $f: (S, \alpha_S) \rightarrow (T, \alpha_T)$ to the underlying set-valued function $f: S \rightarrow T$. We can apply a general result on the creation of colimits as follows: 

\begin{theorem}
    The forgetful functor $U: Set_F \rightarrow F$ creates colimits, which means that any type of colimit in $Set_F$ exists, and can be obtained by constructing the colimit in Set and then defining for it in a unique way an $F$-transition dynamics. 
\end{theorem}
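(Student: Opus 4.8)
\medskip
\noindent\textbf{Proof plan.} The idea is to build the colimit in $\mathbf{Set}_F$ directly on top of the colimit in $\mathbf{Set}$: take the colimit of the underlying diagram, then use the universal property of that colimit, together with functoriality of $F$, to manufacture a unique transition structure making all the colimit injections into coalgebra homomorphisms. Nothing specific to $\mathbf{Set}$ is used beyond the existence of the colimit in question, so the same argument shows that for any category $\mathcal{C}$ and any endofunctor $F$, the forgetful functor $\mathcal{C}_F \to \mathcal{C}$ creates whatever $\mathcal{C}$-colimits exist; this is the coalgebraic mirror image of the fact that the forgetful functor from $F$-algebras creates limits.

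Concretely, let $D\colon J \to \mathbf{Set}_F$ be a diagram with $D(j) = (S_j,\alpha_j)$ and each $D(f)\colon (S_j,\alpha_j)\to(S_{j'},\alpha_{j'})$ a homomorphism for $f\colon j\to j'$ in $J$. Form the colimit $S = \colim_j S_j$ in $\mathbf{Set}$ with colimiting cocone $(\iota_j\colon S_j \to S)_{j}$; recall these maps are jointly epimorphic. The one load-bearing observation is that the family $\bigl(F(\iota_j)\circ\alpha_j \colon S_j \to F(S)\bigr)_{j\in J}$ is again a cocone over $UD$: for $f\colon j\to j'$ the homomorphism condition gives $\alpha_{j'}\circ D(f) = F(D(f))\circ\alpha_j$, the cocone condition gives $\iota_{j'}\circ D(f) = \iota_j$, hence $F(\iota_{j'})\circ F(D(f)) = F(\iota_j)$ by functoriality, and combining these yields $F(\iota_{j'})\circ\alpha_{j'}\circ D(f) = F(\iota_j)\circ\alpha_j$. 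The universal property of $S$ then produces a unique arrow $\alpha\colon S \to F(S)$ with $\alpha\circ\iota_j = F(\iota_j)\circ\alpha_j$ for every $j$, and this equation is exactly the statement that each $\iota_j$ is an $F$-coalgebra homomorphism $(S_j,\alpha_j)\to(S,\alpha)$. Thus $\alpha$ is the unique transition dynamics on $S$ lifting the $\mathbf{Set}$-colimit cocone.

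Next I would check that $(S,\alpha)$ with the cocone $(\iota_j)$ is actually a colimit in $\mathbf{Set}_F$. Given any $F$-coalgebra $(T,\beta)$ and a cocone of homomorphisms $g_j\colon (S_j,\alpha_j)\to(T,\beta)$, the underlying functions form a cocone in $\mathbf{Set}$, so there is a unique $g\colon S\to T$ with $g\circ\iota_j = g_j$; to see $g$ is a homomorphism one compares $F(g)\circ\alpha$ and $\beta\circ g$ after precomposition with each $\iota_j$, using $\alpha\circ\iota_j = F(\iota_j)\circ\alpha_j$, functoriality, and the homomorphism property of $g_j$, and concludes by joint epimorphness of the $\iota_j$. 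Uniqueness of $g$ among homomorphisms is inherited from uniqueness of its underlying function. Finally, $U$ reflects colimits: if a cocone $(c_j\colon D(j)\to(C,\gamma))$ in $\mathbf{Set}_F$ has colimiting underlying cocone in $\mathbf{Set}$, the induced bijection $\theta\colon S \to C$ with $\theta\circ\iota_j = c_j$ is a coalgebra isomorphism (one checks $F(\theta)\circ\alpha = \gamma\circ\theta$ by precomposing with the $\iota_j$), so $(C,\gamma)$ is a colimit in $\mathbf{Set}_F$ as well. The coproduct and coequalizer constructions displayed immediately before the theorem are the instances $J$ discrete and $J = (\bullet \rightrightarrows \bullet)$ of this general recipe.

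There is no real obstacle here: all the content sits in the cocone observation of the second paragraph, where functoriality of $F$ meets the homomorphism condition on the morphisms of $D$. The only thing that demands attention is matching the three clauses of ``creates colimits'' from the definition recalled earlier in the excerpt --- existence of a liftable colimiting cocone, that the lift is itself colimiting, and reflection --- rather than proving a single isolated statement; and, as a point of contrast worth flagging, the dual statement for \emph{limits} requires $F$ to additionally preserve the limits in question, which is precisely why coalgebras are the natural home for the ``generative'' constructions emphasized in this paper.
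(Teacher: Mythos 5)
Your proof is correct, and it is the standard argument: build the colimit of the underlying diagram in $\mathbf{Set}$, observe that $\bigl(F(\iota_j)\circ\alpha_j\bigr)_j$ is a cocone so the universal property forces a unique structure map $\alpha$, and then verify the lifted cocone is colimiting and that $U$ reflects colimits. The paper itself states this theorem without proof, but the coproduct and coequalizer constructions it works out immediately beforehand use exactly this universal-property manoeuvre; your argument is the uniform generalization of those two special cases, so the approach matches what the paper intends (note the paper's codomain ``$U: Set_F \rightarrow F$'' is a typo for $\mathbf{Set}$, which you have silently and correctly repaired).
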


\item For pullbacks and limits, the situation is a bit more subtle. If $F: Set \rightarrow Set$ preserves pullbacks, then pullbacks in Set$_F$ can be constructed as follows. Let $f: (S, \alpha_S) \rightarrow (S, \alpha_T)$ and $g: (S, \alpha_S) \rightarrow (T, \alpha_T)$ be homomorphisms. Define the pullback of $f$ and $g$ in Set as follows, and extend that to get a pullback of $F(f)$: 

   \begin{center}
\begin{tikzcd}
  P \arrow[r, "\pi_1"] \arrow[d, "\pi_2"]
    & S \arrow[d, "f" ] \\
  U \arrow[r,  "g"]
& T
\end{tikzcd}

\begin{tikzcd}
  F(P) \arrow[r, "F(\pi_1)"] \arrow[d, "F(\pi_2)"]
    & F(S) \arrow[d, "F(f)" ] \\
  F(U) \arrow[r,  "F(g)"]
& F(T)
\end{tikzcd}

\end{center}

It can be shown that the pullback of two homomorphisms is a bisimulation even in the case that $F$ only preserves {\em weak} pullbacks (recall that in defining the pullback universal construction, it was necessary to assert a uniqueness condition, which is weakened to an existence condition in weak pullbacks). 

\end{itemize}

\subsection{Lambek's Theorem and Final Coalgebras: A Generalization of (Greatest) Fixed Points}

Let us illustrate the concept of final coalgebras defined by a functor that represents a monotone function over the category defined by a preorder $(S, \leq)$, where $S$ is a set and $\leq$ is a relation that is reflexive and transitive. That is, $a \leq a, \forall a \in S$, and if $a \leq b$, and $b \leq c$, for $a, b, c \in S$, then $a \leq c$. Note that we can consider $(S, \leq)$ as a category, where the objects are defined as the elements of $S$ and if $a \leq b$, then there is a unique arrow $a \rightarrow b$. 

Let us define a functor $F$ on a preordered set $(S, \leq)$ as any monotone mapping $F: S \rightarrow S$,  so that if $a \leq b$, then $F(a) \leq F(b)$. Now, we can define an $F$-algebra as any {\em pre-fixed point} $x \in S$ such that $F(x) \leq x$. Similarly, we can define any {\em post-fixed point} to be any $x \in S$ such that $x \leq F(x)$. Finally, we can define the {\em final $F$-coalgebra} to be the {\em greatest post-fixed point} $x \leq F(x)$, and analogously, the {\em initial $F$-algebra} to be the least pre-fixed point of $F$. 

In this section, we give a detailed overview of the concept of {\em final coalgebras} in the category of coalgebras parameterized by some endofunctor $F$. This fundamental notion plays a central role in the application of universal coalgebras to model a diverse range of AI and ML systems. Final coalgebras generalize the concept of (greatest) fixed points in many areas of application in AI, including causal inference, game theory and network economics, optimization, and reinforcement learning among others. The final coalgebra, simply put, is just the final object in the category of coalgebras. From the universal property of final objects, it follows that for any other object in the category, there must be a unique morphism to the final object. This simple property has significant consequences in applications of the coalgebraic formalism to AI and ML, as we will see throughout this paper. 

An $F$-system $(P, \pi)$ is termed {\bf final} if for another $F$-system $(S, \alpha_S)$, there exists a unique homomorphism $f_S: (S, \alpha_S) \rightarrow (P, \pi)$. That is, $(P, \pi)$ is the terminal object in the category of coalgebras $Set_F$ defined by some set-valued endofunctor $F$. Since the terminal object in a category is unique up to isomorphism, any two final systems must be isomorphic. 

\begin{definition}
    An $F$-coalgebra $(A, \alpha)$ is a {\em fixed point} for $F$, written as $A \simeq F(A)$ if $\alpha$ is an isomorphism between $A$ and $F(A)$. That is, not only does there exist an arrow $A \rightarrow F(A)$ by virtue of the coalgebra $\alpha$, but there also exists its inverse $\alpha^{-1}: F(A) \rightarrow A$ such that 

    \[ \alpha \circ \alpha^{-1} = \mbox{{\bf id}}_{F(A)} \ \ \mbox{and} \ \  \alpha^{-1} \circ \alpha = \mbox{{\bf id}}_A \]
\end{definition}

The following lemma was shown by Lambek, and implies that the transition structure of a final coalgebra is an isomorphism. 

\begin{theorem}
    {\bf Lambek:} A final $F$-coalgebra is a fixed point of the endofunctor $F$. 
\end{theorem}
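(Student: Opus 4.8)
The plan is to exhibit the inverse of the structure map $\alpha : P \to F(P)$ of a final $F$-coalgebra $(P,\alpha)$ by exploiting finality twice. First I would observe that applying the functor $F$ to the coalgebra $(P,\alpha)$ yields a new $F$-coalgebra, namely $(F(P), F(\alpha))$, with structure map $F(\alpha) : F(P) \to F(F(P))$. Since $(P,\alpha)$ is final, there is a unique $F$-coalgebra homomorphism $g : (F(P), F(\alpha)) \to (P,\alpha)$; concretely, $g : F(P) \to P$ is a morphism in ${\cal C}$ making the square
\begin{center}
\begin{tikzcd}
  F(P) \arrow[r, "g"] \arrow[d, "F(\alpha)"]
    & P \arrow[d, "\alpha" ] \\
  F(F(P)) \arrow[r,  "F(g)"]
& F(P)
\end{tikzcd}
\end{center}
commute, i.e.\ $\alpha \circ g = F(g) \circ F(\alpha) = F(g \circ \alpha)$. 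The claim will be that $g = \alpha^{-1}$.

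Next I would verify one composite. From the commuting square above, $\alpha \circ g = F(g \circ \alpha)$. Now consider $g \circ \alpha : P \to P$. I would show $g \circ \alpha$ is an $F$-coalgebra endomorphism of $(P,\alpha)$: indeed, $\alpha \circ (g \circ \alpha) = (\alpha \circ g)\circ \alpha = F(g\circ\alpha)\circ\alpha$, which is exactly the condition that $g\circ\alpha$ is a homomorphism $(P,\alpha)\to(P,\alpha)$. But the identity ${\bf id}_P$ is also such an endomorphism, and by finality of $(P,\alpha)$ the homomorphism $(P,\alpha)\to(P,\alpha)$ is unique; hence $g \circ \alpha = {\bf id}_P$.

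Finally I would obtain the other composite essentially for free: from $g \circ \alpha = {\bf id}_P$ we get $F(g) \circ F(\alpha) = F({\bf id}_P) = {\bf id}_{F(P)}$, and substituting into the commuting square gives $\alpha \circ g = F(g)\circ F(\alpha) = {\bf id}_{F(P)}$. Therefore $\alpha \circ g = {\bf id}_{F(P)}$ and $g \circ \alpha = {\bf id}_P$, so $\alpha$ is an isomorphism with inverse $g$, and $(P,\alpha)$ is a fixed point $P \simeq F(P)$ as claimed. The only real subtlety — the ``main obstacle'' such as it is — is making sure the uniqueness clause of finality is invoked correctly: one must check that both ${\bf id}_P$ and $g\circ\alpha$ genuinely are $F$-coalgebra homomorphisms from $(P,\alpha)$ to itself before concluding they coincide, and that the homomorphism $g$ used at the start is the one out of $(F(P),F(\alpha))$, not into it. Everything else is routine diagram chasing using only functoriality of $F$ and the universal property of the terminal object.
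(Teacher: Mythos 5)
Your proof is correct and follows essentially the same route as the paper's: you form the coalgebra $(F(P),F(\alpha))$, take the unique homomorphism $g$ into the final coalgebra, use uniqueness of endomorphisms of $(P,\alpha)$ to get $g\circ\alpha=\mathbf{id}_P$, and then apply functoriality to obtain $\alpha\circ g=\mathbf{id}_{F(P)}$. If anything, you are more careful than the paper in explicitly checking that $g\circ\alpha$ is a coalgebra endomorphism before invoking the uniqueness clause of finality.
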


{\bf Proof:} The proof is worth including in this paper, as it provides a classic example of the power of diagram chasing. Let $(A, \alpha)$ be a final $F$-coalgebra. Since $(F(A), F(\alpha)$ is also an $F$-coalgebra, there exists a unique morphism $f: F(A) \rightarrow A$ such that the following diagram commutes: 

\begin{tikzcd}
  F(A) \arrow[r, "f"] \arrow[d, "F(\alpha)"]
    & A \arrow[d, "\alpha" ] \\
  F(F(A)) \arrow[r,  "F(f)"]
& F(A)
\end{tikzcd}

However, by the property of finality, the only arrow from $(A, \alpha)$ into itself is the identity. We know the following diagram also commutes, by virtue of the definition of coalgebra homomorphism:

\begin{tikzcd}
  A \arrow[r, "\alpha"] \arrow[d, "\alpha"]
    & F(A) \arrow[d, "\alpha" ] \\
  F(A) \arrow[r,  "F(\alpha)"]
& F(F(A))
\end{tikzcd}

Combining the above two diagrams, it clearly follows that $f \circ \alpha$ is the identity on object $A$, and it also follows that $F(\alpha) \circ F(f)$ is the identity on $F(A)$. Therefore, it follows that: 

\[ \alpha \circ f  = F(f) \circ F(\alpha) = F(f \circ \alpha) = F(\mbox{{\bf id}}_A) = \mbox{{\bf id}}_{F(A)} \qed \]

By reversing all the arrows in the above two commutative diagrams, we get the easy duality that the initial object in an $F$-algebra is also a fixed point. 

\begin{theorem}
    {\bf Dual to Lambek}: The initial $F$-algebra $(A, \alpha)$, where $\alpha: F(A) \rightarrow A$, in the category of $F$-algebras is a fixed point of $F$. 
\end{theorem}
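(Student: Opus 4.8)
The plan is to dualize the diagram-chasing argument just given for Lambek's theorem, reversing every arrow. Let $(A,\alpha)$ with $\alpha\colon F(A)\to A$ be an initial object in the category of $F$-algebras. First I would observe that applying $F$ produces another $F$-algebra, namely $(F(A),F(\alpha))$ with structure map $F(\alpha)\colon F(F(A))\to F(A)$. By initiality of $(A,\alpha)$, there is a unique $F$-algebra homomorphism $f\colon (A,\alpha)\to (F(A),F(\alpha))$; concretely $f\colon A\to F(A)$ satisfies $f\circ\alpha = F(\alpha)\circ F(f)$, which is exactly the commuting square

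\[
\begin{tikzcd}
  F(A) \arrow[r, "F(f)"] \arrow[d, "\alpha"]
    & F(F(A)) \arrow[d, "F(\alpha)" ] \\
  A \arrow[r,  "f"]
& F(A)
\end{tikzcd}
\]

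The second ingredient is that $\alpha$ itself is an $F$-algebra homomorphism from $(F(A),F(\alpha))$ to $(A,\alpha)$, since $\alpha\circ F(\alpha)=\alpha\circ F(\alpha)$ trivially; this gives the commuting square obtained from the coalgebra-homomorphism square in the Lambek proof by reversing arrows.

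Next I would compose: $\alpha\circ f\colon A\to A$ is then an $F$-algebra homomorphism from $(A,\alpha)$ to itself. By the universal property of the initial object, the only such endomorphism is the identity, so $\alpha\circ f=\mathrm{id}_A$. For the reverse composite I would chase the first square together with functoriality of $F$: $f\circ\alpha = F(\alpha)\circ F(f) = F(\alpha\circ f) = F(\mathrm{id}_A)=\mathrm{id}_{F(A)}$. Hence $f$ is a two-sided inverse of $\alpha$, so $\alpha$ is an isomorphism and $(A,\alpha)$ is a fixed point $A\simeq F(A)$ in the sense of the definition above. $\qed$

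I do not expect any genuine obstacle here — the statement is a formal dual of Lambek's theorem and the only point requiring a moment's care is invoking the \emph{uniqueness} half of the initial-object property to conclude $\alpha\circ f=\mathrm{id}_A$ (the existence half alone is not enough). Everything else is arrow-reversal plus one application of functoriality.
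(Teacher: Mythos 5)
Your proof is correct and is exactly the arrow-reversed version of the paper's diagram chase for Lambek's theorem, which is all the paper itself offers for this dual statement (it merely remarks that the proof ``follows in the same way'' from the uniqueness of the morphism out of the initial object). You have supplied the details the paper omits, and your identification of the uniqueness half of initiality as the one nontrivial step is the right emphasis.
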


The proof of the duality follows in the same way, based on the universal property that there is a unique morphism from the initial object in a category to any other object. 

Lambek's lemma  has many implications, one of which is that final coalgebras generalize the concept of a (greatest) fixed point, which applies to many formulations of AI and ML problems, from causal inference, game theory and network economics, optimization problems, and reinforcement learning. Generally speaking, in optimization, we are looking to find a solution $x \in X$ in some space $X$ that minimizes a smooth real-valued function $f: X \rightarrow \mathbb{R}$. Since $f$ is smooth, a natural algorithm is to find its minimum by computing the gradient $\nabla f$ over the space $X$. The function achieves its minimum if $\nabla f = 0$, which can be written down as a fixed point equation. A very interesting application of universal coalgebras is to problems involving network economics and games, which can be formulated as {\em variational inequalities}.

\subsection*{Coinductive Inference over Non-Well-Founded Sets: Active Learning}

Analogous to Gold's paradigm of identification in the limit, we now define the novel paradigm of {\em coidentification in the limit}, using Figure~\ref{coindfig} as an illustrative guide. Just as in inductive inference, coinductive inference models the process of ML as a continual potentially infinite interaction between a teacher and a (machine) learner. Initially, the teacher selects a potential coalgebra from a prespecified category of coalgebras (such as finite state machines, MDPs in reinforcement learning, causal inference models etc.). The teacher then gives a {\em presentation} of a selected coalgebra to the learner. At each step, the learner outputs a hypothesized coalgebra to the teacher, representing its best guess of the teacher's coalgebra. Coidentification occurs when the learner's coalgebra is {\em isomorphic} to the teacher's coalgebra. We now give some details on the various components of the coinductive inference paradigm. The fundamental novel aspect of coinductive inference is that the structures enumerated are representations of non-well-founded sets, such as graph-based APG models, or category-theoretic universal coalgebras. Coidentification means bisimulation, which we will define more rigorously below. Analogous to inductive inference, coinductive inference involves the following components: 

\begin{enumerate} 

\item {\em Category of coalgebras:} In inductive inference, the teacher specifies a language from a pre-specified set of possible languages (e.g., all regular languages or all context-free languages). In coinductive inference, the teacher selects a category of coalgebras, which can be viewed as a functorial representation of the dynamics of machines that generate the associated languages. In the example in Figure ~\ref{versionspaces}, the category of coalgebras is defined by the hypothesis space of the set of all axis-parallel rectangles. 

\item {\em Presentation of coalgebra:} In inductive inference, having selected a target language, the teacher then decides on a particular presentation of the language to the learner, in the form of a {\em text} or a more informative presentation. In coinductive inference, the teacher must decide on a presentation of the selected coalgebra. In the example from Figure~\ref{versionspaces}, the teacher's selection of a particular rectangle from which positive and negative examples are generated is modeled as a coalgebra. Crucially, a coalgebra presentation involves not just listing pairs $(x, f(x))$ of some unknown function to be learned, like inductive inference, but crucially, it involves a presentation of a(n) (endo) functor that maps the input category to some output category. A simple way to think about this difference is that it not only involves listing pairs $(x, F(x))$, where $x$ is an object in the input category and $F(x)$ is an object in the output category, but it also invovles selecting some morphism $f: x \rightarrow y$ in the input category, and presenting its effect on the output category $F(f): F(x) \rightarrow F(y)$. 

\item {\em Output of the learner's hypothesis:} In inductive inference, the learner outputs a guess or hypothesis of the target language using some naming convention that defines the language, such as a grammar or a Turing machine. In coinductive inference, the learner outputs a candidate coalgebra that represents its best guess as to the target coalgebra selected by the teacher. As we will see below, a fundamental notion called {\em bisimulation} is used to compare coalgebras. Bisimulation is the equivalent in coalgebras what congruence represents in algebras. Two coalgebras are bisimiar if one can ``simulate" the other, intuitively every transition that can be made in one can be made in the other. For the example in Figure~\ref{versionspaces}, the learner has to guess a coalgebra such that the unique morphism from it to the final coalgebra (the teacher's coalgebra) is an injection, meaning the learner's coalgebra must be the smallest axis-parallel rectangle that correctly classifies every possible training example. In the modified PAC-framework of coinduction, the learner can guess a coalgebra that misclassifies examples, but its probability of misclassification must be bounded by $\epsilon$.

\item {\em Definition of coinduction:} In inductive inference, identification in the limit occurs precisely when at some finite point in time, the learner names the correct target language, and subsequently never changes its guess. In coinductive inference, in contrast, coidentification occurs when the learner's hypothesized coalgebra defines a monomorphism to the teacher's target coalgebra, meaning the learner's coalgebra is {\em isomorphic} to the teacher's (final) target coalgebra. 

\end{enumerate} 

Several crucial differences between inductive and coinductive inference are worth repeating. First and foremost, coinductive inference is about {\em teaching functors}, not functions. Functors map an input category into an output category, meaning not just a transformation of input objects into output objects, but also a transformation of the input morphisms into output morphisms. Second, coinductive inference includes non-well-founded sets, which are disallowed by inductive inference which takes place in the ZFC formulation of well-founded sets. Coinductive inference can be modeled simply as a process of enumerating APGs to find the one that is consistent with the teacher. In contrast, inductive inference is based on enumeration of recursively enumerable sets. In coinductive inference, it is never assumed that there is a finite point in time when the learner has converged to the correct answer. It is assumed that learning continues indefinitely, but it is assumed that there will eventually be a convergence to the final coalgebra (as a way of computing the greatest fixed point, as explained in the next section). Another fundamental difference comes from the category-theoretic nature of coinductive inference: instead of requiring equality, what is required is to find a hypothesis coalgebra that is isomorphic to the teacher's coalgebra. The notion of a minimum coalgebra is defined with respect to the unique morphism from the learner's hypothesized coalgebra to the teacher's target coalgebra.

\subsection{Dynamic Imitation Games in  Metric Spaces}

To make the somewhat abstract discussion of coinductive inference above a bit more concrete, we now briefly describe a few examples of coinductive inference algorithms that will be later discussed in more depth in the remainder of this paper. We begin with the concept of {\em metric coinduction} based on some work by Kozen \cite{kozen}. We will see in the next Section how to generalize this approach using the powerful Yoneda Lemma to generalized metric spaces. The basic idea is simple to describe, and is based on viewing algorithms as forming contraction mappings in  a metric space. The novelty here for many readers is understanding how this well-studied notion of contractive mappings is related to coinduction and coalgebras. 

Consider a complete metric space $(V, d)$ where $d: V \times V \rightarrow (0, 1)$ is a symmetric distance function that satisfies the triangle inequality, and all Cauchy sequences in $V$ converge (We will see later that the property of completeness itself follows from the Yoneda Lemma!). A function $H: V \rightarrow V$ is {\em contractive} if there exists $0 \leq c < 1$ such that for all $u, v \in V$, 

\[ d(H(u), H(v)) \leq c \cdot d(u, v) \]

In effect, applying the algorithm represented by the continuous mapping $H$ causes the distances between $u$ and $v$ to shrink, and repeated application eventually guarantees convergence to a fixed point. The novelty here is to interpret the fixed point as a final coalgebra. We will later see that the concept of a (greatest) fixed point is generalized by the concept of final coalgebras.

\begin{definition}
{\bf Metric Coinduction Principle} \cite{kozen}: If $\phi$ is a closed nonempty subset of a complete metric space $V$, and if $H$ is an eventually contractive map on $V$ that preserves $\phi$, then the unique fixed point $u^*$ of $H$ is in $\phi$. In other words, we can write: 

\begin{eqnarray*}
    \exists u \phi(u), \ \ \ \forall \phi(u) &\Rightarrow& \phi(H(u)) \\
    &\phi(u^*)& 
\end{eqnarray*}
\end{definition}

It should not be surprising to those familiar with contraction mapping style arguments that a large number of applications in AI and ML, including game theory, reinforcement learning and stochastic approximation involve proofs of convergence that exploit properties of contraction mappings. What might be less familiar is how to think of this in terms of the concept of {\em coinductive inference}. To explain this perspective briefly, let us introduce the relevant category theoretic terminology. 

Let us define a category $C$ whose objects are nonempty closed subsets of $V$, and whose arrows are reverse set inclusions. That is, there is a unique arrow $\phi_1 \rightarrow \phi_2$ if $\phi_1 \supseteq \phi_2$. Then, we can define an {\em endofunctor} $\bar{H}$ as the closure mapping ${\bar H}(\phi) = \mbox{cl}(H(\phi))$, where $\mbox{cl}$ denotes the closure in the metric topology of $V$. Note that $\bar{H}$ is an endofunctor on $C$ because $\bar{H}(\phi_1) \supseteq  \bar{H}(\phi_2)$ whenever $\phi_1 \supseteq \phi_2$. 

\begin{definition}
    An {\bf $\bar{H}$-coalgebra} is defined as the pair $(\phi, \phi \supseteq \bar{H}(\phi))$ (or equivalently, we can write $\phi \subseteq H(\phi)$. The {\bf final coalgebra} is the isomorphism $u^* \simeq \bar{H}(u^*)$ where $u^*$ is the unique fixed point of the mapping $H$. The metric coinduction rule can be restated more formally in this case as: 

    \[ \phi \supseteq H(\phi) \ \ \Rightarrow \ \ \phi \supseteq H(u^*) \]
\end{definition}

To appreciate the power of abstraction provided by universal coalgebras, we briefly state a foundational result proved by Lambek that showed that final coalgebras are isomorphisms in the category of coalgebras. This result has deep significance, as we will see later, and provides an elegant way to prove contraction style arguments in very general settings.

\subsection{Reinforcement Learning as Coinduction  over  Coalgebras}

In this section, we briefly discuss how to play dynamic imitation games with reinforcement learning (RL) \cite{bertsekas:alphazero,bertsekas:rlbook,DBLP:books/lib/SuttonB98},  which is a model of sequential decision making where an agent learns to choose actions by interacting with some environment that is modeled as a Markov decision process (MDP). This area has been extensively studied over several decades, and a full discussion is beyond the scope of this paper. However, the major difference in perspective between our paper and previous approaches is that we view RL in terms of {\em coinductive inference over universal coalgebras}, a point of view that is not yet mainstream in the RL literature. It is relatively straightforward to show that MDPs can be defined as coalgebras, as \cite{feys:hal-02044650} have elegantly shown.  In addition, it follows that MDPs can be compared in terms of MDP bisimulation metrics as a special case of the general bisimulation relationship defined on arbitrary coalgebras \cite{rutten2000universal}. A significant amount of literature in the RL community has explored bisimulation relationships between MDPs \cite{DBLP:conf/aaai/CastroP10,DBLP:conf/aaai/RuanCPP15}. A categorical treatment of bisimulation in MDPs and universal decision models was explored by us previously \cite{sm:udm}.  More specifically, the process of solving an MDP uses solution methods, such as policy iteration and value iteration, which can be shown to be special cases of the metric coinduction method that we described above \cite{feys:hal-02044650}. We only briefly mention here that this previous work can be extended to the case of using stochastic approximation  methods, instead of classical policy and value iteration. 

The convergence of RL algorithms is typically carried out using stochastic approximation \cite{rm,borkar}. At a high level, the approach is based on showing that the solution to the well-known Bellman equation 

\[ V*(s) = \max_{a \in A} \{ R(s,a) + \gamma \sum_{s'} P(s' | s,a) V^*(s') \} \]

can be solved using a Robbins-Monroe type stochastic approximation algorithm \cite{rm}, such as temporal-difference (TD)-learning \cite{DBLP:books/lib/SuttonB98}. As we previously pointed, the dynamic programming principle embodied in the Bellman equation is a special case of the general principle of sheaves that we described in Section~\ref{sheavestopoi}. The Bellman optimality principle essentially states that the shortest path of any restriction of the overall path must itself be a shortest path (otherwise a shorter path could replace it thereby reducing the length of the overall path). Thus, the sheaf condition plays an essential role in the design of both DP and RL algorithms, a perspective that is novel to the best of our knowledge. 

There is an extensive literature in RL over the past few decades analyzing the convergence properties of RL algorithms, which we will not attempt to summarize here. Abstractly, we can view an MDP as a universal coalgebra, and the process of solving a Bellman equation as finding a final coalgebra. \cite{feys:hal-02044650} gives a detailed analysis of this coalgebraic view of MDPs, including the classical dynamic programming (DP) approach to solving MDPs \cite{DBLP:books/lib/Bertsekas05}. Their work can be extended to the RL setting by suitably adapting the framework of metric coinduction \cite{kozen} to the stochastic approximation setting. We leave this extension to a subsequent paper. It is possible to define regularized RL that combines a {\em smooth} loss function with a non-smooth loss function, such as sparse regularization using $L_1$ norms \cite{mirror-prox}. We developed this approach in a framework called {\em proximal RL} \cite{mahadevan2014proximal}, which leads to provably convergent off-policy TD algorithms. A coalgebraic analysis of off-policy TD algorithms is left to a subsequent paper. We have previously shown how to define categories for RL, where objects can be MDPs or predictive state representations (PSRs) (see \cite{sm:udm} for additional details). \cite{feys:hal-02044650} additionally show how MDPs can be modeled as coalgebras. In fact, an entire family of probabilistic coalgebras can be defined for RL as described by \cite{SOKOLOVA20115095}.

We want to generalize the diversity based representation of environment described above to the stochastic setting, and for that purpose, we now introduce a specific class of stochastic coalgebras defined by Markov decision processes (MDPs) and partially observable MDPs (POMDPs). The diversity based representation of a POMDP is referred to as a predictive state representation (PSR) in the literature. 

An MDP is defined by a tuple $\langle S, A, \Psi, P, R \rangle$, where $S$ is a discrete set of states, $A$ is the discrete set of actions, $\Psi \subset S \times A$ is the set of admissible state-action pairs, $P: \Psi \times S \rightarrow [0,1]$ is the transition probability function specifying the one-step dynamics of the model, where $P(s,a,s')$ is the transition probability of moving from state $s$ to state $s'$ in one step under action $a$, and $R: \Psi \rightarrow \mathbb{R}$ is the expected reward function, where $R(s,a)$ is the expected reward for executing action $a$ in state $s$. MDP homomorphisms can be viewed as a principled way of abstracting the state (action) set of an MDP into a ``simpler" MDP that nonetheless preserves some important properties, usually referred to as the stochastic substitution property (SSP). This is a special instance of the general bisimulation property that holds among coalgebras, which we defined above. 

\begin{definition}
An MDP homomorphism \cite{DBLP:conf/ijcai/RavindranB03} from object  $M = \langle S, A, \Psi, P, R \rangle$ to $M' = \langle S', A', \Psi', P', R' \rangle$, denoted $h: M \twoheadrightarrow M'$, is defined by a tuple of surjections $\langle f, \{g_s | s \in S \} \rangle$, where $f: S \twoheadrightarrow S', g_s: A_s \twoheadrightarrow A'_{f(s)}$, where $h((s,a)) = \langle f(s), g_s(a) \rangle$, for $s \in S$, such that the stochastic substitution property and reward respecting properties below are respected: 
\begin{eqnarray} 
\label{mdp-hom}
P'(f(s), g_s(a), f(s')) = \sum_{s" \in [s']_f} P(s, a, s") \\
R'(f(s), g_s(a)) = R(s, a)
\end{eqnarray} 
\end{definition}

Given this definition, the following result is straightforward to prove. 

\begin{theorem}
The category ${\cal C}_{\mbox{MDP}}$ is defined as one where each object $c$ is defined by an MDP, and morphisms are given by MDP homomorphisms defined by Equation~\ref{mdp-hom}. 
\end{theorem}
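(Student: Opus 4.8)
The plan is to verify that $\mathcal{C}_{\mathrm{MDP}}$ satisfies the axioms of a category: that every object carries an identity morphism, that MDP homomorphisms compose to give MDP homomorphisms, and that composition is associative and unital. The last two come for free from the category $\mathbf{Set}$, since an MDP homomorphism is by definition a tuple of functions (surjections) and composition will be defined componentwise by ordinary function composition; associativity and the unit laws then follow immediately from the corresponding laws in $\mathbf{Set}$. So the real content is (i) exhibiting identities and (ii) checking that the defining equations of Equation~\ref{mdp-hom} — the stochastic substitution property and the reward-respecting property — are stable under composition.

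For identities, given $M = \langle S, A, \Psi, P, R\rangle$ I would take $\mathrm{id}_M := \langle \mathrm{id}_S, \{\mathrm{id}_{A_s} \mid s \in S\}\rangle$. Each component is a surjection, $\mathrm{id}_M$ sends $\Psi$ into $\Psi$, and since the fiber $[s']_{\mathrm{id}_S}$ is the singleton $\{s'\}$, the stochastic substitution equation reduces to $P(s,a,s') = P(s,a,s')$ and the reward equation to $R(s,a) = R(s,a)$. Hence $\mathrm{id}_M$ is a valid MDP homomorphism and plainly acts as a two-sided identity under componentwise composition.

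For composition, given $h = \langle f, \{g_s\}\rangle : M \twoheadrightarrow M'$ and $h' = \langle f', \{g'_{s'}\}\rangle : M' \twoheadrightarrow M''$, I would set $h' \circ h := \langle f' \circ f, \{g'_{f(s)} \circ g_s \mid s \in S\}\rangle$. Composites of surjections are surjections, and $(h' \circ h)((s,a)) = \langle f'(f(s)), g'_{f(s)}(g_s(a))\rangle \in \Psi''$ because $h$ sends $\Psi$ into $\Psi'$ and $h'$ sends $\Psi'$ into $\Psi''$. The reward-respecting property chains trivially: $R''(f'(f(s)), g'_{f(s)}(g_s(a))) = R'(f(s), g_s(a)) = R(s,a)$. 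The stochastic substitution property is the step that needs care. Applying the outer homomorphism first,
\[
P''\big(f'(f(s)),\, g'_{f(s)}(g_s(a)),\, f'(f(s'))\big) \;=\; \sum_{t \in [f(s')]_{f'}} P'\big(f(s),\, g_s(a),\, t\big),
\]
and then applying the inner homomorphism to each term (using surjectivity of $f$ to write $t = f(\tilde{s})$, and that the fibers $f^{-1}(t)$ over distinct $t$ are disjoint) gives a double sum $\sum_{t}\sum_{s'' \in f^{-1}(t)} P(s,a,s'')$. The bookkeeping lemma I need is that $\bigsqcup_{t \in [f(s')]_{f'}} f^{-1}(t) = (f' \circ f)^{-1}\big(f'(f(s'))\big) = [s']_{f' \circ f}$, so the double sum collapses to $\sum_{s'' \in [s']_{f'\circ f}} P(s,a,s'')$, which is exactly the stochastic substitution equation for $h' \circ h$.

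I expect this fiber-refinement identity — that the kernel partition of $f' \circ f$ is obtained by pulling the kernel partition of $f'$ back along $f$ — to be the only genuinely non-formal point; everything else is an invocation of the axioms in $\mathbf{Set}$. For completeness I would also note that $\mathcal{C}_{\mathrm{MDP}}$ is locally small, since under the finiteness assumptions on $S$ and $A$ the homomorphisms between any two fixed MDPs form a set, so the Yoneda-style reasoning used elsewhere in the paper applies.
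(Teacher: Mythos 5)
Your proposal is correct and follows essentially the same route as the paper's own proof, which is a three-line assertion that composition of MDP homomorphisms is again an MDP homomorphism, that identities are easy to define, and that associativity holds. The one substantive point the paper leaves entirely implicit --- that the stochastic substitution property survives composition because the fibers of $f' \circ f$ are the disjoint unions of the $f$-fibers over the $f'$-fiber --- is exactly the step you isolate and verify, so your write-up is a faithful (and more complete) elaboration of the intended argument.
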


{\bf Proof:} Note that the composition of two MDP homomorphisms $h: M_1 \rightarrow M_2$ and $h': M_2 \rightarrow M_3$ is once again an MDP homomorphism $h' \ h: M_1 \rightarrow M_3$. The identity homomorphism is easy to define, and MDP homomorphisms, being surjective mappings, obey associative properties. $\qed$

Given the category of MDPs defined above, we can straightforwardly define the category of coalgebras over MDPs, given any endofunctor $F_{\mbox{MDP}}$ that acts on objects in this category. We will explore later a novel formulation of reinforcement learning -- a stochastic coinductive inference approach -- to solving universal coalgebras defined over MDPs. 

We now define the category ${\cal C}_{\mbox{PSR}}$ of predictive state representations \cite{DBLP:journals/jmlr/ThonJ15}, based on the notion of homomorphism defined for PSRs proposed in  \cite{DBLP:conf/aaai/SoniS07}.  A PSR is (in the simplest case) a discrete controlled dynamical system, characterized by a finite set of actions $A$, and observations $O$. At each clock tick $t$, the agent takes an action $a_t$ and receives an observation $o_t \in O$. A {\em history} is defined as a sequence of actions and observations $h = a_1 o_1 \ldots a_k o_k$. A {\em test} is a possible sequence of future actions and observations $t = a_1 o_1 \ldots a_n o_n$. A test is successful if the observations $o_1 \ldots o_n$ are observed in that order, upon execution of actions $a_1 \ldots a_n$. The probability $P(t | h)$ is a prediction of that a test $t$ will succeed from history $h$. Note how the PSR straightforwardly generalizes the notion of a diversity-based representation that we explored above for the deterministic finite state environments to the stochastic setting. 

A state $\psi$ in a PSR is a vector of predictions of a suite of {\em core tests} $\{q_1, \ldots, q_k \}$. The prediction vector $\psi_h = \langle P(q_1 | h) \ldots P(q_k | h) \rangle$ is a sufficient statistic, in that it can be used to make predictions for any test. More precisely, for every test $t$, there is a $1 \times k$ projection vector $m_t$ such that $P(t | h) = \psi_h . m_t$ for all histories $h$. The entire predictive state of a PSR can be denoted $\Psi$. 

\begin{definition}
\label{psr-homo}
The category ${\cal C}_{\mbox{PSR}}$ defined by PSR objects, the morphism from object $\Psi$ to another  $\Psi'$ is defined by a tuple of surjections $\langle f, v_\psi(a)\rangle$, where $f: \Psi \rightarrow \Psi'$ and $v_\psi: A \rightarrow A'$ for all prediction vectors $\psi \in \Psi$ such that 
\begin{equation}
    P(\psi' | f(\psi), v_\psi(a)) = P(f^{-1}(\psi') | \psi, a) 
\end{equation}
for all $\psi' \in \Psi, \psi \in \Psi, a \in A$. 
\end{definition}

\begin{theorem}
The category ${\cal C}_{\mbox{PSR}}$ is defined by making each object $c$ represent a PSR, where the morphisms between two PSRs $h: c \rightarrow d$ is defined by the PSR homomorphism defined in \cite{DBLP:conf/aaai/SoniS07}. 
\end{theorem}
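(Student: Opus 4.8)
The plan is to follow exactly the pattern of the theorem for ${\cal C}_{\mbox{MDP}}$ established above: one must check that PSR objects together with the PSR homomorphisms of Definition~\ref{psr-homo} satisfy the three defining clauses of a category — existence of identity morphisms, closure under composition, and associativity — so that $\langle\text{objects},\text{morphisms},\circ,\mbox{id}\rangle$ is a well-formed category.

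\textbf{Identities.} First I would exhibit, for each PSR object $\Psi$, the candidate identity morphism as the tuple $\langle \mbox{id}_\Psi, \{v_\psi = \mbox{id}_A\}_{\psi\in\Psi}\rangle$. Since $f=\mbox{id}_\Psi$ is a (trivially surjective) bijection, $f^{-1}(\psi')=\psi'$, so the defining equation $P(\psi' \mid f(\psi), v_\psi(a)) = P(f^{-1}(\psi') \mid \psi, a)$ degenerates to the tautology $P(\psi' \mid \psi, a) = P(\psi' \mid \psi, a)$; hence this tuple is a genuine PSR homomorphism, and it acts as a two-sided unit for $\circ$ once composition has been defined.

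\textbf{Composition.} This is the substantive step. Given $h=\langle f, \{v_\psi\}\rangle : c\to d$ and $h'=\langle f', \{v'_{\psi'}\}\rangle : d\to e$, I would define the composite $h'\circ h$ by the state map $f'\circ f$ together with the indexed family of action maps $\psi\mapsto v'_{f(\psi)}\circ v_\psi$. Composites of surjections are surjections, so the surjectivity requirements are immediate; the one identity to verify is
\[ P\big(\psi'' \mid (f'\circ f)(\psi),\, (v'_{f(\psi)}\circ v_\psi)(a)\big) = P\big((f'\circ f)^{-1}(\psi'') \mid \psi, a\big). \]
I would prove this by first applying the defining equation for $h'$ at the prediction vector $f(\psi)\in\Psi_d$ with action $v_\psi(a)$, obtaining $P(\psi'' \mid f'(f(\psi)), v'_{f(\psi)}(v_\psi(a))) = \sum_{\chi\in (f')^{-1}(\psi'')} P(\chi \mid f(\psi), v_\psi(a))$, and then applying the defining equation for $h$ termwise to each $\chi$ in that fibre and summing, using that the fibres $f^{-1}(\chi)$ for distinct $\chi$ are disjoint and that $\bigcup_{\chi\in (f')^{-1}(\psi'')} f^{-1}(\chi) = (f'\circ f)^{-1}(\psi'')$ as subsets of $\Psi_c$. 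The main obstacle will be getting this fibre bookkeeping right: one must check that the action relabeling used when aggregating over $(f')^{-1}(\psi'')$ is the \emph{same} $v_\psi$ throughout (it is, since $\psi$ is fixed on the left), and that $P(\,\cdot\mid\psi,a)$ distributes over the disjoint union of fibres precisely as the stochastic substitution property demands — this is the PSR analogue of the sum $\sum_{s''\in[s']_f}P(s,a,s'')$ in the MDP homomorphism condition (Equation~\ref{mdp-hom}), and is the one place the argument would break if the homomorphism definition of \cite{DBLP:conf/aaai/SoniS07} used a different convention for set-valued preimages.

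\textbf{Associativity and conclusion.} Associativity of $\circ$ then reduces to associativity of ordinary function composition for the state maps, together with $v''_{f'(f(\psi))}\circ(v'_{f(\psi)}\circ v_\psi) = (v''_{f'(f(\psi))}\circ v'_{f(\psi)})\circ v_\psi$ for the indexed action families, both of which are automatic. Assembling the three pieces gives that ${\cal C}_{\mbox{PSR}}$ is a category, in complete parallel with the theorem for ${\cal C}_{\mbox{MDP}}$. As a closing remark one could note that, since a PSR generalizes the diversity/test-based representation to the stochastic setting, the "core tests" projection exhibits ${\cal C}_{\mbox{PSR}}$ inside the same coalgebraic pattern as the diversity automata, with a forgetful functor to {\bf Set} (or {\bf Meas}) sending a PSR to its predictive-state set $\Psi$.
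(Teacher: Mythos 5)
Your proposal is correct and follows the same route as the paper, which simply asserts that the category axioms hold ``given the surjectivity of the associated mappings $f$ and $v_\psi$'' in direct parallel with the ${\cal C}_{\mbox{MDP}}$ case. You actually carry out the one nontrivial verification the paper leaves implicit --- the fibre bookkeeping showing that the composite of two PSR homomorphisms again satisfies $P(\psi'' \mid (f'\circ f)(\psi), (v'_{f(\psi)}\circ v_\psi)(a)) = P((f'\circ f)^{-1}(\psi'') \mid \psi, a)$ --- and that computation is right, so your writeup is a strictly more complete version of the paper's argument.
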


 {\bf Proof:} Once again, given the homomorphism definition in Definition~\ref{psr-homo}, the UDM category ${\cal P}_{\mbox{PSR}}$ is easy to define, given the surjectivity of the associated mappings $f$ and $v_\psi$. $\qed$
 
Finally, once again, we can easily define coalgebras over the category of PSRs, given any endofunctor $F_{\mbox{PSR}}$ that acts on PSR objects. We will also explore a novel stochastic coinductive framework for universal coalgebras over PSRs later in the book. 

Given a suitable category for RL, we can now define the RL imitation game as follows: 

 \begin{definition}
     The RL imitation game is defined as one where an object $c$ and $d$, representing either objects in the category of Markov decision processes or predictive state representations, can be made isomorphic by a suitable pair of MDP or PSR homomorphisms that map from $c$ to $d$ (or vice versa), in which case we view objects $c$ and $d$ as bisimilar \cite{bisim}. 
 \end{definition}

Much remains to be explored here, and we aim to study this framework in a later paper, focusing principally on the use of universal coalgebras in a generalized RL setting. Among the problems to be studied here is the use of coinductive inference techniques, such as metric coinduction \cite{kozen} to analyze the convergence of RL algorithms, and the study of more sophisticated probabilistic coalgebras to define novel classes of RL problems \cite{SOKOLOVA20115095}. 

\section{Evolutionary Universal Imitation Games}

\label{evolutionaryuig} 

\begin{quote}
Evolution is a blind watchmaker -- Richard Dawkins 
\end{quote} 

Finally, we turn to discuss  {\em evolutionary} UIGs, which represent perhaps the most complex and interesting case. Evolution is the story of life on earth: complex minds such as ours could never have been created without the forces of evolution acting over millions of years. No framework for UIGs will be complete without a discussion of evolutionary UIGs. What fundamentally distinguishes evolutionary UIGs from static or dynamic UIGs is not only are the participants non-stationary, but that the changes are {\em non-adaptive}! In other words, natural selection -- the driving force behind evolution -- is based on {\em random mutations} that occur in the natural population representing the participants. Those participants who happen to have been ``blessed" with the right mutations that confer on them an advantage in fitness tend to reproduce more, and that drives the evolutionary process forwards. As Richard Dawkins picteresque analogy suggests, the variation is not adapted, but natural! Birth or death is the essence of not just biology, but also any human endeavor, whether it be a scientific theory, a novel technology or a startup. In this section, we study evolutionary UIGs from the same perspective we have used in the previous two tyoes of UIGs, namely category theory, in particular universal properties in terms of initial and final objects, measurement probes defined by objects and morphisms of a category, and the use of universal coalgebras and coinduction to analyze the dynamical system generated by an evolutionary UIG. To begin with, we describe an adaptation of the classic PAC learning framework \cite{DBLP:journals/cacm/Valiant84} to study evolvability \cite{evolvability}. We then describe coalgebraic models of evolvability based on birth-death stochastic processes. Finally, we describe an adaptation of the classical variational inequality (VI) problem \cite{facchinei-pang:vi} to include evolutionary processes. VIs can be shown to generalize the classic game-theoretic paradigm pioneered by von Neumann and Morgenstern \cite{vonneumann1947} and Nash \cite{nash} to the setting of network economics \cite{nagurney:vibook}, where a society of agents is playing a game on a graph. 

A large number of models of evolutionary game theory have been studied \cite{Nowak_06}, such as modeling the spread of {\em mutants} like viruses in a population of interacting organisms, or the spread of ideas, such as generative AI, through a research community or the high-tech industry. The fundamental principle of evolution differs substantially from the model of dynamic UIGs in the previous section, which was based on machine learning models, such as inductive inference. In the case of evolution, organisms have a certain natural {\em fitness} value, which makes them singularly well-adapted or not to their environment. But changes are caused by natural selection due to random mutations that confer selective advantages, and not based on the particular order in which the organism sensed the world or was exposed to some dataset. Organisms that have a better fitness value tend to reproduce and produce more progeny, which has the effect of multiplying their descendants through a population. As \cite{nowak} shows, this framework is equally adept at modeling evolution in biology at multiple temporal and spatial scales, from cells to bacteria to complete ecosystems, as well as the spread of ideas through a community and the evolution of language. With this backdrop in mind, evolutionary UIGs actually span a very large number of potential applications. 

The principal difference between dynamic and evolutionary UIGs is that the latter are characterized as the search for equilibria. Each participant in an evolutionary UIG seeks to find a behavior policy that achieves a maximal fitness value, or payoff, which is conditioned on the behavior of others with whom it is interacting. Traditional game theory \cite{Maschler_Solan_Zamir_2013} assumes all players interact, but in large network economies as well as in biology, interactions are modulated over graphs that represent modes of interaction. In our framework, we will define interactions in terms of the arrows in a category.  Equilibrium solutions represent the behavior of the game in a steady state. However, in practice, often equilibria are often transient, and subject to change, as in the case of a virus like Covid-19 that spreads through a population.  The computation of Nash equilibria (NE) \cite{nash1950equilibrium} in non-cooperative games has been extensively studied for many decades \cite{nisan07,Maschler_Solan_Zamir_2013}.

\subsection{Evolvability vs. Learnability}

There has been previous work adapting the framework of PAC learning to evolutionary processes \cite{evolvability}, which builds on the extension of PAC learning to the case when the labels provided to a supervised inductive learner are corrupted by noise \cite{kearns94clt}. In that case, it is no longer possible to design a straightforward inductive inference method, because any example that is labeled positive might indeed be negative or vice versa. \cite{kearns94clt} discuss a framework where learning happens using some statistical summary of the training data, which is then adapted to the evolutionary setting by \cite{evolvability}. 

In Section~\ref{dynamicuig}, we explored the process of adaption based on processing individual training examples by which a ``learner" can imitate a ``teacher", using inductive or coinductive inference. We begin with drawing a sharp contrast between {\em evolvability} \cite{evolvability} and learnability \cite{GOLD1967447} (see Figure~\ref{evolvablefns}) as defined in a model proposed by \cite{evolvability}. The fundamental difference between PAC learnability \cite{DBLP:journals/cacm/Valiant84} and evolvability relate to whether the changes that are made in a participant playing an evolutionary UIG are sensitive to {\em individual} interactions, as is the case with dynamic UIGs described in Section~\ref{dynamicuig}, or are they more a holistic statistical function of the participants {\em entire experience} \cite{nowak}. Fundamentally, it can be argued that evolutionary processes have little to no knowledge of the complex mapping from an individual organism's genomic structure to its behavior.  Evolution operates at a macro scale, summarizing an organism's entire life into a scalar {\em fitness function}. It is common in evolutionary dynamics to model evolution as proceeding over a {\em fitness landscape} \cite{BEERENWINKEL2006409}. 

\begin{figure}[t]
\centering
\includegraphics[scale=.4]{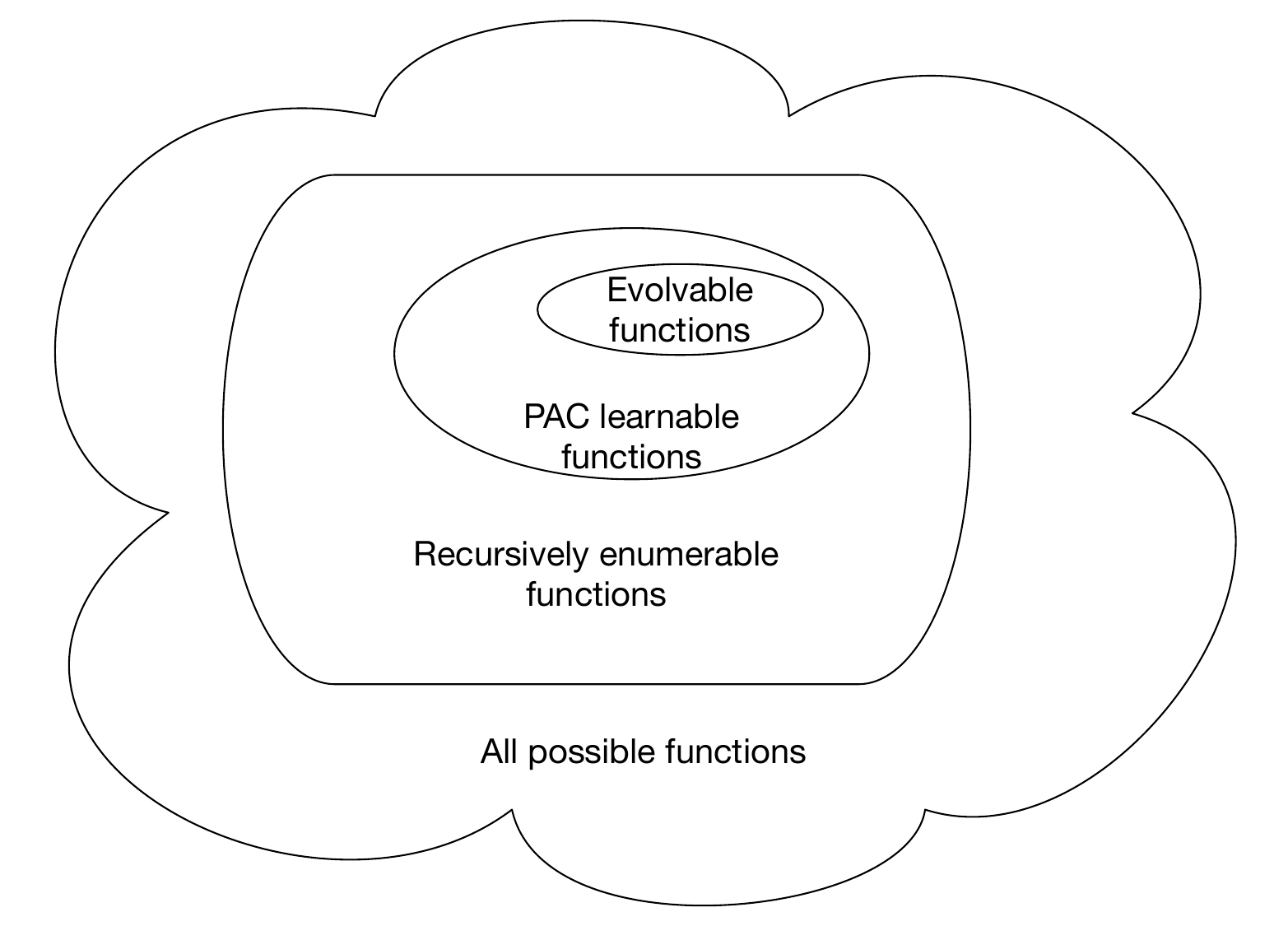}
\caption{In Valiant's model of evolvability \cite{evolvability}, functions that are evolvable form a strict subset of those that are PAC learnable.}
\label{evolvablefns} 
\end{figure}

The fundamental question addressed in Valiant's evolvability model relate to whether complex functions of many arguments can be evolved in a ``reasonable" amount of time with a finite population of variants playing an evolutionary UIG. The definition of what constitutes ``reasonable" is of course subjective, but computational learning theory models typically posit that models that use polynomial amount of resources in the parameters of interest, such as the size of the representation used to encode the function or the training experience, are reasonable, and distinguish them from those that consume a non-polynomial amount of resources. Leaving aside the tricky question of whether what is computable by a Turing machine or a digital computer necessarily translates into what is feasible in biology or physics (which operate according to the principles of quantum mechanics \cite{Coecke_Kissinger_2017}), let us consider this model to begin with to provide an initial analysis of the differences between evolvability and learnability. 

If we restrict our attention to monotone conjunctions, and define a target function $f$ as some subset of literatus $\{x_1, \ldots, x_n \}$.  $X_n$ denotes the $2^n$ possible arguments that the $n$ variables $x_1, \ldots, x_n$ can take. Let $D_n$ denote a probability distribution over $X_n$. We can interpret a ``target" function $f$ as the ``teacher" and any guess about $f$ as a ``participant" trying to ``imitate" the teacher. 

\begin{definition}
    The {\bf performance} of a participant in an evolutionary UIG $r: X_n \rightarrow \{-1, +1 \}$ with respect to a ``teacher participant $f: X_n \rightarrow \{-1, +1 \}$ for a given probability distribution $D_n$ over $X_n$ is defined as 

\[ \mbox{Perf}_f(r, D_n) = \sum_{x \in X_n} f(x) r(x) D_n(x) \]
\end{definition}

In simple terms, $\mbox{Perf}_f(r, D_n)$ is simply measuring the correlation between the teacher and the learner. The basic model posits that every time the participant has an experience in the form of a set of values $x \in X_n$, it will receive a ``benefit" of $+1$ if its ``circuit" $r$ agrees with the teacher's ``circuit" $f$ on that $x$, and receive a penalty $-1$ if it disagrees. Over a sequence of experiences over a ``lifetime", the total of all benefits and penalties are accumulated. Organisms that have a high benefit will be preferentially selected for ``reproduction" through imitation (or ``cloning") over those that have lower benefits. A large number of variations of this model have been studied in the evolutionary dynamics literature \cite{novak:book}. The function $\mbox{Perf}_f(r, D_n)$ is an example of a fitness landscape \cite{BEERENWINKEL2006409}. 

If we restrict ourselves to the very constrained class of monotone conjunctions, and the target teacher participant's behavior is described by some specific conjunction, say $f = x_2 x_5 x_20$, then the goal of the learner participant is to evolve its behavior to match that of the teacher. It can do so by essentially deleting or adding a single literal, which can be viewed as a toy example of actual evolution on  the genomic structure of an organism. In Valiant's framework, it can be shown that some classes of boolean functions are evolvable, such as monotone conjunctions, whereas others such as the parity function is not evolvable (from a uniform distribution on $D_n$). We will not discuss the specifics of this particular proposal modeling evolvability as it involves making many possibly ad-hoc assumptions on the nature of evolution, and the ultimate conclusion that evolvability is s subclass of learnabiilty might strike some as too limiting a notion. In biology, evolution seems capable of remarkable abilities, as it clearly has evolved rather complex creatures such as ourselves. Whether this is a function of purely a process that is even simpler than machine learning processes like gradient descent is a topic for debate that is beyond the scope of this paper.

\subsection{Evolutionary UIGs in categories}

To model the process of evolution as a search in a fitness landscape, we need to make assumptions on the structure of the space that the fitness landscape is defined over. In Valiant's model that was just discussed, the fitness landscape was boolean assignments over $n$ variables. Other approaches that have been studied include modeling fitness landscapes over distributed lattices \cite{BEERENWINKEL2006409}, graphs \cite{novak:book}, and in the vast literature on genetic programming, the fitness landscape is defined over the set of all programs (e.g., LISP programs in Koza's work \cite{Koza92}). A complete discussion of the merits of each such approach is beyond the scope of this paper. Rather, we want to focus our discussion specifically on the advantages of using category theory to formulate the problem of modeling fitness landscapes. We begin by considering the problem in some of the categories described above in Sections~\ref{staticuig} and Section~\ref{dynamicuig}. 

\subsection*{Evolutionary UIGs in Universal Coalgebras}

We saw above that universal coalgebras \cite{jacobs:book,rutten2000universal,SOKOLOVA20115095} provide a rich language for defining dynamical systems and stochastic processes. We can adapt the models of evolutionary dynamics proposed in the literature to the setting of coalgebras relatively easily. To make this more concrete, we begin with the simple example of evolutionary dynamics over Moran processes \cite{moran}. 

The Moran process  is a stochastic model that is widely used  to study evolution in finite populations \cite{nowak}. To illustrate this process as it applies to evolutionary UIGs, let us define a population of participants of a constant size $n$, who are comprised of two types of individuals. For example, in the setting of understanding how novel technologies spread through an economy, let us imagine that there are two types of businesses: those who have adopted generative AI in their products, indicated by blue, and those who have yet to adopt generative AI, indicated by red. 

The Moran process is a classic example of a stochastic process that is driven by a simple Markov chain, which can be simply defined as a triangular stochastic matrix \cite{nowak}. At each step in the process, a single participant is selected for reproduction. Under the somewhat unrealistic assumption that the population size remains constant, it is necessary therefore that some participant has to go extinct. We can imagine that at each step, one company that has adopted generative AI is allowed to reproduce, but another that failed to adopt generative AI goes out of business. Participants that are selected for reproduction and extinction are chosen randomly. The blue and red participants have different finesses that in turn change the rates at which they reproduce. If the fitness of type red is normalized to the value $1$, and the fitness of type blue is set to $r$, then the probability that type blue is chosen to reproduce is given by 

\[ p_i = \frac{r i}{(r i+N-i)} \] 

assuming there are $i$ individuals of type blue, and $N-i$ individuals of type red. This implies that the probability that participants of type red are chosen to reproduce is given by 

\[ p_i  = \frac{(N-i)}{(r i+N-i)}\] 

In Figure~\ref{moranfig}, blue individuals have a relative fitness of $r=1.2$,  and their sizes are proportional to their relative fitness. At the beginning, the population has fewer blue dots (indicating fewer companies have embraced generative AI), and at the end of the evolutionary process, all companies have incorporated generative AI. Those that did not have gone bankrupt, and are not shown in this illustration. 

It is relatively simple to design a universal coalgebra over a distribution functor ${\cal D}$ to capture a Moran process, defined coalgebraically as 

\[ X \rightarrow {\cal D}(X) \]

where ${\cal D}$ maps a set $X$ to a probability distribution over $X$ that defines each step of the Moran process. We can in fact relatively straightforwardly design a whole family of coalgebraic Moran processes, building on the library of stochastic coalgebras defined in \cite{SOKOLOVA20115095}. It is also immediate that the final coalgebra here $Z \rightarrow {\cal D}(Z)$ is defined by one of two recurrent states, where the population of participants is all of the same type (either all red or all blue dots in Figure~\ref{moranfig}), after which no further changes will be possible. 

\begin{figure}
    \centering
    \includegraphics[scale=.2]{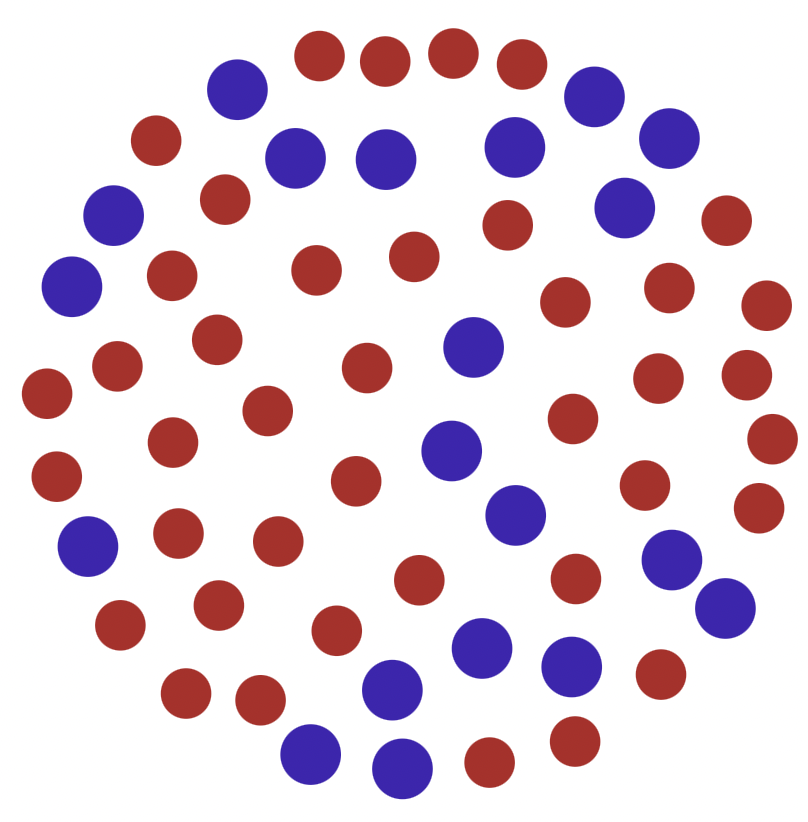}
     \includegraphics[scale=.2]{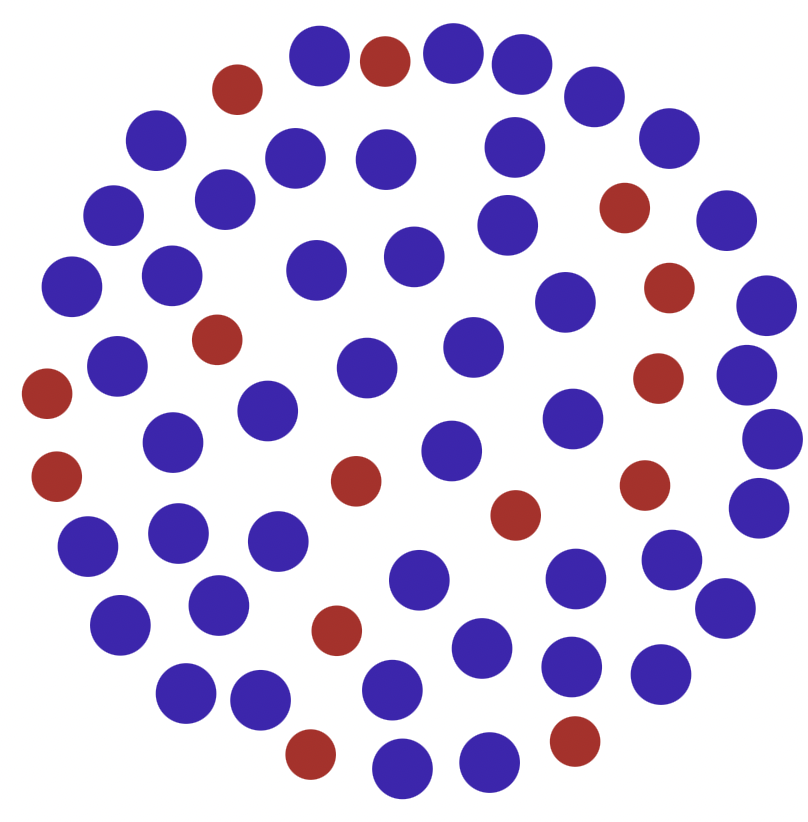}
      \includegraphics[scale=.2]{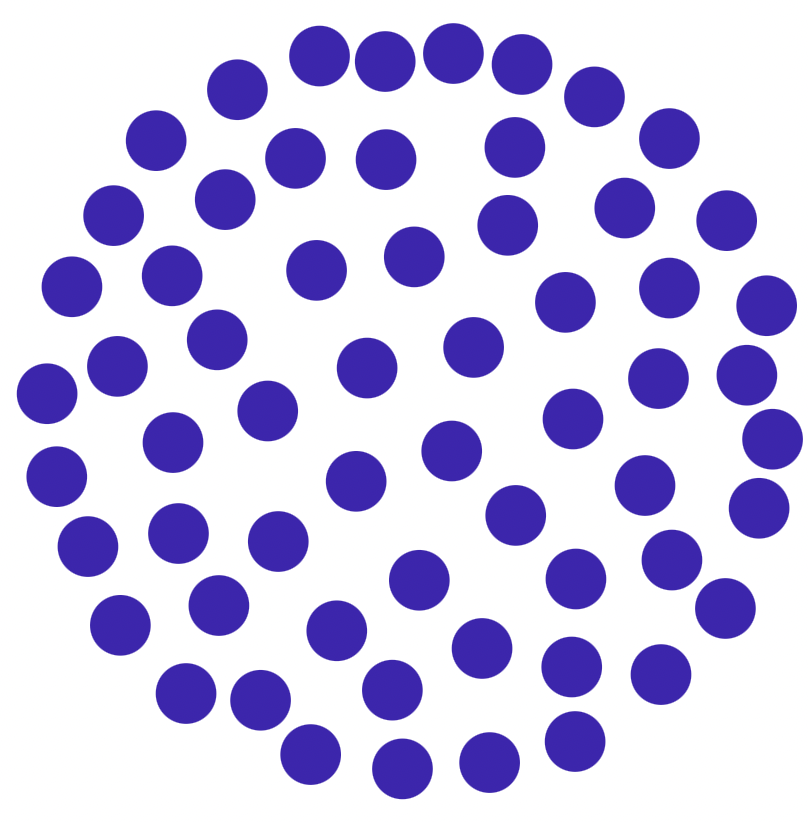}
    \caption{Three stages in a Moran process, from left to right,  that models evolutionary dynamics in a fixed population of finite size.}
    \label{moranfig}
\end{figure}

\subsection{Evolutionary UIGs in Finite Topological Spaces: A Bioinformatics Example}

A fundamental aspect of fitness landscape modeling in evolutionary UIGs is to assume that a participant's behavior is a function of its genomic structure that is largely based on the set of mutations that it has accumulated over time. That is, mutations are {\em irreversible} and we can define a category structure (such as a partial order) that specifies constraints on the order in which mutations have ocurred. We now summarize some previous work of ours \cite{sm:homotopy} that explored construction of topological causal models from genomic cancer data, where the topological structure essentially defines a type of category. In other words, when we use the word ``topological causal model" in this section, it should be interpreted as a finite category defined by a partial order. 

\begin{figure}
    \centering
    \begin{minipage}[t]{.3\linewidth}
\vspace{0pt}
\centering
   \begin{tabular}{ll}
    Tumor & Gene   \\ \hline
    Pa017C & KRAS   \\ \hline 
    Pa017C & TP53   \\ \hline
    Pa019C & KRAS   \\ \hline
    Pa022C & KRAS   \\ \hline
    Pa022C & SMAD4   \\ \hline
    Pa022C & TP53  \\ \hline
    Pa032X & CDKN2A    \\ \hline
    \end{tabular}%
\end{minipage}%
    \caption{Genetic mutations in pancreatic cancer.}
    \label{pancreaticcancer}
\end{figure}

 \begin{figure}[t]
\begin{minipage}[t]{.3\linewidth}
\vspace{0pt}
\centering
\includegraphics[scale=0.2]{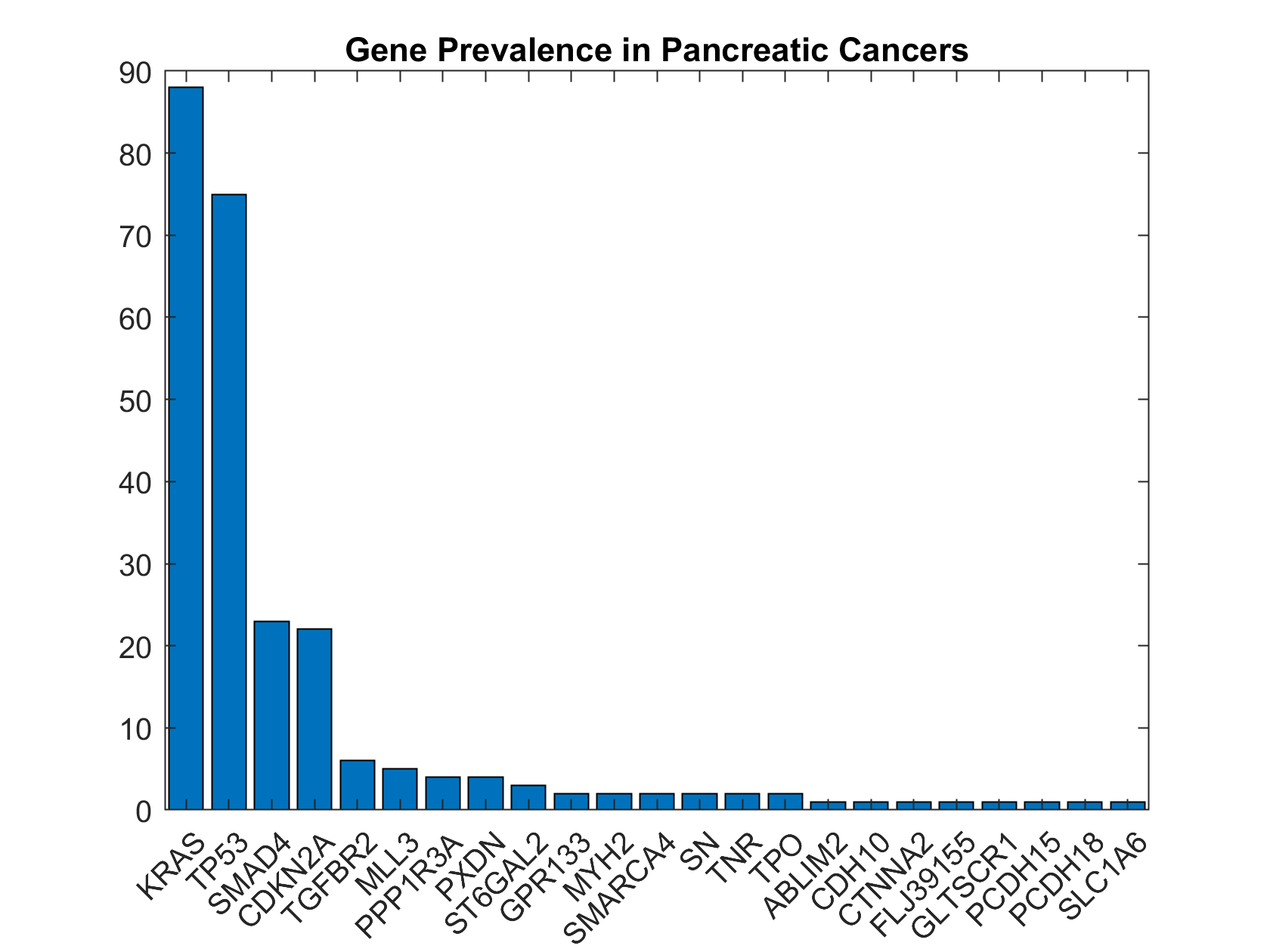} 
\end{minipage}
\begin{minipage}[t]{.3\linewidth}
\vspace{0pt}
\centering
\includegraphics[scale=0.2]{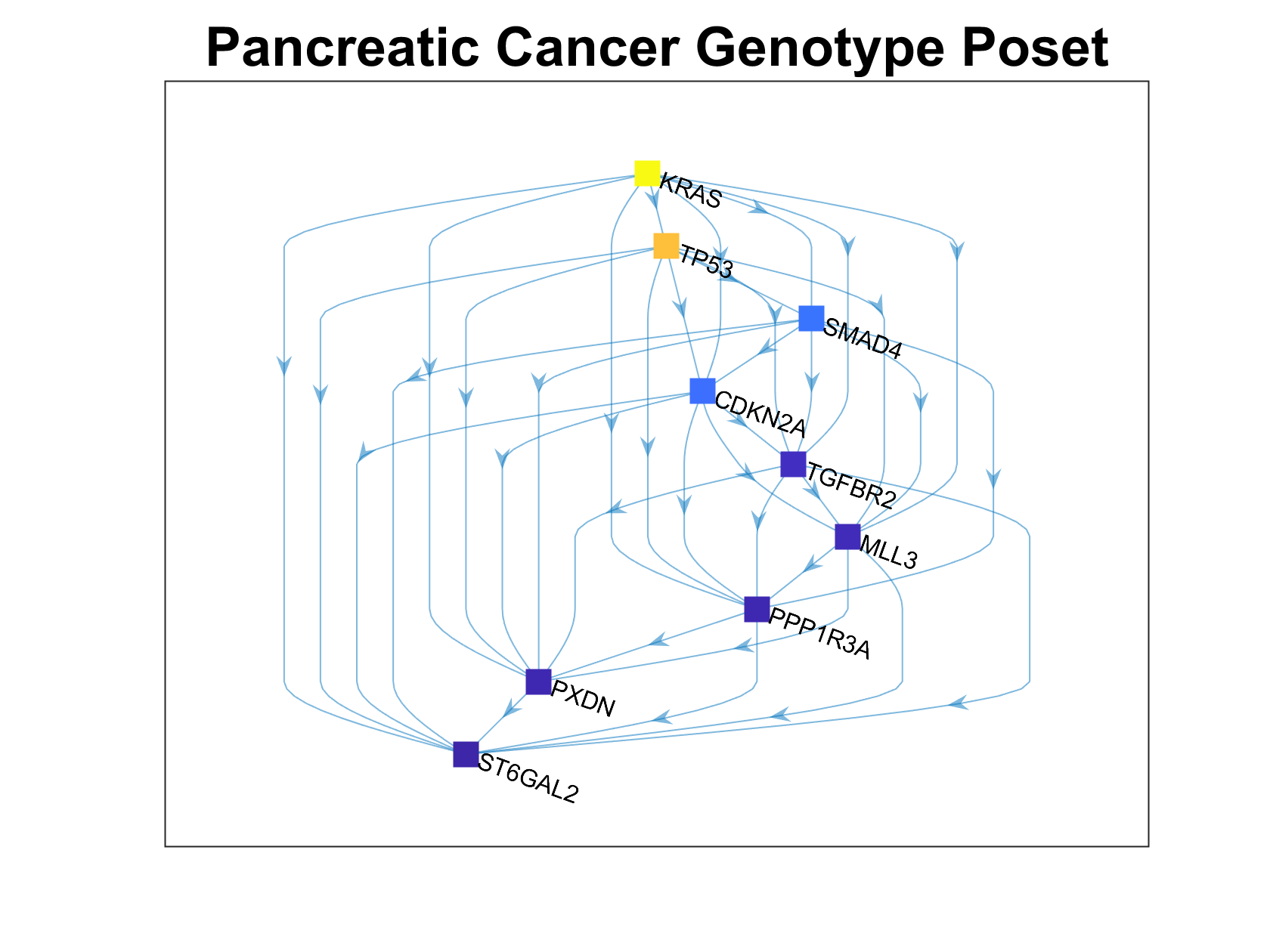} 
\end{minipage}
 \caption{Left: Genetic mutations in pancreatic cancer \citep{Jones1801}. Middle: histogram of genes sorted by mutation frequencies. Right: Poset learned from dataset.} 
\label{cancer-dataset}
\end{figure}%

Table~\ref{pancreaticcancer} shows a small fragment of a dataset for pancreatic cancer \citep{Jones1801}. Like many cancers, it is marked by a particular partial ordering of mutations in some specific genes, such as {\bf KRAS}, {\bf TP53}, and so on.  In order to understand how to model and treat this deadly disease, it is crucial to understand the inherent partial ordering in the mutations of such genes. Pancreatic cancer remains one of the most prevalent and deadly forms of cancer. Roughly half a million humans contract the disease each year, most of whom succumb to it within a few years. Figure~\ref{cancer-dataset} shows the roughly $20$ most common genes that undergo mutations during the progression of this disease. The most common gene, the KRAS gene,  provides instructions for making a protein called K-Ras that is part of a signaling pathway known as the RAS/MAPK pathway. The protein relays signals from outside the cell to the cell's nucleus. The second most common mutation occurs in the TP53 gene, which  makes the p53 protein that normally acts as the supervisor in the cell as the body tries to repair damaged DNA. Like many cancers, pancreatic cancers occur as the normal reproductive machinery of the body is taken over by the cancer. 

In the pancreatic cancer problem, for example, the topological space $X$ is comprised of the significant events that mark the progression of the disease, as shown in Table~\ref{cancer-dataset}. In particular, the table shows that specific genes are mutated at specific locations by the change of an amino acid, causing the gene to malfunction. We can model a tumor in terms of its {\em genotype}, namely the subset of $X$, the gene events, that characterize the tumor. For example, the table shows the tumor {\bf Pa022C} can be characterized by the genotype {\bf KRAS}, {\bf SMAD4}, and {\bf TP53}. In general, a finite space topology is just the elements of the space (e.g. genetic events), and the subspaces (e.g., genomes) that define the topology.  

We can now define the problem as one  of inferring topological causal models for cancer  \citep{10.1093/biomet/asp023,cbn,BEERENWINKEL2006409,gerstung}, which define a simple type of categorical structure.  The progression of many types of cancer are marked by mutations of key genes whose normal reproductive machinery is subverted by the cancer \citep{Jones1801}. Often, viruses such as HIV and COVID-19 are constantly mutating to combat the pressure of interventions such as drugs, and successful treatment requires understanding the partial ordering of mutations. A number of past approaches use topological separability constraints on the data, assuming observed genotypes separate events, which as we will show, is abstractly a separability constraint on the underlying topological space.

\begin{table}[htbp]
  \centering
  \caption{Core signaling pathways and processes genetically altered in most pancreatic cancers \citep{Jones1801}.}
    \begin{tabular}{|llll|} \hline
Regulatory pathway	& \% altered genes	& Tumors	& Representative altered genes \\ \hline
Apoptosis &	9	& 100\%	& CASP10, VCP, CAD, HIP1 \\ \hline 
DNA damage control	& 9 &	83\% &	ERCC4, ERCC6, EP300, TP53 \\ \hline
G1/S phase transition	& 19 &	100\% &	CDKN2A, FBXW7, CHD1, APC2 \\ \hline
Hedgehog signaling	& 19	& 100\%	& TBX5, SOX3, LRP2, GLI1, GLI3\\ \hline 
Homophilic cell adhesion &	30	& 79\%	& CDH1, CDH10, CDH2, CDH7, FAT\\ \hline 
Integrin signaling	& 24 & 	67\%	& ITGA4, ITGA9, ITGA11, LAMA1 \\ \hline
c-Jun N-terminal kinase &	9 & 96\%	& MAP4K3, TNF, ATF2, NFATC3 \\ \hline
KRAS signaling	& 5	& 100\%	& KRAS, MAP2K4, RASGRP3 \\ \hline
Regulation of invasion	& 46	& 92\%	& ADAM11, ADAM12, ADAM19\\ \hline 
Small GTPase–dependent & 33 &	79\%	& AGHGEF7, ARHGEF9, CDC42BPA\\ \hline 
TGF-$\beta$ signaling &	37	& 100\%	& TGFBR2, BMPR2, SMAD4, SMAD3 \\  \hline 
Wnt/Notch signaling	& 29 &	100\% &	MYC, PPP2R3A, WNT9A\\ \hline 
\end{tabular}
\label{pathways}
\end{table}

A key computational level in making model discovery tractable in evolutionary processes, such as pancreatic cancer, is that multiple sources of information are available that guide the discovery of the underlying poset model. In particular, for pancreatic cancer \citep{Jones1801}, in addition to the tumor genotype information show in Table~\ref{cancer-dataset}, it is also known that the disease follows certain pathways, as shown in Table~\ref{pathways}. This type of information from multiple sources gives the ability to construct multiple posets that reflect different event constraints \citep{BEERENWINKEL2006409}. Algorithm 1 is a generalization of past algorithms that  infer conjunctive Bayesian networks (CBN)  from a dataset of events (e.g., tumors or signaling pathways) and their associated genotypes (e.g., sets of genes) \citep{cbn,BEERENWINKEL2006409}  The pathway poset and DAG shown in Figure~\ref{cancer-dataset} were learned using Algorithm 1 using the pancreatic cancer dataset published in \citep{Jones1801}. 

\begin{algorithm}[t]
\caption{Application of Poset Discovery Algorithm to Bioinformatics.}
\SetAlgoLined
\KwIn{Dataset ${\cal D} = ({\cal E}, {\cal T})$ of a finite set of events ${\cal E}$ and their associated {\em genotypes} $U \in {\cal T}$, represented as a topological space of open sets $g \subset {\cal E}$. Here, it is assumed that each genome is an intervention target, whose size will affect the complexity of each causal experiment. A conditional independence oracle is also assumed.}
\KwOut{	Causally faithful poset ${\cal P}$ that is consistent with conditional independences in data.}
\Begin{
    Set the basic closed sets $F_e \leftarrow \emptyset$ \\ 
	\Repeat{convergence}{
		Select an open separating set $g \in {\cal T}$ and intervene on $g$. \\
	\For{$e \in g, f \notin g$}{
	 Use samples and the CI oracle to test if $(e \bigCI f)_{{\cal M}_{g}}$ on dataset ${\cal D}$. \\ 
	 If CI test fails, then set $F_e \leftarrow F_e \cup \{ f \}$ because $f$ is an ancestor of $e$.
	 }
	 }
	Define the relation $e \leq f$ if $f \in F_e$, for all $e, f \in {\cal E}$, and compute its transitive closure.  \\ 
	Return the poset ${\cal P} = ({\cal E}, \leq)$, where $\leq$ is the induced relation on the poset ${\cal P}$. 
}
\end{algorithm}

\begin{figure}[t]
\centering
\begin{minipage}{1\textwidth}
\centering
\includegraphics[angle=90,width=4in,height=5in]{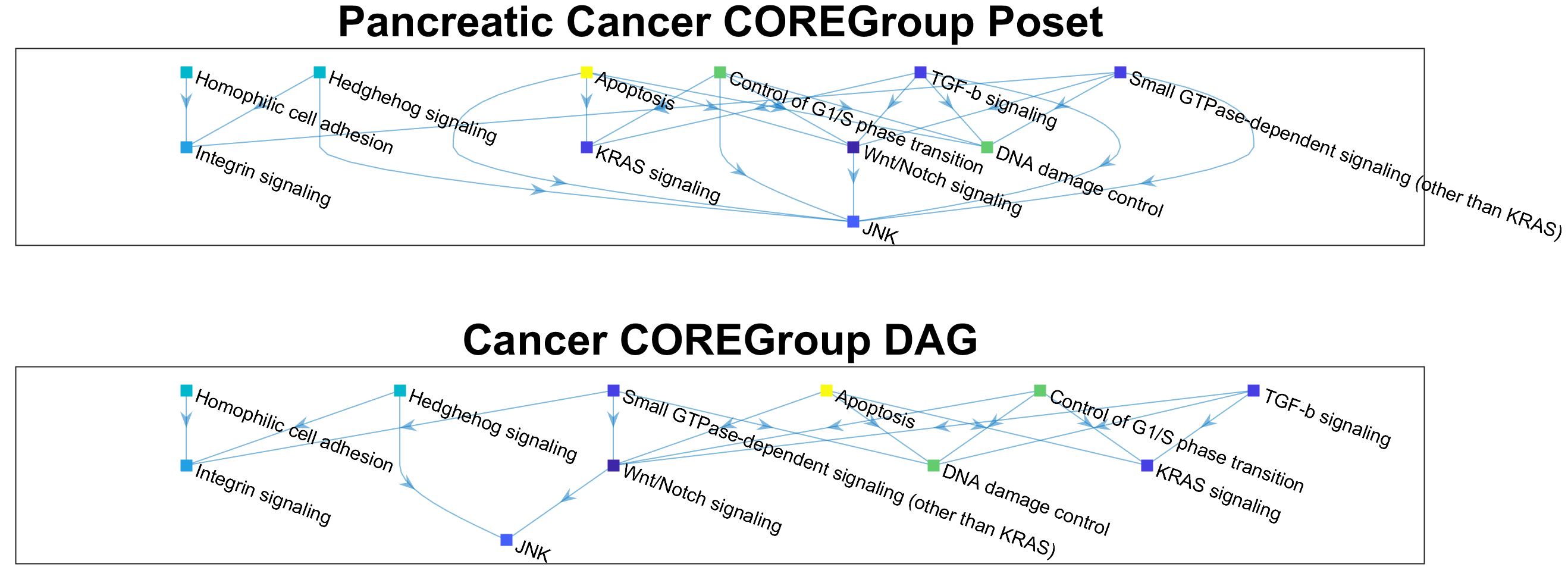} 
\end{minipage}
\caption{Poset and DAG model of pathways in pancreatic cancer learned from a real-world dataset \citep{Jones1801}, showing genetic mutations occur along distinct pathways.}
\label{covid-diagram3}
\end{figure}

\subsection{Game Theory and Variational Inequalities}

Having covered some of the basics in evolutionary dynamics, we now introduce the basics of game theoretic models. Our goal is to integrate these, and in particular, we want to focus on variational inequality models (VIs) that generalize classic game-theoretic models. Game theory was pioneered by von Neumann and Morgenstern \cite{vonneumann1947}, and later extended by Nash \cite{nash}. An excellent modern overview of game theory is given in \cite{Maschler_Solan_Zamir_2013}. Game theory has found countlessa applications in many fields, ranging from network economics \cite{nagurney:vibook} to algorithmic game theory in computer science \cite{nisan07}. Our aim in this section is to illustrate the use of category theory in developing a framework for evolutionary UIGs, principally combining ideas from traditional game theory, network economics, and evolutionary game theory \cite{nowak}. One way to relate this section to the previous is that humans, the target of an imitation game, are the result of millions of years of evolution. In building an AI system that is intended to match humans at imitation games, the role of evolutionary processes cannot be ignored, and in fact, may provide an elegant avenue to building more robust generative AI systems. We saw in the previous section that there are intrinsic limitations on the power of machine learning, which limit the ability of generative AI systems like LLMs at solving even relatively simple classes of tasks \cite{chomsy-neural}. Perhaps bringing in an evolutionary perspective will help transcend some of these limitations, as it has in biology. 

 A finite, $n$-person normal form game is a tuple $(N,A,U)$, where $N$ is a finite set of $n$ players indexed by $i$, $A = A_1 \times \cdots \times A_n$ is the joint action space formed from the set actions available to each player ($A_i$), and $U$ is a tuple of the players' utility functions (or payoffs) $u_i$ where $u_i: A \rightarrow \mathbb{R}$.  The difficulty in computing the equilibrium depends on the constraints placed on the game.  For instance, two-player, zero-sum games, ensure that player interests are diametrically opposed and can thus be formulated as linear programs (LPs) by the minmax theorem and solved in polynomial time \cite{shoham2008multiagent}.

By contrast, in two-player, \textit{general}-sum games, any increase in one player's utility does not necessarily result in a decrease in the other player's utility so a convenient LP formulation is not possible.  Finding Nash equilibria in two-player, general-sum games is thought to be time exponential in the size of the game in the worst case.  It has been shown that every game has at least one Nash equilibrium which delegates the problem to the class PPAD (polynomial parity argument, directed version) originally designated by Papadimitriou \cite{papadimitriou2001algorithms}.  Although this game type cannot be converted to an LP, it can be formulated as a linear complimentarily problem (LCP).  In crude terms, the LCP can be formed by introducing an additional constraint called a complementarity condition to the combination of constraints that would appear in each agent's LP had it only been a zero-sum game.  Unlike the LP, the LCP is only composed of constraints making it a pure constraint satisfaction problem (CSP).  The most popular game theoretic algorithm for solving these LCPs is the Lemke-Howson algorithm \cite{lemke1964equilibrium}.  This algorithm performs a series of pivot operations that swap out player strategies until all constraints are satisfied.  An alternate approach is to employ heuristics as in the case of the support-enumeration method (SEM) which repeatedly tests whether a Nash equilibrium exists given a pair of actions, or {\em support profile}.  The heuristic used is to favor testing smaller, more balanced support profiles in order to prune larger regions of the action space.

Finally, we encounter \textit{n}-player, general-sum games, in which the complementarity problem previously defined is now nonlinear (NCP).  One common approach is to approximate the solution of the NCP as solving a {\em sequence} of LCPs (SLCP).  This method is typically fast, however, it is not globally convergent.  Another technique is to solve an optimization problem in which the global minima equate to the Nash equilibria of the original problem.  The drawback is that there are local minima that do not correspond to Nash equilibria making global convergence difficult.  Several other algorithms for solving n-player, general-sum games exist as well including a generalized SEM but also simplical subdivision algorithms \cite{van1987simplicial}, the Govindan-Wilson method \cite{govindan2003global}, and other homotopy methods.

Some games exhibit a characteristic of {\em payoff independence} where a single player's payoff is dependent on only a subset of the other players in the game.  In this case, the reward table indexed by the joint action-space of all players is overly costly prompting a move from the normal form representation of the game to the more compact representation offered by graphical games.  This can often reduce the space of the representation from exponential to polynomial in the number of players.  When the graph is a tree, a commonly used method, NashProp, computes an $\epsilon$-Nash equilibrium with a back and forth sweep over the graph from the leaves to the root.

Still other representations and methods exist for extensive-form games, however, we will limit our focus here to normal-form games.  There is also a large body of work on the convergence to Nash equilibria in the multi-agent reinforcement learning (MARL) setting \cite{busoniu2008comprehensive, abdallah2008multiagent, singh2000nash} that we will not explore in this paper.  A discussion of the approaches above and more on the computation of equilibria in games can be found in \cite{shoham2008multiagent, von2002computing, mckelvey1996computation}.

There has been previous work on using category theory to model games, which has been referred to as {\em open games} due to its compositional nature \cite{ghani2018compositional,hedges2017morphisms}. The basic framework adopted here is to define games as a symmetric monoidal category, where the compositional structure permits the transmission of utilities backwards, and permits games to be composed. It is an elegant approach, but differs from our primary motivation of incorporating {\em evolutionary} birth-death processes into game theory. Fundamentally, evolution causes players in a game to be eliminated, which is an aspect of all real-life competitive games, from chess competitions to the Olympics, and competitive sports of any kinds features elimination tournaments. It is also a reality in the business world that companies that do not adapt to market changes go bankrupt and are no longer viable players in a network economy. To model such processes, we need to include additional components that are not in the open games formalism, although as we show below, they could be included in such categorical formulations of games. 

\subsection*{Stochastic Variational Inequalities}

We now  introduce classical variational inequalities (VIs) \cite{facchinei-pang:vi}, and outline a metric coinduction type algorithm for stochastic VIs based on \cite{iusem,DBLP:journals/mp/WangB15}. VIs generalize both classical game theory as well as optimization problems, in that any (convex) optimization problem or any Nash game can be formulated as a VI, but the converse is not true. More precisely, a variational inequality model $\cal{M} =$ VI($F,K$), where $F$ is a collection of modular vector-valued functions defined as $F_i$, where $F_i: K_i \subset \mathbb{R}^{n_i} \rightarrow \mathbb{R}^{n_i}$, with each $K_i$ being a convex domain such that $\prod_i K_i = K$. 

\begin{definition}
The category ${\cal C}_{\mbox{VI}}$ of  VIs is defined as one where each object is defined  as a 
finite-dimensional  variational inequality problem ${\cal M}$ = VI($F, K)$, where the vector-valued mapping $F: K \rightarrow \mathbb{R}^n$ is a given continuous function, $K$ is a given closed convex set, and $\langle .,.\rangle$ is the standard inner product in $\mathbb{R}^n$, and the morphisms from one object to another correspond to non-expansive functions.  Solving a  VI is defined as finding a vector $x^* = (x^*_1, \ldots, x^*_n) \in K \subset \mathbb{R}^n$ such that
\begin{equation*}
\langle F(x^*), (y - x^*) \rangle \geq 0, \ \forall y \in K
\end{equation*}
\end{definition}
\begin{figure}[t]
\begin{center}
\begin{minipage}[t]{0.45\textwidth}
\includegraphics[width=\textwidth,height=1.25in]{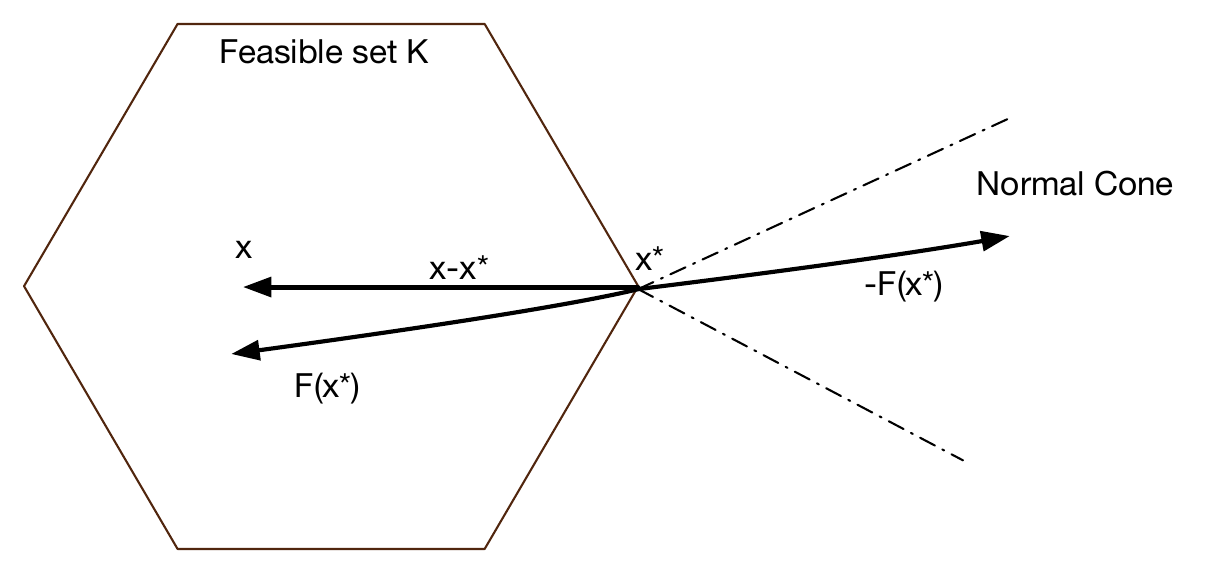}
\end{minipage}
\end{center}
\caption{This figure provides a geometric interpretation of a (deterministic) variational inequality $VI(F,K)$. The mapping $F$ defines a vector field over the feasible set $K$. At the solution point $x^*$, the vector field $F(x^*)$ defines a fixed point, that it, it is directed inwards at the boundary, and  $-F(x^*)$ is an element of the normal cone $C(x^*)$ of $K$ at $x^*$ where the normal cone  $C(x^*)$ at the vector $x^*$ of a convex set $K$ is defined as $C(x^*) = \{y \in \mathbb{R}^n | \langle y, x - x^* \rangle \leq 0, \forall x \in K \}$.}
\end{figure}

We can also define a category of {\em stochastic} VIs as follows. In the stochastic case, we start with a measurable space $({\cal M}, {\cal G})$, a measurable random operator $F: {\cal M} \times \mathbb{R}^n \rightarrow \mathbb{R}^n$, and a random variable $v: \Omega \rightarrow {\cal M}$ defined on a probability space $ (\Omega, {\cal F}, \mathbb{P})$, which enables defining the usual notions of expectation and distributions $P_v$ of $v$. 

\begin{definition}
The category ${\cal C}_{\mbox{SVI}}$ of  stochastic VIs is defined as one where each object is defined  as a  finite-dimensional  stochastic variational inequality problem ${\cal M}$ = SVI($F, K)$, where the vector-valued mapping $F: K \rightarrow \mathbb{R}^n$ is given by $T(x) = E[F(\xi, x)]$ for all $x \in \mathbb{R}^n$. 
\end{definition}

\subsection{Example of VI: Network Economics of Generative AI}

\begin{figure}[h]
\centering
\includegraphics[scale=0.8]{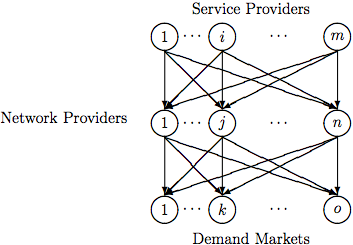}
\caption{Generative AI can be modeled as a network economy that comprises the {\em producers}, who create and sell the generative AI cloud computing services; the {\em transport} providers who control the ``plumbing" of the network used to transport the ``bits" of the generative AI products;  and the {\em users} forming the demand markets who must pay for the generative AI services and the network usage costs. For successful trade to occur in a generative AI network economy, an equilibrium condition must be met, which can be modeled as a variational inequality (VI) \cite{facchinei-pang:vi}.  this figure, a generic network economy is shown with producers forming the layer on top who compete with each other on prices, the middle layer of nodes corresponds to the network providers who compete with each other on price and service, and the bottom layer represents the demand market of users. The network's equilibrium is a complex game-theoretic dynamics, which we analyze in this paper as a coalgebra over categories. This example illustrates an instance of an evolutionary UIG.} 
\label{soifigure} 
\end{figure}

We can use the framework of category theory to also analyze complex network economies that are built on top of generative AI systems. Generative AI systems offer cloud computing services to users geographically dispersed around the world. They charge for these services based on factors such as the number of tokens processed in a large language model, or the number of images rendered using a diffusion model. These services incur significant costs for the vendors of generative AI models. A fundamental principle of network economics is that for successful trade to occur, there must be an equilibrium principle that is satisfied on the network between the {\em producers}, the {\em transporters} who control the ``plumbing" of the network itself, and the {\em users} who form the demand market for the generative AI services. In other words, the producers selling cloud computing generative AI services must be able to earn sufficient money by selling their products to continue to offer them. The users who must both pay for the generative AI services and for using the transport network choose the best combination of generative AI producer and network transporter that is within their budget. 

Formally, it can be shown that the network economics of generative AI is an example of a broad class of mathematical problems called {\em variational inequalities} (VIs) \cite{nagurney:vibook,facchinei-pang:vi}. A novel aspect of our paper is the use of category theory, in particular universal coalgebras, to model VIs, and the use of this framework to model the network economics of generative AI. The solution to a VI is unique, under the following conditions, where $K$ is compact and $F$ is continuous. 
\begin{definition}
$F(x)$ is {\em monotone} if $\langle F(x) - F(y), x - y \rangle \ge 0$, $\forall x, y \in K$.
\end{definition}

\begin{definition}
$F(x)$ is {\em strongly monotone} if $\langle F(x) - F(y), x  - y \rangle \geq \mu \| x - y \|^2_2, \mu > 0, \forall x,y \in K$.
\end{definition}
\begin{definition}
$F(x)$ is {\em Lipschitz} if $\| F(x) - F(y) \|_2 \leq L \|x - y \|_2, \forall x,y \in K$.
\end{definition}

We now describe how to model causal inference in a network economics problem, which will be useful in illustrating the abstract definitions from the previous section. The model in Figure~\ref{soifigure} is drawn from \cite{nagurney:vibook,nagurney:soi}. which were  deterministic, and included no analysis of causal interventions. This network economics model comprises of three tiers of agents: producer agents, who want to sell their goods, transport agents who ship merchandise from producers, and demand market agents interested in purchasing the products or services. The model applies both to electronic goods, such as video streaming, as well as physical goods, such as face masks and other PPEs. Note that the design of such an economic network requires specifying the information fields for every producer, transporter and consumer. For the sake of brevity, we assume that the definition of these information fields are implicit in the equations defined below, but a fuller discussion of this topic will be studied in a subsequent paper. 

The model assumes $m$ service providers, $n$ network providers, and $o$ demand markets.  Each firm's utility function is defined in terms of the nonnegative service quantity (Q), quality (q), and price ($\pi$) delivered from service provider $i$ by network provider $j$ to consumer $k$.  Production costs, demand functions, delivery costs, and delivery opportunity costs are designated by $f$, $\rho$, $c$, and $oc$ respectively.  Service provider $i$ attempts to maximize its utility function $U_i^1(Q,q^*,\pi^*)$ by adjusting $Q_{ijk}$ (eqn. \ref{U1}).  Likewise, network provider $j$ attempts to maximize its utility function $U_j^2(Q^*,q,\pi)$ by adjusting $q_{ijk}$ and $\pi_{ijk}$ (eqn. \ref{U2}).

\begin{subequations}
\begin{align}
\label{U1}
U_i^1(Q,q^*,\pi^*) &= \sum_{j=1}^n \sum_{k=1}^o \hat{\rho}_{ijk}(Q,q^*)Q_{ijk} - \hat{f}_i(Q)\\
&- \sum_{j=1}^n \sum_{k=1}^o \pi^*_{ijk}Q_{ijk}, \hspace{0.2cm} Q_{ijk} \ge 0 \nonumber
\end{align}
\begin{align}
\label{U2}
U_j^2(Q^*,q,\pi) = &\sum_{i=1}^m \sum_{k=1}^o \pi_{ijk}Q^*_{ijk}\\
- &\sum_{i=1}^m \sum_{k=1}^o (c_{ijk}(Q^*,q) + oc_{ijk}(\pi_{ijk})), \nonumber \\
&q_{ijk}, \pi_{ijk} \ge 0 \nonumber
\end{align}
\end{subequations}

We assume the governing equilibrium is Cournot-Bertrand-Nash and the utility functions are all concave and fully differentiable.  This establishes the equivalence between the equilibrium state we are searching for and the variational inequality to be solved where the $F$ mapping is a vector consisting of the negative gradients of the utility functions for each firm.  Since $F$ is essentially a concatenation of gradients arising from multiple independent, conflicting objective functions, it does not correspond to the gradient of any single objective function.  

\begin{subequations}
\begin{align}
\label{SOI-vi}
&\langle F(X^*),X-X^* \rangle \ge 0, \forall X \in \mathcal{K},\\
\text{where } \hspace{0.2cm} &X = (Q,q,\pi) \in \mathbb{R}^{3mno+} \nonumber \\
\text{and } \hspace{0.35cm} &F(X) = (F^1_{ijk}(X), F^2_{ijk}(X), F^3_{ijk}(X)) \nonumber
\end{align}
\begin{align}
F^1_{ijk}(X) &= \frac{\partial f_i (Q)}{\partial Q_{ijk}} + \pi_{ijk} - \rho_{ijk} - \sum_{h=1}^n \sum_{l=1}^o \frac{\partial \rho_{ihl} (Q,q) }{\partial Q_{ijk}} \times Q_{ihl} \label{F1} \\
F^2_{ijk}(X) &= \sum_{h=1}^m \sum_{l=1}^o \frac{\partial c_{hjl} (Q,q)} {\partial q_{ijk}} \label{F2} \\
F^3_{ijk}(X) &= -Q_{ijk} + \frac{\partial oc_{ijk}(\pi_{ijk})}{\partial \pi_{ijk}} \label{F3}
\end{align}
\end{subequations}

The variational inequality in Equations~\ref{SOI-vi} represents the result of combining the utility functions of each firm into standard form.  $F^1_{ijk}$ is derived by taking the negative gradient of $U_i^1$ with respect to $Q_{ijk}$.  $F^2_{ijk}$ is derived by taking the negative gradient of $U_j^2$ with respect to $q_{ijk}$.  And $F^3_{ijk}$ is derived by taking the negative gradient of $U_j^2$ with respect to $\pi_{ijk}$.

\subsubsection{Numerical Example} 

We extend the simplified numerical example in \cite{nagurney:soi} by adding stochasticity to illustrate our causal variational formalism. Let us assume that there are two service providers, one transport agent, and two demand markets. Define the production cost functions:
\[ f_1(Q) = q^2_{111} + Q_{111}  + \eta_{f_1}, f_2(Q) = 2 Q^2_{111} + Q_{211} + \eta_{f_2} \] 
where $\eta_{f_1}, \eta_{f_2}$ are random variables indicating errors in the model. 
Similarly, define the demand price functions as:
\begin{eqnarray*}
 \rho_{111}(Q,q) = -Q_{111} - 0.5 Q_{211} + 0.5 q_{111} + 100 + \eta_{\rho_{111}} \\
 \rho_{211}(Q,q) = -Q_{211} - 0.5 Q_{111} + 0.5 q_{211} + 200 + \eta_{\rho_{211}}
 \end{eqnarray*}
Finally, define the transportation cost functions as:
\begin{eqnarray*}
c_{111}(Q,q) = 0.5(q_{111} - 20)^2 + \eta_{c_{111}} \\ 
c_{211}(Q,q) = 0.5(q_{211} - 10)^2 + \eta_{c_{211}} 
\end{eqnarray*}
and the opportunity cost functions as:
\[ oc_{111}(\pi_{111}) = \pi_{111}^2 + \eta_{oc_{111}}, oc_{211}(\pi_{211}) =\pi_{211}^2 + \eta_{oc_{211}} \] 
Using the above equations, we can easily compute the component mappings $F_i$ as follows: 
\begin{eqnarray*}
F^1_{111}(X) = 4 Q_{111} + 0.5 Q_{211} - 0.5 q_{111} - 99 \\
F^1_{211}(X) = 6 Q_{211} + \pi_{211} - 0.5 Q_{111} - 0.5 q_{211} -199\\
F^2_{111}(X) = q_{111} - 20, \ F^2_{211}(X) = q_{211} - 10 \\
F^3_{111}(X) = -Q_{111} + 2 \pi_{111}, \ F^3_{211}(X) = -Q_{211} + 2 \pi_{211}
\end{eqnarray*}
For simplicity, we have not indicated the noise terms above, but assume each component mapping $F_i$ has an extra noise term $\eta_i$. For example, if we set the network service cost $q_{111}$ of network provider 1 serving the content producer 1 to destination market 1 to 0, then the production cost function under the intervention distribution is given by 
\[ E(f_1(Q) | \hat{q}_{111}) = Q_{111} + E(\eta_{f_1} | \hat{q}_{111})\] 
Finally, the Jacobian matrix associated with $F(X)$ is given by the partial derivatives of each $F_i$ mapping with respect to $(Q_{111}, Q_{211}, q_{111}, q_{211}, \pi_{111}, \pi_{211})$ is given as:
\[ - \nabla U(Q,q,\pi) =  \left( \begin{array}{cccccc} 4 & .5 & -.5 & 0 & 1 & 0 \\
0.5 & 6 & 0 & -.5 & 0 & 1 \\ 0 & 0 & 1 & 0 & 0 & 0 \\ 0 & 0 & 0 & 1 & 0 & 0 \\ -1 & 0 & 0 & 0 & 2 & 0 \\ 0 & -1 & 0 & 0 & 0 & 2\end{array} \right) \]
Note this Jacobian is non-symmetric, but positive definite, as it is diagonally dominant. Hence the induced vector field $F$ can be shown to be strongly monotone, and the induced VI has exactly one solution. 

Crucially, VI problems can only be converted into equivalent optimization problems when a very restrictive condition is met on the Jacobian of the mapping $F$, namely that it be symmetric. Most often, real-world applications of VIs do not induce symmetric Jacobians. 
\begin{theorem}
Assume $F(x)$ is continuously differentiable on $K$ and that the Jacobian matrix $\nabla F(x)$ of partial derivatives of $F_i(x)$ with respect to (w.r.t) each $x_j$ is symmetric and positive semidefinite. Then there exists a real-valued convex function $f: K \rightarrow \mathbb{R}$ satisfying $\nabla f(x) = F(x)$ with $x^*$, the solution of  VI(F,K), also being the mathematical programming problem of minimizing $f(x)$ subject to $x \in K$.
\end{theorem}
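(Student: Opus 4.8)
The plan is to exhibit the potential function $f$ explicitly by a line integral, verify $\nabla f = F$, deduce convexity from the positive semidefiniteness hypothesis, and then observe that the variational inequality is precisely the first-order optimality condition for minimizing $f$ over $K$. First I would fix a base point $x_0 \in K$ and define
\[ f(x) = \int_0^1 \langle F(x_0 + t(x - x_0)),\, x - x_0 \rangle \, dt . \]
The claim $\nabla f(x) = F(x)$ is the heart of the argument: differentiating under the integral sign (justified by $C^1$-ness of $F$ on the convex set $K$) and using the symmetry of the Jacobian $\nabla F$, one shows that the mixed terms reorganize into a total derivative $\tfrac{d}{dt}\big( t\, F(x_0 + t(x-x_0)) \big)$, so the integral telescopes to $F(x)$. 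This is the standard Poincar\'e-lemma / exactness-of-closed-forms computation specialized to $\mathbb{R}^n$; symmetry of $\nabla F$ is exactly the integrability condition that makes $F$ a gradient field on the (convex, hence simply connected) domain $K$.

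Next I would establish convexity of $f$. Since $\nabla f = F$ and $F \in C^1$, we have $\nabla^2 f = \nabla F$, which by hypothesis is symmetric positive semidefinite everywhere on $K$; a $C^2$ function on a convex set with positive semidefinite Hessian is convex, which finishes this step. (Alternatively, one can avoid second derivatives and argue directly from monotonicity: symmetry plus positive semidefiniteness of $\nabla F$ makes $F$ a monotone operator, $\langle F(x) - F(y), x - y \rangle \ge 0$, and a differentiable function whose gradient is monotone is convex.)

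Finally I would close the loop between the VI and the optimization problem. For a convex differentiable $f$ and a convex set $K$, the point $x^*$ minimizes $f$ over $K$ if and only if the first-order stationarity condition $\langle \nabla f(x^*),\, y - x^* \rangle \ge 0$ holds for all $y \in K$: necessity follows by differentiating $t \mapsto f(x^* + t(y - x^*))$ at $t = 0^+$, and sufficiency follows from the gradient inequality $f(y) \ge f(x^*) + \langle \nabla f(x^*), y - x^* \rangle$ valid for convex $f$. Substituting $\nabla f = F$, this condition is verbatim the definition of a solution of $\mathrm{VI}(F,K)$, so the two solution sets coincide. The main obstacle is the first step, the careful differentiation-under-the-integral computation showing $\nabla f = F$ and the precise role of the symmetry hypothesis there; the convexity and optimality-condition steps are then routine consequences. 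One should also note the hypotheses needed for existence of a minimizer (e.g. $K$ compact, or $f$ coercive) if one wants the solution set to be nonempty, though the stated equivalence holds regardless.
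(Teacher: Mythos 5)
Your proof is correct, and it is the standard argument for this classical result (it appears in essentially this form in Facchinei--Pang). The paper itself states the theorem without giving any proof, so there is nothing to compare against line by line; but your three steps — the line-integral construction of the potential $f$, the Poincar\'e-lemma computation showing that symmetry of $\nabla F$ makes the mixed terms collapse into $\tfrac{d}{dt}\bigl(t\,F(x_0+t(x-x_0))\bigr)$ so that $\nabla f = F$, and the identification of the VI with the first-order optimality condition for minimizing the convex $f$ over the convex set $K$ — are exactly the intended content. Two small points worth making explicit: the line integral is only well defined because $K$ is convex (the segment from $x_0$ to $x$ must lie in $K$), which is part of the paper's standing definition of $\mathrm{VI}(F,K)$ but should be invoked; and your closing remark about existence of minimizers is apt, since the theorem as stated only asserts the equivalence of the two solution sets, not their nonemptiness.
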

The algorithmic development of methods for solving VIs begins with noticing their connection to fixed point problems.
\begin{theorem}
The vector $x^*$ is the solution of VI(F,K) if and only if, for any $\alpha > 0$, $x^*$ is also a fixed point of the map  $x^* = P_K(x^* - \alpha F(x^*))$,
where $P_K$ is the projector onto convex set $K$.
\end{theorem}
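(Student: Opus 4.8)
The plan is to reduce both directions of the equivalence to a single well-known variational characterization of the metric projection onto a closed convex set, so that the ``if and only if'' becomes a one-line algebraic manipulation once that characterization is in hand. First I would recall (or prove, if a self-contained treatment is wanted) the \emph{projection lemma}: for a nonempty closed convex set $K \subseteq \mathbb{R}^n$ and any point $y \in \mathbb{R}^n$, there is a unique $z \in K$ minimizing $\|y - z\|_2$, this $z$ is by definition $P_K(y)$, and it is characterized by the obtuse-angle condition
\[
z = P_K(y) \quad \Longleftrightarrow \quad z \in K \ \text{and} \ \langle y - z, \, x - z \rangle \leq 0 \ \ \forall x \in K.
\]
Existence and uniqueness follow from strict convexity of the squared-norm objective on the closed convex feasible set $K$ (a standard compactness-plus-parallelogram argument), and the inequality is just the first-order optimality condition for this convex minimization, obtained by differentiating $t \mapsto \|y - (z + t(x - z))\|_2^2$ at $t = 0^+$ along the feasible direction $x - z$.

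With the lemma available, the argument is symmetric in the two implications and essentially immediate. Fix $\alpha > 0$ and apply the characterization with $y = x^* - \alpha F(x^*)$ and $z = x^*$: the statement $x^* = P_K\bigl(x^* - \alpha F(x^*)\bigr)$ holds precisely when $x^* \in K$ and $\langle (x^* - \alpha F(x^*)) - x^*, \, x - x^* \rangle \leq 0$ for all $x \in K$, i.e.\ $\langle -\alpha F(x^*), \, x - x^* \rangle \leq 0$ for all $x \in K$. Dividing by the positive scalar $\alpha$ and reversing the sign gives $\langle F(x^*), \, x - x^* \rangle \geq 0$ for all $x \in K$, which together with $x^* \in K$ is exactly the definition of $x^*$ being a solution of $\mathrm{VI}(F,K)$. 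Reading the chain of equivalences in each direction yields both implications, and it also makes transparent why the conclusion is independent of the choice of $\alpha > 0$: $\alpha$ only rescales an inequality that is homogeneous in it.

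I expect the only nontrivial content to be the projection lemma itself; everything after that is bookkeeping. If the paper is willing to cite it (it is standard convex analysis, and the $F$-is-Lipschitz / strongly-monotone apparatus has already been set up alongside $P_K$), the proof collapses to the displayed substitution. The one subtlety worth flagging is that the equivalence should be stated with the hypothesis $K$ closed and convex (already assumed in the definition of $\mathrm{VI}(F,K)$ in the excerpt) so that $P_K$ is single-valued; without convexity the projection need not be unique and the obtuse-angle characterization fails. No continuity or monotonicity of $F$ is needed for this particular equivalence — those hypotheses enter only later, when one turns the fixed-point reformulation into a convergent iterative scheme (e.g.\ the projection or extragradient method), which is outside the scope of this statement.
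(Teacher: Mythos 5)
Your proposal is correct and complete. Note, though, that the paper does not actually supply a formal proof of this theorem: it only follows the statement with a geometric remark that at a solution the vector field $F(x^*)$ is ``normal to the boundary of $K$ and directed inwards,'' so that projecting $x^* - \alpha F(x^*)$ returns $x^*$ (and, in an earlier figure caption, it records that $-F(x^*)$ lies in the normal cone $C(x^*) = \{y \mid \langle y, x - x^*\rangle \leq 0, \ \forall x \in K\}$). Your argument via the obtuse-angle characterization of $P_K$ is precisely the rigorous form of that picture: the condition $\langle (x^* - \alpha F(x^*)) - x^*, \, x - x^*\rangle \leq 0$ for all $x \in K$ is literally the statement that $-\alpha F(x^*)$ belongs to the normal cone at $x^*$, and dividing by $\alpha > 0$ gives the VI. So you have not taken a different route so much as filled in the standard proof (as in Facchinei--Pang, which the paper cites) that the paper leaves implicit. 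Your flagged hypotheses are also right: closedness and convexity of $K$ are what make $P_K$ single-valued and the characterization valid, and no continuity or monotonicity of $F$ is needed for the equivalence itself.
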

In terms of the geometric picture of a VI, the solution of a VI occurs at a vector $x^*$ where the vector field $F(x^*)$ induced by $F$ on $K$ is normal to the boundary of $K$ and directed inwards, so that the projection of $x^* - \alpha F(x^*)$ is the vector $x^*$ itself. This property forms the basis for the projection class of methods that solve for the fixed point.

With this insight, we can define a category of coalgebras over VIs, where the endofunctor $F_{VI}$ is defined in terms of the vector field $F$ as defined above, and the solution to the VI corresponds to finding the final coalgebra representing the solution to the VI problem. Together, the VI and PDS frameworks provide a mathematically elegant approach to modeling and solving equilibrium problems in game theory \cite{fudenbergtheory,nisan2007algorithmic}.  A {\em Nash game} consists of $m$ players, where player $i$ chooses a strategy $x_i$ belonging to a closed convex set $X_i \subset \mathbb{R}^n$.  After executing the joint action, each player is penalized (or rewarded) by the amount $f_i(x_1,\ldots,x_m)$, where $f_i: \mathbb{R}^n \rightarrow \mathbb{R}$ is a continuously differentiable function.  A set of strategies $x^* = (x_1^*,\ldots,x_m^*) \in \Pi_{i=1}^M X_i$ is said to be in equilibrium if no player can reduce the incurred penalty (or increase the incurred reward) by unilaterally deviating from the chosen strategy.  If each $f_i$ is convex on the set $X_i$, then the set of strategies $x^*$ is in equilibrium if and only if $\langle \nabla_i f_i (x_i^*), (x_i - x_i^*) \rangle \ge 0$.  In other words, $x^*$ needs to be a solution of the VI $\langle F(x^*), (x-x^*) \rangle \ge 0$, where $F(x) = ( \nabla f_1(x), \ldots, \nabla f_m(x))$.

Two-person Nash games are closely related to {\em saddle point} optimization problems \cite{juditsky2011first,juditsky2011second,liu2012regularized} where we are given a function $f: X \times Y \rightarrow \mathbb{R}$, and the objective is to find a solution $(x^*,y^*) \in X \times Y$ such that \begin{equation} f(x^*,y) \le f(x^*,y^*) \le f(x,y^*), \forall x \in X, \forall y \in Y. \end{equation}  Here, $f$ is convex in $x$ for each fixed $y$, and concave in $y$ for each fixed $x$. 

The class of complementarity problems can also be reduced to solving a VI. When the feasible set $K$ is a cone, meaning that if $x \in K$, then $\alpha x \in K, \alpha \geq 0$, then the VI becomes a CP.
\begin{definition}
Given a cone $K \subset \mathbb{R}^n$ and mapping $F: K \rightarrow \mathbb{R}^n$, the complementarity problem CP(F,K) is to find an $x \in K$ such that $F(x) \in K^*$, the dual cone to $K$, and $\langle F(x), x \rangle \geq 0$. \footnote{Given a cone $K$, the dual cone $K^*$ is defined as $K^* = \{ y \in \mathbb{R}^n | \langle y, x \rangle \geq 0, \forall x \in K \}$.}
\end{definition}

The nonlinear complementarity problem (NCP) is to find $x^* \in \mathbb{R}^n_+$ (the non-negative orthant) such that $F(x^*) \geq 0$ and $\langle F(x^*), x^* \rangle = 0$. The solution to an NCP and the corresponding $VI(F, \mathbb{R}^n_+)$ are the same, showing that NCPs reduce to VIs. In an NCP, whenever the mapping function $F$ is affine, that is $F(x) = Mx + b$, where $M$ is an $n \times n$ matrix, the corresponding NCP is called a linear complementarity problem (LCP) \cite{murty:lcpbook}.

%
%
%

\subsection{A Metric Coinduction Algorithm for solving VIs}

 We now discuss algorithms for solving VI's. There are a wealth of existing methods for deterministic VI's \citep{facchinei-pang:vi,nagurney:vibook}).  First, we present a few of the standard algorithms used to compute solutions to variational inequalities and equilibria in projected dynamical systems. We finally describe an adaptation of a stochastic approximation method \cite{rm} to solve VIs, which can be viewed as a special type of metric coinduction method. 

The basic projection-based method (Algorithm 1) is as follows:
\begin{algorithm}
\caption{The Basic Projection Algorithm.}
{\bf INPUT:} Given VI(F,K), and a symmetric positive definite matrix $D$.
\begin{algorithmic}[1]
\STATE Set $k=0$ and $x_k \in K$.
\REPEAT
\STATE Set $x_{k+1} \leftarrow P_{K} (x_k - \alpha D^{-1} F(x_k))$.
\STATE Set $k \leftarrow k+1$.
\UNTIL{$x_k = P_{K} (x_k - \alpha D^{-1} F(x_k))$}.
\STATE Return $x_k$
\end{algorithmic}
\end{algorithm}
Here, $P_{K}$ is the orthogonal projector onto the convex set $K$. It can be shown that the basic projection algorithm solves any $VI(F,K)$ for which the mapping $F$ is strongly monotone and Lipschitz smooth.  A simple strategy is to set $D = I$ where $\alpha < \frac{L^2}{2 \mu}$, $L$ is the Lipschitz smoothness constant, and $\mu$ is the strong monotonicity constant.  Setting $D$ equal to a constant in this manner recovers what is known as Euler's method and is the most basic algorithm for solving both VIs and PDS.

The basic projection-based algorithm has two critical limitations.  First, it requires that the mapping $F$ be strongly monotone. If, for example, $F$ is the gradient map of a continuously differentiable function, strong monotonicity implies the function must be strongly convex.  Second, setting the parameter $\alpha$ requires knowing the Lipschitz smoothness $L$ and the strong monotonicity parameter $\mu$.

\citet{korpelevich} extended the projection algorithm with the well-known ``extragradient" method, which requires two projections, but is able to solve VI's for which the mapping $F$ is only monotone. The key idea behind the extragradient method is to use the F mapping evaluated at the result of the projection method, $x_{k+1}$, instead of $x_k$.  In other words, step 3 of Algorithm 2 is replaced with
\begin{align*}
\text{Set } &y_{k} \leftarrow P_{K} (x_k - \alpha  F(x_k))\\
\text{Set } &x_{k+1} \leftarrow P_{K} (x_k - \alpha F(y_k)).
\end{align*}

\begin{algorithm}[h]
\caption{The Extragradient Algorithm.}

{\bf INPUT:} Given VI(F,K), and a scalar $\alpha$.

\begin{algorithmic}[1]

\STATE Set $k=0$ and $x_k \in K$.

\REPEAT

\STATE  Set $y_{k} \leftarrow P_{K} (x_k - \alpha  F(x_k))$.

\STATE \label{projstep} Set $x_{k+1} \leftarrow P_{K} (x_k - \alpha F(y_k))$.

\STATE Set $k \leftarrow k+1$.

\UNTIL{$x_k = P_{K} (x_k - \alpha F(x_k))$}.

\STATE Return $x_k$

\end{algorithmic}
\end{algorithm}

\citet{GempM015} proposed a more sophisticated extragradient algorithm, combining Runge-Kutta methods from ODE's with a modified dual-space mirror-prox method  to solve large network games modeled as VI's, such as the network economy shown in Figure~\ref{soifigure}, but required multiple projections corresponding to the order of the Runge-Kutta approximation. If projections are expensive, particularly in large network economy models, these algorithms may be less attractive than incremental stochastic projection methods, which we turn to next. 

 \subsection*{Convergence Analysis of VI Algorithms} 
 
 At the heart of convergence analysis of any VI method is bounding the iterates of the algorithm. In the below derivation, $x^*$ represents the final solution to a VI, and $x_{k+1}$, $x_k$ are successive iterates: 
\begin{small}
\begin{eqnarray*}
\| x_{k+1} - x^* \|^2 &=& \| P_K[x_k - \alpha_k F_w(x_k)] - P_K[x^* - \alpha_k F_w(x^*)] \|^2 \\
&\leq& \| (x_k - \alpha_k F_w(x_k)) - (x^* - \alpha_k F_w(x^*)) \|^2 \\
&=& \| (x_k - x^*) - \alpha_k (F_w(x_k) - F_w(x^*)) \|^2 \\
&=& \|x_k - x^* \|^2 - 2 \alpha_k \langle (F_w(x_k) - F_w(x^*)), x_k - x^* \rangle \\ &+& \alpha_k^2 \|F_w(x_k) - F_w(x^*)\|^2 \\
&\leq& (1 - 2 \mu \alpha_k + \alpha_k^2 L^2) \| x_k - x^*\|^2
\end{eqnarray*} 
\end{small}
Here, the first inequality follows from the nonexpansive property of projections, and the last inequality follows from strong monotonicity and Lipschitz property of the $F_w$ mapping. Bounding the term $\langle (F_w(x_k) - F_w(x^*)), x_k - x^* \rangle$ is central to the design of any VI method.

\subsection*{A Stochastic Approximation Method for Stochastic VIs} 

We now describe an incremental two-step projection method for solving (stochastic) VI's, which is loosely based on \citet{DBLP:journals/mp/WangB15,iusem}. The general update equation can be written as: 
\begin{equation}
\label{2stepalg}
z_k = x_k - \alpha_k F_w(x_k, v_k), \ \ \ x_{k+1} = z_k - \beta_k (z_k - P_{w_k} z_k)
\end{equation}
where $\{v_k\}$ and $\{w_k\}$ are sequences of random variables, generated by sampling a VI model, and $\{\alpha_k\}$ and $\{\beta_k\}$ are sequences of positive scalar step sizes. Note that an interesting feature of this algorithm is that the sequence of iterates $x_k$ is not guaranteed to remain within the feasible space $K$ at each iterate. Indeed, $P_{w_k}$ represents the projection onto a randomly sampled constraint $w_k$. 

The analysis of convergence of this algorithm is somewhat intricate, and we will give the broad highlights as it applies to stochastic VI's. Define the set of random variables ${\cal F}_k = \{v_0, \ldots, v_{k-1}, w_0, \ldots, w_{k-1}, z_0, \ldots, z_{k-1}, x_0, \ldots, x_k \}$. Similar to the convergence of the projection method, it is possible to show that the error of each iteration is {\em stochastically contractive}, in the following sense: 
\[ E[ \| x_{k+1} - x^* \|^2 | {\cal F}_k ] \leq (1 - 2 \mu_k \alpha_k + \delta_k) \| x_k - x^* \|^2 + \epsilon_k, \ \ \ \mbox{w. p. 1} \]
where $\delta_k, \epsilon_k$ are positive errors such that $\sum_{k=0}^\infty \delta_k < \infty$ and $\sum_{k=0}^\infty \epsilon_k < \infty$. Note that the assumption of stochastic contraction is essentially what makes this algorithm an example of metric coinduction \cite{kozen}. 
The convergence of this method rests on the following principal assumptions, stated below: 

\begin{assumption}
The mapping $F_w$ is strongly monotone, and the sampled mapping $F_w(.,v)$ is {\em stochastically Lipschitz continuous} with constant $L > 0$, namely:
\begin{equation}
 E[ \| F_w(x, v_k) - F_w(y, v_k) \|^2 | {\cal F}_k] \leq L^2 \| x - y \|^2. \forall x, y \in \mathbb{R}^n
 \end{equation}
\end{assumption}

\begin{assumption}
The  mapping $F_w$ is bounded, with constant $B > 0$ such that 
\begin{equation}
\| F_w(x^*) \| \leq B, \ \ \ E[\| F_w(x^*, v) \|^2 | {\cal F}_k] \leq B^2, \ \ \forall k \geq 0
\end{equation}
\end{assumption}

\begin{assumption}
The distance between each iterate $x_k$ and the feasible space $K$ reduces ``on average", namely: 
\begin{equation}
\| x - P_K \|^2 \geq \eta \max_{i \in M} \|x - P_{K_i} x\|^2
\end{equation}
where $\eta > 0$ and $M = \{1, \ldots, m \}$ is a finite set of indices such that $\prod K_i = K$. 
\end{assumption}
\begin{assumption}
\[ \sum_{k=0}^\infty \alpha_k = \infty, \sum_{k=0}^\infty \alpha_k^2 < \infty, \sum_{k=0}^\infty \frac{\alpha_k^2}{\gamma_k} < \infty \]
where $\gamma_k = \beta_k (2 - \beta_k)$. 
\end{assumption}
\begin{assumption}
Supermartingale convergence theorem: Let ${\cal G}_k$ denote the collection of nonnegative random variables $\{y_k\}, \{u_k \}, \{a_k\}, \{b_k\}$ from $i=0, \ldots, k$
\begin{equation}
E[y_{k+1} | {\cal G}_k] \leq (1 + a_k) y_k - u_k + b_k, \ \ \forall k \geq 0, \ \mbox{w.p. 1}
\end{equation}
and $\sum_{k=0}^\infty a_k < \infty$ and $\sum_{k=0}^\infty b_k < \infty$ w.p. 1. Then, $y_k$ converges to a nonnegative random variable, and $\sum_{k=0}^\infty u_k < \infty$.
\end{assumption}
\begin{assumption}
The random variables $w_k, k=0, \ldots$ are such that for $\rho \in (0, 1]$
\[ \inf_{k\geq 0} P(w_k = X_i | {\cal F}_k) \geq \frac{\rho}{m}, \ \ i=1, \ldots, m, \ \mbox{w.p.1} \]
namely, the individual constraints will be sampled sufficiently. Also, the sampling of the stochastic components $v_k, k=0, \ldots$ ensures that
\[ E[F_w(x_k, v_k) | {\cal F}_k] = F_w(x), \ \ \forall x \in \mathbb{R}^n, \ k \geq 0 \]
\end{assumption}

Given the above assumptions, it can be shown that two-step stochastic algorithm given in Equation~\ref{2stepalg} converges to the solution of a (stochastic) VI. 
\begin{theorem}
Given a finite-dimensional stochastic  variational inequality problem is defined by a model ${\cal M}$ = SVI($F, K)$,  where $F(x) = E[F(x, \eta]$, where $E[.]$ now denotes expectation, the two-step algorithm given by Equation~\ref{2stepalg} produces a sequence of iterates $x_k$ that converges almost surely to $x^*$, where 
\begin{equation*}
\langle F_w(x^*), (y - x^*) \rangle \geq 0, \ \forall y \in K
\end{equation*}
\end{theorem}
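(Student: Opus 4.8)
The plan is to establish the result as a stochastic approximation argument built around a supermartingale (Lyapunov) inequality for the squared distance $\|x_k - x^*\|^2$, following the template of \citet{DBLP:journals/mp/WangB15} and \citet{iusem}. The starting point is to split the one-step progress using the two substeps of Equation~\ref{2stepalg}. Writing $e_k = z_k - P_{w_k} z_k$ for the infeasibility residual of the randomly sampled constraint $K_{w_k}$, the update $x_{k+1} = z_k - \beta_k e_k$ gives, after expanding the square and using the variational characterization of the projection $P_{w_k}$ together with the inclusion $x^* \in K \subseteq K_{w_k}$,
\[ \|x_{k+1} - x^*\|^2 \leq \|z_k - x^*\|^2 - \gamma_k \|e_k\|^2, \]
where $\gamma_k = \beta_k(2-\beta_k)$ as in the step-size hypothesis. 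This isolates a genuine decrease coming from the projection substep whenever $z_k$ is infeasible for the sampled constraint.

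First I would bound $\|z_k - x^*\|^2$ in terms of $\|x_k - x^*\|^2$. Substituting $z_k = x_k - \alpha_k F_w(x_k,v_k)$, expanding, and then taking the conditional expectation with respect to $\mathcal{F}_k$, the cross term becomes $-2\alpha_k \langle E[F_w(x_k,v_k)\mid\mathcal{F}_k],\, x_k - x^*\rangle = -2\alpha_k\langle F_w(x_k),\, x_k - x^*\rangle$ by the unbiasedness clause of the sampling hypothesis. Strong monotonicity of $F_w$ together with the defining inequality $\langle F_w(x^*),\, x_k - x^* \rangle \geq 0$ of the SVI gives $\langle F_w(x_k),\, x_k - x^*\rangle \geq \mu\|x_k - x^*\|^2$. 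The quadratic term $\alpha_k^2\, E[\|F_w(x_k,v_k)\|^2\mid\mathcal{F}_k]$ is controlled by the stochastic Lipschitz bound and the boundedness of $F_w$ at $x^*$, producing a contribution of the form $\alpha_k^2(L^2\|x_k-x^*\|^2 + cB^2)$ for a universal constant $c$. Collecting terms yields the stochastically contractive estimate
\[ E[\|x_{k+1}-x^*\|^2\mid\mathcal{F}_k] \leq (1 - 2\mu\alpha_k + \alpha_k^2 L^2)\|x_k - x^*\|^2 - \gamma_k\, E[\|e_k\|^2\mid\mathcal{F}_k] + \alpha_k^2 cB^2, \]
which is precisely the stochastic-contraction property that makes the iteration an instance of metric coinduction in the sense of \citet{kozen}.

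Next I would exploit the residual term $-\gamma_k\, E[\|e_k\|^2\mid\mathcal{F}_k]$. Because the sampled constraint $w_k$ is drawn so that $\inf_{k\geq 0} P(w_k = X_i\mid\mathcal{F}_k) \geq \rho/m$, the conditional expected residual dominates a fixed fraction of $\max_{i} \|z_k - P_{K_i} z_k\|^2$, and the feasibility-reduction hypothesis (a linear-regularity-type error bound) then converts this into a lower bound by a positive multiple of $\|z_k - P_K z_k\|^2$, the true infeasibility of $z_k$. A triangle-inequality step relating $\|z_k - P_K z_k\|$ to $\mathrm{dist}(x_k, K)$ up to an $O(\alpha_k)$ slack then shows that the negative residual absorbs any growth in $\mathrm{dist}(x_k, K)^2$ and in particular drives it to zero. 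At this point the right-hand side has exactly the form required by the supermartingale convergence theorem: setting $y_k = \|x_k - x^*\|^2$, $a_k = \alpha_k^2 L^2$, $u_k = 2\mu\alpha_k y_k + \gamma_k\, E[\|e_k\|^2\mid\mathcal{F}_k]$, and $b_k = \alpha_k^2 cB^2$ (after folding in the $O(\alpha_k)$ slack), the step-size hypothesis guarantees $\sum_k a_k < \infty$ and $\sum_k b_k < \infty$. The theorem then gives that $y_k$ converges almost surely to a nonnegative random variable $Y$ and that $\sum_k u_k < \infty$ almost surely; since $\sum_k \alpha_k = \infty$, the summability $\sum_k \mu\alpha_k y_k < \infty$ forces $\liminf_k y_k = 0$, and combined with the existence of the limit this yields $Y = 0$, i.e.\ $x_k \to x^*$ almost surely. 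Uniqueness of $x^*$ is already guaranteed by strong monotonicity of $F_w$, so the limit is unambiguous.

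I expect the main obstacle to be the coupling between the two error quantities -- the optimization error $\|x_k - x^*\|^2$ and the feasibility error $\mathrm{dist}(x_k, K)^2$ -- since the iterates $x_k$ need not lie in $K$. Handling this cleanly requires the combined use of the feasibility-reduction hypothesis, the step-size ratio condition $\sum_k \alpha_k^2/\gamma_k < \infty$, and the sampling hypothesis, so that the residual term $-\gamma_k\|e_k\|^2$ is simultaneously large enough to push infeasibility to zero and controlled enough not to destroy the supermartingale structure; the $O(\alpha_k)$ slack introduced when transferring residuals from $z_k$ to $x_k$ must be re-absorbed into the $b_k$ term without spoiling its summability. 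A secondary subtlety is the measurability bookkeeping: one must be scrupulous about which quantities are $\mathcal{F}_k$-measurable at each substep -- in particular $x_k$ and $z_k$ versus the freshly sampled $v_k$ and $w_k$ -- so that the unbiasedness and conditional-bound clauses apply verbatim.
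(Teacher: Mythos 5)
Your proposal is correct and takes essentially the same route as the paper: the paper's own proof is a one-line deferral to the derivations in Iusem et al.\ and Wang--Bertsekas, and your sketch is a faithful (and considerably more detailed) reconstruction of exactly that argument, reproducing the stochastic-contraction inequality and supermartingale-convergence mechanism that the paper sets up in its listed assumptions and its convergence-analysis subsection. The one loose step --- invoking $\langle F_w(x^*), x_k - x^* \rangle \geq 0$ at an iterate $x_k$ that need not lie in $K$ --- is precisely the feasibility/optimality coupling you flag at the end, and the decomposition through $P_K x_k$ with the $O(\alpha_k)$ slack absorbed into the summable term, as you describe, is the standard resolution used in those references.
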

{\bf Proof:} The proof of this theorem largely follows the derivation given in \citep{iusem,DBLP:journals/mp/WangB15}. $\qed$

To reiterate, these stochastic approximation methods for solving (stochastic) VIs are essentially instances of the more general metric coinduction framework \cite{kozen}, and the key to understanding them from a category-theoretic framework is to formulate them in the context of finding final coalgebras given by the fixed point equation that solves a given (stochastic) VI. 

\subsection*{Evolutionary Imitation Games with VIs} 

Finally, we can adapt the VI algorithms described above to the evolutionary setting by inserting in a ``mutation" component where at each step of the evolutionary process, some participant in a network economy goes ``extinct" and is replaced with another participant who is more ``fit". Following the similar strategy used in the discussion of Moran processes above, we can analyze the convergence of such a network economy where participants are randomly selected for extinction and replacement. A full analysis of this problem is beyond the scope of this paper, and is a topic for future work, but we give the high-level idea here. 

In a standard VI, we are given a vector field $F: K \rightarrow \mathbb{R}^n$, which represents a network economy such as the one illustrated in Figure~\ref{soifigure}, and the goal is to find the equilibrium of this network economy game. To define an evolutionary network economy game, we now add the ``birth-death" stochastic process or the mutation process defined above with Moran processes or with bioinformatics example of genetic mutations, or with Valiant's evolvability model. That is, we assume that each step of the evolutionary UIG, we have a set of participants playing the network economy game, and we solve that using (for example) the stochastic approximation algorithm described above. Given a solution to the current network economy game, we apply a mutation operator by eliminating one participant (say as in a Moran process), and then reproduce another participant of higher fitness to keep the population size fixed. This evolutionary step might correspond to a business going bankrupt or being bought over by another higher fitness business. The network economy game is then played again, and the process repeats. The convergence of such evolutionary UIGs is a matter of choosing the right type of incremental mutation operator such that the overall process converges. The principle of metric coinduction provides a broad category-theoretic framework to analyze usch evolutionary UIGs. 

\section{Universal Imitation Games on Quantum Computers} 

\begin{quote}
    ``Nature isn't classical, dammit, and if you want to make a simulation of nature, you'd better make it quantum mechanical, and by golly it's a wonderful problem, because it doesn't look so easy.” -- Richard Feynman
\end{quote}

\label{uigquantum}

Thus far in this paper, we have made an implicit assumption that the solutions to imitation games would be implemented on a classical computer, in particular  Turing's machine. However, the underlying mathematics we have used in analyzing imitation games generalizes elegantly to imitation games that can  and will undoubtedly be played in the near future on quantum computers \cite{DBLP:conf/lics/AbramskyC04,Coecke_Kissinger_2017}. In this final brief section, we want to give a high level description of quantum UIGs, and what they may imply in terms of strategies for playing imitation games. This possibility is not science fiction. \cite{DBLP:phd/ethos/Toumi22} gives a detailed description of how to model natural language processing on quantum computers using category theory in his PhD thesis. He also describes the DiScoPy Python programming environment for converting natural language sentences into symmetric monoidal categories. It is quite feasible, perhaps not yet practical, to design algorithms to play quantum imitation games using packages like DisCoPy \cite{de_Felice_2021}, although in the years ahead, it will become increasingly feasible, and perhaps the staggering energy costs of current generative AI software on conventional computers will leave us no choice but to turn to quantum computers to play imitation games. 

\subsection{Compact Closed Categories}

To begin with, we need to define a model of quantum computation, and along the same lines of study that we have undertaken so far. That is, the first question we have to answer is: what is the underlying category that quantum computing is based on? Figure~\ref{smorgasboard} illustrates what can be referred to as a smorgasbord of categories, a very small sampling of the possible ways in which categories can be defined. We give a brief description of {\em compact closed categories}, which has been proposed as a category for quantum protocols, such as entanglement \cite{DBLP:conf/lics/AbramskyC04}. 

\begin{definition}
    A {\bf symmetric monoidal category} is defined as a category ${\cal C}$ with a bifunctor $\otimes: {\cal C} \times {\cal C} \rightarrow {\cal C}$, a {\em unit object} ${\bf 1}$, and natural isomorphisms 

    \[ \lambda_A: A \simeq 1 \otimes A \ \ \ \rho_A: A \simeq A \otimes 1 \]

    \[ \alpha_{A, B, C}: A \otimes (B \otimes C) \simeq (A \otimes B) \otimes C \]

    \[ \sigma_{A, B}: A \otimes B \simeq B \otimes A \]

which needs to satisfy some additional coherence conditions as defined in \cite{maclane:71}. 
\end{definition}

\begin{figure}
    \centering
    \includegraphics[scale=0.5]{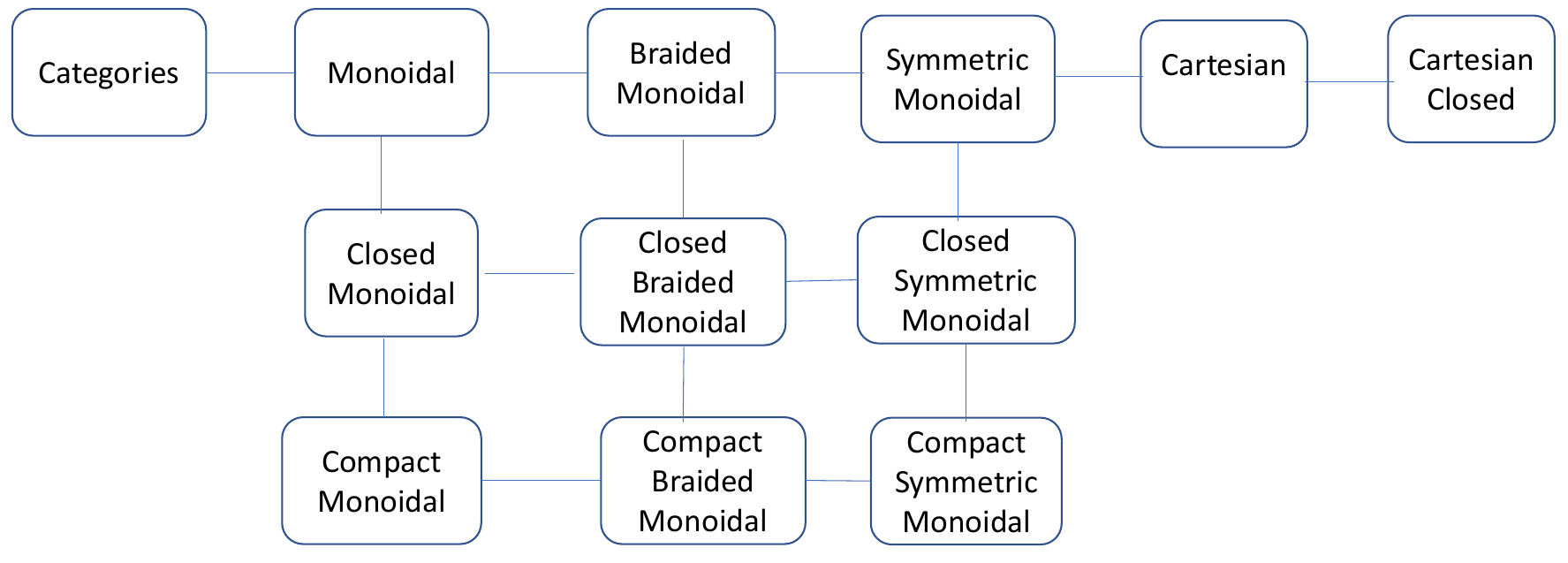}
    \caption{Quantum computing is modeled in {\em compact closed} symmetric monoidal categories.}
    \label{smorgasboard}
\end{figure}

\begin{definition}
    A category {\cal C} is {\em *-autonomous} if it is a symmetric monoidal category with {\em dual objects} represented by a full and faithful functor 

    \[ ()^*: {\cal C}^{op} \rightarrow {\cal C} \]
 satisfying the bijection 

 \[ {\cal C}(A \otimes B, C^*) \simeq {\cal C}(A, (B \otimes C)^*) \]
 that is natural over $A$, $B$, and $C$. A {\bf compact closed category} is an *-autonomous category with a self-dual tensor operator, satisfying the natural isomorphisms 

 \[ u_{A, B}: (A \otimes B)^* \simeq A^* \otimes B^*  \ \ \ u_{\bf 1}: {\bf 1}^* \simeq {\bf 1} \]
\end{definition}

An alternative way to think of compact closed categories is to view them as a symmetric monoidal category where every object $A$ has a {\em dual object}, given by $A^*$, such that it defines a {\em unit} 

\[ \eta_A: {\bf 1} \rightarrow A^* \otimes A \]

and a {\em counit} 

\[ \epsilon_A: A \otimes A^* \rightarrow {\bf 1} \]

satisfying the following commutative diagram: 

\[\begin{tikzcd}
	A &&& {A \otimes {\bf 1}} &&& {A \otimes (A^* \otimes A)} \\
	\\
	A &&& {I \otimes A} &&& {(A \otimes A^*) \otimes A}
	\arrow["{\rho_A}", from=1-1, to=1-4]
	\arrow["{{\bf 1}_A \otimes \eta_A}", from=1-4, to=1-7]
	\arrow["{\alpha_{A,A^*, A}}", from=1-7, to=3-7]
	\arrow["{\epsilon_A \otimes {\bf 1}_A}", from=3-7, to=3-4]
	\arrow["{\lambda_A^{-1}}", from=3-4, to=3-1]
	\arrow[from=1-1, to=3-1]
\end{tikzcd}\]

\begin{example}
    The monoidal category $({\bf Rel}, \times)$ of sets, relations and Cartesian products is a compact closed category. 
\end{example}

\begin{example}
    The monoidal category $({\bf Vec_K}, \otimes)$ of finite-dimensional vector spaces over a field $K$ is a compact closed category. 
\end{example}

\subsection{Quantum Teleportation and Entanglement} 

We want to briefly sketch out why compact closed categories provide a good categorical representation for quantum UIGs. Our discussion below is based on \cite{DBLP:conf/lics/AbramskyC04}. Quantum computing \cite{Coecke_Kissinger_2017} is based on modeling the state space of a system as a (finite-dimensional) Hilbert space ${\cal H}$, which is a finite-dimensional complex vector space with an inner product written in ``bra-ket" notation as $\langle \phi | \psi \rangle$ that is conjugate linear in the first argument and linear in the second. The fact that all computations in an quantum computer are inherently linear is a great advantage, however it is incredibly challenging in almost every other respect, not least of which is the physical challenges associated with preparing an input to a quantum computer, and ensuring that quantum bits (qubits) do not collapse into classical bits while computation is taking place. A {\em quantum state} is defined as a one-dimensional subspace ${\cal A}$ of ${\cal H}$, and represented as a vector $\psi \in {\cal A}$ of unit norm. Formally, qubits are $2$-dimensional Hilbert spaces, on the basis defined by $\{ | 0 \rangle, | 1 \rangle \}$. 

One of the strangest phenomena that occurs in quantum mechanics is {\em entanglement}, which arises in compound systems that can be viewed as tensor products of component subsystems ${\cal H_1}$ and ${\cal H_2}$: 

\[ {\cal H_1} \otimes {\cal H_2}  = \sum_{i=1}^n \alpha \cdot \phi_i \otimes \psi_i \]

where the two subsystems may exhibit correlations that cannot be decomposed into pairs of vectors in each individual subsystem. As a concrete example, illustrated in Figure~\ref{twoslit},  consider the compound system ${\cal H}_1 \otimes {\cal H}_2$ as represented by two slits through which electrons are shot through \cite{Feynman:1494701}. If one of the two slits is covered, then the probability of the electron being detected at a detector placed at the end opposite from where the electrons are being shot behaves as you would expect. The probability of electrons being detected is highest directly opposite the slit and gradually decays as you move away. However, this behavior which can be easily understood for each subsystem ${\cal H}_1$ and ${\cal H}_2$ individually completely changes when both slits are combined in the composite system ${\cal H}_1 \otimes {\cal H}_2$. Now, there is interference between the two slits and the behavior of the electron changes from being that of a particle to a wave. 

\begin{figure}[t]
    \centering
    \includegraphics[scale=0.5]{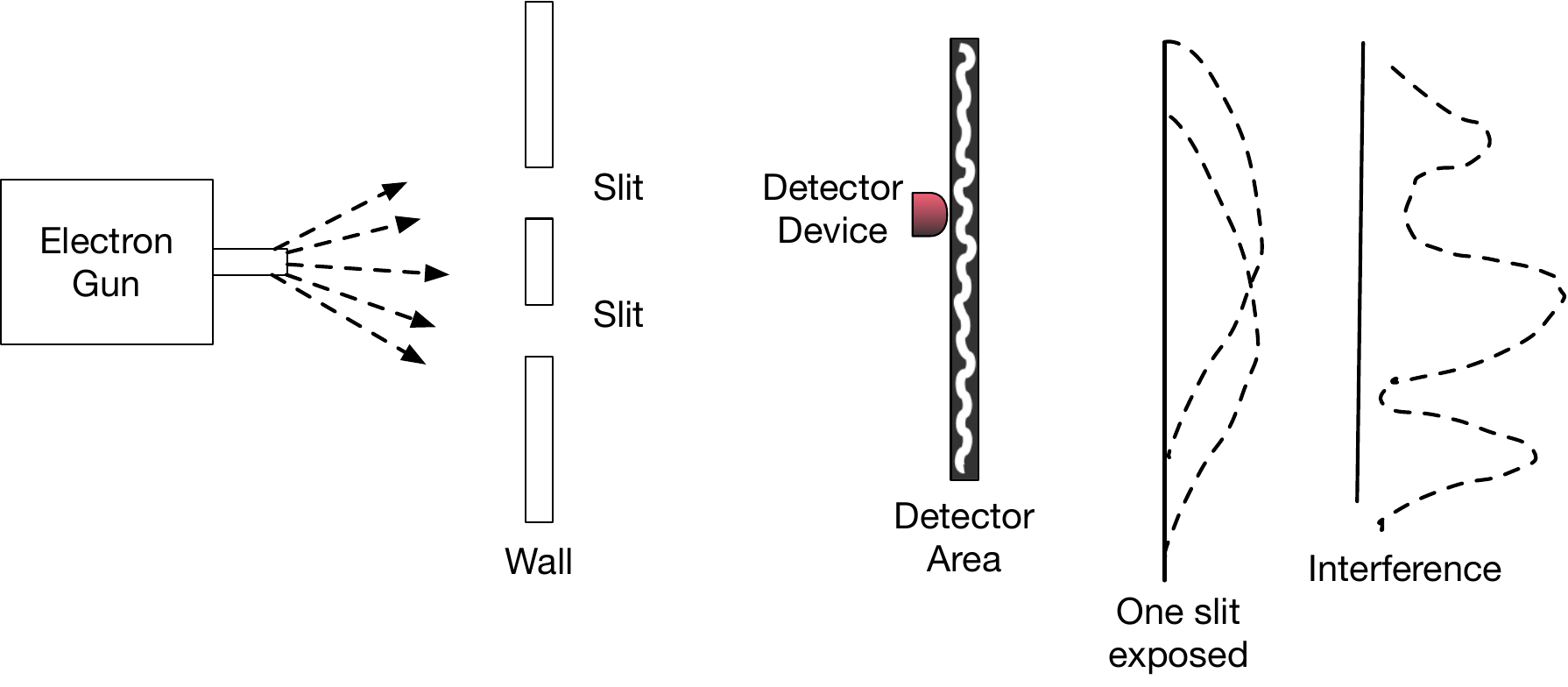}
    \caption{The fundamental essence of what makes quantum computing different from classical computing rests in the property of quantum entanglement, which results from the interference that occurs when quantum systems are composed. In this famous two-slit thought experiment \cite{Feynman:1494701}, the behavior of electrons is completely predictable when the beam of electrons is allowed to pass through just one slit, but when both slits are exposed, the behavior of the composite system is no longer a decomposable function of the individual systems. If we model each individual system in terms of a probability distribution where the electron might be detected, the behavior of the combined system violates the total law of probability, which states that the probability $P(E_1 + E_2) = P(E_1) + P(E_2)$, where $P(E_i)$ is the probability of detecting an electron through slit $i$. This  paradoxical behavior necessitated the use of complex numbers in modeling quantum behavior, and thus probabilistic events in quantum mechanics are represented by complex amplitudes.  }
    \label{twoslit}
\end{figure}

All computations in quantum computers are inherently {\em reversible}, due to the fact that data transformations are represented as unitary transformations. Given a linear map $f: {\cal H}_1\rightarrow {\cal H}_2$ between two Hilbert spaces, the {\em adjoint} is defined as the linear map $f^\dagger: {\cal H}_2 \rightarrow {\cal H}_1$ such that for all $\phi \in {\cal H}_2$ and $\psi \in {\cal H}_1$, 

\[ \langle \phi, f(\psi) \rangle_{{\cal H}_2} = \langle f^\dagger (\phi) | \psi \rangle_{{\cal H}_1}\]

A unitary transformation is a linear isomorphism

\[ U: {\cal H}_1 \rightarrow {\cal H}_2 \]

satisfying the property that $U^{-1} = U^\dagger: {\cal H}_2 \rightarrow {\cal H}_1$. By tbe bilinearity property, these transformations must preserve inner products

\[ \langle U(\phi) | U(\psi) \rangle_{{\cal H}_2} = \langle (U^\dagger U) (\phi) | \psi \rangle_{{\cal H}_1} = \langle \phi | \psi \rangle_{{\cal H}_1} \]

Self-adjoint operators mapping one Hilbert space into another are linear transformations $M: {\cal H} \rightarrow {\cal H}$ such that $M = M^\dagger$. Measurements in quantum computing systems are represented by self-adjoint operators, and can be divided into two phases: 

\begin{enumerate}
    \item An observer receives the outcome of a measurement  outcome as a value $x$ in the spectrum $\sigma(M)$ of the corresponding self-adjoint operator $M$. 

    \item As a result of the measurement, the state of the system undergoes a change, which can be modeled as the action of a projector $P$ whose effect can be understood by a spectral decomposition of the measurement operator

    \[ M = x_1 P_1 + \ldots + x_n P_n \]
\end{enumerate}

The {\em Born rule} states that the probability of $x_i \in \sigma(M)$ does not depend on the value of $x_i$, but rather on $P_i$ and the state of the system $\psi$: 

\[ \mbox{Prob}(P_i, \psi) = \langle \psi | P_i(\psi) \rangle \]

To understand the quantum teleportation protocol, let us consider three qubits $a, b$, and $c$, where $a$ is in state $| \phi \rangle$, and qubits $b$ and $c$ are entangled and form an ``EPR-pair" with their joint state being $|00 \rangle + |11 \rangle$. If $a$ and $b$ are positioned at the source $A$, and $c$ is positioned at the target $B$, we perform a {\em Bell-base} measurement on $a$ and $b$, which means each $P_i$ projects onto one of the one-dimensional subspaces spanned by a vector in the Bell basis: 

\begin{eqnarray*}
    b_1 \coloneqq \frac{1}{\sqrt{2}} \cdot (| 00 \rangle + | 11 \rangle) \ \ \   b_2 \coloneqq \frac{1}{\sqrt{2}} \cdot (| 01 \rangle + | 10 \rangle)  \\
      b_3 \coloneqq \frac{1}{\sqrt{2}} \cdot (| 00 \rangle - | 11 \rangle)  \ \ \   b_4 \coloneqq \frac{1}{\sqrt{2}} \cdot (| 01 \rangle - | 10 \rangle) 
\end{eqnarray*}

We can perform a unitary transformation based on the outcome of the measurement, which physically involves transmission of two classical bits, from the location of $a$ and $b$ to the location of $c$. Quantum teleportation occurs when the final state of $c$ is $| \phi \rangle$ as well. Since a continuous variable has been transmitted, but the actual communication involved the passage of two classical bits, it must be that besides the classical flow of information, there must have been quantum flow of information as well. \cite{DBLP:conf/lics/AbramskyC04} give a detailed analysis of how this type of quantum information flow can be modeled abstractly in a compact closed category, to which we refer the reader for further details. We turn instead to a brief discussion of the issues involved in designing UIG approaches for quantum computing. 

\subsection{Quantum Coalgebras: Generative AI on Quantum Computers} 

As we discussed previously in Section~\ref{staticuig} and Section~\ref{dynamicuig}, generative AI models, from the basic (non)deterministic finite state machine to probabilistic systems to  large language models based on neural \cite{DBLP:conf/nips/VaswaniSPUJGKP17} or dynamical system concepts \cite{DBLP:conf/iclr/GuJTRR23} can all be viewed as universal coalgebras over some category. We can now consider how to define {\em quantum coalgebras} to generalize generative AI over classical computers to quantum computers. 

\cite{MOORE2000275} define quantum versions of finite-state and push-down automata, as well as regular and context-free grammars. We will follow their treatment below, and then relate it to quantum UIGs. 

\begin{definition}
    A {\bf real-time quantum automaton} is defined as: 
    \begin{enumerate}
        \item A Hilbert space ${\cal H}$
        \item an initial state vector $|s_{\mbox{init}}| \in {\cal H}$ with $|s_{\mbox{init}}|^2  = 1$
        \item A subspace $H_{\mbox{accept}} \subset H$ and an operator $P_{\mbox{accept}}$ that projects onto it. 
        \item An input alphabet $A$ 
        \item A unitary transition matrix $U_a$ for each symbol $a \in A$. 
    \end{enumerate}
\end{definition}

Given any string $w \in A^*$, we can define the quantum language accepted by $Q$ as the function 

\[ f_Q(w) = |s_{\mbox{init}} U_w P_{\mbox{accept}}|^2 \]

which defines a function from strings in $A^*$ to probabilities in $(0, 1]$. The basic steps in a quantum automaton are to start in the state $\langle s_{\mbox{init}}$, apply the unitary matrices $U_{w_i}$ for each symbol in $w$ in the given order, and measure the probability of the final outcome that the resulting state is in $H_{\mbox{accept}}$ by applying the projection operator $P_{\mbox{accept}}$, and measuring its resulting norm (which gives us the probability). 

To relate it to the discussion of quantum entanglement in the previous section, note that if we have a quantum system that is initially prepared to be a superposition of initial states, and we present it the input string, the matrix product $U_w$ in effect sums over all possible paths that the machine can take (viewed as a stochastic automaton). Each path has a complex amplitude that is given by the product of all the amplitudes of the transitions at each step. The biggest difference from stochastic automata used in generative AI and reinforcement learning is that quantum entanglement will cause destructive interference exactly as illustrated in Figure~\ref{twoslit}. Paths that have opposite phases in the complex plane can cancel each other out, resulting in a total probability less than the sum of each path, since the amplitude $|a + b|^2 \leq |a|^2 + |b|^2$. 

\begin{definition}
    A {\bf quantum finite state automaton} (QFA) is a quantum automaton where ${\cal H}$, $s_{\mbox{init}}$, and the $U_a$ have finite dimensionality $n$. A {\bf quantum regular language}(QRL) is a quantum language recognized by a QFA. 
\end{definition}

\cite{MOORE2000275} go on to define quantum analogues of context-free grammars, push-down automata, etc. and explore their properties, to which the reader is referred to for a more in-depth discussion. By suitably defining the category, e.g. Hilbert spaces, and an endofunctor on the category, it is clear that we can define a coalgebra for a quantum finite state automaton (as we can as well for the other generative models in \cite{MOORE2000275}). There has been initial work on quantum generative AI systems, such as quantum transformers \cite{cherrat2022quantum} as well as the work on quantum NLP mentioned earlier \cite{DBLP:phd/ethos/Toumi22}. Fundamentally, to solve a UIG using a quantum computer, we have to define the notion of (weak) isomorphism in each of the cases that we discussed before. We briefly discuss each of these cases. 

\subsection{Quantum Universal Imitation Games}

We proposed using $\yo(x)$ Yoneda embeddings for a participant in a static UIG and determine if two participants are isomorphic based on their Yoneda embeddings. We can generalize the same approach to the quantum computing world by determining quantum Yoneda embeddings in a compact closed category. Following \cite{KELLY1980193}, we can define compact closed categories as a symmetric monoidal category where every object $A$ has a {\em dual object}, given by $A^*$, such that it defines a {\em unit} 

\[ \eta_A: {\bf 1} \rightarrow A^* \otimes A \]

and a {\em counit} 

\[ \epsilon_A: A \otimes A^* \rightarrow {\bf 1} \]

satisfying certain coherence  equations given in \cite{KELLY1980193}. 

\begin{definition}
Two objects $X$ and $Y$ in a quantum UIG defined as a compact closed category ${\cal C}$ are deemed {\bf {isomorphic}}, or $X \cong Y$ if and only if there is an invertible morphism $f: X \rightarrow Y$, namely $f$ is both {\em left invertible} using a morphism $g: Y \rightarrow X$ so that $g \circ f = $ {{\bf id}}$_X$, and $f$ is {\em right invertible} using a morphism $h$ where $f \circ h = $ {{\bf id}}$_Y$. 
\end{definition}

What is fundamentally different in the quantum computing world is that computation is now taking place in a compact closed category where objects have duals. As \cite{KELLY1980193} define it, a compact closed category is simply a symmetric monoidal category with symmetry isomorphisms $A \otimes B \simeq B \otimes A$ in which every object $A$ has a left adjoint, and they work out in detail exactly what form this left adjoint takes. An interesting problem for further work is to understand the full consequences of operating in such compact closed categories. We discuss a few sample questions to be explored in further research: 

\begin{itemize}
    \item {\cal Causal Inference in Compact Closed Categories}: Consider the problem of determining whether two participants in a quantum UIG are isomorphic by doing a series of causal interventions. Although there has been substantial work on causal inference in traditional categories, such as graphs, vector spaces, Hilbert spaces, etc. (see Table~\ref{causalcats}), we are not aware of any work in causal inference in compact closed categories where objects have duals. This aspect of quantum computing raises many interesting questions that need to be explored. It is worth mentioning as well that causal inference has been explored in the quantum computing world \cite{coecke2016categorical,Javidian_2022}. 

    \item {\cal Quantum Machine Learning}: There has been substantial work on quantum machine learning \cite{DBLP:books/sp/Pastorello23}, including a celebrated algorithm for solving systems of linear equations $A x = b$ \cite{Harrow_2009}. These approaches look very promising due their potentially exponentially lower computational costs, although the broader question of whether there are problems that are provably solvable faster by quantum computers remains an open research question \cite{quantumcomplexity}.   To that end, \cite{bshouty} propose an extension of the classic PAC learning framework to the quantum computing world, where they show that the class of all disjunctive normal form (DNF) concepts are efficiently learnable with respect to the uniform distribution by a quantum algorithm using a quantum oracle. 

    \item {\cal Quantum Language Models}: As we mentioned above, \cite{DBLP:phd/ethos/Toumi22} explores the use of quantum computing in natural language processing, and others \cite{Bradley_2022} have explored improved approaches for representing large language models using quantum computers. 
\end{itemize}

\section{Summary} 

In this paper, we proposed a broad framework for universal imitation games (UIGs), extending  Alan Turing's original proposal from 1950 where  participants are to be classified {\tt Human or Machine} solely from  natural language interactions. In our framework, we allow interactions to be arbitrary, and not limited to natural language questions and answers. We also bring to bear mathematics largely developed since Turing -- category theory -- that involves defining a collection of objects and a collection of composable arrows between each pair of objects that represent ``measurement probes"  for solving UIGs. We built on on two celebrated results by Yoneda.  The first, called the Yoneda Lemma, shows that objects in categories can be identified up to isomorphism solely with measurement probes defined by composable arrows. Yoneda embeddings are universal representers of objects in categories. A simple yet general solution to the static UIG problem, where the participants are not changing during the interactions,  is to determine if the Yoneda embeddings are (weakly) isomorphic. In this approach, we are not considering computational constraints, but mainly focusing on formulating the problem in a mathematically general way. The second foundational result of Yoneda from 1960 that we use is an abstract integral calculus defined by coends and ends, which ``integrate" out the objects of a category. We illustrated how these unify disparate notions in AI and ML, from distance based approaches to probabilistic generative models.   When participants adapt during interactions, we investigated two special cases: in {\em dynamic UIGs}, ``learners" imitate ``teachers".  We contrasted the initial object  framework of {\em passive learning from observation} over well-founded sets using inductive inference -- extensively studied by Gold, Solomonoff, Valiant, and Vapnik -- with the final object framework of {\em coinductive inference} over non-well-founded sets and universal coalgebras, which formalizes  learning from {\em active experimentation} using causal inference or reinforcement learning. We defined a category-theoretic notion of minimum description length or Occam's razor based on final objects in coalgebras.   Finally, we explored {\em evolutionary UIGs}, where a society of participants is playing a large-scale imitation game. Participants in evolutionary UIGs can go extinct from ``birth-death" evolutionary processes that model how novel technologies or ``mutants" disrupt previously stable equilibria. We ended the paper with a brief discussion of how to play UIGs on quantum computers, a future that looks increasibly likely given the staggering costs of playing imitation games on classical computers. Many directions remain to be pursued in UIGs, and throughout the paper, we sketched out a number of possible avenues that need further study. This paper represents an initial exploration of a fascinating problem, one that will have enormous implications for the future of human society as millions of humans begin to engage in an imitation game with generative AI products. 

\newpage 


\begin{thebibliography}{159}
\providecommand{\natexlab}[1]{#1}
\providecommand{\url}[1]{\texttt{#1}}
\expandafter\ifx\csname urlstyle\endcsname\relax
  \providecommand{\doi}[1]{doi: #1}\else
  \providecommand{\doi}{doi: \begingroup \urlstyle{rm}\Url}\fi

\bibitem[Abdallah and Lesser(2008)]{abdallah2008multiagent}
S.~Abdallah and V.~R. Lesser.
\newblock A multiagent reinforcement learning algorithm with non-linear dynamics.
\newblock \emph{J. Artif. Intell. Res.(JAIR)}, 33:\penalty0 521--549, 2008.

\bibitem[Abramsky and Coecke(2004)]{DBLP:conf/lics/AbramskyC04}
S.~Abramsky and B.~Coecke.
\newblock A categorical semantics of quantum protocols.
\newblock In \emph{19th {IEEE} Symposium on Logic in Computer Science {(LICS} 2004), 14-17 July 2004, Turku, Finland, Proceedings}, pages 415--425. {IEEE} Computer Society, 2004.
\newblock \doi{10.1109/LICS.2004.1319636}.
\newblock URL \url{https://doi.org/10.1109/LICS.2004.1319636}.

\bibitem[Aczel(1988)]{Aczel1988-ACZNS}
P.~Aczel.
\newblock \emph{Non-Well-Founded Sets}.
\newblock Csli Lecture Notes, Palo Alto, CA, USA, 1988.

\bibitem[Adesso(2023)]{DBLP:journals/aim/Adesso23}
G.~Adesso.
\newblock Towards the ultimate brain: Exploring scientific discovery with chatgpt {AI}.
\newblock \emph{{AI} Mag.}, 44\penalty0 (3):\penalty0 328--342, 2023.
\newblock \doi{10.1002/AAAI.12113}.
\newblock URL \url{https://doi.org/10.1002/aaai.12113}.

\bibitem[Alfredo N.~Iusem(2018)]{iusem}
P.~T. Alfredo N.~Iusem, Alejandro~Jofré.
\newblock Incremental constraint projection methods for monotone stochastic variational inequalities.
\newblock \emph{Mathematics of Operations Research}, 44\penalty0 (1):\penalty0 236--263, 2018.
\newblock URL \url{https://doi.org/10.1287/moor.2017.0922}.

\bibitem[Andersson et~al.(1997)Andersson, Madigan, and Perlman]{anderson-annals}
S.~A. Andersson, D.~Madigan, and M.~D. Perlman.
\newblock {A characterization of Markov equivalence classes for acyclic digraphs}.
\newblock \emph{The Annals of Statistics}, 25\penalty0 (2):\penalty0 505 -- 541, 1997.
\newblock \doi{10.1214/aos/1031833662}.
\newblock URL \url{https://doi.org/10.1214/aos/1031833662}.

\bibitem[Aristotle(1984)]{aristotle}
Aristotle.
\newblock \emph{The Complete Works of Aristotle, Volumes 1 and 2}.
\newblock Princeton University Press, 1984.

\bibitem[Asudeh and Giorgolo(2020)]{asudeh}
A.~Asudeh and G.~Giorgolo.
\newblock \emph{{Enriched Meanings: Natural Language Semantics with Category Theory}}.
\newblock Oxford University Press, 09 2020.
\newblock ISBN 9780198847854.
\newblock \doi{10.1093/oso/9780198847854.001.0001}.
\newblock URL \url{https://doi.org/10.1093/oso/9780198847854.001.0001}.

\bibitem[Avery(2016)]{Avery_2016}
T.~Avery.
\newblock Codensity and the giry monad.
\newblock \emph{Journal of Pure and Applied Algebra}, 220\penalty0 (3):\penalty0 1229–1251, Mar. 2016.
\newblock ISSN 0022-4049.
\newblock \doi{10.1016/j.jpaa.2015.08.017}.
\newblock URL \url{http://dx.doi.org/10.1016/j.jpaa.2015.08.017}.

\bibitem[Baez and Stay(2010)]{Baez_2010}
J.~Baez and M.~Stay.
\newblock Physics, topology, logic and computation: A rosetta stone.
\newblock In \emph{New Structures for Physics}, pages 95--172. Springer Berlin Heidelberg, 2010.
\newblock \doi{10.1007/978-3-642-12821-9_2}.
\newblock URL \url{https://doi.org/10.1007%2F978-3-642-12821-9_2}.

\bibitem[Baez et~al.(2011)Baez, Fritz, and Leinster]{baez}
J.~C. Baez, T.~Fritz, and T.~Leinster.
\newblock A characterization of entropy in terms of information loss.
\newblock \emph{Entropy}, 13\penalty0 (11):\penalty0 1945--1957, 2011.
\newblock ISSN 1099-4300.
\newblock \doi{10.3390/e13111945}.
\newblock URL \url{https://www.mdpi.com/1099-4300/13/11/1945}.

\bibitem[Baldan et~al.(2014)Baldan, Bonchi, Kerstan, and König]{baldan2014behavioral}
P.~Baldan, F.~Bonchi, H.~Kerstan, and B.~König.
\newblock Behavioral metrics via functor lifting, 2014.

\bibitem[Barwise and Moss(1996)]{barwise}
J.~Barwise and L.~S. Moss.
\newblock \emph{Vicious circles - on the mathematics of non-wellfounded phenomena}, volume~60 of \emph{{CSLI} lecture notes series}.
\newblock {CSLI}, 1996.
\newblock ISBN 978-1-57586-009-1.

\bibitem[Beerenwinkel and Sullivant(2009)]{10.1093/biomet/asp023}
N.~Beerenwinkel and S.~Sullivant.
\newblock {Markov models for accumulating mutations}.
\newblock \emph{Biometrika}, 96\penalty0 (3):\penalty0 645--661, 06 2009.
\newblock ISSN 0006-3444.
\newblock \doi{10.1093/biomet/asp023}.
\newblock URL \url{https://doi.org/10.1093/biomet/asp023}.

\bibitem[Beerenwinkel et~al.(2006)Beerenwinkel, Eriksson, and Sturmfels]{BEERENWINKEL2006409}
N.~Beerenwinkel, N.~Eriksson, and B.~Sturmfels.
\newblock Evolution on distributive lattices.
\newblock \emph{Journal of Theoretical Biology}, 242\penalty0 (2):\penalty0 409--420, 2006.
\newblock ISSN 0022-5193.
\newblock \doi{https://doi.org/10.1016/j.jtbi.2006.03.013}.
\newblock URL \url{https://www.sciencedirect.com/science/article/pii/S0022519306001159}.

\bibitem[Beerenwinkel et~al.(2007)Beerenwinkel, Eriksson, and Sturmfels]{cbn}
N.~Beerenwinkel, N.~Eriksson, and B.~Sturmfels.
\newblock {Conjunctive Bayesian networks}.
\newblock \emph{Bernoulli}, 13\penalty0 (4):\penalty0 893 -- 909, 2007.
\newblock \doi{10.3150/07-BEJ6133}.
\newblock URL \url{https://doi.org/10.3150/07-BEJ6133}.

\bibitem[Belkin et~al.(2006)Belkin, Niyogi, and Sindhwani]{manifoldregularization}
M.~Belkin, P.~Niyogi, and V.~Sindhwani.
\newblock Manifold regularization: a geometric framework for learning from labeled and unlabeled examples.
\newblock \emph{Journal of Machine Learning Research}, pages 2399--2434, 2006.

\bibitem[Bernstein and Vazirani(1997)]{quantumcomplexity}
E.~Bernstein and U.~Vazirani.
\newblock Quantum complexity theory.
\newblock \emph{SIAM Journal on Computing}, 26\penalty0 (5):\penalty0 1411--1473, 1997.
\newblock \doi{10.1137/S0097539796300921}.
\newblock URL \url{https://doi.org/10.1137/S0097539796300921}.

\bibitem[Bertsekas(2019)]{bertsekas:rlbook}
D.~Bertsekas.
\newblock \emph{Reinforcement Learning and Optimal Control}.
\newblock Athena Scientific, 2019.

\bibitem[Bertsekas(2022)]{bertsekas:alphazero}
D.~Bertsekas.
\newblock \emph{Lessons from AlphaZero for Optimal, Model Predictive and Optimal Control}.
\newblock Athena Scientific, 2022.

\bibitem[Bertsekas(2005)]{DBLP:books/lib/Bertsekas05}
D.~P. Bertsekas.
\newblock \emph{Dynamic programming and optimal control, 3rd Edition}.
\newblock Athena Scientific, 2005.
\newblock ISBN 1886529264.
\newblock URL \url{https://www.worldcat.org/oclc/314894080}.

\bibitem[Boardman and Vogt(1973)]{weakkan}
M.~Boardman and R.~Vogt.
\newblock \emph{Homotopy invariant algebraic structures on topological spaces}.
\newblock Springer, Berlin, 1973.

\bibitem[Bongard(1970)]{bongard}
M.~M. Bongard.
\newblock \emph{Pattern Recognition}.
\newblock Spartan Books, 1970.

\bibitem[Bonsangue et~al.(1998)Bonsangue, {van Breugel}, and Rutten]{BONSANGUE19981}
M.~Bonsangue, F.~{van Breugel}, and J.~Rutten.
\newblock Generalized metric spaces: Completion, topology, and powerdomains via the yoneda embedding.
\newblock \emph{Theoretical Computer Science}, 193\penalty0 (1):\penalty0 1--51, 1998.
\newblock ISSN 0304-3975.
\newblock \doi{https://doi.org/10.1016/S0304-3975(97)00042-X}.
\newblock URL \url{https://www.sciencedirect.com/science/article/pii/S030439759700042X}.

\bibitem[Borceux(1994)]{borceux_1994}
F.~Borceux.
\newblock \emph{Handbook of Categorical Algebra}, volume~1 of \emph{Encyclopedia of Mathematics and its Applications}.
\newblock Cambridge University Press, 1994.
\newblock \doi{10.1017/CBO9780511525858}.

\bibitem[Borkar(2008)]{borkar}
V.~S. Borkar.
\newblock \emph{Stochastic Approximation: A Dynamical Systems Viewpoint}.
\newblock Cambridge University Press, 2008.

\bibitem[Bradley et~al.(2022{\natexlab{a}})Bradley, Terilla, and Vlassopoulos]{bradley:enriched-yoneda-llms}
T.~Bradley, J.~Terilla, and Y.~Vlassopoulos.
\newblock An enriched category theory of language: From syntax to semantics.
\newblock \emph{La Matematica}, 1:\penalty0 551--580, 2022{\natexlab{a}}.

\bibitem[Bradley et~al.(2021)Bradley, Terilla, and Vlassopoulos]{bradley2021enriched}
T.-D. Bradley, J.~Terilla, and Y.~Vlassopoulos.
\newblock An enriched category theory of language: from syntax to semantics, 2021.

\bibitem[Bradley et~al.(2022{\natexlab{b}})Bradley, Terilla, and Vlassopoulos]{Bradley_2022}
T.-D. Bradley, J.~Terilla, and Y.~Vlassopoulos.
\newblock An enriched category theory of language: From syntax to semantics.
\newblock \emph{La Matematica}, mar 2022{\natexlab{b}}.
\newblock \doi{10.1007/s44007-022-00021-2}.
\newblock URL \url{https://doi.org/10.1007\%2Fs44007-022-00021-2}.

\bibitem[Bshouty and Jackson(1995)]{bshouty}
N.~H. Bshouty and J.~C. Jackson.
\newblock Learning dnf over the uniform distribution using a quantum example oracle.
\newblock In \emph{Proceedings of the Eighth Annual Conference on Computational Learning Theory}, COLT '95, page 118–127, New York, NY, USA, 1995. Association for Computing Machinery.
\newblock ISBN 0897917235.
\newblock \doi{10.1145/225298.225312}.
\newblock URL \url{https://doi.org/10.1145/225298.225312}.

\bibitem[Busoniu et~al.(2008)Busoniu, Babuska, and De~Schutter]{busoniu2008comprehensive}
L.~Busoniu, R.~Babuska, and B.~De~Schutter.
\newblock A comprehensive survey of multiagent reinforcement learning.
\newblock \emph{Systems, Man, and Cybernetics, Part C: Applications and Reviews, IEEE Transactions on}, 38\penalty0 (2):\penalty0 156--172, 2008.

\bibitem[Carlsson and Memoli(2010)]{Carlsson2010}
G.~Carlsson and F.~Memoli.
\newblock Classifying clustering schemes, 2010.
\newblock URL \url{http://arxiv.org/abs/1011.5270}.
\newblock cite arxiv:1011.5270.

\bibitem[Castro and Precup(2010)]{DBLP:conf/aaai/CastroP10}
P.~S. Castro and D.~Precup.
\newblock Using bisimulation for policy transfer in mdps.
\newblock In M.~Fox and D.~Poole, editors, \emph{Proceedings of the Twenty-Fourth {AAAI} Conference on Artificial Intelligence, {AAAI} 2010, Atlanta, Georgia, USA, July 11-15, 2010}, pages 1065--1070. {AAAI} Press, 2010.
\newblock \doi{10.1609/AAAI.V24I1.7751}.
\newblock URL \url{https://doi.org/10.1609/aaai.v24i1.7751}.

\bibitem[Cattani and Winskel(2005)]{bisim}
G.~L. Cattani and G.~Winskel.
\newblock Profunctors, open maps and bisimulation.
\newblock \emph{Math. Struct. Comput. Sci.}, 15\penalty0 (3):\penalty0 553--614, 2005.
\newblock \doi{10.1017/S0960129505004718}.
\newblock URL \url{https://doi.org/10.1017/S0960129505004718}.

\bibitem[Chaitin(2002)]{chaitin}
G.~J. Chaitin.
\newblock \emph{Exploring {RANDOMNESS}}.
\newblock Discrete mathematics and theoretical computer science. Springer, 2002.
\newblock ISBN 978-1-85233-417-8.

\bibitem[Cherrat et~al.(2022)Cherrat, Kerenidis, Mathur, Landman, Strahm, and Li]{cherrat2022quantum}
E.~A. Cherrat, I.~Kerenidis, N.~Mathur, J.~Landman, M.~Strahm, and Y.~Y. Li.
\newblock Quantum vision transformers, 2022.

\bibitem[Coecke(2020)]{coecke2020mathematics}
B.~Coecke.
\newblock The mathematics of text structure, 2020.

\bibitem[Coecke and Kissinger(2016)]{coecke2016categorical}
B.~Coecke and A.~Kissinger.
\newblock Categorical quantum mechanics i: Causal quantum processes, 2016.

\bibitem[Coecke and Kissinger(2017)]{Coecke_Kissinger_2017}
B.~Coecke and A.~Kissinger.
\newblock \emph{Picturing Quantum Processes: A First Course in Quantum Theory and Diagrammatic Reasoning}.
\newblock Cambridge University Press, 2017.

\bibitem[Coecke et~al.(2016)Coecke, Fritz, and Spekkens]{Coecke_2016}
B.~Coecke, T.~Fritz, and R.~W. Spekkens.
\newblock A mathematical theory of resources.
\newblock \emph{Information and Computation}, 250:\penalty0 59--86, oct 2016.
\newblock \doi{10.1016/j.ic.2016.02.008}.
\newblock URL \url{https://doi.org/10.1016%2Fj.ic.2016.02.008}.

\bibitem[Cover and Thomas(2006)]{cover}
T.~M. Cover and J.~A. Thomas.
\newblock \emph{Elements of Information Theory 2nd Edition (Wiley Series in Telecommunications and Signal Processing)}.
\newblock Wiley-Interscience, July 2006.
\newblock ISBN 0471241954.

\bibitem[Darwin(1876)]{darwin:1876}
C.~Darwin.
\newblock \emph{The effect of cross and self-fertilization in the vegetable kingdom}.
\newblock John Murray, 1876.

\bibitem[Dawid(2001)]{DBLP:journals/amai/Dawid01}
A.~P. Dawid.
\newblock Separoids: {A} mathematical framework for conditional independence and irrelevance.
\newblock \emph{Ann. Math. Artif. Intell.}, 32\penalty0 (1-4):\penalty0 335--372, 2001.
\newblock \doi{10.1023/A:1016734104787}.
\newblock URL \url{https://doi.org/10.1023/A:1016734104787}.

\bibitem[de~Felice et~al.(2021)de~Felice, Toumi, and Coecke]{de_Felice_2021}
G.~de~Felice, A.~Toumi, and B.~Coecke.
\newblock Discopy: Monoidal categories in python.
\newblock \emph{Electronic Proceedings in Theoretical Computer Science}, 333:\penalty0 183–197, Feb. 2021.
\newblock ISSN 2075-2180.
\newblock \doi{10.4204/eptcs.333.13}.
\newblock URL \url{http://dx.doi.org/10.4204/EPTCS.333.13}.

\bibitem[Del{\'{e}}tang et~al.(2023)Del{\'{e}}tang, Ruoss, Grau{-}Moya, Genewein, Wenliang, Catt, Cundy, Hutter, Legg, Veness, and Ortega]{chomsy-neural}
G.~Del{\'{e}}tang, A.~Ruoss, J.~Grau{-}Moya, T.~Genewein, L.~K. Wenliang, E.~Catt, C.~Cundy, M.~Hutter, S.~Legg, J.~Veness, and P.~A. Ortega.
\newblock Neural networks and the chomsky hierarchy.
\newblock In \emph{The Eleventh International Conference on Learning Representations, {ICLR} 2023, Kigali, Rwanda, May 1-5, 2023}. OpenReview.net, 2023.
\newblock URL \url{https://openreview.net/pdf?id=WbxHAzkeQcn}.

\bibitem[Facchinei and Pang(2003)]{facchinei-pang:vi}
F.~Facchinei and J.~Pang.
\newblock \emph{Finite-Dimensional Variational Inequalities and Complementarity Problems}.
\newblock Springer, 2003.

\bibitem[Fagin et~al.(1995)Fagin, Halpern, Moses, and Vardi]{fagin}
R.~Fagin, J.~Y. Halpern, Y.~Moses, and M.~Y. Vardi.
\newblock \emph{Reasoning About Knowledge}.
\newblock {MIT} Press, 1995.
\newblock ISBN 9780262562003.
\newblock \doi{10.7551/MITPRESS/5803.001.0001}.
\newblock URL \url{https://doi.org/10.7551/mitpress/5803.001.0001}.

\bibitem[Feynman et~al.(2010)Feynman, Leighton, and Sands]{Feynman:1494701}
R.~P. Feynman, R.~B. Leighton, and M.~Sands.
\newblock \emph{{The Feynman lectures on physics; New millennium ed.}}
\newblock Basic Books, New York, NY, 2010.
\newblock URL \url{https://cds.cern.ch/record/1494701}.
\newblock Originally published 1963-1965.

\bibitem[Feys et~al.(2018)Feys, Hansen, and Moss]{feys:hal-02044650}
F.~Feys, H.~H. Hansen, and L.~S. Moss.
\newblock {Long-Term Values in Markov Decision Processes, (Co)Algebraically}.
\newblock In C.~C{\^i}rstea, editor, \emph{{14th International Workshop on Coalgebraic Methods in Computer Science (CMCS)}}, volume LNCS-11202 of \emph{Coalgebraic Methods in Computer Science}, pages 78--99, Thessaloniki, Greece, Apr. 2018. {Springer International Publishing}.
\newblock \doi{10.1007/978-3-030-00389-0\_6}.
\newblock URL \url{https://inria.hal.science/hal-02044650}.

\bibitem[Fong(2012)]{fong:ms}
B.~Fong.
\newblock Causal theories: A categorical perspective on bayesian networks, 2012.

\bibitem[Fong and Spivak(2018)]{fong2018seven}
B.~Fong and D.~I. Spivak.
\newblock \emph{Seven Sketches in Compositionality: An Invitation to Applied Category Theory}.
\newblock Cambridge University Press, 2018.

\bibitem[Fritz and Klingler(2023)]{fritz:jmlr}
T.~Fritz and A.~Klingler.
\newblock The d-separation criterion in categorical probability.
\newblock \emph{Journal of Machine Learning Research}, 24\penalty0 (46):\penalty0 1--49, 2023.
\newblock URL \url{http://jmlr.org/papers/v24/22-0916.html}.

\bibitem[Fudenberg and Levine(1998)]{fudenbergtheory}
D.~Fudenberg and D.~K. Levine.
\newblock The theory of learning in games, 1998.

\bibitem[Gabriel et~al.(1967)Gabriel, Gabriel, and Zisman]{gabriel1967calculus}
P.~Gabriel, P.~Gabriel, and M.~Zisman.
\newblock \emph{Calculus of Fractions and Homotopy Theory}.
\newblock Calculus of Fractions and Homotopy Theory. Springer-Verlag, 1967.
\newblock ISBN 9780387037776.
\newblock URL \url{https://books.google.com/books?id=UEQZAQAAIAAJ}.

\bibitem[Gemp et~al.(2015)Gemp, Mahadevan, and Liu]{GempM015}
I.~Gemp, S.~Mahadevan, and B.~Liu.
\newblock Solving large sustainable supply chain networks using variational inequalities.
\newblock In B.~Dilkina, S.~Ermon, R.~A. Hutchinson, and D.~Sheldon, editors, \emph{Computational Sustainability, Papers from the 2015 {AAAI} Workshop, Austin, Texas, USA, January 26, 2015}, volume {WS-15-06} of \emph{{AAAI} Workshops}. {AAAI} Press, 2015.
\newblock URL \url{http://aaai.org/ocs/index.php/WS/AAAIW15/paper/view/10122}.

\bibitem[Gerstung et~al.(2011)Gerstung, Eriksson, Lin, Vogelstein, and Beerenwinkel]{gerstung}
M.~Gerstung, N.~Eriksson, J.~Lin, B.~Vogelstein, and N.~Beerenwinkel.
\newblock The temporal order of genetic and pathway alterations in tumorigenesis.
\newblock \emph{PLOS ONE}, 6\penalty0 (11):\penalty0 1--9, 11 2011.
\newblock \doi{10.1371/journal.pone.0027136}.
\newblock URL \url{https://doi.org/10.1371/journal.pone.0027136}.

\bibitem[Ghani et~al.(2018)Ghani, Hedges, Winschel, and Zahn]{ghani2018compositional}
N.~Ghani, J.~Hedges, V.~Winschel, and P.~Zahn.
\newblock Compositional game theory, 2018.

\bibitem[Giry(1982)]{giry1982}
M.~Giry.
\newblock A categorical approach to probability theory.
\newblock In \emph{Categorical aspects of topology and analysis ({O}ttawa, {O}nt., 1980)}, volume 915 of \emph{Lecture Notes in Mathematics}, pages 68--85. Springer, Berlin, 1982.

\bibitem[Gold(1967)]{GOLD1967447}
E.~M. Gold.
\newblock Language identification in the limit.
\newblock \emph{Information and Control}, 10\penalty0 (5):\penalty0 447--474, 1967.
\newblock ISSN 0019-9958.
\newblock \doi{https://doi.org/10.1016/S0019-9958(67)91165-5}.
\newblock URL \url{https://www.sciencedirect.com/science/article/pii/S0019995867911655}.

\bibitem[Gon{\c{c}}alves(2024)]{historyturingtest}
B.~Gon{\c{c}}alves.
\newblock Turing's test, a beautiful thought experiment.
\newblock \emph{CoRR}, abs/2401.00009, 2024.
\newblock \doi{10.48550/ARXIV.2401.00009}.
\newblock URL \url{https://doi.org/10.48550/arXiv.2401.00009}.

\bibitem[Govindan and Wilson(2003)]{govindan2003global}
S.~Govindan and R.~Wilson.
\newblock A global newton method to compute nash equilibria.
\newblock \emph{Journal of Economic Theory}, 110\penalty0 (1):\penalty0 65--86, 2003.

\bibitem[Gu et~al.(2022)Gu, Goel, and R{\'{e}}]{DBLP:conf/iclr/GuGR22}
A.~Gu, K.~Goel, and C.~R{\'{e}}.
\newblock Efficiently modeling long sequences with structured state spaces.
\newblock In \emph{The Tenth International Conference on Learning Representations, {ICLR} 2022, Virtual Event, April 25-29, 2022}. OpenReview.net, 2022.
\newblock URL \url{https://openreview.net/forum?id=uYLFoz1vlAC}.

\bibitem[Gu et~al.(2023)Gu, Johnson, Timalsina, Rudra, and R{\'{e}}]{DBLP:conf/iclr/GuJTRR23}
A.~Gu, I.~Johnson, A.~Timalsina, A.~Rudra, and C.~R{\'{e}}.
\newblock How to train your {HIPPO:} state space models with generalized orthogonal basis projections.
\newblock In \emph{The Eleventh International Conference on Learning Representations, {ICLR} 2023, Kigali, Rwanda, May 1-5, 2023}. OpenReview.net, 2023.
\newblock URL \url{https://openreview.net/pdf?id=klK17OQ3KB}.

\bibitem[Halpern(2016)]{halpern:ac}
J.~Y. Halpern.
\newblock \emph{Actual Causality}.
\newblock {MIT} Press, 2016.
\newblock ISBN 978-0-262-03502-6.

\bibitem[Hansen and Ghrist(2020)]{discourse-sheaves}
J.~Hansen and R.~Ghrist.
\newblock Opinion dynamics on discourse sheaves, 2020.
\newblock URL \url{https://arxiv.org/abs/2005.12798}.

\bibitem[Harrow et~al.(2009)Harrow, Hassidim, and Lloyd]{Harrow_2009}
A.~W. Harrow, A.~Hassidim, and S.~Lloyd.
\newblock Quantum algorithm for linear systems of equations.
\newblock \emph{Physical Review Letters}, 103\penalty0 (15), Oct. 2009.
\newblock ISSN 1079-7114.
\newblock \doi{10.1103/physrevlett.103.150502}.
\newblock URL \url{http://dx.doi.org/10.1103/PhysRevLett.103.150502}.

\bibitem[Haussler(1987)]{HAUSSLER1987324}
D.~Haussler.
\newblock Bias, version spaces and valiant's learning framework.
\newblock In P.~Langley, editor, \emph{Proceedings of the Fourth International Workshop on MACHINE LEARNING}, pages 324--336. Morgan Kaufmann, 1987.
\newblock ISBN 978-0-934613-41-5.
\newblock \doi{https://doi.org/10.1016/B978-0-934613-41-5.50036-2}.
\newblock URL \url{https://www.sciencedirect.com/science/article/pii/B9780934613415500362}.

\bibitem[Haussler(1988)]{DBLP:journals/ai/Haussler88}
D.~Haussler.
\newblock Quantifying inductive bias: {AI} learning algorithms and valiant's learning framework.
\newblock \emph{Artif. Intell.}, 36\penalty0 (2):\penalty0 177--221, 1988.
\newblock \doi{10.1016/0004-3702(88)90002-1}.
\newblock URL \url{https://doi.org/10.1016/0004-3702(88)90002-1}.

\bibitem[Hedges(2017)]{hedges2017morphisms}
J.~Hedges.
\newblock Morphisms of open games, 2017.

\bibitem[Hume(1740)]{hume}
D.~Hume.
\newblock \emph{A Treatise of Human Nature}.
\newblock Oxford University Press, 1740.

\bibitem[Imbens and Rubin(2015)]{rubin-book}
G.~W. Imbens and D.~B. Rubin.
\newblock \emph{Causal Inference for Statistics, Social, and Biomedical Sciences: An Introduction}.
\newblock Cambridge University Press, USA, 2015.
\newblock ISBN 0521885884.

\bibitem[Jacobs(2016)]{jacobs:book}
B.~Jacobs.
\newblock \emph{Introduction to Coalgebra: Towards Mathematics of States and Observation}, volume~59 of \emph{Cambridge Tracts in Theoretical Computer Science}.
\newblock Cambridge University Press, 2016.
\newblock ISBN 9781316823187.
\newblock \doi{10.1017/CBO9781316823187}.
\newblock URL \url{https://doi.org/10.1017/CBO9781316823187}.

\bibitem[Jacobs et~al.(2019)Jacobs, Kissinger, and Zanasi]{string-diagram-surgery}
B.~Jacobs, A.~Kissinger, and F.~Zanasi.
\newblock Causal inference by string diagram surgery, 2019.

\bibitem[Jacques(2023)]{DBLP:journals/inroads/Jacques23}
L.~Jacques.
\newblock Teaching {CS-101} at the dawn of chatgpt.
\newblock \emph{Inroads}, 14\penalty0 (2):\penalty0 40--46, 2023.
\newblock \doi{10.1145/3595634}.
\newblock URL \url{https://doi.org/10.1145/3595634}.

\bibitem[Javidian et~al.(2022)Javidian, Aggarwal, and Jacob]{Javidian_2022}
M.~A. Javidian, V.~Aggarwal, and Z.~Jacob.
\newblock Quantum causal inference in the presence of hidden common causes: An entropic approach.
\newblock \emph{Physical Review A}, 106\penalty0 (6), Dec. 2022.
\newblock ISSN 2469-9934.
\newblock \doi{10.1103/physreva.106.062425}.
\newblock URL \url{http://dx.doi.org/10.1103/PhysRevA.106.062425}.

\bibitem[Jones et~al.(2008)Jones, Zhang, Parsons, Lin, Leary, Angenendt, Mankoo, Carter, Kamiyama, Jimeno, Hong, Fu, Lin, Calhoun, Kamiyama, Walter, Nikolskaya, Nikolsky, Hartigan, Smith, Hidalgo, Leach, Klein, Jaffee, Goggins, Maitra, Iacobuzio-Donahue, Eshleman, Kern, Hruban, Karchin, Papadopoulos, Parmigiani, Vogelstein, Velculescu, and Kinzler]{Jones1801}
S.~Jones, X.~Zhang, D.~W. Parsons, J.~C.-H. Lin, R.~J. Leary, P.~Angenendt, P.~Mankoo, H.~Carter, H.~Kamiyama, A.~Jimeno, S.-M. Hong, B.~Fu, M.-T. Lin, E.~S. Calhoun, M.~Kamiyama, K.~Walter, T.~Nikolskaya, Y.~Nikolsky, J.~Hartigan, D.~R. Smith, M.~Hidalgo, S.~D. Leach, A.~P. Klein, E.~M. Jaffee, M.~Goggins, A.~Maitra, C.~Iacobuzio-Donahue, J.~R. Eshleman, S.~E. Kern, R.~H. Hruban, R.~Karchin, N.~Papadopoulos, G.~Parmigiani, B.~Vogelstein, V.~E. Velculescu, and K.~W. Kinzler.
\newblock Core signaling pathways in human pancreatic cancers revealed by global genomic analyses.
\newblock \emph{Science}, 321\penalty0 (5897):\penalty0 1801--1806, 2008.
\newblock ISSN 0036-8075.
\newblock \doi{10.1126/science.1164368}.
\newblock URL \url{https://science.sciencemag.org/content/321/5897/1801}.

\bibitem[Joyal(2002)]{quasicats}
A.~Joyal.
\newblock Quasi-categories and kan complexes.
\newblock \emph{Journal of Pure and Applied Algebra}, 175\penalty0 (1):\penalty0 207--222, 2002.
\newblock ISSN 0022-4049.
\newblock \doi{https://doi.org/10.1016/S0022-4049(02)00135-4}.
\newblock URL \url{https://www.sciencedirect.com/science/article/pii/S0022404902001354}.
\newblock Special Volume celebrating the 70th birthday of Professor Max Kelly.

\bibitem[Joyal et~al.(1996)Joyal, Nielsen, and Winskel]{JOYAL1996164}
A.~Joyal, M.~Nielsen, and G.~Winskel.
\newblock Bisimulation from open maps.
\newblock \emph{Information and Computation}, 127\penalty0 (2):\penalty0 164--185, 1996.
\newblock ISSN 0890-5401.
\newblock \doi{https://doi.org/10.1006/inco.1996.0057}.
\newblock URL \url{https://www.sciencedirect.com/science/article/pii/S0890540196900577}.

\bibitem[Juditsky and Nemirovski(2011)]{mirror-prox}
A.~Juditsky and A.~Nemirovski.
\newblock First order methods for nonsmooth convex large-scale optimization, i: General purpose methods.
\newblock In \emph{Optimization in Machine Learning}. MIT Press, 2011.

\bibitem[Juditsky et~al.(2011{\natexlab{a}})Juditsky, Nemirovski, et~al.]{juditsky2011first}
A.~Juditsky, A.~Nemirovski, et~al.
\newblock First order methods for nonsmooth convex large-scale optimization, i: general purpose methods.
\newblock \emph{Optimization for Machine Learning}, pages 121--148, 2011{\natexlab{a}}.

\bibitem[Juditsky et~al.(2011{\natexlab{b}})Juditsky, Nemirovski, et~al.]{juditsky2011second}
A.~Juditsky, A.~Nemirovski, et~al.
\newblock First order methods for nonsmooth convex large-scale optimization, ii: utilizing problems structure.
\newblock \emph{Optimization for Machine Learning}, pages 149--183, 2011{\natexlab{b}}.

\bibitem[Kan(1958)]{kan}
D.~Kan.
\newblock Adjoint functors.
\newblock \emph{Transactions of the American Mathematical Society}, 87\penalty0 (2):\penalty0 294--329, 1958.
\newblock URL \url{https://doi.org/10.2307/1993102}.

\bibitem[Kearns and Vazirani(1994)]{kearns94clt}
M.~J. Kearns and U.~V. Vazirani.
\newblock \emph{An Introduction to Computational Learning Theory}.
\newblock MIT Press, Cambridge, MA, USA, 1994.

\bibitem[Kelly(1982)]{kelly-book}
G.~Kelly.
\newblock \emph{Basic Concepts of Enriched Category Theory}.
\newblock Cambridge University Press, 1982.

\bibitem[Kelly and Laplaza(1980)]{KELLY1980193}
G.~Kelly and M.~Laplaza.
\newblock Coherence for compact closed categories.
\newblock \emph{Journal of Pure and Applied Algebra}, 19:\penalty0 193--213, 1980.
\newblock ISSN 0022-4049.
\newblock \doi{https://doi.org/10.1016/0022-4049(80)90101-2}.
\newblock URL \url{https://www.sciencedirect.com/science/article/pii/0022404980901012}.

\bibitem[Kleinberg(2002)]{kleinbergimpossibility}
J.~M. Kleinberg.
\newblock An impossibility theorem for clustering.
\newblock In \emph{Neural Information Processing Systems}, pages 463--470, 2002.
\newblock URL \url{http://papers.nips.cc/paper/2340-an-impossibility-theorem-for-clustering}.

\bibitem[Korpelevich(1977)]{korpelevich}
G.~Korpelevich.
\newblock The extragradient method for finding saddle points and other problems.
\newblock \emph{Matekon}, 13:\penalty0 35--49, 1977.

\bibitem[Koza(1992)]{Koza92}
J.~R. Koza.
\newblock \emph{Genetic Programming: {O}n the Programming of Computers by Means of Natural Selection}.
\newblock MIT Press, Cambridge, MA, USA, 1992.
\newblock ISBN 0-262-11170-5.

\bibitem[Kozen and Ruozzi(2009)]{kozen}
D.~Kozen and N.~Ruozzi.
\newblock Applications of metric coinduction.
\newblock \emph{Log. Methods Comput. Sci.}, 5\penalty0 (3), 2009.
\newblock URL \url{http://arxiv.org/abs/0908.2793}.

\bibitem[Lemke and Howson(1964)]{lemke1964equilibrium}
C.~E. Lemke and J.~T. Howson, Jr.
\newblock Equilibrium points of bimatrix games.
\newblock \emph{Journal of the Society for Industrial \& Applied Mathematics}, 12\penalty0 (2):\penalty0 413--423, 1964.

\bibitem[Lewis(1973)]{DBLP:journals/jphil/Lewis73}
D.~Lewis.
\newblock Counterfactuals and comparative possibility.
\newblock \emph{J. Philos. Log.}, 2\penalty0 (4):\penalty0 418--446, 1973.
\newblock \doi{10.1007/BF00262950}.
\newblock URL \url{https://doi.org/10.1007/BF00262950}.

\bibitem[Lieberman et~al.(2005)Lieberman, Hauert, and Nowak]{nowak}
E.~Lieberman, C.~Hauert, and M.~A. Nowak.
\newblock Evolutionary dynamics on graphs.
\newblock \emph{Nature}, 433\penalty0 (3):\penalty0 312--316, 2005.

\bibitem[Liu et~al.(2012)Liu, Mahadevan, and Liu]{liu2012regularized}
B.~Liu, S.~Mahadevan, and J.~Liu.
\newblock Regularized off-policy td-learning.
\newblock In \emph{Advances in Neural Information Processing Systems}, pages 836--844, 2012.

\bibitem[Loregian(2021)]{loregian_2021}
F.~Loregian.
\newblock \emph{(Co)end Calculus}.
\newblock London Mathematical Society Lecture Note Series. Cambridge University Press, 2021.
\newblock \doi{10.1017/9781108778657}.

\bibitem[Lurie(2009)]{Lurie:higher-topos-theory}
J.~Lurie.
\newblock \emph{{Higher Topos Theory}}.
\newblock Annals of mathematics studies. Princeton University Press, Princeton, NJ, 2009.
\newblock URL \url{https://cds.cern.ch/record/1315170}.

\bibitem[MacLane(1971)]{maclane:71}
S.~MacLane.
\newblock \emph{Categories for the Working Mathematician}.
\newblock Springer-Verlag, New York, 1971.
\newblock Graduate Texts in Mathematics, Vol. 5.

\bibitem[MacLane and leke Moerdijk(1994)]{maclane:sheaves}
S.~MacLane and leke Moerdijk.
\newblock \emph{Sheaves in Geometry and Logic: A First Introduction to Topos Theory}.
\newblock Springer, 1994.

\bibitem[Mahadevan(2021{\natexlab{a}})]{sm:homotopy}
S.~Mahadevan.
\newblock Causal homotopy, 2021{\natexlab{a}}.
\newblock URL \url{https://arxiv.org/abs/2112.01847}.

\bibitem[Mahadevan(2021{\natexlab{b}})]{sm:udm}
S.~Mahadevan.
\newblock Universal decision models.
\newblock \emph{CoRR}, abs/2110.15431, 2021{\natexlab{b}}.
\newblock URL \url{https://arxiv.org/abs/2110.15431}.

\bibitem[Mahadevan(2022)]{categoroids}
S.~Mahadevan.
\newblock Categoroids: Universal conditional independence, 2022.
\newblock URL \url{https://arxiv.org/abs/2208.11077}.

\bibitem[Mahadevan(2023)]{DBLP:journals/entropy/Mahadevan23}
S.~Mahadevan.
\newblock Universal causality.
\newblock \emph{Entropy}, 25\penalty0 (4):\penalty0 574, 2023.
\newblock \doi{10.3390/E25040574}.
\newblock URL \url{https://doi.org/10.3390/e25040574}.

\bibitem[Mahadevan et~al.(2014)Mahadevan, Liu, Thomas, Dabney, Giguere, Jacek, Gemp, and Liu]{mahadevan2014proximal}
S.~Mahadevan, B.~Liu, P.~Thomas, W.~Dabney, S.~Giguere, N.~Jacek, I.~Gemp, and J.~Liu.
\newblock Proximal reinforcement learning: A new theory of sequential decision making in primal-dual spaces, 2014.

\bibitem[Maschler et~al.(2013)Maschler, Solan, and Zamir]{Maschler_Solan_Zamir_2013}
M.~Maschler, E.~Solan, and S.~Zamir.
\newblock \emph{Game Theory}.
\newblock Cambridge University Press, 2013.

\bibitem[May(1992)]{may1992simplicial}
J.~May.
\newblock \emph{Simplicial Objects in Algebraic Topology}.
\newblock University of Chicago Press, 1992.

\bibitem[May and Ponto(2012)]{may2012more}
J.~May and K.~Ponto.
\newblock \emph{More Concise Algebraic Topology: Localization, Completion, and Model Categories}.
\newblock Chicago Lectures in Mathematics. University of Chicago Press, 2012.
\newblock ISBN 9780226511788.
\newblock URL \url{https://books.google.com/books?id=SHhmxUPskFwC}.

\bibitem[McInnes et~al.(2018)McInnes, Healy, and Melville]{umap}
L.~McInnes, J.~Healy, and J.~Melville.
\newblock Umap: Uniform manifold approximation and projection for dimension reduction, 2018.
\newblock URL \url{https://arxiv.org/abs/1802.03426}.

\bibitem[McKelvey and McLennan(1996)]{mckelvey1996computation}
R.~D. McKelvey and A.~McLennan.
\newblock Computation of equilibria in finite games.
\newblock \emph{Handbook of computational economics}, 1:\penalty0 87--142, 1996.

\bibitem[Mitchell(1977)]{DBLP:conf/ijcai/Mitchell77}
T.~M. Mitchell.
\newblock Version spaces: {A} candidate elimination approach to rule learning.
\newblock In R.~Reddy, editor, \emph{Proceedings of the 5th International Joint Conference on Artificial Intelligence. Cambridge, MA, USA, August 22-25, 1977}, pages 305--310. William Kaufmann, 1977.
\newblock URL \url{http://ijcai.org/Proceedings/77-1/Papers/048.pdf}.

\bibitem[Moore and Crutchfield(2000)]{MOORE2000275}
C.~Moore and J.~P. Crutchfield.
\newblock Quantum automata and quantum grammars.
\newblock \emph{Theoretical Computer Science}, 237\penalty0 (1):\penalty0 275--306, 2000.
\newblock ISSN 0304-3975.
\newblock \doi{https://doi.org/10.1016/S0304-3975(98)00191-1}.
\newblock URL \url{https://www.sciencedirect.com/science/article/pii/S0304397598001911}.

\bibitem[Moran(1958)]{moran}
P.~A.~P. Moran.
\newblock Random processes in genetics.
\newblock \emph{Proceedings of the Cambridge Philosophical Society}, 54\penalty0 (1):\penalty0 60--71, 1958.

\bibitem[Murty(1988)]{murty:lcpbook}
K.~Murty.
\newblock \emph{Linear Complementarity, Linear and Nonlinear Programming}.
\newblock Heldermann Verlag, 1988.

\bibitem[Nagurney(1999)]{nagurney:vibook}
A.~Nagurney.
\newblock \emph{Network Economics: A Variational Inequality Approach}.
\newblock Kluwer Academic Press, 1999.

\bibitem[Nagurney and Wolf(2014)]{nagurney:soi}
A.~Nagurney and T.~Wolf.
\newblock A cournot--nash--bertrand game theory model of a service-oriented internet with price and quality competition among network transport providers.
\newblock \emph{Computational Management Science}, 11\penalty0 (4):\penalty0 475--502, 2014.

\bibitem[Nash(1951)]{nash}
J.~Nash.
\newblock Non-cooperative games.
\newblock \emph{Annals of Mathematics}, 54\penalty0 (2):\penalty0 286--295, 1951.
\newblock URL \url{https://doi.org/10.2307/1969529}.

\bibitem[Nash et~al.(1950)]{nash1950equilibrium}
J.~F. Nash et~al.
\newblock Equilibrium points in n-person games.
\newblock \emph{Proceedings of the national academy of sciences}, 36\penalty0 (1):\penalty0 48--49, 1950.

\bibitem[Nisan et~al.(2007{\natexlab{a}})Nisan, Roughgarden, Tardos, and Vazirani]{nisan07}
N.~Nisan, T.~Roughgarden, E.~Tardos, and V.~Vazirani.
\newblock \emph{Algorithmic Game Theory}.
\newblock Cambridge University Press, Cambridge; New York, 2007{\natexlab{a}}.
\newblock ISBN 9780521872829 0521872820.

\bibitem[Nisan et~al.(2007{\natexlab{b}})Nisan, Roughgarden, Tardos, and Vazirani]{nisan2007algorithmic}
N.~Nisan, T.~Roughgarden, E.~Tardos, and V.~V. Vazirani.
\newblock \emph{Algorithmic game theory}.
\newblock Cambridge University Press, 2007{\natexlab{b}}.

\bibitem[Novak(2006)]{novak:book}
M.~Novak.
\newblock \emph{Evolutionary Dynamics: Exploring the Equations of Life}.
\newblock Harvard Belknap Press, 2006.

\bibitem[Nowak(2006)]{Nowak_06}
M.~A. Nowak.
\newblock \emph{Evolutionary Dynamics: {Exploring} the Equations of Life}.
\newblock The Belknap press of Harvard University press, 2006.
\newblock ISBN 067402338-2.

\bibitem[Papadimitriou(2001)]{papadimitriou2001algorithms}
C.~Papadimitriou.
\newblock Algorithms, games, and the internet.
\newblock In \emph{Proceedings of the thirty-third annual ACM symposium on Theory of computing}, pages 749--753. ACM, 2001.

\bibitem[Pastorello(2023)]{DBLP:books/sp/Pastorello23}
D.~Pastorello.
\newblock \emph{Concise Guide to Quantum Machine Learning}.
\newblock Springer, 2023.
\newblock ISBN 978-981-19-6896-9.
\newblock \doi{10.1007/978-981-19-6897-6}.
\newblock URL \url{https://doi.org/10.1007/978-981-19-6897-6}.

\bibitem[Pearl(2009)]{pearl-book}
J.~Pearl.
\newblock \emph{Causality: Models, Reasoning and Inference}.
\newblock Cambridge University Press, USA, 2nd edition, 2009.
\newblock ISBN 052189560X.

\bibitem[Plato(1971)]{plato}
Plato.
\newblock \emph{Timaeus and Critias}.
\newblock Penguin Classics, 1971.

\bibitem[Poggio et~al.(2018)Poggio, Kawaguchi, Liao, Miranda, Rosasco, Boix, Hidary, and Mhaskar]{overfitting}
T.~A. Poggio, K.~Kawaguchi, Q.~Liao, B.~Miranda, L.~Rosasco, X.~Boix, J.~Hidary, and H.~N. Mhaskar.
\newblock Theory of deep learning {III:} explaining the non-overfitting puzzle.
\newblock \emph{CoRR}, abs/1801.00173, 2018.
\newblock URL \url{http://arxiv.org/abs/1801.00173}.

\bibitem[Quillen(1967)]{Quillen:1967}
D.~G. Quillen.
\newblock \emph{Homotopical algebra}.
\newblock Springer, 1967.

\bibitem[Ravindran and Barto(2003)]{DBLP:conf/ijcai/RavindranB03}
B.~Ravindran and A.~G. Barto.
\newblock {SMDP} homomorphisms: An algebraic approach to abstraction in semi-markov decision processes.
\newblock In G.~Gottlob and T.~Walsh, editors, \emph{IJCAI-03, Proceedings of the Eighteenth International Joint Conference on Artificial Intelligence, Acapulco, Mexico, August 9-15, 2003}, pages 1011--1018. Morgan Kaufmann, 2003.
\newblock URL \url{http://ijcai.org/Proceedings/03/Papers/145.pdf}.

\bibitem[Richter(2020)]{richter2020categories}
B.~Richter.
\newblock \emph{From Categories to Homotopy Theory}.
\newblock Cambridge Studies in Advanced Mathematics. Cambridge University Press, 2020.
\newblock ISBN 9781108479622.
\newblock URL \url{https://books.google.com/books?id=pnzUDwAAQBAJ}.

\bibitem[Riehl(2017)]{riehl2017category}
E.~Riehl.
\newblock \emph{Category Theory in Context}.
\newblock Aurora: Dover Modern Math Originals. Dover Publications, 2017.
\newblock ISBN 9780486820804.
\newblock URL \url{https://books.google.com/books?id=6B9MDgAAQBAJ}.

\bibitem[Rivest and Schapire(1994)]{DBLP:journals/jacm/RivestS94}
R.~L. Rivest and R.~E. Schapire.
\newblock Diversity-based inference of finite automata.
\newblock \emph{J. {ACM}}, 41\penalty0 (3):\penalty0 555--589, 1994.
\newblock \doi{10.1145/176584.176589}.
\newblock URL \url{https://doi.org/10.1145/176584.176589}.

\bibitem[Robbins and Monro(1951)]{rm}
H.~Robbins and S.~Monro.
\newblock {A Stochastic Approximation Method}.
\newblock \emph{The Annals of Mathematical Statistics}, 22\penalty0 (3):\penalty0 400 -- 407, 1951.
\newblock \doi{10.1214/aoms/1177729586}.
\newblock URL \url{https://doi.org/10.1214/aoms/1177729586}.

\bibitem[Ruan et~al.(2015)Ruan, Comanici, Panangaden, and Precup]{DBLP:conf/aaai/RuanCPP15}
S.~S. Ruan, G.~Comanici, P.~Panangaden, and D.~Precup.
\newblock Representation discovery for mdps using bisimulation metrics.
\newblock In B.~Bonet and S.~Koenig, editors, \emph{Proceedings of the Twenty-Ninth {AAAI} Conference on Artificial Intelligence, January 25-30, 2015, Austin, Texas, {USA}}, pages 3578--3584. {AAAI} Press, 2015.
\newblock \doi{10.1609/AAAI.V29I1.9701}.
\newblock URL \url{https://doi.org/10.1609/aaai.v29i1.9701}.

\bibitem[Rutten(2000)]{rutten2000universal}
J.~Rutten.
\newblock Universal coalgebra: a theory of systems.
\newblock \emph{Theoretical Computer Science}, 249\penalty0 (1):\penalty0 3 -- 80, 2000.
\newblock ISSN 0304-3975.
\newblock \doi{http://dx.doi.org/10.1016/S0304-3975(00)00056-6}.
\newblock URL \url{http://www.sciencedirect.com/science/article/pii/S0304397500000566}.
\newblock Modern Algebra.

\bibitem[Sch\"{o}lkopf and Smola(2002)]{kernelbook}
B.~Sch\"{o}lkopf and A.~J. Smola.
\newblock \emph{Learning with Kernels: Support Vector Machines, Regularization, Optimization, and Beyond}.
\newblock MIT Press, 2002.

\bibitem[Sejnowski(2023)]{DBLP:journals/neco/Sejnowski23}
T.~J. Sejnowski.
\newblock Large language models and the reverse turing test.
\newblock \emph{Neural Comput.}, 35\penalty0 (3):\penalty0 309--342, 2023.
\newblock \doi{10.1162/NECO\_A\_01563}.
\newblock URL \url{https://doi.org/10.1162/neco\_a\_01563}.

\bibitem[Shoham and Leyton-Brown(2008)]{shoham2008multiagent}
Y.~Shoham and K.~Leyton-Brown.
\newblock \emph{Multiagent systems: Algorithmic, game-theoretic, and logical foundations}.
\newblock Cambridge University Press, 2008.

\bibitem[Singh et~al.(2000)Singh, Kearns, and Mansour]{singh2000nash}
S.~Singh, M.~Kearns, and Y.~Mansour.
\newblock Nash convergence of gradient dynamics in general-sum games.
\newblock In \emph{Proceedings of the Sixteenth conference on Uncertainty in artificial intelligence}, pages 541--548. Morgan Kaufmann Publishers Inc., 2000.

\bibitem[Singh et~al.(2004)Singh, James, and Rudary]{singh-uai04}
S.~P. Singh, M.~R. James, and M.~R. Rudary.
\newblock Predictive state representations: {A} new theory for modeling dynamical systems.
\newblock In D.~M. Chickering and J.~Y. Halpern, editors, \emph{{UAI} '04, Proceedings of the 20th Conference in Uncertainty in Artificial Intelligence, Banff, Canada, July 7-11, 2004}, pages 512--518. {AUAI} Press, 2004.
\newblock URL \url{https://dslpitt.org/uai/displayArticleDetails.jsp?mmnu=1\&smnu=2\&article\_id=1148\&proceeding\_id=20}.

\bibitem[Sokolova(2011)]{SOKOLOVA20115095}
A.~Sokolova.
\newblock Probabilistic systems coalgebraically: A survey.
\newblock \emph{Theoretical Computer Science}, 412\penalty0 (38):\penalty0 5095--5110, 2011.
\newblock ISSN 0304-3975.
\newblock \doi{https://doi.org/10.1016/j.tcs.2011.05.008}.
\newblock URL \url{https://www.sciencedirect.com/science/article/pii/S0304397511003902}.
\newblock CMCS Tenth Anniversary Meeting.

\bibitem[Solomonoff(1964)]{SOLOMONOFF19641}
R.~Solomonoff.
\newblock A formal theory of inductive inference. part i.
\newblock \emph{Information and Control}, 7\penalty0 (1):\penalty0 1--22, 1964.
\newblock ISSN 0019-9958.
\newblock \doi{https://doi.org/10.1016/S0019-9958(64)90223-2}.
\newblock URL \url{https://www.sciencedirect.com/science/article/pii/S0019995864902232}.

\bibitem[Song and Ermon(2019)]{DBLP:conf/nips/SongE19}
Y.~Song and S.~Ermon.
\newblock Generative modeling by estimating gradients of the data distribution.
\newblock In H.~M. Wallach, H.~Larochelle, A.~Beygelzimer, F.~d'Alch{\'{e}}{-}Buc, E.~B. Fox, and R.~Garnett, editors, \emph{Advances in Neural Information Processing Systems 32: Annual Conference on Neural Information Processing Systems 2019, NeurIPS 2019, December 8-14, 2019, Vancouver, BC, Canada}, pages 11895--11907, 2019.
\newblock URL \url{https://proceedings.neurips.cc/paper/2019/hash/3001ef257407d5a371a96dcd947c7d93-Abstract.html}.

\bibitem[Soni and Singh(2007)]{DBLP:conf/aaai/SoniS07}
V.~Soni and S.~Singh.
\newblock Abstraction in predictive state representations.
\newblock In \emph{Proceedings of the Twenty-Second {AAAI} Conference on Artificial Intelligence, July 22-26, 2007, Vancouver, British Columbia, Canada}, pages 639--644. {AAAI} Press, 2007.
\newblock URL \url{http://www.aaai.org/Library/AAAI/2007/aaai07-101.php}.

\bibitem[Studeny(2010)]{studeny2010probabilistic}
M.~Studeny.
\newblock \emph{Probabilistic Conditional Independence Structures}.
\newblock Information Science and Statistics. Springer London, 2010.
\newblock ISBN 9781849969482.
\newblock URL \url{https://books.google.com.gi/books?id=bGFRcgAACAAJ}.

\bibitem[Sutton and Barto(1998)]{DBLP:books/lib/SuttonB98}
R.~S. Sutton and A.~G. Barto.
\newblock \emph{Reinforcement learning - an introduction}.
\newblock Adaptive computation and machine learning. {MIT} Press, 1998.
\newblock ISBN 978-0-262-19398-6.
\newblock URL \url{https://www.worldcat.org/oclc/37293240}.

\bibitem[Thon and Jaeger(2015)]{DBLP:journals/jmlr/ThonJ15}
M.~R. Thon and H.~Jaeger.
\newblock Links between multiplicity automata, observable operator models and predictive state representations: a unified learning framework.
\newblock \emph{J. Mach. Learn. Res.}, 16:\penalty0 103--147, 2015.
\newblock URL \url{http://dl.acm.org/citation.cfm?id=2789276}.

\bibitem[Tibshirani et~al.(2010)Tibshirani, Hastie, and Friedman]{tibshirani2010regularization}
R.~Tibshirani, T.~Hastie, and J.~Friedman.
\newblock Regularization paths for generalized linear models via coordinate descent.
\newblock \emph{Journal of Statistical Software}, 33\penalty0 (i01), 2010.

\bibitem[Toumi(2022)]{DBLP:phd/ethos/Toumi22}
A.~Toumi.
\newblock \emph{Category theory for quantum natural language processing}.
\newblock PhD thesis, University of Oxford, {UK}, 2022.
\newblock URL \url{https://ethos.bl.uk/OrderDetails.do?uin=uk.bl.ethos.874632}.

\bibitem[Turing(1950)]{turing}
A.~Turing.
\newblock Computing machinery and intelligence.
\newblock \emph{Mind}, 49:\penalty0 433--460, 1950.

\bibitem[Valiant(1984)]{DBLP:journals/cacm/Valiant84}
L.~G. Valiant.
\newblock A theory of the learnable.
\newblock \emph{Commun. {ACM}}, 27\penalty0 (11):\penalty0 1134--1142, 1984.
\newblock \doi{10.1145/1968.1972}.
\newblock URL \url{https://doi.org/10.1145/1968.1972}.

\bibitem[Valiant(2009)]{evolvability}
L.~G. Valiant.
\newblock Evolvability.
\newblock \emph{J. ACM}, 56\penalty0 (1), feb 2009.
\newblock ISSN 0004-5411.
\newblock \doi{10.1145/1462153.1462156}.
\newblock URL \url{https://doi.org/10.1145/1462153.1462156}.

\bibitem[Van~der Laan et~al.(1987)Van~der Laan, Talman, and Van~der Heyden]{van1987simplicial}
G.~Van~der Laan, A.~Talman, and L.~Van~der Heyden.
\newblock Simplicial variable dimension algorithms for solving the nonlinear complementarity problem on a product of unit simplices using a general labelling.
\newblock \emph{Mathematics of operations research}, 12\penalty0 (3):\penalty0 377--397, 1987.

\bibitem[Vapnik(1999)]{DBLP:journals/tnn/Vapnik99}
V.~Vapnik.
\newblock An overview of statistical learning theory.
\newblock \emph{{IEEE} Trans. Neural Networks}, 10\penalty0 (5):\penalty0 988--999, 1999.
\newblock \doi{10.1109/72.788640}.
\newblock URL \url{https://doi.org/10.1109/72.788640}.

\bibitem[Vaswani et~al.(2017)Vaswani, Shazeer, Parmar, Uszkoreit, Jones, Gomez, Kaiser, and Polosukhin]{DBLP:conf/nips/VaswaniSPUJGKP17}
A.~Vaswani, N.~Shazeer, N.~Parmar, J.~Uszkoreit, L.~Jones, A.~N. Gomez, L.~Kaiser, and I.~Polosukhin.
\newblock Attention is all you need.
\newblock In I.~Guyon, U.~von Luxburg, S.~Bengio, H.~M. Wallach, R.~Fergus, S.~V.~N. Vishwanathan, and R.~Garnett, editors, \emph{Advances in Neural Information Processing Systems 30: Annual Conference on Neural Information Processing Systems 2017, December 4-9, 2017, Long Beach, CA, {USA}}, pages 5998--6008, 2017.
\newblock URL \url{https://proceedings.neurips.cc/paper/2017/hash/3f5ee243547dee91fbd053c1c4a845aa-Abstract.html}.

\bibitem[Vigna(2003)]{vigna2003guided}
S.~Vigna.
\newblock A guided tour in the topos of graphs, 2003.

\bibitem[Villani(2003)]{ot}
C.~Villani.
\newblock \emph{Topics in Optimal Transportation}.
\newblock American Mathematical Society, 2003.

\bibitem[Vit{\'{a}}nyi(2013)]{DBLP:journals/tcs/Vitanyi13}
P.~M.~B. Vit{\'{a}}nyi.
\newblock Conditional kolmogorov complexity and universal probability.
\newblock \emph{Theor. Comput. Sci.}, 501:\penalty0 93--100, 2013.
\newblock \doi{10.1016/J.TCS.2013.07.009}.
\newblock URL \url{https://doi.org/10.1016/j.tcs.2013.07.009}.

\bibitem[von Neumann and Morgenstern(1947)]{vonneumann1947}
J.~von Neumann and O.~Morgenstern.
\newblock \emph{Theory of games and economic behavior}.
\newblock Princeton University Press, 1947.

\bibitem[Von~Stengel(2002)]{von2002computing}
B.~Von~Stengel.
\newblock Computing equilibria for two-person games.
\newblock \emph{Handbook of game theory with economic applications}, 3:\penalty0 1723--1759, 2002.

\bibitem[Wang and Bertsekas(2015)]{DBLP:journals/mp/WangB15}
M.~Wang and D.~P. Bertsekas.
\newblock Incremental constraint projection methods for variational inequalities.
\newblock \emph{Math. Program.}, 150\penalty0 (2):\penalty0 321--363, 2015.
\newblock \doi{10.1007/s10107-014-0769-x}.
\newblock URL \url{https://doi.org/10.1007/s10107-014-0769-x}.

\bibitem[Yoneda(1960)]{yoneda-end}
N.~Yoneda.
\newblock On ext and exact sequences.
\newblock \emph{J. Fac. Sci. Univ. Tokyo}, Sect. I 8:\penalty0 507--576, 1960.

\end{thebibliography}

\end{document}